%% file: preprint_main.tex
\documentclass{article}

\usepackage{microtype}
\usepackage{graphicx}
\usepackage{booktabs}
\usepackage{dsfont}
\usepackage[colorlinks,linkcolor={red!80!black},citecolor={blue},allbordercolors={1 1 1},urlcolor={blue!80!black},hypertexnames=false]{hyperref}

\usepackage[preprint]{neurips_2026}

\usepackage{algorithm}
\usepackage{algpseudocode}
\usepackage{caption}
\usepackage{amsmath}
\usepackage{amssymb}
\usepackage{amsthm}
\usepackage{enumitem}

\usepackage[capitalize,noabbrev]{cleveref}
\usepackage[dvipsnames]{xcolor}
\usepackage{natbib}
\newlist{assumplist}{enumerate}{1}
\setlist[assumplist]{label=(\textbf{\Alph*})}
\Crefname{assumplisti}{Assumption}{Assumptions}

\newlist{assumplist2}{enumerate}{1}
\setlist[assumplist2]{label=(\textbf{\Roman*})}
\Crefname{assumplist2i}{Assumption}{Assumptions}
\crefname{appendixsection}{Appendix}{Appendices}
\Crefname{appendixsection}{Appendix}{Appendices}

\theoremstyle{plain}

\newtheorem{lemma}{Lemma}[section]

\newtheorem{definition}{Definition}[section]

\newtheorem{proposition}{Proposition}[section]

\newtheorem{remark}{Remark}[section]

\usepackage[english]{babel}

\crefname{lemma}{Lemma}{Lemmas}
\Crefname{lemma}{Lemma}{Lemmas}
\crefname{theorem}{Theorem}{Theorems}
\Crefname{theorem}{Theorem}{Theorems}
\crefname{proposition}{Proposition}{Propositions}
\Crefname{proposition}{Proposition}{Propositions}
\crefname{corollary}{Corollary}{Corollaries}
\Crefname{corollary}{Corollary}{Corollaries}
\crefname{definition}{Definition}{Definitions}
\Crefname{definition}{Definition}{Definitions}
\Crefname{assumption}{Assumption}{Assumptions}

\newcommand*\diff{\mathop{}\!\mathrm{d}}

\DeclareMathOperator{\Tr}{Tr}

\newcommand{\numcca}{R}
\newcommand{\rank}{r}
\DeclareMathOperator{\erank}{erank}

\newcommand{\ucoor}{\mathbf{w}}

\newcommand{\parens}[1]{\left( #1 \right)}
\newcommand{\brackets}[1]{\left[ #1 \right]}
\newcommand{\verts}[1]{\left\lvert #1 \right\rvert}
\newcommand{\Verts}[1]{\left\lVert #1 \right\rVert}
\usepackage{xspace}

\newcommand{\method}{\text{PEIRA}\xspace}
\newcommand{\jepa}{\text{res}\xspace}

\newcommand{\algocomp}{SC-\method}

\title{\method: Learning Predictive Encoders \\ through Inter-View Regressor Alignment
}

\makeatletter
\newcommand{\blfootnote}[1]{%
  \begingroup
  \renewcommand\thefootnote{}\footnote{#1}%
  \addtocounter{footnote}{-1}%
  \endgroup
}
\makeatother

\author{%
  Michael Arbel$^{\ast}$ \\
  Univ. Grenoble Alpes, Inria \\
  CNRS, Grenoble INP, LJK \\
  \texttt{michael.arbel@inria.fr} \\
  \And
  Basile Terver$^{\ast}$ \\
  Ecole Normale Supérieure / PSL \\
  Inria Paris \\
  \texttt{basile.terver@ens.psl.edu} \\
  \AND
  Jean Ponce \\
  Ecole Normale Supérieure / PSL \\
  New York University \\
  \texttt{jean.ponce@ens.fr} \\
}

\begin{document}

\maketitle
\blfootnote{$^{\ast}$Equal contribution.}

\begin{abstract}
Non-contrastive self-supervised learning (SSL) is an effective framework for predictive representation learning, but popular (and in practice effective) methods such as SimSiam, BYOL, I-JEPA or DINO, which rely on a form of self-distillation to train a teacher-student network, remain poorly understood as they typically do not minimize a well-defined objective. We analyze the dynamics of a variant of the Joint Embedding Predictive Architecture (JEPA) using a regularized linear regressor to predict the learned representations of two views of the data from one another, and fully characterize its stability: non-collapsed stable equilibria align with leading nonlinear canonical correlation subspaces, while collapsed equilibria may also be stable attractors. Motivated by this result, we introduce PEIRA, a non-contrastive SSL method with an explicit objective defined through the trace of the optimal linear regressor. We show that its only stable equilibria are nontrivial global minimizers and recover the same canonical correlation subspaces, with regularization selecting the effective dimension. Experiments on ImageNet-1K and CIFAR-10 show PEIRA is competitive with VICReg and LeJEPA baselines, and qualitative empirical results support the theory.
\end{abstract}

\input{short.tex}

\newpage
\appendix

\numberwithin{figure}{section}
\numberwithin{table}{section}

\section*{Appendix}
\addcontentsline{toc}{section}{Appendix}

\input{general_proofs_appendix.tex}

\end{document}

%% file: short.tex
\section{Introduction}
\emph{Non-contrastive} self-supervised learning has emerged as a successful approach to representation learning, achieving strong empirical performance across vision and video understanding~\citep{Grill:2020a,Caron:2021,Chen:2021b,Bardes:2022,Bardes:2024,assran2025v}. 
In this setting, one learns encoders of different views of the same data so the corresponding features predict each other using a learned inter-view regressor, thereby avoiding the need to mine negative examples as in \emph{contrastive} methods \citep{Chen:2020c,He:2020}. These approaches are, however, susceptible to \emph{representational collapse}, in which the learned encoders and predictor respectively degenerate to constants and the identity~\citep{LeCun:2022a}. A large body of work has studied and sought to prevent collapse. 
Among non-contrastive methods, one can distinguish \emph{regularization-based} approaches, such as Barlow Twins \citep{Zbontar:2021}, VICReg \citep{Bardes:2022}, and LeJEPA \citep{Balestriero:2025}, from \emph{self-distillation} methods, such as SimSiam \citep{Chen:2021b}, BYOL \citep{Grill:2020a}, DINO \citep{Caron:2021,Oquab:2023,Simeoni:2025}, I-JEPA \citep{Assran:2023a} and V-JEPA \citep{Bardes:2024}. 
The latter use mechanisms such as stop-gradient and exponential moving averages, leading to dynamics that differ from standard gradient descent and may avoid collapse \citep{Tian:2021a}. 
Regularization-based methods, like VICReg or Barlow Twins, have been connected to the recovery of canonical correlation subspaces \citep{Balestriero:2022,Chapman:2024}, providing a useful interpretation of the representations they learn. 
By contrast, \emph{self-distillation} methods remain poorly understood despite their empirical success: beyond restricted analyses \citep{Tian:2021a,Tang:2023,Littwin:2024,Ponce:2025}, the learned representations and their collapse behavior are still not fully characterized.

In this work, we close this gap for a self-distillation variant of JEPA dynamics that learns nonlinear encoders coupled with an optimal regularized linear predictor. We establish, to our knowledge, the first rigorous connection between these dynamics and nonlinear canonical correlation analysis (CCA) \citep{Michaeli:2015}. The dynamics converge to representations spanning a nonlinear CCA subspace, with effective dimension selected implicitly by regularization. We characterize stable equilibria, identifying those that encode predictive structure and those that collapse. This extends prior analyses of non-contrastive self-distillation, which are restricted to linear encoders and assumptions such as isotropic covariance or orthogonal initialization \citep{Tian:2021a,Tang:2023,Littwin:2024}. 
The characterization motivates a new unconstrained objective, \emph{Predictive Encoders by Inter-View Regressor Alignment} (\textbf{\method}), for JEPA-style learning, with the same statistical target that rules out collapse as a stable solution, both theoretically and empirically. \method aligns encoders by maximizing the trace of the best regularized linear predictor between views while controlling feature scale.
We analyze its dynamics and equilibria, showing that it recovers the same nonlinear CCA subspaces as the considered self-distillation dynamics, again with regularization-induced spectral selection, while excluding collapse through the objective rather than training heuristics. Finally, we give a scalable stochastic implementation for parametric encoders inspired by stochastic compositional optimization \citep{wang2017stochastic}, which avoids differentiating through the optimal predictor and achieves competitive performance on SSL image representation learning. Specifically, the main contributions of this paper are threefold:

1. We present a full theoretical characterization of the dynamics of
JEPA-style self-distillation learning in terms of canonical correlation
analysis (CCA) when the predictor is linear and the encoders are
arbitrary square-integrable functions. This includes identifying all
equilibria of the corresponding dynamics {\bf (\cref{prop:convergence_ssl})} and
characterizing the stable ones, which include the collapsed zero
solution {\bf (\cref{prop:stability_ssl})}.

2. We introduce PEIRA, a novel approach to self-supervised learning with a well-defined unconstrained objective. We characterize the equilibria of its dynamics and show that PEIRA recovers the same CCA subspaces as self-distillation {\bf (\cref{prop:stability-peira})}. Unlike self-distillation, however, all stable equilibria of PEIRA are nontrivial global minimizers of the objective {\bf (\cref{prop:peira-thresholded-subspace})}.

3. We propose a scalable stochastic training procedure for \method{}
{\bf (\cref{alg:peira})}, for parametric encoders (in practice neural networks), and demonstrate in \cref{sec:experiments} that it delivers competitive performance with respect to baselines like VICReg and LeJEPA on ImageNet-1K and CIFAR-10 benchmarks.
Experiments on these benchmarks also confirm the theoretical claims about spectral alignment, show strong correlation between the PEIRA objective and downstream performance, and demonstrate robustness of the method to variations in hyperparameter values.

\section{Related work}

\textbf{Canonical correlation analysis and self-supervised learning.}
CCA is the classical framework for extracting maximally correlated components across paired views under a whitening constraint \citep{Hardoon:2004,Andrew:2013a,Fukumizu:2007a}. Recently, several SSL methods have incorporated CCA-inspired structure to prevent collapse. Whitening-based losses \citep{Ermolov:2021} and their analysis \citep{Huang:2024b} do so through explicit decorrelation, while penalized relaxations \citep{Zhang:2021c,Chang:2018} improve practicality but lose exact CCA recovery guarantees. \citet{Chapman:2024} introduced an unconstrained objective for CCA with provable recovery in the linear setting, and applied it to SSL. However, its quartic (4th order polynomial) dependence on the representation can make optimization prone to instabilities \citep{Shi:2023,De-Sa:2015}.
A few works also established connections between some existing SSL methods and CCA. \citet{Balestriero:2022} show that VICReg and Barlow Twins  recover spectral embedding methods, while \citet{Chapman:2024} further rigorously establish that VICReg recovers CCA subspaces, in the linear setting, and show that it can still collapse. Similarly, \citet{Lyu:2021a} show that maximizing correlation under decorrelation constraints recovers shared components, and discuss similarities with methods such as BYOL and Barlow Twins. However, these works stop short of providing a formal CCA recovery theory for self-distillation methods.

\textbf{Self-distillation methods for non-contrastive SSL.} 
Theory for these methods remains confined to highly simplified regimes, typically linear models with isotropic data or orthogonal initialization \citep{Tian:2021a,Tang:2023,Littwin:2024}. \citet{Tang:2023} have further shown that, provided the training procedure converges, the corresponding equilibrium spans a linear eigenspace of the cross-covariance matrix. Related analyses with shallow neural networks still rely on simplified data models \citep{Wen:2022a}. Extending these results is difficult: even in the linear case, relaxing isotropy leads to hard-to-characterize equilibria and unclear convergence, since the dynamics are not the gradient flow of any global objective \citep{Ponce:2025}.

\section{
From Self-Distillation to \method}
\label{sec:method}
Following the JEPA approach to self-supervised representation learning, 
we are interested in learning encoders of different views of the data such that the encoded views predict each other well. 
In general, this problem can be formulated as follows: 
Given a distribution $\mathbb{P}$ of data pairs $(x,y)$ (the views) in $\mathbb{R}^{d_1}\times \mathbb{R}^{d_2}$, the goal is to learn square-integrable and in general \emph{nonlinear} encoders $U:\mathbb{R}^{d_1}\rightarrow \mathbb{R}^k$ and $V:\mathbb{R}^{d_2}\rightarrow \mathbb{R}^k$ for views $x$ and $y$ of the data using a predictive objective. We consider a  regularized
\emph{symmetric}\footnote{
Symmetrized objectives were also considered in \citet{Tang:2023,Grill:2020a}, and naturally accommodate views in different spaces. There is no loss of generality  due to symmetrization and all our results can be adapted to asymmetric objectives, provided the views are interchangeable.} objective which involves a \emph{linear} prediction operator $P:\mathbb{R}^k\rightarrow\mathbb{R}^k$ represented as a $k\times
k$ matrix
\begin{equation}
\begin{gathered}
\mathcal{P}(P,U,V)
=
\mathcal{P}_{U\triangleright V}(P,U,V)
+
\mathcal{P}_{V\triangleright U}(P,V,U)
+
\frac{\lambda}{2}
\Verts{P}_F^2,
\end{gathered}
\end{equation}
where $\Verts{P}_{F}$ is the Frobenius norm of $P$, whereas $\mathcal{P}_{U\triangleright V}$ and $\mathcal{P}_{V\triangleright U}$ are directed objectives encouraging $P$ to linearly predict one encoded view from the other.
Specifically, defining $\Verts{U}_2^2=\mathbb{E}[\Verts{U(x)}_2^2]$ and
$\Verts{V}_2^2=\mathbb{E}[\Verts{V(y)}_2^2]$, we define these objectives as follows
\begin{equation}
\begin{aligned}
	\mathcal{P}_{U\triangleright V}(P,U,V)
&=
\frac{1}{2}\mathbb{E}\brackets{
  \Verts{PU(x)-V(y)}^2
} + \frac{\lambda}{2} \|U\|_2^2, \\
\mathcal{P}_{V\triangleright U}(P,V,U)
&=
\frac{1}{2}\mathbb{E}\brackets{
  \Verts{PV(y)-U(x)}^2
} + \frac{\lambda}{2} \|V\|_2^2.
\end{aligned}
\end{equation}
where expectations are taken w.r.t. the data distribution $\mathbb{P}$. Here, the symbol $U\triangleright V$ means that $V$ is treated as a \emph{target/teacher} view for prediction, whereas $U$ is a \emph{source/student} view. 
Ideally, we would like to learn $P$, $U$ and $V$ by minimizing $\mathcal{P}$ w.r.t. these variables. 
Following \citep{Tang:2023}, we consider the fast-predictor regime where the regressor is optimized much faster than the encoders. 
This regime will allow us to provide a precise characterization of equilibria of the dynamics under consideration, by exploiting a closed form expression of the regressor.
Namely, for each pair of encoders $(U,V)$, the regressor is set to its optimal value,
\begin{equation}
\label{eq:csr-optimal-predictor}
P_{U,V}
=
\arg\min_{P\in\mathbb R^{k\times k}}
\mathcal P(P,U,V),
\end{equation}
which can be computed in closed form as the solution of a ridge regression problem. 
Thus, minimizing $\mathcal{P}$ w.r.t. $P$, $U$ and $V$ is equivalent
to minimizing w.r.t. $U$ and $V$ the \emph{residual} objective function
\begin{equation}
\label{eq:ssl_objective}
  \mathcal{P}_{\jepa}(U,V) =
  \mathcal{P}_{U\triangleright V}(P_{U,V},U,V)
+
\mathcal{P}_{V\triangleright U}(P_{U,V},V,U)
+
\frac{\lambda}{2}
\Verts{P_{U,V}}_F^2. 
\end{equation}
Optimizing either $\mathcal{P}$ or $\mathcal{P}_{\jepa}$ is known to be prone to trivial solutions, e.g., $U$ and $V$ and $P$ are all equal to $0$, leading to representation
collapse. 
To address this, self-distillation methods such as SimSiam, BYOL, V-JEPA or DINO update the encoders using a so-called \emph{semi-gradient} \citep[Chapter 9.3]{sutton2018reinforcement}, that is, a proxy for the actual gradient obtained by applying either a stop-gradient operation or exponential moving averages to the \emph{teacher} view. In this way, the \emph{student} view is trained to predict the \emph{teacher} view, rather than both views being updated through the full gradient of the objective. 
Although this heuristic works well empirically, the learned representations remain poorly understood. In \cref{subsec:ssl-cca}, we show that the self-distillation dynamics derived from the objective $\mathcal{P}_{\jepa}$ recovers nonlinear CCA subspaces, but is still prone to (partial) collapse.

We propose instead to minimize with respect to $U$ and $V$ a slightly different objective which also depends on the optimal regressor $P_{U,V}$ but which \emph{is guaranteed to} prevent collapse. Let us first note that $P_{U,V}$ can be written in closed form as a matrix-valued \emph{signal-to-noise ratio} (SNR): 
\begin{equation}
\label{eq:closed-form_expression_P}
  P_{U,V}=\Sigma_{U,V}(N_{U,V}+
  \lambda I)^{-1},\,\,\text{where}\,\,
  \left\{\begin{array}{l}
           \Sigma_{U,V}=\mathbb{E}[V(y)U(x)^{\top}+U(x)V(y)^{\top}],\\
           N_{U,V}=\mathbb{E}[U(x)U(x)^{\top}+V(y)V(y)^{\top}],\\
  \end{array}\right.
\end{equation}
where the matrix $\Sigma_{U,V}$ measures shared cross-view signal, while $N_{U,V}$ measures within-view variability. 
The trace of the predictor appears as a natural scalar summary of the SNR as it sums the contributions of different eigen-modes of $P_{U,V}$ and maximizing it should favor representations that share most information between views. Thus, instead of $\mathcal{P}_{\jepa}$, we propose to minimize the \textbf{\method (for Predictive Encoders through Inter-view Regressor Alignment)} objective:
\begin{align}
\label{eq:csr-objective}
\mathcal{E}_{\method}(U,V)
=
-\frac{1}{2}\Tr\!\bigl(P_{U,V}\bigr)
+
\frac{\lambda}{2}\parens{\Verts{U}_2^2+ \Verts{V}_2^2}.
\end{align}
When both encoders are linear, minimizing the negative trace of $P_{U,V}$ is closely related to ratio-trace optimization for dimensionality reduction \citep{Wang:2007,Jia:2009} discussed in \cref{sec:background}. 
The quadratic regularization on $U$ and $V$ controls feature scale, preventing the negative trace term from being decreased merely by enlarging their norms.  
 The trace of the regressor can be written as the Frobenius scalar product between the symmetric matrices $\Sigma_{U,V}$ and $(N_{U,V}+\lambda I)^{-1}$. Therefore, minimizing $\mathcal{E}_{\method}$ should also encourage an \emph{alignment} between the eigenbases of the signal and noise matrices $\Sigma_{U,V}$ and $N_{U,V}$. Such an alignment has been shown to arise implicitly for some self-distillation methods with linear encoders \citep{Tian:2021a}.

\section{Canonical structure and stability of the learning dynamics}
\label{sec:dynamics}

We characterize the equilibria of the
\emph{self-distillation dynamics} arising from the objective $\mathcal{P}_{\jepa}$ in \cref{eq:ssl_objective} and the critical points of \method introduced in \cref{eq:csr-objective} in terms of nonlinear CCA.  

Nonlinear CCA identifies pairs of functions of the two views that are maximally correlated after normalization by  
whitening constraint (see \cref{sec:background} for a formal definition). 
These directions capture the shared signal between the views, whereas directions with small canonical correlation correspond to signal components that are only weakly correlated across views. 
To make the characterizations in this section independent of any particular model architecture, we do not impose parametric restrictions on the encoders. 
Instead, we work directly in the natural infinite dimensional function spaces of square-integrable encoders, $\mathcal U=L^2(\mathbb{P}_X;\mathbb{R}^k)$, $\mathcal V=L^2(\mathbb{P}_Y;\mathbb{R}^k),$ where $\mathbb P_X$ and $\mathbb P_Y$ denote the marginals of the joint distribution $\mathbb P$.
Parametric models, such as deep networks, are introduced only later as function approximators in these spaces (see the \emph{capacity hypothesis} in \citep{Huh:2024}). 
We use the following spectral form of nonlinear CCA throughout the analysis. 
\begin{definition}[Nonlinear Canonical Correlations]\label{def:cca}
The distribution $\mathbb{P}$ is said to admit a nonlinear canonical correlation decomposition, in the sense of \citet{Michaeli:2015}, if there exist orthonormal families $(\psi_i)_{i\leq \numcca }$  and $(\xi_i)_{1\leq i\leq \numcca}$ of $ L^2(\mathbb P_X)$  and $L^2(\mathbb{P}_Y)$, and a non-increasing sequence $(c_i)_{1\leq i\leq \numcca} \in (0,1]$, where $\numcca$ may be infinite, such that $
\mathbb E[\psi_i(x)\xi_i(y)] = c_i$ and 
$\mathbb E[\psi_i(x)\xi_j(y)] = 0 \quad \text{for } i\neq j$. 
Moreover, the families exhaust all cross-correlated directions: any square-integrable function of $x$ orthogonal to all the $\psi_i$'s is uncorrelated with every square-integrable function of $y$, and conversely for functions of $y$ orthogonal to all the $\xi_i$'s.
The functions $\psi_i$ and $\xi_i$ are called the nonlinear canonical directions, and the scalars $c_i$ are the canonical correlations, which quantify the strength of the shared signal along the $i$-th direction.
\end{definition}

\subsection{
Characterizing the self-distillation equilibria and their stability}
\label{subsec:ssl-cca}
We study the continuous-time self-distillation dynamics arising from the residual objective $\mathcal{P}_{\jepa}$ in \cref{eq:ssl_objective}, as it allows us to use tools from dynamical systems theory. 
Specifically, we consider the learning dynamics arising from applying a stop-gradient\footnote{While we focus on stop-gradient rather than exponential-moving-average (EMA) dynamics, our results also characterize the equilibria of the corresponding EMA dynamics as recovering CCA directions, since the two dynamics admit the same equilibria \citep{Ponce:2025}. We leave a convergence and stability analysis of EMA dynamics to future work, and conjecture that the proof techniques developed here may extend to that setting.} 
operation to the \emph{teacher} views in $\mathcal{P}_{\jepa}$ before differentiating it, as done in SimSiam \citep{Chen:2021b}.  
Such a procedure is mathematically equivalent to the following  continuous-time dynamics defined in the non-parametric space $\mathcal{U}\times\mathcal{V}$:  
\begin{gather}
\label{eq:ode_ssl_appendix}
\dot{U_t}=-\partial_{U}\mathcal{P}_{U\triangleright V}(P_{U_t,V_t},U_t,V_t),
\qquad
\dot{V_t}=-\partial_{V}\mathcal{P}_{V\triangleright U}(P_{U_t,V_t},U_t,V_t),
\end{gather}
where $\partial_{U}\mathcal{P}_{U\triangleright V}$ (resp. $\partial_{V}\mathcal{P}_{V\triangleright U}$) denotes the partial derivatives of $\mathcal{P}_{U\triangleright V}$ (resp. $\mathcal{P}_{V\triangleright U}$) w.r.t. $U$ (resp. $V$). 
The resulting flow extends the one studied by \citet{Tang:2023} for linear encoders to nonparametric ones, better capturing the rich representation classes used in practice.

The vector field defined in \cref{eq:ode_ssl_appendix} is generally not the gradient of any objective \citep{Ponce:2025}, making convergence harder to characterize without additional structure. 
Our first result establishes the existence of a Lyapunov function for the flow, i.e., a non-increasing function along such a flow, that confers a gradient-like structure to the system \citep{Chill:2009}. 
This, in turn, allows us to leverage the abstract Lojasiewicz–Simon framework for establishing convergence to equilibria \citep{chill2003lojasiewicz,Bolte:2014}. We identify such a function as a polynomial of the encoders expressed in terms of the signal and noise matrices $\Sigma_{U,V}$ and $N_{U,V}$ defined in \cref{eq:closed-form_expression_P}:
\begin{equation}\label{eq:lyapunov_ssl}
\begin{aligned}
\mathcal{L}(U,V)=&\frac{1}{3}\Tr\!\left((N_{U,V}+\lambda I)^3\right)-\frac{1}{2}\Tr\!\left(\Sigma_{U,V}^2\right).
\end{aligned}
\end{equation}
We further show that these equilibria admit an explicit characterization in terms of CCA directions. The following proposition formalizes this result under a standard  regularity assumption on the data for non-linear CCA \citep{Michaeli:2015}, stated in \cref{appendix:cca} and ensuring the data distribution admits a non-linear canonical correlation decomposition.
\begin{proposition}[Identification of the equilibria]\label{prop:convergence_ssl}
Under \ref{assump:density_ratio} of \cref{appendix:cca} and assuming that  $c_1,\dots,c_{\numcca}$ are all distinct, for any initial condition $(U_0, V_0) \in \mathcal{U}\times \mathcal{V}$, the dynamics in \cref{eq:ode_ssl_appendix} admit the  function $\mathcal{L}$ in \cref{eq:lyapunov_ssl} as a Lyapunov function, and necessarily converge to an equilibrium point of the form: 
\begin{align}\label{eq:equilibria_ssl}
	U^{\star}(x) = \frac{1}{\sqrt{2}}
	\sum_{\sigma=\pm}\sum_{i\in D^{\sigma}} f_{\epsilon_i^{\sigma}}(c_i,\lambda) \psi_i(x){\bf{q_i^{\sigma}}},\quad V^{\star}(y) = \frac{1}{\sqrt{2}}\sum_{\sigma=\pm}\sum_{i\in D^{\sigma}} \rho_{\sigma} f_{\epsilon_i^{\sigma}}(c_i,\lambda) \xi_i(y){\bf{q_i^{\sigma}}}
\end{align}
where $D^\pm$ are any subsets of $\{1,\dots,\numcca\}$ with $\verts{D^+} + \verts{D^-}\le k$, 
$(\mathbf q_i^\sigma)_{\sigma\in\{+,-\},\,i\in D^\sigma}$ are any orthonormal vectors in $\mathbb R^k$, 
$\epsilon_i^\sigma$ take any values in $\in\{-1,1\}$, 
$\rho_\pm=\pm1$, and $f_\epsilon$ is the spectral filter:
\begin{align}
	f_{\epsilon}(c,\lambda)=\frac{1}{2}\parens{|c|+\epsilon\sqrt{c^2-4\lambda}}\mathds{1}_{|c|^2-4\lambda\ge 0}.
\end{align}
\end{proposition}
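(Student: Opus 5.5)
The plan has three stages: write the flow \cref{eq:ode_ssl_appendix} in operator form, verify that $\mathcal{L}$ decreases along it and use this for convergence, and then solve the equilibrium equations mode by mode in the canonical basis.

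\emph{Operator form.} Let $T:\mathcal V\to\mathcal U$ be the conditional-expectation operator $(TV)(x)=\mathbb E[V(y)\mid x]$, with adjoint $T^*U(y)=\mathbb E[U(x)\mid y]$. Differentiating the ridge objectives gives
\begin{equation*}
\dot U_t=-(P_t^\top P_t+\lambda I)U_t+P_t^\top TV_t,\qquad \dot V_t=-(P_t^\top P_t+\lambda I)V_t+P_t^\top T^*U_t,
\end{equation*}
with $P_t=P_{U_t,V_t}=\Sigma_t(N_t+\lambda I)^{-1}$ from \cref{eq:closed-form_expression_P}. The matrices act pointwise on $\mathbb R^k$, and $\Sigma$ and $N$ are symmetric.

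Under \ref{assump:density_ratio}, $T$ is compact (Hilbert--Schmidt) with singular system $(c_i,\psi_i,\xi_i)$ as in \cref{def:cca}. I expand $U=\sum_i \psi_i a_i + U^\perp$ and $V=\sum_i \xi_i b_i+V^\perp$ with $a_i,b_i\in\mathbb R^k$. The orthogonal parts obey $\dot U^\perp=-(P^\top P+\lambda)U^\perp$, since $T$ maps into the span of the $\psi_i$ and $T^*$ into the span of the $\xi_i$. Hence $\|U^\perp_t\|_2,\|V^\perp_t\|_2\le e^{-\lambda t}(\cdot)$. Each pair $(a_i,b_i)$ evolves by a $2k$-dimensional linear system whose coefficients depend on all modes only through $(N,\Sigma)$.

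\emph{Lyapunov property.} I compute $\dot N$ and $\dot\Sigma$ in terms of $\dot U$ and $\dot V$, for example $\dot N=\mathbb E[\dot UU^\top+U\dot U^\top+\dot VV^\top+V\dot V^\top]$. Substituting into
\begin{equation*}
\tfrac{d}{dt}\mathcal L=\Tr\bigl((N+\lambda I)^2\dot N\bigr)-\Tr(\Sigma\dot\Sigma),
\end{equation*}
I use $\Sigma=P(N+\lambda I)$ to regroup terms. The aim is the identity
\begin{equation*}
\tfrac{d}{dt}\mathcal L=-2\,\mathbb E\bigl[\|(N+\lambda I)\dot U\|^2+\|(N+\lambda I)\dot V\|^2\bigr]\le 0,
\end{equation*}
or an equivalent weighted norm of the velocity; finding the right weighting is the algebra to pin down. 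Consequences:
\begin{itemize}
\item $\mathcal L$ is non-increasing.
\item Since $\Tr((N+\lambda)^3)$ is cubic and dominates the quadratic $\Tr(\Sigma^2)$ (because $\|\Sigma\|\le\|N\|$), sublevel sets of $\mathcal L$ bound $N$, so trajectories are bounded.
\item $\int_0^\infty\|\dot U\|^2+\|\dot V\|^2\,dt<\infty$.
\end{itemize}

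\emph{Convergence.} This is the step I expect to be the main obstacle. The equilibria form continua, because of the $O(k)$ rotations of the vectors $\mathbf q_i^\sigma$, so LaSalle alone does not give convergence. I would first get precompactness of the orbit. The part $P^\top TV$ is compact since $T$ is compact, and the $(P^\top P+\lambda)$ part is uniformly dissipative, so a variation-of-constants argument shows the orbit is precompact in $\mathcal U\times\mathcal V$. Then I would apply the Łojasiewicz--Simon framework of \citet{chill2003lojasiewicz,Bolte:2014}.
\begin{itemize}
\item $\mathcal L$ is a polynomial in the finitely many entries of $(N,\Sigma)$, and hence analytic on $\mathcal U\times\mathcal V$.
\item I need an angle condition $|\dot{\mathcal L}|\ge c\,\|\nabla\mathcal L\|\,\|(\dot U,\dot V)\|$. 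I would verify this by showing $\nabla\mathcal L$ equals $(N+\lambda)$ times the velocity field, up to bounded invertible factors along the bounded orbit.
\item The Łojasiewicz inequality near a limit point should hold because, modulo the vanishing orthogonal parts, the relevant gradient is Fredholm (identity plus compact).
\end{itemize}
Together these give finite length of the trajectory and convergence to a single equilibrium.

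\emph{Identification of equilibria.} At a rest point $U^\perp=V^\perp=0$ and $(P^\top P+\lambda)a_i=c_iP^\top b_i$, $(P^\top P+\lambda)b_i=c_iP^\top a_i$.
\begin{itemize}
\item Adding and subtracting gives decoupled equations for $s_i^\pm=(a_i\pm b_i)/\sqrt2$: $(P^\top P+\lambda)s_i^\pm=\pm c_iP^\top s_i^\pm$.
\item Since $N=\sum_i(a_ia_i^\top+b_ib_i^\top)$ and $\Sigma=\sum_i(a_ib_i^\top+b_ia_i^\top)$, both are diagonal in the family $\{s_i^\pm\}$. I would show $P$ is then symmetric and commutes with $N$, so everything is simultaneously diagonalizable in an orthonormal basis $\mathbf q$.
\item Because the $c_i$ are distinct, each eigenvector $\mathbf q$ carries a single mode $(i,\pm)$, which gives the index sets $D^\pm$ with $|D^+|+|D^-|\le k$.
\item On a mode with amplitude $\alpha$ one has $p=\pm 2c\alpha^2/(2\alpha^2+\lambda)$. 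The scalar equation $p^2+\lambda=\pm c\,p$ then yields the quadratic $\alpha^2$-relation whose roots are $f_{\pm1}(c_i,\lambda)$, existing exactly when $c_i^2\ge4\lambda$, with the sign $\rho_\pm$ linking $V$ to $U$.
\end{itemize}
The converse check, that every such configuration is an equilibrium, is direct substitution.
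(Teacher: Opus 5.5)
Your architecture (decay of the orthogonal part, a Lyapunov function, Lojasiewicz--Simon convergence, mode-by-mode identification) works, and it differs from the paper's in the identification and compactness steps. Two things need repair: the Lyapunov identity you aim for is false, and the symmetry of $P$ is asserted without the argument that makes it true.

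\textbf{Lyapunov step.} The pointwise weight $(N+\lambda I)^2$ cannot be right, and the correct weighting has a structure you have not found. Write $W=(U,V)$ and $WA=(TV,T^{*}U)$. The exact relation is two-sided:
\begin{equation*}
\nabla\mathcal L(W)=2(N+\lambda I)^2W-2\Sigma WA=-\tfrac{2}{\lambda}(N+\lambda I)\,\dot W\,(W^{\top}W+\lambda I).
\end{equation*}
Here the right factor acts on $\mathcal H$, i.e.\ on the function side, and mixes $U$ and $V$. Consequently
\begin{equation*}
\dot{\mathcal L}=-2\,\mathbb E\bigl[\dot U^{\top}(N+\lambda I)\dot U+\dot V^{\top}(N+\lambda I)\dot V\bigr]-\tfrac{2}{\lambda}\bigl\|(N+\lambda I)^{1/2}\,\mathbb E[\dot U U^{\top}+\dot V V^{\top}]\bigr\|_F^2\le-2\lambda\|\dot W\|^2 .
\end{equation*}
This differs from your guess, e.g.\ at $k=1$ whenever $\mathbb E[\dot U U^{\top}+\dot V V^{\top}]=0\neq\dot W$ and $N+\lambda\neq1$. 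It is the paper's key observation that $\nabla L^1=2\ucoor\Phi(M)$ while $F^1=\ucoor B\Phi(M)B$. It also gives your angle condition immediately along the bounded orbit.

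\textbf{Symmetry of $P$.} Your claim that $P$ is symmetric and commutes with $N$ is true, but you should supply the reason. Multiply $(P^{\top}P+\lambda I)a_i=c_iP^{\top}b_i$ on the right by $a_i^{\top}$, the $b$-equation by $b_i^{\top}$, and sum over $i$. This gives $(P^{\top}P+\lambda I)N=P^{\top}\Sigma$; note that $\Sigma=\sum_i c_i(a_ib_i^{\top}+b_ia_i^{\top})$, whereas you dropped the $c_i$. Substituting $P=\Sigma(N+\lambda I)^{-1}$ yields $\Sigma^2=N(N+\lambda I)^2$. Hence $N=h^{-1}(\Sigma^2)$ with $h(x)=x(x+\lambda)^2$ strictly increasing, so $N$ commutes with $\Sigma$ and $P$ is symmetric.

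Your mode separation then goes through. On a common eigenvector $\mathbf q$ with $P\mathbf q=p\mathbf q$, a nonzero $\langle\mathbf q,s_i^\sigma\rangle$ forces $p\neq0$, $\sigma=\mathrm{sign}(p)$ and $c_i=|p|+\lambda/|p|$. Distinctness therefore leaves at most one mode per eigenvector. Moreover, $N$ compressed to the eigenvectors carrying $(i,\sigma)$ equals $s_i^\sigma s_i^{\sigma\top}$ and is diagonal there. So each nonzero $s_i^\sigma$ is a multiple of a single $\mathbf q$, which gives orthogonality and $|D^+|+|D^-|\le k$.

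\textbf{Comparison with the paper.} With these repairs your route is a genuine alternative.
\begin{itemize}
\item \emph{Identification.} The paper works in the eigenbasis of $A$ on $\mathcal H$; your $s_i^\pm$ are exactly its coordinates, and $U^\perp,V^\perp$ are its $I_0$ block. It left-multiplies the critical-point equation by $\ucoor^{\top}$ and proves that the Gram operator $M=\ucoor^{\top}\ucoor$ on $\ell^2(I)$ is diagonal, using $(M+\lambda I)M(M+\lambda I)=MSMS$, the commutation of $MSM$ with $S$, and distinctness. Your identity $\Sigma^2=N(N+\lambda I)^2$ is the dual relation, obtained by right-multiplying by $\ucoor^{\top}$. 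It lives in $\mathbb R^{k\times k}$, gives the commutation without distinctness, and postpones distinctness to the scalar mode equations, which is somewhat more elementary.
\item \emph{Compactness.} Your variation-of-constants precompactness works: the forcing $P_t^{\top}W_tA$ stays in a compact set because $W\mapsto WA$ is compact, while the matrix propagator decays like $e^{-\lambda t}$. It replaces the paper's coercivity, Palais--Smale and Barbalat argument.
\item \emph{Convergence.} Both routes feed the same Lojasiewicz--Simon step. There the Fredholm property holds globally, since the Hessian of $\mathcal L$ is left multiplication by $2(N+\lambda I)^2$ plus a compact operator. Your qualification about vanishing orthogonal parts is therefore unnecessary.
\end{itemize}
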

The filter $f_\epsilon$ thresholds canonical directions according to the regularization level $\lambda$: only directions with sufficiently large canonical correlation can appear at an equilibrium. 
Thus, the equilibria span arbitrary subspaces generated by at most $k$ canonical directions. 
They may include positively or negatively correlated components, corresponding to the two sums in \cref{eq:equilibria_ssl}; these signs affect the eigenvalues of the 
signal matrix $\Sigma_{U^{\star},V^{\star}}$, but not the underlying canonical subspace. 
Since the index sets $D^+$ and $D^-$ are arbitrary, \cref{prop:convergence_ssl} alone does not imply recovery of the leading canonical directions. 
To determine which subspaces the dynamics select, the next result characterizes their stable equilibria.
\begin{proposition}[Stability]\label{prop:stability_ssl}
	Under the assumptions of \cref{prop:convergence_ssl}, the stable equilibria are:
\begin{align}\label{eq:stable_equilibria_ssl}
	U^{\star}(x) = \frac{1}{\sqrt{2}}\sum_{\sigma=\pm}\sum_{i=1}^{\rank^{\sigma}} f_{1}(c_i,\lambda) \psi_i(x){\bf{q_i^{\sigma}}},\quad V^{\star}(y) = \frac{1}{\sqrt{2}}\sum_{\sigma=\pm}\sum_{i=1}^{\rank^{\sigma}} \rho_{\sigma} f_{1}(c_i,\lambda) \xi_i(y){\bf{q_i^{\sigma}}}
\end{align}

where $\rank^+$ and $\rank^{-}$ are any integers in $[0,R]$ and satisfying  $0\leq \rank^++\rank^{-}\leq k$, with the convention that  a sum of $\rank^{\pm}=0$ elements equals $0$,  $(\mathbf q_i^\sigma)_{\sigma\in\{+,-\},\,i\in \{1,...,\rank^{\sigma}\}}$ are any orthonormal vectors in $\mathbb R^k$, and $\rho_{\pm}=\pm 1$. In particular, the  totally collapsed equilibrium $U^{\star}=0, V^{\star}=0$ is stable.
\end{proposition}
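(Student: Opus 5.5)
The plan is to linearize the vector field of \cref{eq:ode_ssl_appendix} at each equilibrium listed in \cref{prop:convergence_ssl} and read off stability from the spectrum of the Jacobian. Since $\mathcal{L}$ in \cref{eq:lyapunov_ssl} is a Lyapunov function, I will also cross-check each conclusion by asking whether the equilibrium is a local minimizer of $\mathcal{L}$ restricted to the relevant invariant subspaces.

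\textbf{Reduction to finite blocks.} I first use the decomposition of \cref{def:cca} to expand $U=\sum_i \psi_i\,\mathbf a_i + U^\perp$ and $V=\sum_i \xi_i\,\mathbf b_i+V^\perp$. Here $U^\perp$ and $V^\perp$ are uncorrelated with the other view. Because $\Sigma_{U,V}$ and $N_{U,V}$ depend only on the coefficient vectors, the matrices $P_{U,V}$ and the semi-gradient field act on these coefficients. For each $i$ the field couples the pair $(\mathbf a_i,\mathbf b_i)$ only through $c_i$ and the shared $k\times k$ matrix $P$. The $\perp$ components decay, since they feel $-(\lambda I + P^\top P)$ terms, which are contracting. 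Rotating to sum/difference variables $\mathbf a_i\pm\mathbf b_i$ diagonalizes $\Sigma$ against $N$. In these variables an equilibrium is described by a set of modes $(i,\sigma)$, each with amplitude $f_\epsilon(c_i,\lambda)$ along $\mathbf q_i^\sigma$. The scalar fixed-point equation for a single mode has the form $s^2 - |c| s + \lambda = 0$, which is where the filter $f_\pm$ comes from.

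\textbf{Perturbations.} Around an equilibrium, I split perturbations into three classes.
\begin{enumerate}
\item \emph{Radial perturbations of an active mode.} Differentiating the scalar ODE at its roots shows the larger root $f_1$ is attracting and the smaller root $f_{-1}$ is repelling. This gives $\epsilon_i^\sigma=1$.
\item \emph{Activation of an inactive direction.} Take an inactive canonical direction $j$ together with a vector $\mathbf q$ orthogonal to the active $\mathbf q$'s. The linearized growth rate is $-\lambda + O(\text{amplitude}^2)$ at zero amplitude in the fast-predictor regime. Because $P$ vanishes on that direction, zero is always linearly stable. This explains why $U^\star=V^\star=0$ is stable, and why partial collapse ($r^\pm<k$) is allowed.
\item \emph{Rotation perturbations.} These mix an active mode $(i,\sigma)$ with an inactive canonical direction $j$ in the same output direction $\mathbf q_i^\sigma$. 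I expect the Jacobian entry to have sign $\operatorname{sign}(c_j-c_i)$ times a positive factor. Swapping an active $c_i$ for a larger inactive $c_j$ then gives an unstable direction, which forces the active sets to be initial segments $\{1,\dots,r^\sigma\}$.
\end{enumerate}
Rotations within the active $\mathbf q$'s are neutral, since they form orbits of the $O(k)$ symmetry. Stability should therefore be understood modulo this manifold of equilibria, and I would argue normal hyperbolicity, or use $\mathcal{L}$ and Lojasiewicz convergence from \cref{prop:convergence_ssl}.

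\textbf{Main obstacle.} The hardest step is the rotation calculation in item 3, in particular the interaction between the $+$ and $-$ branches. The same index $i$ may appear in both $D^+$ and $D^-$, and the mixing between $\psi_i$ and $\xi_i$ changes with the sign $\rho$. I must show that the $+$ and $-$ families form independent initial segments, with $r^+$ and $r^-$ separately arbitrary, rather than one joint ordering. My first attempt will be to write the perturbed $\Sigma$ and $N$ to second order and compute the second variation of $\mathcal{L}$ along the rotation. I expect the Hessian to reduce, in the sum/difference variables, to a $2\times2$ form in $(c_i,c_j)$ for each pair of modes, whose definiteness is governed by $c_j<c_i$. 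Distinctness of the $c_i$ ensures that no zero eigenvalues arise beyond the symmetry orbits.
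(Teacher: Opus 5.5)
\textbf{Gap: the coupling between two active modes is not neutral.} Your route is the paper's: coordinates in the eigenbasis of $A$ (your sum/difference variables), linearization at each equilibrium of \cref{prop:convergence_ssl}, and stability read off from the signs of the Jacobian eigenvalues. Your three classes are indeed eigen-directions from \cref{prop:diagonalization_jacobian}:
\begin{itemize}
\item the radial directions $QE_{p,i(p)}$, with eigenvalue $\frac{4m_{i(p)}}{m_{i(p)}+\lambda}-2$;
\item the activation directions $QE_{p,j}$ with $p>r$, $j\notin I_{sup}$, with eigenvalue $1$;
\item the mixing directions $QE_{p,j}$ with $p\le r$, $j\notin I_{sup}$, with eigenvalue $\delta_{p,j}$.
\end{itemize}
The problem is your claim that all couplings among active modes are neutral because they are $O(k)$-orbits. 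For active $p\neq p'$, the plane $\mathrm{span}(QE_{p,i(p')},QE_{p',i(p)})$ is invariant under $\nabla F^1$. Only the combination $m_{i(p')}^{1/2}QE_{p,i(p')}-m_{i(p)}^{1/2}QE_{p',i(p)}$ is tangent to the orbit, with eigenvalue $0$. The orthogonal direction in that plane has eigenvalue
\[
\gamma_{p,p'}=\frac{(m_{i(p)}+m_{i(p')})\bigl(|s_{i(p)}|m_{i(p)}^{1/2}+|s_{i(p')}|m_{i(p')}^{1/2}-s_{i(p)}s_{i(p')}\bigr)}{(m_{i(p)}+\lambda)(m_{i(p')}+\lambda)},
\]
which is not sign-definite. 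For example, it is negative for two same-sign modes on the $f_{-1}$ branch when $\lambda$ is small. So to show that the listed equilibria are stable you must check $\gamma_{p,p'}\ge 0$. The paper does this using $m^{1/2}=f_1(s,\lambda)\ge |s|/2$, which gives $|s_{i(p)}|m_{i(p)}^{1/2}+|s_{i(p')}|m_{i(p')}^{1/2}-s_{i(p)}s_{i(p')}\ge \tfrac12\bigl(s_{i(p)}-s_{i(p')}\bigr)^2$.

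\textbf{Completeness of the list of directions.} Your plan also takes for granted that your classes decouple and exhaust the space. The paper justifies this by showing that $M=\ucoor^\top\ucoor$ is diagonal at every critical point. This makes $\nabla F^1$ diagonal in the basis $QE_{p,j}$, up to exactly the $2\times 2$ blocks above.

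\textbf{Your stated main obstacle is easy.} Once the mixing eigenvalue is written out,
\[
\delta_{p,j}=\frac{m_{i(p)}s_{i(p)}(s_{i(p)}-s_j)}{\lambda(m_{i(p)}+\lambda)},
\]
it is automatically positive when $s_j$ and $s_{i(p)}$ have opposite signs. So an active mode can only be destabilized by a larger inactive mode on the same branch, and the $+$ and $-$ active sets are forced to be initial segments independently.

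\textbf{Other remarks.} Working with the second variation of $\mathcal{L}$ instead is legitimate. At a critical point, $\nabla F^1(\ucoor)=\tfrac12\,\mathcal G\circ\nabla^2L^1(\ucoor)$ with $\mathcal G(H)=(\ucoor\ucoor^\top+\lambda I)^{-1}H(\ucoor^\top\ucoor+\lambda I)^{-1}$ self-adjoint and positive definite, so both operators have the same inertia. It does not, however, let you skip the $\gamma_{p,p'}$ direction. Normal hyperbolicity is more than you need: the paper's notion of stability is only nonnegativity of the Jacobian spectrum.
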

\Cref{prop:stability_ssl} shows that collapse is not ruled out by the self-distillation dynamics: the trivial equilibrium is stable, and so are certain low-rank equilibria. 
Non-collapsed solutions are stable equilibria spanning the largest possible number of independent canonical directions. Specifically, let $\rank^*$ be the largest index with $f_1(c_{\rank^*},\lambda)>0$ and $\rank^{*}\leq \min(k,\numcca)$. Then non-collapsed solutions are equilibria of the form in \cref{eq:equilibria_ssl} with $\rank^{+}=\rank^{*}$ or $\rank^{-}=\rank^{*}$. Stable equilibria with $\rank^{+},\rank^{-}<\rank^{*}$ are partially collapsed, as they do not span a maximal CCA subspace.
In \cref{subsec:peira-recovery}, we show that \method yields equilibria with a similar form but without stable collapse.

\textbf{Sketch of proofs.} The general proof strategy and key results are provided in \cref{sec:hilbert_schmidt_analysis} with the full proof provided in \cref{proof:stability_ssl}.
We show that the gradient of the Lyapunov function is well aligned with the vector field in  \cref{eq:ode_ssl_appendix}, making it a \emph{gradient-like} system \citep{Chill:2009}. This property allows us to leverage the framework of \citet{Bolte:2014} based on the
Lojasiewicz--Simon gradient inequality \citep{Lojasiewicz:1982,Feehan:2020b} to establish convergence in \cref{prop:general_convergence_critical_point_0}. 
The characterization and stability of equilibria then follow from the spectral decomposition of the Jacobian of the vector field at equilibrium, derived in \cref{prop:diagonalization_jacobian}, together with the classification of equilibria established in \cref{prop:stability_kappa_1}.

\subsection{Characterizing the \method equilibria and their stability}
\label{subsec:peira-recovery}
We now study the gradient flow of the \method objective and characterize its critical points and their stability. 
The first result shows that the flow converges to critical points with the same canonical-subspace structure as the self-distillation  equilibria.

\begin{proposition}[Identification of equilibria]
\label{prop:stability-peira}
Under the same assumptions as \cref{prop:convergence_ssl}, the gradient flow dynamics $(\dot{U}_t,\dot{V}_t) = -\nabla_{U,V} \mathcal{E}_{\method}(U_t,V_t)$ with any initial condition $(U_0,V_0)\in \mathcal{U}\times \mathcal{V}$  converges to a critical point $(U^{\star},V^{\star})$ of $\mathcal{E}_{\method}$  
of the form:
\begin{align}
	U^{\star}(x) =  \frac{1}{\sqrt{2}}\sum_{i\in D} g(c_i,\lambda) \psi_i(x){\bf{q_i}},\qquad V^{\star}(y) =  \frac{1}{\sqrt{2}}\sum_{i\in D} g(c_i,\lambda) \xi_i(y){\bf{q_i}},
\end{align}
where $D$ is any subset of $\{1,\dots,\numcca\}$ with $|D|\le k$,  $({\bf{q_i}})_{i\in D}$ are any orthonormal vectors in $\mathbb{R}^k$, and $g$ is the spectral filter given by $
	g(c,\lambda) = \max\parens{\sqrt{c}-\lambda, 0}^{\frac{1}{2}}$.
\end{proposition}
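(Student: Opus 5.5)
The plan has three stages: convergence of the flow, a finite-dimensional reduction of the critical-point equations, and a spectral classification of their solutions.

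\textbf{Convergence.} The flow $(\dot U_t,\dot V_t)=-\nabla_{U,V}\mathcal{E}_{\method}$ is a genuine gradient flow, so no auxiliary Lyapunov function is needed: $\mathcal{E}_{\method}$ itself is non-increasing along trajectories. First, $\mathcal{E}_{\method}$ is coercive, since $\Tr(P_{U,V})\le \Tr(\Sigma_{U,V}(N_{U,V}+\lambda I)^{-1})$ is bounded. Indeed, $\Sigma_{U,V}\preceq N_{U,V}$ by $\mathbb{E}[(U-V)(U-V)^\top]\succeq 0$, and symmetrically $-\Sigma_{U,V}\preceq N_{U,V}$, so $|\Tr P_{U,V}|\le k$. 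Trajectories therefore stay bounded. The objective depends on $(U,V)$ only through the finite-dimensional Gram data $(\Sigma_{U,V},N_{U,V})$ and the norms, and it is real-analytic in these. I would then invoke the Lojasiewicz--Simon / Bolte et al.\ framework already used for \cref{prop:convergence_ssl} (\cref{prop:general_convergence_critical_point_0}) to obtain convergence of $(U_t,V_t)$ to a single critical point. The compactness needed can be obtained as in that proposition: under \ref{assump:density_ratio} the gradient is $\lambda(U,V)$ plus a finite-rank term built from the cross-covariance operator, which is compact.

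\textbf{Critical point equations.} Let $M=(N_{U,V}+\lambda I)^{-1}$ and let $C:L^2(\mathbb P_Y)\to L^2(\mathbb P_X)$ be the conditional expectation operator. I would compute
\[
\partial_U \mathcal{E}_{\method}=-\tfrac12\bigl(2M\,\mathbb E[V|x]-2M\Sigma M\,U\bigr)+\lambda U ,
\]
with the symmetric expression for $V$. Setting both derivatives to zero gives a coupled fixed-point system. It implies that $U$ lies in the range of $C$ and $V$ in the range of $C^*$, so both lie in the span of the $\psi_i$ (resp.\ $\xi_i$). Expanding $U=\sum_i \psi_i a_i$ and $V=\sum_i \xi_i b_i$ with $a_i,b_i\in\mathbb R^k$, the system becomes linear relations of the form $c_i M b_i=(M\Sigma M+\lambda I)a_i$, together with the symmetric relations. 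Passing to $u_i=(a_i+b_i)/\sqrt2$ and $w_i=(a_i-b_i)/\sqrt2$, these decouple into eigen-relations for $\pm c_i$.

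\textbf{Classification, the main obstacle.} The key step is to show that at a critical point $M$ and $\Sigma$ commute and share the eigenbasis $\mathbf q_i$. This is what reduces everything to scalar equations per mode. To get it, I would multiply the stationarity equations by $a_i^\top$ and $b_i^\top$ and sum, producing matrix identities such as $\Sigma M\Sigma$-type expressions equated to $N$-type expressions. From these I would derive that $[\Sigma M,\ M\Sigma]$ vanishes, mimicking the alignment argument behind \cref{prop:stability_kappa_1}. With alignment in hand, each mode carries a scalar amplitude $s$ with $N$-eigenvalue $2s^2$ and $\Sigma$-eigenvalue $\pm 2c s^2$. The scalar energy per mode is $-\frac{\pm c\, s^2}{s^2+\lambda}\cdot$const$+\lambda s^2$. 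Solving its stationarity equation $c\lambda/(s^2+\lambda)^2=\lambda$ gives $s^2+\lambda=\sqrt c$, hence $s=g(c,\lambda)$, with the corresponding $U,V$ normalization $1/\sqrt2$. For negative-sign modes the derivative has no positive root, which forces $s=0$; this is why only $V=+U$-type components survive. Together with $|D|\le k$ orthonormal $\mathbf q_i$ and the distinctness of the $c_i$, this excludes mixing of modes, and the stated form follows.
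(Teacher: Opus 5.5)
\textbf{Gap in the classification step.} The step you label the main obstacle is missing, and the reduction you place after it does not work as stated. Commutation of $N_{U,V}$ and $\Sigma_{U,V}$ does not imply that each eigenvector $\mathbf q_i$ carries a single canonical direction. For $k=1$ commutation is vacuous, yet $U=\mathbf q(\alpha\psi_1+\beta\psi_2)$, $V=\mathbf q(\alpha\xi_1+\beta\xi_2)$ with $\alpha\beta\neq 0$ must still be ruled out. Your per-mode scalar energy presupposes this decoupling. The appeal to distinct $c_i$ has no mechanism behind it, and $[\Sigma M,M\Sigma]=0$ is never derived.

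The missing identity comes out of exactly the manipulation you suggest. Right-multiplying your relations by $a_i^\top$, $b_i^\top$ and summing gives $(M\Sigma M+\lambda I)N=M\Sigma$, with $N=N_{U,V}$ and $\Sigma=\Sigma_{U,V}$. Using $MN=I-\lambda M$, this becomes $M\Sigma M=N$; equivalently, $\mathbb E[\partial_U\mathcal E_{\method}\,U^\top+\partial_V\mathcal E_{\method}\,V^\top]=\lambda(N-M\Sigma M)$. Substituting into your per-index relations gives $(N+\lambda I)^2u_i=c_iu_i$ and $(N+\lambda I)^2w_i=-c_iw_i$. Then:
\begin{itemize}
\item positive definiteness of $(N+\lambda I)^2$ forces every $w_i=0$;
\item the nonzero $u_i$ are eigenvectors of one symmetric matrix with distinct eigenvalues, hence orthogonal and at most $k$ in number;
\item $Nu_i=\Verts{u_i}^2u_i$ then yields $(\Verts{u_i}^2+\lambda)^2=c_i$, i.e.\ $\Verts{u_i}=g(c_i,\lambda)$, which is the claimed form.
\end{itemize}
The same identity gives $\lambda I+M\Sigma M=\lambda I+N\succ 0$. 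You need this, and do not justify it, to conclude that $U$ lies in the range of $C$.

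Completed this way, your argument is a $k\times k$ counterpart of the paper's. The paper instead left-multiplies the critical-point equation by $\ucoor^\top$ and shows that the operator $\ucoor^\top\ucoor$ on $\ell^2(I)$ commutes with $S$, hence is diagonal by distinctness.

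\textbf{Gap in the convergence step.} Your convergence argument rests on an incorrect structural claim. By your own formula, $\partial_U\mathcal E_{\method}=(\lambda I+M\Sigma M)U-M\,CV$. So the gradient minus $\lambda(U,V)$ is not a compact term, and the Hessian is $(\lambda I+M\Sigma M)\otimes I$ plus a compact operator. That non-compact part can be singular. For constant $U=\mathbf a=-V$ you get $\Sigma=-N$ and $M\Sigma M=-N(N+\lambda I)^{-2}$. Its eigenvalue along $\mathbf a$ sweeps $[-\tfrac{1}{4\lambda},0]$ as $\Verts{\mathbf a}$ varies, so it hits $-\lambda$ whenever $\lambda\le\tfrac12$. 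At such points the Hessian is not Fredholm (for infinite-dimensional $\mathcal H$). Hence \cref{prop:general_convergence_critical_point_0} cannot be invoked with $L=\mathcal E_{\method}$ as stated.

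This is repairable. The Lojasiewicz--Simon step only uses the Fredholm property at critical points, and Palais--Smale only along bounded sequences with vanishing gradient. There, the identity above gives $M\Sigma M=N-\lambda^{-1}\mathbb E[\partial_U\mathcal E_{\method}U^\top+\partial_V\mathcal E_{\method}V^\top]\to N\succeq 0$, so $\lambda I+M\Sigma M$ stays uniformly invertible. You would have to verify this explicitly. The paper avoids the issue by running the same framework with the polynomial Lyapunov function $L^0$. Its Hessian is $2(\ucoor\ucoor^\top+\lambda I)^2\otimes I$ plus a compact operator at every point. Your choice of $\mathcal E_{\method}$ itself, with the bound $|\Tr P_{U,V}|\le k$ giving coercivity, is more natural once these two points are checked.
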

The critical points in \cref{prop:stability-peira} share the canonical form of the self-distillation equilibria, but use a different spectral filter and contain only positively correlated components. Here, the filter acts as a soft threshold on the canonical correlations. Direct verification shows that $\Sigma_{U^{\star},V^{\star}}$ and $N_{U^{\star},V^{\star}}$ have the same eigenvectors, satisfying the alignment condition discussed in \cref{sec:method}. Since $D$ is arbitrary, this characterization alone does not ensure recovery of the leading canonical directions. The next proposition shows that the stable equilibria are exactly the ones spanning the largest maximally correlated subspaces, and that they are global minimizers of $\mathcal{E}_{\method}$.
\begin{proposition}[Stability]
\label{prop:peira-thresholded-subspace}
Under the same assumptions as \cref{prop:convergence_ssl}, let $0<\lambda<1$ and $\rank_{\mathrm{max}}\leq \min(k,\numcca)$ be the largest integer such that $g(c_{\rank_{\mathrm{max}}},\lambda)>0$, with $(c_{i})_{1\leq i\leq \numcca}$ from \cref{def:cca}. Then the only stable equilibria are global minimizers of $\mathcal{E}_{\method}$ and take the form:
\begin{align}
\label{eq:csr-maximizer}
U^{\star}(x)
=
\frac{1}{\sqrt{2}}\sum_{i=1}^{\rank_{\mathrm{max}}}
g(c_i,\lambda) \psi_i(x) {\bf{q_i}}, \qquad V^{\star}(y)
=
\frac{1}{\sqrt{2}}\sum_{i=1}^{\rank_{\mathrm{max}}}
g(c_i,\lambda) \xi_i(y) {\bf{q_i}},
\end{align}
where $({\bf{q_i}})_{i=1}^{\rank_{\mathrm{max}}}$ are any orthonormal vectors in $\mathbb{R}^k$. 
Any other critical point of $\mathcal{E}_{\method}$ is a strict saddle point, and thus unstable. 
Moreover, the optimal value of the objective is: 
\begin{align}\label{eq:optim_val_peira}
\min_{U,V\in\mathcal U\times \mathcal{V}}\mathcal{E}_{\method}(U,V)=-\frac{1}{2}\sum_{i=1}^{\rank_{\mathrm{max}}}(\sqrt{c_i}-\lambda)^2.
\end{align}
\end{proposition}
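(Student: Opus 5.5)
We plan to reduce the problem to a finite-dimensional spectral optimization using \cref{prop:stability-peira}, and then to classify each critical point by computing the objective and second-order perturbations along canonical directions.

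\textbf{Step 1: value at a critical point.} Every critical point has the form given in \cref{prop:stability-peira}, with index set $D$ and orthonormal vectors $\mathbf q_i$. Using orthonormality of $(\psi_i)$ and $(\xi_i)$ together with $\mathbb E[\psi_i\xi_j]=c_i\delta_{ij}$, we get
\[
N_{U^\star,V^\star}=\sum_{i\in D} g_i^2\,\mathbf q_i\mathbf q_i^\top,
\qquad
\Sigma_{U^\star,V^\star}=\sum_{i\in D} c_i g_i^2\,\mathbf q_i\mathbf q_i^\top,
\]
where $g_i=g(c_i,\lambda)$. Hence
\[
\mathcal E_{\method}=\sum_{i\in D}\Bigl(-\tfrac12\,\tfrac{c_i g_i^2}{g_i^2+\lambda}+\tfrac{\lambda}{2} g_i^2\Bigr).
\]
For $g_i^2=\sqrt{c_i}-\lambda>0$ we have $g_i^2+\lambda=\sqrt{c_i}$, and each summand equals $-\tfrac12(\sqrt{c_i}-\lambda)^2$. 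Indices with $g_i=0$ contribute nothing.

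\textbf{Step 2: global minimum.} The value in Step 1 is minimized by taking $D$ to be the $\rank_{\max}$ indices with largest $c_i$, which by monotonicity are $1,\dots,\rank_{\max}$. Since the flow in \cref{prop:stability-peira} converges from every initialization, $\mathcal E_{\method}$ is bounded below along trajectories. To show the infimum is actually attained at a critical point, we need coercivity. The trace term satisfies $\Tr(P_{U,V})\le \Tr\bigl(\Sigma(N+\lambda I)^{-1}\bigr)$, and $\Sigma\preceq N$ by Cauchy–Schwarz, so $\Tr(P_{U,V})\le k$. Together with the quadratic penalty, this makes $\mathcal E_{\method}$ bounded below and sublevel sets bounded. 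A minimizing sequence can then be handled either by weak compactness, or more simply by projecting onto the span of finitely many canonical directions, which does not increase the objective. The minimum is therefore attained, it is a critical point, and its value is \eqref{eq:optim_val_peira}. Global minimizers are stable, since a gradient flow cannot leave a minimizing set.

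\textbf{Step 3: strict saddles.} Take any other critical point. Then one of three things happens: (a) some index $j\le\rank_{\max}$ is missing from $D$ while a free slot remains, meaning $|D|<k$ or $D$ contains an index with $g_i=0$; (b) $D$ contains some $i>\rank_{\max}$ with $g_i>0$ while some $j\le \rank_{\max}$ is absent; or (c) a filter value is zero where it should be positive.

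In each case we exhibit a direction of strictly negative curvature.
\begin{itemize}
\item For (a) and (c), perturb along $(\delta U,\delta V)=\varepsilon(\psi_j\mathbf q,\xi_j\mathbf q)$ with $\mathbf q$ orthogonal to the current $\mathbf q_i$. The modes decouple, so the second variation of the one-mode function $h(a)=-\tfrac12\tfrac{c a^2}{a^2+\lambda}+\tfrac{\lambda}{2}a^2$ at $a=0$ is $\lambda-c_j/\lambda$. This is negative exactly when $c_j>\lambda^2$, i.e. $g_j>0$. For both perturbation blocks jointly, the Hessian is $\begin{pmatrix}\lambda & -c_j/\lambda\\ -c_j/\lambda&\lambda\end{pmatrix}$ in $(\psi_j,\xi_j)$ coordinates. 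We will recheck the constants, but the condition $c_j>\lambda^2$ should be what comes out.
\item For (b), use a rotation $\psi_i\mathbf q_i\mapsto \cos\theta\,\psi_i+\sin\theta\,\psi_j$ (and similarly for $\xi$) within the slot $\mathbf q_i$, rescaling the amplitude appropriately. Because $c_j>c_i$, the correlation along the rotated mode increases to second order in $\theta$, giving a descent direction. The cross term in $\Sigma$ vanishes because $\mathbb E[\psi_i\xi_j]=0$.
\end{itemize}

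The main obstacle is case (b): computing the second variation rigorously in the coupled infinite-dimensional setting, where the perturbation also changes $N$ and $\Sigma$ off-diagonally through $\mathbf q_i$. We would handle this by restricting $\mathcal E_{\method}$ to the finite-dimensional invariant family spanned by $\{\psi_i,\psi_j,\xi_i,\xi_j\}\otimes\mathrm{span}(\mathbf q_i)$ and computing a $4\times4$ Hessian explicitly.

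Finally, the strict saddle property implies instability. Convergence to a minimizer then follows from \cref{prop:stability-peira} together with the standard argument that strict saddles are unstable equilibria of the gradient flow, with negative Hessian eigenvalue giving an unstable manifold.
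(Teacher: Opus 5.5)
There is one step that fails as written: the existence of a global minimizer in Step 2. Your whole treatment of the top-$r_{\max}$ points depends on it, namely that they are global minimizers, hence stable, and that the optimal value is as stated. Steps 1 and 3 are fine.

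\emph{Weak compactness alone does not work.} $\mathcal{E}_{\method}$ is not weakly lower semicontinuous on $\mathcal U\times\mathcal V$. Compactness of $A$ makes $\Sigma_{U,V}$ weakly continuous. Mass escaping weakly, however, adds some $\Delta\succeq 0$ to the limit of $N_{U,V}$, and when $\Sigma_{U,V}$ has a negative part this can \emph{lower} the value. Concretely, take $k=1$ and $W=a\zeta$ with $\zeta=(\psi_1,-\xi_1)/\sqrt2$ (eigenvalue $-c_1$). Set $W_n=W+b\,\eta_n$, where the $\eta_n$ are orthonormal eigenvectors of $A$ orthogonal to $\zeta$ with eigenvalues tending to $0$; these exist whenever $\mathcal H$ is infinite-dimensional, the only case where compactness matters. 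Then $\lim_n\mathcal{E}_{\method}(W_n)-\mathcal{E}_{\method}(W)=\frac{b^2}{2}\bigl(\lambda-\frac{c_1a^2}{(a^2+\lambda)(a^2+b^2+\lambda)}\bigr)$. At $a^2=\lambda$ and small $b$ this is negative as soon as $4\lambda^2<c_1$, i.e.\ $\lambda<1/2$ since $c_1=1$.

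\emph{Projection onto finitely many canonical directions does not work either.} It can increase the objective. A single-mode critical point built on an index $j$ with $\sqrt{c_j}>\lambda$ has value $-\frac12(\sqrt{c_j}-\lambda)^2<0$, and it is sent to $0$ if $j$ is dropped.

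\emph{The cheapest repair uses a fact you already cite, but only for lower boundedness.} $\mathcal{E}_{\method}$ is non-increasing along its own gradient flow. By \cref{prop:stability-peira}, the flow from any $W_0$ converges in norm to a critical point $W_\infty$, so by continuity $\mathcal{E}_{\method}(W_0)\ge\mathcal{E}_{\method}(W_\infty)$. Hence $\inf\mathcal{E}_{\method}$ equals the smallest critical value from Step 1. Since the $c_i$ are distinct, it is attained exactly at the top-$r_{\max}$ points, and no compactness is needed. Stability in the paper's sense then holds because the Hessian at a minimizer is positive semidefinite.

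An alternative repair: first project onto the closed span of the positively correlated directions $\zeta_i$, $i\in I_+$. This does not increase $\mathcal{E}_{\method}$, because $\Tr(\Sigma(N+\lambda I)^{-1})\le\Tr(\Sigma_+(N_++\lambda I)^{-1})$ and $\Tr N\ge\Tr N_+$. On that subspace $\Sigma\succeq 0$, so the extra $\Delta$ can only raise the value, and weak lower semicontinuity holds.

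Your case (c) is vacuous. Amplitudes at critical points are forced to be $g(c_i,\lambda)$, so cases (a) and (b) already exhaust the non-optimal critical points.

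With the fix, your proof is a valid alternative to the paper's.
\begin{itemize}
\item The paper obtains the optimal value from an explicit lower bound: an SVD of $W$, the split $A=A_+-A_-$, von Neumann's trace inequality and Courant--Fisher. It decides stability by diagonalizing the Jacobian at every critical point.
\item You obtain minimality from the finite list of critical values, and instability from second variations along two explicit curves. Their tangents are exactly the paper's eigenvectors $QE_{r+1,j_0}$ and $QE_{p_0,j_0}$, with the same Hessian eigenvalues $\lambda-c_j/\lambda$ and $(c_i-c_j)/\sqrt{c_i}$.
\item Because these tangents are genuine eigenvectors, your negative Rayleigh quotients are actual negative eigenvalues, which is what the paper's definition of instability uses.
\end{itemize}

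Case (b) is easier than you fear. The pure rotation $\zeta_i\mapsto\cos\theta\,\zeta_i+\sin\theta\,\zeta_j$ in slot $\mathbf q_i$ needs no rescaling and leaves $N$ unchanged. It couples to no other slot, since $\zeta_j$ and $A\zeta_j$ are orthogonal to all active modes. So $\mathcal{E}_{\method}$ drops by exactly $\frac{g_i^2(c_j-c_i)\sin^2\theta}{2\sqrt{c_i}}$.

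Your route avoids the full Jacobian diagonalization. The paper's route gives the whole spectrum, which it reuses for the self-distillation flow, where no objective is available.
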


\cref{prop:peira-thresholded-subspace}  implies that all critical points of \(\mathcal{E}_{\method}\) are either global minimizers or strict saddles, ruling out spurious local minima. 
Thus, the gradient flow generically\footnote{Dynamics do not converge \emph{generically} towards unstable equilibria or strict saddle points, that is, this only occurs for particular initializations  \citep{lee2016gradient,truong2020backtrackingbanach}.} converges to a global minimizer spanning the top-$\rank_{\mathrm{max}}$ canonical subspace.   
Consequently, stable collapse does not occur whenever $\rank_{\mathrm{max}}\geq 2$: any critical point spanning a lower-dimensional canonical subspace is unstable. In particular, in contrast to the self-distillation dynamics of \cref{subsec:ssl-cca}, the trivial collapsed representation $U=0$, $V=0$ is unstable for \method as it does not reach the optimal value in \cref{eq:optim_val_peira}.  
The effective dimension $\rank_{\mathrm{max}}$ is selected by $\lambda$ and is capped by $k$. 
Thus, when the target dimension is unknown, one can choose $k$ conservatively large and let $\lambda$ determine the number of active canonical directions. 
Importantly, one should choose $\lambda<1$, otherwise the condition $\sqrt{c_i}>\lambda$ would filter out all directions, since $c_i\leq 1$.

\textbf{Sketch of proofs.}
Full proofs are provided in \cref{proof:stability_method}. 
The convergence part follows from the general convergence result for \emph{gradient-like} systems provided as in the case of \cref{prop:convergence_ssl}. The stability analysis then relies on the eigen-decomposition of the Hessian at critical points, derived in \cref{prop:diagonalization_jacobian}, together with the resulting
classification of all critical points in \cref{prop:stability_kappa_0}.

\section{Optimizing \method's objective}
\label{sec:algo}

\subsection{An auxiliary objective for gradient computation} 
Computing the gradient of $\mathcal{E}_{\method}$ requires differentiating through the dependence of the optimal regressor $P_{U,V}$ on $(U,V)$ in \cref{eq:closed-form_expression_P}. 
In practice, closed-form mini-batch estimates of the regressor, differentiated through by backpropagation, can lead to high-variance gradient estimates and unstable training \citep{Xu:2020a,Galashov:2025}.

Instead, we derive an auxiliary objective that has the same gradient as $\mathcal{E}_{\method}$ but which needs only estimating the regressor $P_{U,V}$ without differentiating it. 
This enables averaging techniques, such as exponential moving averages (EMA), for estimating \(P_{U,V}\), which  naturally reduce stochastic fluctuations and can improve training stability \citep{morales2024exponential}.
\begin{proposition}
\label{prop:gradient_expression_appendix}
The objective $\mathcal{E}_{\method}$ is continuously differentiable. Its gradient is
$L$-Lipschitz with $L=\frac{4}{\lambda}$ for any $0<\lambda<1$. 
Moreover, for any elements $U$ and $V$ in $\mathcal U$ and $\mathcal V$, the gradient satisfies $\nabla \mathcal{E}_{\method}(U,V)
=
\partial_{(U,V)}\mathcal{L}_{\mathrm{aux}}(U,V;P^\ast,Q^\ast)$, where $P^\ast=P_{U,V}$, $Q^\ast=\parens{N_{U,V}+\lambda I}^{-1}$, with $P_{U,V}$ and $N_{U,V}$ defined in \cref{eq:closed-form_expression_P}, whereas $\mathcal{L}_{\mathrm{aux}}$ is the following auxiliary objective:
\begin{equation}
	\label{eq:auxiliary-loss}
\begin{aligned}
\mathcal{L}_{\mathrm{aux}}(U,V;P,Q)
:=&
\frac{1}{2} \mathbb{E}\brackets{ U(x)^{\top}Q\parens{PU(x)-V(y)}  + V(y)^{\top}Q\parens{PV(y)-U(x)}} \\
&+\frac{\lambda}{2}\mathbb{E}\brackets{\Verts{U(x)}^2 + \Verts{V(y)}^2}.
\end{aligned}
\end{equation}

\end{proposition}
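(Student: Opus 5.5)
The plan is to reduce everything to the matrix-valued maps $(U,V)\mapsto \Sigma_{U,V}$ and $(U,V)\mapsto N_{U,V}$. Both are continuous quadratic maps from $\mathcal U\times\mathcal V$ into symmetric $k\times k$ matrices, with directional derivatives
\begin{equation*}
\delta\Sigma=\mathbb E[V H^\top+HV^\top+G U^\top+UG^\top],\qquad \delta N=\mathbb E[UH^\top+HU^\top+VG^\top+GV^\top]
\end{equation*}
in a direction $(H,G)$. Since $N_{U,V}\succeq 0$, the matrix $Q=(N_{U,V}+\lambda I)^{-1}$ exists and satisfies $\Verts{Q}_{op}\le 1/\lambda$. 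Moreover, matrix inversion is smooth on the positive definite cone. Writing $\Tr(P_{U,V})=\Tr(\Sigma_{U,V}Q)$ therefore shows that $\mathcal E_{\method}$ is $C^\infty$, and in particular continuously differentiable.

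\textbf{Gradient identity.} I will differentiate $\Tr(\Sigma Q)$ using $\delta Q=-Q\,\delta N\,Q$, which gives
\begin{equation*}
\delta\Tr(\Sigma Q)=\Tr(Q\,\delta\Sigma)-\Tr(Q\Sigma Q\,\delta N).
\end{equation*}
Because $\Sigma$ and $Q$ are symmetric, $Q\Sigma Q=QP^\ast=P^{\ast\top}Q$ is symmetric. Substituting $\delta\Sigma$ and $\delta N$ and reading off the $U$-component then gives
\begin{equation*}
\nabla_U\mathcal E_{\method}=-QV+\tfrac12(QP^\ast+P^{\ast\top}Q)U+\lambda U,
\end{equation*}
and symmetrically for $V$. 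Next I differentiate $\mathcal L_{\mathrm{aux}}$ in $U$ with $P,Q$ frozen. The first expectation contributes $\tfrac12(QP+P^\top Q)U-\tfrac12 QV$, the cross term $-\tfrac12V^\top QU$ contributes a further $-\tfrac12 QV$, and the regularizer contributes $\lambda U$. Setting $P=P^\ast$ and $Q=Q^\ast$, the two expressions agree, and the $V$-component follows by the symmetry $U\leftrightarrow V$.

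\textbf{Lipschitz bound.} I will bound the operator norm of the Hessian of $\mathcal E_{\method}$ by $4/\lambda$ uniformly over $\mathcal U\times\mathcal V$, which gives the Lipschitz constant. The key structural facts are the following.
\begin{enumerate}[label=(\roman*)]
\item Since $N\pm\Sigma=\mathbb E[(U\pm V)(U\pm V)^\top]\succeq0$, we have $-N\preceq\Sigma\preceq N$. Hence $\Verts{Q^{1/2}\Sigma Q^{1/2}}_{op}\le 1$ and $\Verts{Q\Sigma Q}_{op}\le 1/\lambda$.
\item The whitened features $\tilde U=Q^{1/2}U$ and $\tilde V=Q^{1/2}V$ satisfy $\mathbb E[\tilde U\tilde U^\top+\tilde V\tilde V^\top]=Q^{1/2}NQ^{1/2}\preceq I$.
\end{enumerate}
The Hessian applied to $(H,G)$ consists of three groups of terms.
\begin{enumerate}[label=(\alph*)]
\item The frozen-coefficient terms $-QG$, $Q\Sigma Q H$ and $\lambda H$ have norm at most $(1/\lambda+1/\lambda+\lambda)\Verts{(H,G)}$.
\item Terms from varying $Q\Sigma Q$ and $Q$ include, for instance, $Q\,\delta N\,QV$, $Q\,\delta\Sigma\,QU$ and $Q\,\delta N\,Q\Sigma QU$. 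Each can be rewritten as $Q^{1/2}\bigl(Q^{1/2}\,\delta N\,Q^{1/2}\bigr)\tilde V$, or analogously with $\delta\Sigma$ in place of $\delta N$ or with the contraction $Q^{1/2}\Sigma Q^{1/2}$ inserted before $\tilde U$.
\item For these, Cauchy--Schwarz together with (ii) gives $\Verts{Q^{1/2}\,\delta N\,Q^{1/2}}_{op}\lesssim \lambda^{-1/2}\Verts{(H,G)}$. Combined with the outer factor $\Verts{Q^{1/2}}_{op}\le\lambda^{-1/2}$ and $\mathbb E\Verts{\tilde U}^2,\mathbb E\Verts{\tilde V}^2\le k$-free bounds from (ii), each term is $O(1/\lambda)$ with no dependence on $\Verts{U}$ or $\Verts{V}$.
\end{enumerate}
Collecting the terms and using $\lambda<1$, so that $\lambda\le 1/\lambda$, should produce the constant $4/\lambda$.

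The main obstacle is this last step: carefully pairing terms so that the constants add up exactly to $4$ rather than a larger number. The bilinear cross terms must be estimated jointly, not separately, and I expect to symmetrize and exploit $\Verts{Q^{1/2}\Sigma Q^{1/2}}_{op}\le1$ to achieve this.
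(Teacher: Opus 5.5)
Your smoothness argument and the gradient identity are correct and are the same computation as the paper's. The paper writes $\mathcal L_{\mathrm{aux}}=\tfrac12\Tr\bigl((QP+\lambda I)WW^{\top}-QWAW^{\top}\bigr)$, with $W=(U,V)$, $\Sigma=WAW^\top$ and $N=WW^\top$, and reads off $-QWA+(QP+\lambda I)W$. Componentwise this is your $-Q\,\mathbb E[V(y)\mid x]+Q\Sigma QU+\lambda U$. The term you write as $-QV$ is, as an element of $\mathcal U$, a conditional expectation, but it appears identically on both sides. The gap is in the Lipschitz part. That is the only place where the constant $4/\lambda$ has to be earned, and your plan does not earn it.

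\textbf{First problem: step (c) is not dimension-free as written.} Fact (ii) only gives $\mathbb E\Verts{\tilde U}^2+\mathbb E\Verts{\tilde V}^2=\Tr(Q^{1/2}NQ^{1/2})\le k$. So pairing $\Verts{Q^{1/2}\delta N Q^{1/2}}_{op}$ with $\Verts{\tilde V}_{L^2}$ costs a factor $\sqrt k$. The dimension-free pairing uses two bounds together:
\[
\Verts{M\tilde V}_{L^2}\le\Verts{M}_F\,\Verts{\mathbb E[\tilde V\tilde V^\top]}_{op}^{1/2}\le\Verts{M}_F,\qquad \Verts{Q^{1/2}\delta N Q^{1/2}}_F\le 2\lambda^{-1/2}\Verts{(H,G)}.
\]

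\textbf{Second problem: term-by-term bounds cannot reach $4/\lambda$.} Even after that repair, group (a) contributes $\lambda+2/\lambda$. Each of the four group-(b) terms contributes $2/\lambda$. The total is about $\lambda+10/\lambda$, and ``symmetrize'' does not say how to get below $4/\lambda$.

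\textbf{The missing idea is an exact cancellation between (a) and (b).} Set $R=(W^\top W+\lambda I)^{-1}$. The push-through identity gives $QW=WR$ and $W^\top QW=I-\lambda R$. Hence $-QWA+Q\Sigma QW=-QWA+QWA(I-\lambda R)=-\lambda WRAR$, so $\nabla\mathcal E_{\method}(W)=\lambda W(I-RAR)$. Differentiating this form gives $\lambda\,\delta W(I-RAR)+\lambda WRDRAR+\lambda WRARDR$, with $D=\delta W^\top W+W^\top\delta W$. The SVD of $W$ yields $\Verts{R}_{op}\le\lambda^{-1}$, $\Verts{WR}_{op}\le(2\sqrt\lambda)^{-1}$ and $\Verts{WRW^\top}_{op}\le1$. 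With $\Verts{A}_{op}\le1$, the six resulting terms sum to $\lambda+\tfrac{11}{4\lambda}\le\tfrac4\lambda$ for $\lambda<1$. This is the paper's route. Without it, your argument only yields an $O(1/\lambda)$ constant larger than the one claimed.
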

\cref{prop:gradient_expression_appendix} ensures the gradient is $\frac{4}{\lambda}$-Lipschitz whenever $0<\lambda<1$, thus supporting gradient-based optimization and informs the learning-rate choice \citep{Nesterov:2003}. 
The gradient identity follows by computing the expression of the gradient of $\mathcal{E}_{\method}$ and noticing that it matches the partial derivative w.r.t. $(U,V)$ of the function $\mathcal{L}_{\mathrm{aux}}$ of $4$ variables $U$, $V$, $P$ and $Q$, when the last two variables are set to $P^*$ and $Q^{*}$.
Next, we describe a stochastic algorithm leveraging such an identity.

\subsection{A stochastic compositional algorithm for \method } 

Optimization over \(\mathcal U\) and \(\mathcal V\) is infeasible and statistically ill-posed with finite data.  In practice, one restricts to parametric encoders \(U_\theta:\mathcal X\to\mathbb R^k\) and \(V_\psi:\mathcal Y\to\mathbb R^k\), typically neural networks. While the guarantees in \cref{prop:stability-peira,prop:peira-thresholded-subspace} do not directly apply, empirical scaling laws suggest that increased model size, data, and compute improve objective optimization \citep{Hestness:2017,Kaplan:2020}. We therefore use this parametrized setting as a scalable approximation to the population problem studied above.

Using the gradient expression in \cref{prop:gradient_expression_appendix}, we derive \cref{alg:peira} for learning encoders from paired data \(\mathcal D=\{(x,y)\}\). The algorithm exploits that \(P^\star\) and \(Q^\star\) are nonlinear functions of the population uncentered cross-covariance and auto-covariance of the features to maintain EMA estimates of these quantities that reduce gradient variance, in analogy with stochastic compositional gradient methods for objectives involving nonlinear functions of expected values \citep{wang2017stochastic}.

\begin{figure}[H]

\begin{minipage}{0.45\linewidth}
At each iteration of \cref{alg:peira}, running estimates $\hat{P}^{\star}$ and $\hat{Q}^{\star}$ of $P^{\star}$ and $Q^{\star}$ are updated from the current features. Specifically, $\Sigma$ and $N$ maintain EMA estimates of $\Sigma_{U,V}$ and $N_{U,V}$ from mini-batch features $\Phi_X= (U_\theta(x_i))_{i=1}^B$ and $\Phi_Y= (V_{\psi}(y_i))_{i=1}^B$, stacked as matrices in $\mathbb R^{B\times k}$, yielding closed-form approximations of $P^{\star}$ and $Q^{\star}$. 
These quantities define a mini-batch estimate \(\widehat{\mathcal{L}}_{\mathrm{aux}}\), through which the encoder parameters are updated by backpropagation, with \(\hat P^\star\) and \(\hat Q^\star\) held fixed (lines 7--8).
This is efficient for moderate $k$, since the linear estimators are computed analytically from running feature statistics.
\end{minipage}\hfill
\begin{minipage}{0.53\linewidth}
\captionsetup{type=algorithm}
\rule{\linewidth}{0.8pt}\vspace{-0.9ex}
\caption{\algocomp{}}
\label{alg:peira}
\hrule
\begin{algorithmic}[1]
\Require Initial encoder parameters $\theta,\psi$, optimizer
\Require Initial $k\times k$ matrices $\Sigma,N$, dataset $\mathcal D$
\Require Regularization $\lambda$, EMA-rate $\eta\in(0,1]$

\While{not converged}
    \State Sample mini-batches $\mathcal B$  from $\mathcal D$
    \State $\Phi_X, \Phi_Y\leftarrow (U_{\theta}(x))_{(x,y)\in \mathcal{B}},(V_{\psi}(y))_{(x,y)\in \mathcal{B}}$
    \State $\Sigma \leftarrow (1-\eta)\Sigma +  \frac{\eta}{B}(\Phi_X^{\top}\Phi_Y + \Phi_Y^{\top}\Phi_X)$
    \State $N \leftarrow (1-\eta)N +  \frac{\eta}{B}(\Phi_X^{\top}\Phi_X + \Phi_Y^{\top}\Phi_Y)$
    \State $\hat{P}^{\star},\hat{Q}^{\star} \leftarrow \Sigma(N +\lambda I)^{-1}, (N +\lambda I)^{-1}$
    \State $g_\theta, g_\psi \leftarrow \nabla_{\theta,\psi} \widehat{\mathcal{L}}_{\mathrm{aux}}(U_\theta,V_\psi;\hat{P}^{\star},\hat{Q}^{\star})$
    \State Update $\theta, \psi$ using gradients $g_\theta,g_\psi$
\EndWhile
\end{algorithmic}
\vspace{0.85ex}
\hrule
\end{minipage}

\end{figure}

\begin{table}[t]
\centering
\caption{\small Linear-probe top-1 accuracy (\%) after pretraining a ResNet-50 for $100$ epochs on ImageNet-1K and a ResNet-18 for $1000$ epochs on CIFAR-10. VICReg, SIGReg, and \method{} numbers are mean $\pm$ one-sigma sample standard deviation over $3$ seeds, run under a matched augmentation and linear-probe protocol. BYOL and SimSiam ImageNet-1K numbers are quoted from Table~4 of \citet{Bardes:2022} ($100$-epoch ResNet-50 reimplementation under each method's original recipe). The SimCLR ImageNet-1K number is quoted from Table~S3 of \citet{RankMe} ($100$-epoch ResNet-50, 8192-8192-2048 projector).}
\label{tab:in1k}
\resizebox{\textwidth}{!}{%
\begin{tabular}{llcc}
\toprule
Method & Approach & ImageNet-1K~$\uparrow$ & CIFAR-10~$\uparrow$ \\
\midrule
BYOL~\citep{Grill:2020a}        & self-distillation (EMA)            & $69.3$            & -- \\
SimSiam~\citep{Chen:2021b}      & self-distillation (stop-gradient) & $67.9$            & -- \\
SimCLR~\citep{Chen:2020c}       & contrastive (negatives)     & $68.5$            & -- \\
\midrule
VICReg~\citep{Bardes:2022}      & optimization                       & $68.81 \pm 0.09$  & $90.92 \pm 0.14$ \\
SIGReg~\citep{Balestriero:2025} & optimization                       & $66.26 \pm 0.28$  & $92.34 \pm 0.21$ \\
\method{} (ours)                & optimization                       & $66.50 \pm 0.02$  & $90.97 \pm 0.10$ \\
\bottomrule
\end{tabular}%
}
\end{table}

\begin{figure}[t]
\centering
\includegraphics[width=.32\linewidth]{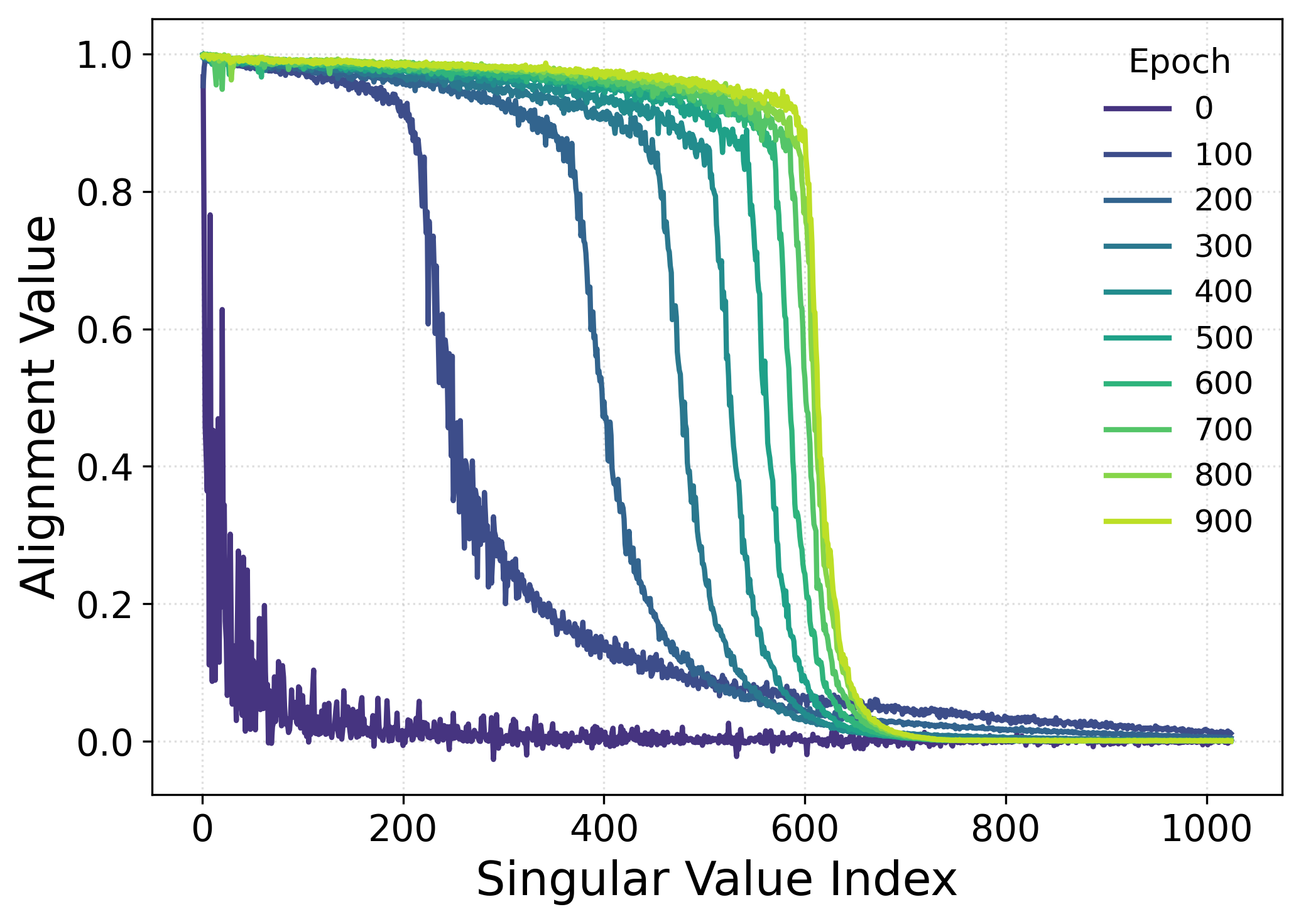}
\includegraphics[width=.32\linewidth]{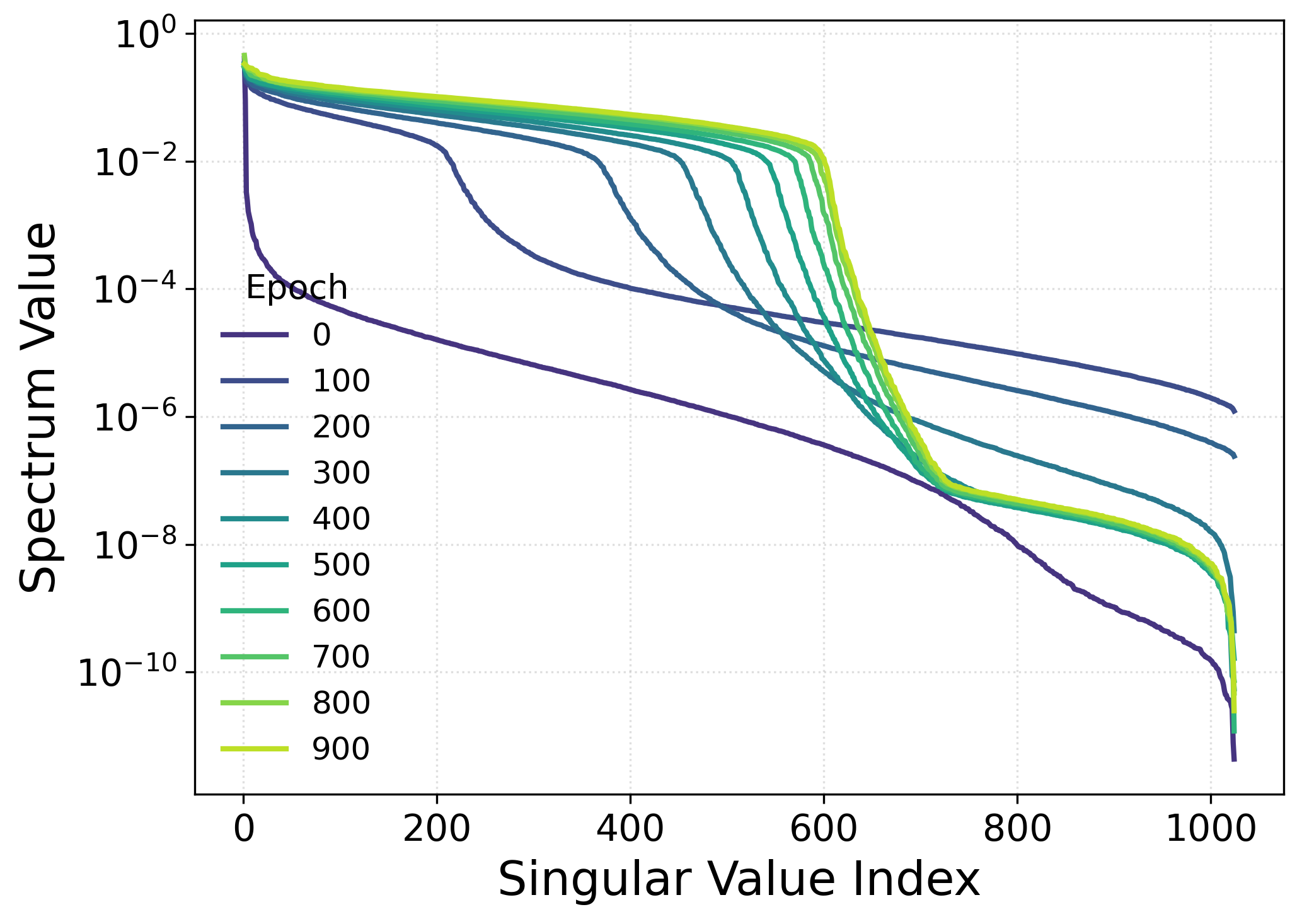}
\includegraphics[width=.32\linewidth]{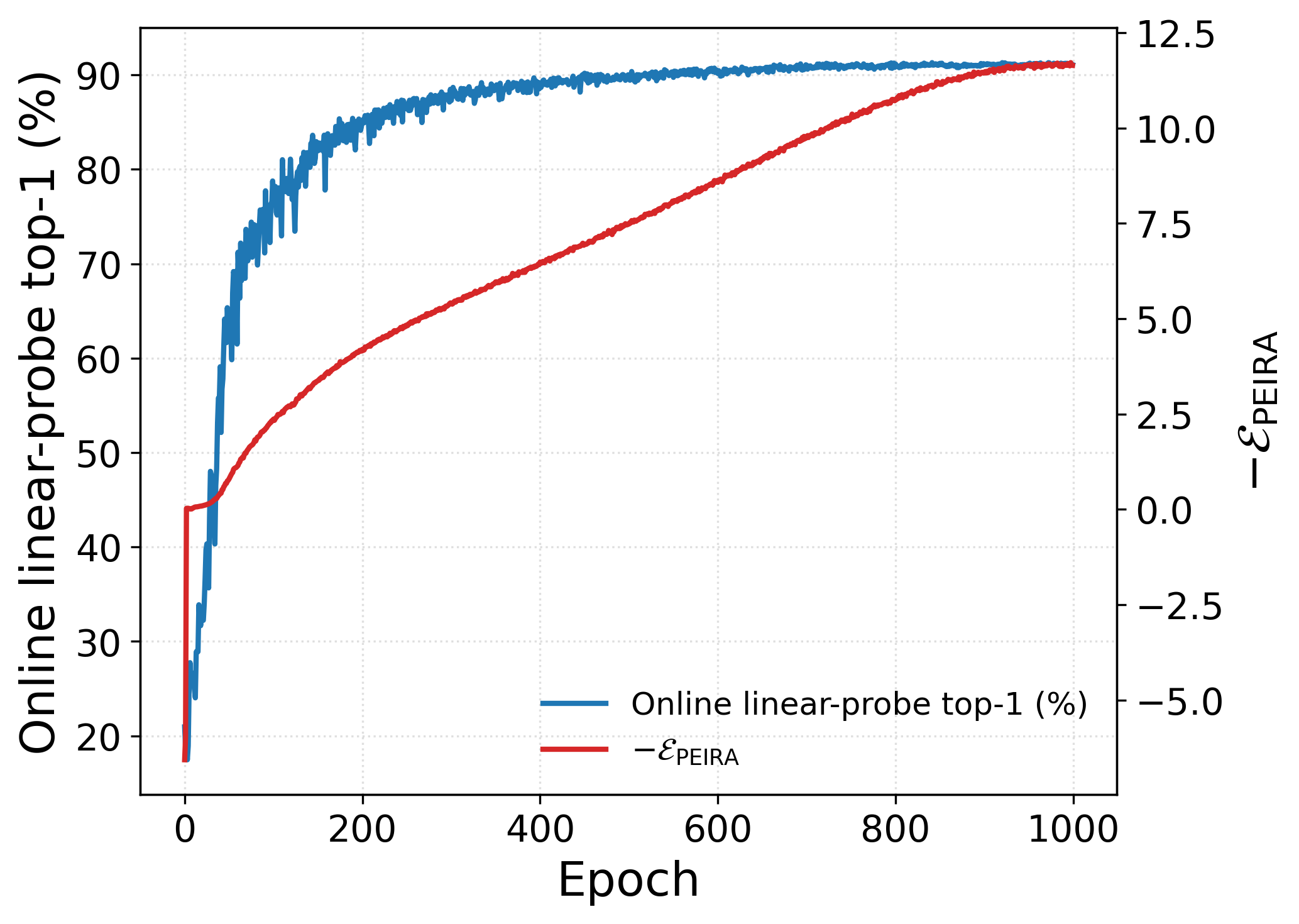}
\caption{\small \textbf{Empirical dynamics of PEIRA on CIFAR-10.} \method{} on a single training run. \emph{Left:} alignment $\alpha_i = e_i^{\top} N_{U,V} e_i / (\Verts{e_i}\Verts{N_{U,V}e_i})\in[0,1]$ of the top eigenvectors $e_i$ of the signal $\Sigma_{U,V}$ with respect to the noise $N_{U,V}$ during training. $\alpha_i=1$ implies $e_i$ is also an eigenvector of $N_{U,V}$. Both matrices gradually align their eigenvectors during training.
\emph{Middle:} spectrum of $\Sigma_{U,V}$ during training develops additional significant modes. 
\emph{Right:} validation top-1 and the negative \method{} objective $-\mathcal{E}_{\method}$ during training are strongly correlated.}  
\label{fig:cifar10-diagnostics}
\vspace{-1.em}
\end{figure}

\begin{remark} \textbf{Shared encoders.}
	When the two views are symmetric, a single shared encoder can be used for both, as is common in practice \citep{Chen:2021b}. The algorithm can then be adapted by setting $V_{\psi}=U_{\theta}$ and performing a single gradient update with $\nabla_{\theta}\widehat{\mathcal{L}}_{\mathrm{aux}}(U_{\theta}, U_{\theta};\hat{P}^{\star},\hat{Q}^{\star})$.
\end{remark}
\begin{remark}\label{remark:ema-schedule} \textbf{EMA-rate schedule.}
In practice, $\eta$ is annealed from $\eta_{\mathrm{init}}$ to a smaller value $\eta_{\min}$ over training, favoring fresh statistics early and lower estimator variance near convergence.
\end{remark}
\begin{remark}\label{remark:bilevel}
\textbf{Joint updates.} 
\cref{eq:csr-objective,eq:csr-optimal-predictor} 
yield a bilevel problem, enabling joint updates of \(P^\star\), \(Q^\star\), and the encoders \citep{Dagreou:2022a,Arbel:2022a}, similarly to BYOL \citep{Grill:2020a}  and I-JEPA \citep{Assran:2023a}.
This would avoid explicit matrix inversion, but add hyperparameters and potentially degrade estimates of \(P^\star\) and \(Q^\star\).
\end{remark}

\section{Experiments}
\label{sec:experiments}

\begin{figure}[t]
\centering
\includegraphics[width=0.32\linewidth]{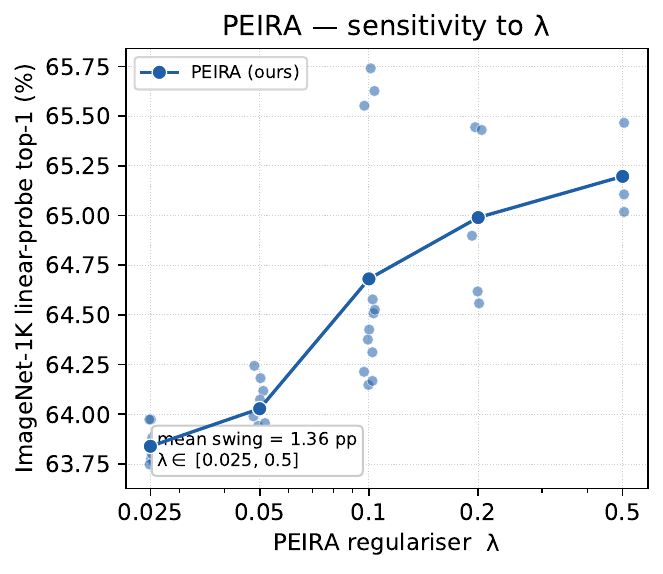}\hfill
\includegraphics[width=0.32\linewidth]{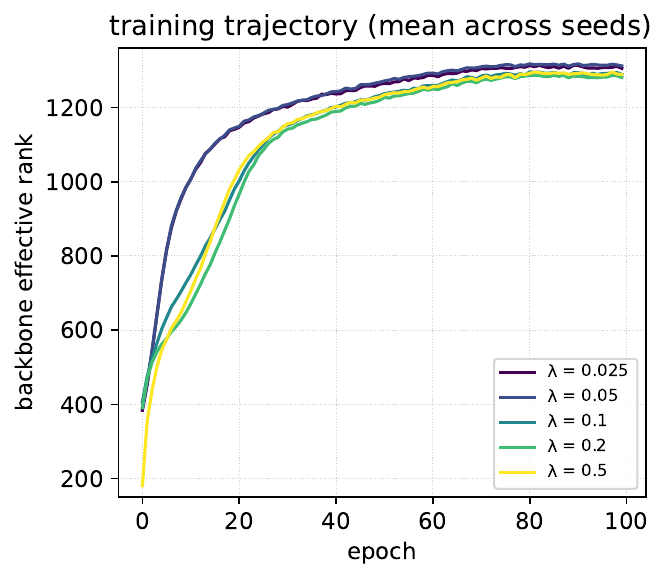}\hfill
\includegraphics[width=0.32\linewidth]{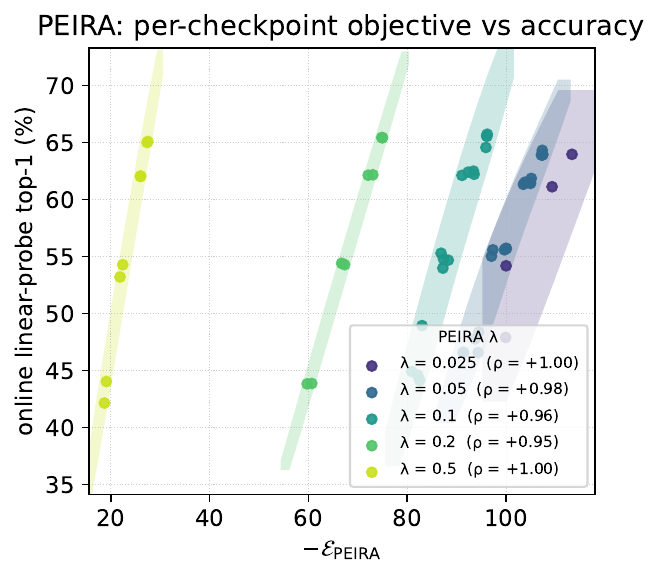}
\caption{\small \textbf{Study of \method{} regularization on ImageNet-1K.} ResNet-50, $100$ epochs, frozen \method{} configuration of \cref{tab:in1k_config}; $1$--$3$ random seeds per cell. \emph{Left:} sensitivity of the final online linear-probe top-1 accuracy to the regularizer $\lambda$; each dot is one sweep cell, pooling over $\eta_{\min}$, optimizer weight decay, and gradient clipping. \emph{Middle:} backbone entropy-based effective rank~\citep{RankMe} versus epoch, colored by~$\lambda$; smaller~$\lambda$ accumulates rank faster in the early phase. 
\emph{Right:} online linear-probe top-1 versus the negative objective $-\mathcal{E}_{\method}$ (mirroring the rightmost CIFAR-10 panel of \cref{fig:cifar10-diagnostics}), evaluated 
for checkpoints at epochs $25,\,50,\,75,\,100$ across runs spanning various $\lambda$ values; each dot is one (cell, checkpoint) pair, pooling over $\eta_{\min}$, optimizer weight decay, and gradient clipping. Per-$\lambda$ Spearman correlation $\rho$ is reported in the right-panel legend.}
\label{fig:in1k-peira}
\vspace{-1.em}
\end{figure}

\textbf{Setup.}
We compare \method{} to VICReg~\citep{Bardes:2022} and SIGReg/LeJEPA~\citep{Balestriero:2025}, two regularization-based non-contrastive SSL baselines. To isolate the contribution of the SSL objective, we fix the augmentation pipeline, pretraining budget, and linear-probe protocol (following Appendix~E of \citet{Bardes:2022}) across methods. Per-method defaults govern the remaining choices: 
VICReg uses the ResNet-50 / ImageNet-1K recipe from \cite{Bardes:2022,garrido2023on}. 
Since SIGReg has no published recipe at this scale, we tuned its \(\lambda\), projector architecture, and optimizer, and report the best configuration found (\cref{sec:appendix:sigreg_sensitivity}). 
On CIFAR-10, VICReg and SIGReg use the best hyperparameters from \citet{ebjepa} (tuned at 300 epochs, re-run at 1000 epochs), while \method{} follows the same budget. Full hyperparameters are in \cref{sec:training_details}.

\textbf{Results.}
On ImageNet-1K (\cref{tab:in1k}), \method{} matches SIGReg within seed noise and trails VICReg by $\sim\!2.3$\,\%.
BYOL, SimSiam, and SimCLR are included for context but rely on architectural asymmetries (momentum encoder, stop-gradient~$+$~predictor, or negatives) contrary to \method{}.
On CIFAR-10, \method{} matches VICReg within seed noise and trails SIGReg by ${\sim}1.4$\,\%.

\textbf{Correlation between training objective and downstream accuracy.} The negative \method{} objective increases monotonically with linear-probe accuracy along a single CIFAR-10 run (\cref{fig:cifar10-diagnostics}, right), and remains strongly correlated with online linear-probe top-1 at ImageNet-1K scale (\cref{fig:in1k-peira}, right), supporting label-free model selection.

\textbf{Spectrum evolution during training.} We empirically examine whether the alignment between the signal and noise matrices $\Sigma_{U,V}$ and $N_{U,V}$ (discussed in \cref{sec:method}) emerges during training. 
On CIFAR-10, the eigendirections of $\Sigma_{U,V}$ progressively align with eigenvectors of $N_{U,V}$, while its spectrum gradually develops additional significant modes
(\cref{fig:cifar10-diagnostics}, left and middle). At ImageNet-1K scale, the same threshold mechanism appears in the first $\sim\!20$ epochs (\cref{fig:in1k-peira}, middle), consistent with \cref{prop:peira-thresholded-subspace}. End-of-training ranks remain within $\sim\!2\%$ of each other across $\lambda\in\{0.025,0.05,0.1,0.2,0.5\}$, the regime where $\sqrt{c_i}\gg\lambda$ for the dominant canonical modes; per-$\lambda$ values for both backbone and projector are in \cref{sec:appendix:eff_rank}.

\textbf{Sensitivity to the regularization coefficient $\lambda$.} \method{} exposes a single trade-off coefficient, the regularizer $\lambda$ of \cref{eq:csr-objective}; the EMA schedule of \cref{alg:peira} is fixed across runs. \cref{sec:dynamics,sec:algo} suggest an \emph{a priori} window: $\nabla\mathcal{E}_{\method}$ is $4/\lambda$-Lipschitz for $0<\lambda<1$ (\cref{prop:gradient_expression_appendix}), and \cref{prop:peira-thresholded-subspace} retains a non-empty canonical subspace as long as $\lambda$ stays below the leading canonical correlation, so $\lambda\in(0,1]$ is safe by construction.
Empirically (\cref{fig:in1k-peira}, left), top-1 accuracy varies by only $1.36$\,$\%$ across $\lambda\in[0.025, 0.5]$. A comparable sweep for SIGReg is reported in \cref{sec:appendix:sigreg_sensitivity}.

\section{Conclusions}
We have analyzed the JEPA dynamics in the simplified setting (but arguably closer to the practical case where $P$, $U$ and $V$ are all nonlinear) where the predictor is linear although the encoders are arbitrary nonlinear square-integrable functions, identified the corresponding equilibria and their stability, and proposed an alternative objective where maximizing the trace of the optimal predictor provably leads to nontrivial global optimizers. We have also demonstrated through preliminary experiments with standard benchmarks the promise of this approach. We now briefly discuss its limitations and future work.

\textbf{Limitations.} From a theoretical viewpoint, we have assumed throughout an optimal predictor. Relaxing our analysis to suboptimal predictors is, in  principle,  possible using the theory of singularly perturbed systems \citep{Habets:2010,Arbel:2022a}.  This would also allow an analysis of discrete-time stochastic updates of \method,  but is left for future work. Our proofs further assume distinct canonical correlations which are simplifying assumptions that can be relaxed, but would result in more complicated characterization.  From an empirical point of view, the experiments presented in this paper are preliminary. Although they have demonstrated competitive performance on benchmarks like ImageNet-1K, more thorough experiments are needed to establish \method as a viable alternative to BYOL, SimSiam, VICReg, V-JEPA 2, and SIGReg for example, notably through experiments on more complex tasks such as video classification~\citep{ssv2, Kinetics400}. Code will be released upon publication.

\textbf{Future work.} Most (but not all) of our discussion and proofs are valid for encoders that belong to Hilbert spaces of infinite dimension, and predictors that are continuous linear operators over these. We plan to investigate next how much of the theory remains valid in this general setting and whether using reproducing kernel Hilbert spaces might lead to a practical computational solution.
As shown by recent work on so-called world models~\citep{Sobal:2025,terver2026drivessuccessphysicalplanning}, learning action models in the form of control encoders and conditioning predictors on these models has led to new approaches to robotic planning. We also plan to investigate how PEIRA could be used in this setting, the simplest but probably unsatisfactory instance being to concatenate the state and control features.
 
\begin{ack}
This work was supported in part by the French government under management of Agence Nationale de la Recherche as part of the ``France 2030'' program, PR[AI]RIE-PSAI project, reference ANR-23-IACL-0008. J.P. was supported in part by the Louis Vuitton/ENS chair in artificial intelligence, the Institute of Information \& Communications Technology Planning \& Evaluation (IITP) grant funded by the Korean Government (MSIT) (No.\ RS-2024-00457882, National AI Research Lab Project), and a Global Distinguished Professorship at the Courant Institute of Mathematical Sciences and the Center for Data Science at New York University. 
M.A. was supported by the ANR project BONSAI (grant ANR-23-CE23-0012-01) and was
granted access to the HPC resources of IDRIS under the allocation [AD011016476R1] made by GENCI.
\end{ack}

\newpage
\bibliography{extracted_bib}

%% file: general_proofs_appendix.tex
\appendix

\section{Preliminaries and Background on Canonical Correlation Analysis}\label[appendixsection]{sec:background}
\subsection{Nonlinear canonical correlation analysis}\label[appendixsection]{appendix:cca} 
We start by briefly recalling population nonlinear CCA
\citep{Lancaster:1958,Michaeli:2015}, which seeks representations
$(U,V)\in\mathcal U\times\mathcal V$ maximizing inter-view correlation under orthonormality constraints. 
Specifically, these are  solutions to the following constrained optimization problem:
\begin{align}\label{eq:non_linear_CCA}
	\max_{(U,V)\in \mathcal{U}\times \mathcal{V}} & \mathbb{E}\brackets{U(x)^{\top}V(y)}  ,\qquad  \text{s.t.} \quad &\mathbb{E}\brackets{U(x)U(x)^{\top}} = I, \quad \mathbb{E}\brackets{V(y)V(y)^{\top}} = I,
\end{align}
Such representations can be thought of as \emph{predictive} in the sense that $V(y)$ is most predictable from $U(x)$ and vice versa.
We need the following mild regularity assumption on the data distribution to ensure nonlinear CCA admits a spectral characterization \citep{Michaeli:2015}:
\begin{assumplist}
\item\label{assump:density_ratio}
The density ratio $f(x,y)$, i.e., Radon-Nikodym derivative, of $\mathbb P$ with respect to
$\mathbb P_X\otimes\mathbb P_Y$ exists and is integrable under $\mathbb{P}$, i.e., 
$\mathbb E[f(x,y)]<\infty$.
\end{assumplist}
This is a mild assumption (see \cref{remark_assumption}) ruling out degenerate cases such as a deterministic dependence between $x$ and $y$.  Under \cref{assump:density_ratio}, CCA admits a spectral characterization in terms of the canonical directions $(\psi_{i},\xi_i)_{1\leq i\leq \numcca}$ introduced in \cref{def:cca}. 
Specifically, for $k\leq \numcca$, a solution to the CCA problem is obtained by stacking the first $k$
canonical directions:
\begin{align}
U(x) = \sum_{i=1}^k f_i \psi_i(x)
\qquad
V(y) = \sum_{i=1}^k f_i \xi_i(y),
\end{align}
where $(f_i)_{1\leq i\leq k}$ are the columns of the $k\times k$ identity matrix. Accordingly, the top-$k$ canonical subspaces are the spans of these directions.

\begin{remark}\label{remark_assumption}
\cref{assump:density_ratio} holds, for example, on compact domains when \(x\) and \(y\) are obtained by independently perturbing latent paired samples \((x_0,y_0)\) through kernels whose densities are uniformly bounded above and below. Indeed, if \(0<m_X\le k_X(x\mid x_0)\le M_X\) and \(0<m_Y\le k_Y(y\mid y_0)\le M_Y\), then \(p(x,y)\le M_XM_Y\), \(p(x)\ge m_X\), and \(p(y)\ge m_Y\), so \(f(x,y)=\frac{p(x,y)}{p(x)p(y)}\le \frac{M_XM_Y}{m_Xm_Y}\). For example, one may take \(k_X(x\mid x_0)=(1-\varepsilon_X)q_X(x\mid x_0)+\varepsilon_X/\operatorname{vol}(\mathcal X)\), where \(q_X\) is any uniformly bounded local perturbation density on \(\mathcal X\) and \(\varepsilon_X\in(0,1)\), and similarly for \(k_Y\).
\end{remark}

\subsection{Characterizing nonlinear CCA via generalized eigenvalue decomposition (GEVD)}\label{sec:appendix:cca-gevd}
A GEVD problem aims to find eigenpairs
$(s,w)$ such that $
Aw = s\,Bw$, where $A$ and $B$ are bounded symmetric operators in some Hilbert space $\mathcal{H}$, with $B$ positive definite. 
GEVD problems naturally arise in a range of machine learning problems \citep{Sun:2009} and, in particular, provide a tool for performing CCA.
Specifically, a solution of the CCA problem is characterized from a particular GEVD problem by taking its top-$k$ eigenvectors and applying an orthonormalization procedure, such as Gram-Schmidt, to enforce the constraints in \cref{eq:non_linear_CCA}. 
Specifically, $B$ is the identity operator defined on the product space $\mathcal{H} = L_2(\mathbb{P}_x)\times L_2(\mathbb{P}_y)$ of real-valued square integrable functions w.r.t. $\mathbb{P}_X$ and $\mathbb{P}_Y$, whereas $A$ is the uncentered cross-covariance operator defined on the same space as:
\begin{align}\label{eq:operators}
	\langle w,Aw'\rangle_{\mathcal{H}}= \mathbb{E}\brackets{u(x)v'(y) + u'(x)v(y)},\qquad \forall 
	w=(u,v), w' = (u',v')\in \mathcal{H}. 
\end{align}
By Cauchy--Schwarz, $A$ is bounded with operator norm at most $1$:
\begin{align}\label{eq:bounded_op_norm_A}
	\verts{\langle w,Aw'\rangle_{\mathcal{H}}}\leq \Verts{w}_{\mathcal{H}}\Verts{w'}_{\mathcal{H}},\qquad \forall\, w,w' \in \mathcal{H}.
\end{align}
To ensure the problem\footnote{Since the operator $B$ is the identity, such problem reduces  to an EVD problem of the operator $A$, albeit, possibly, in infinite dimensions.} has a well-behaved leading spectrum, we need to assume that the operator $A$ is compact\footnote{A compact operator is a linear operator that maps bounded sequences to relatively compact sequences; equivalently, every image sequence has a convergent subsequence.}, which holds as soon as \cref{assump:density_ratio} holds \citep{Michaeli:2015}. In this case, the canonical directions $(\psi_i,\xi_i)_{1\leq i\leq \numcca}$ and canonical correlations $(c_i)_{1\leq i\leq \numcca}$ correspond to  eigenvectors and positive eigenvalues of the GEVD defined by the operators $A$ and $B$.  
Recovering representations $U,V$ that span the top-$k$ subspace then yields a CCA solution after orthogonalization.

\textbf{Eigen structure of the cross-covariance operator $A$.} 
The following proposition provides an eigen decomposition for operator $A$ under \cref{assump:density_ratio}, which allows expressing the solution of non-linear CCA and will be crucial in our analysis.  
\begin{proposition}\label{prop:compactness_operator_A}
	Under \cref{assump:density_ratio}, the operator $A$ is compact and admits a spectral decomposition. Specifically, letting $I=\mathbb{N}$ if $\mathcal{H}$ has infinite dimensions or $I= \{1,\dots,\dim(\mathcal{H})\}$ otherwise, there exists an orthonormal basis $(\zeta_i)_{i\in I}$ of $\mathcal{H}$ and eigenvalues $(s_i)_{i\in I}$   
	 satisfying $A\zeta_i = s_i \zeta_i$. 
	 Let $I_{+}$ be the index set of positive eigenvalues, and $\sigma$ be an ordering of the indices in $I_{+}$ such that  $s_{\sigma(1)}\geq \dots \geq s_{\sigma(\numcca)} >0$, with $\numcca$ possibly infinite (in which case $s_{\sigma_j}\rightarrow 0$). Set $c_j = s_{\sigma_j}$ and $ (\psi_j,\xi_j) = \sqrt{2}\zeta_{\sigma_j}$. Then if $c_j$ are all distinct, $(\psi_j)_{j= 1}^\numcca$ and $(\xi_j)_{j=1}^\numcca$ are orthonormal families of $L_2(\mathbb{P}_X)$ and $L_2(\mathbb{P}_Y)$ satisfying $\mathbb{E}\brackets{ \psi_j(x)\xi_{j'}(y)}= c_j \delta_{j=j'}$ and that exhaust all cross-correlated directions (see  \cref{def:cca}).  
	 Moreover, any index $i\in I$ corresponding to negative eigenvalue is of the form $\verts{s_i} = c_{j}$ for some $j\in \{1,\dots, \numcca\}$, and its eigenvector $\zeta_i$ satisfies $\sqrt{2}\zeta_i = (\psi_j,-\xi_j)$. The converse is also true: for any index  $j\in \{1,\dots, \numcca\}$ there exists an index $i\in I$ with  negative eigenvalue $s_i$ satisfying $\verts{s_i}=c_j$. 
\end{proposition}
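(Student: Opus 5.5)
The plan is to reduce $A$ to a single Hilbert--Schmidt operator between $L_2(\mathbb P_X)$ and $L_2(\mathbb P_Y)$ and read off the spectral structure from its singular value decomposition. Define $T:L_2(\mathbb P_Y)\to L_2(\mathbb P_X)$ by $(Tv)(x)=\mathbb E[v(y)\mid x]=\int f(x,y)v(y)\,\diff\mathbb P_Y(y)$. By \cref{eq:operators}, $A(u,v)=(Tv,T^\ast u)$, so $A=\begin{pmatrix}0&T\\T^\ast&0\end{pmatrix}$ in block form. Under \cref{assump:density_ratio}, $\mathbb E_{\mathbb P}[f]=\int f^2\,\diff(\mathbb P_X\otimes\mathbb P_Y)<\infty$, so $f\in L_2(\mathbb P_X\otimes\mathbb P_Y)$. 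Hence $T$ is an integral operator with square-integrable kernel, i.e.\ Hilbert--Schmidt and in particular compact. Compactness of $A$ follows immediately, and since $A$ is bounded and self-adjoint (by the symmetry of \cref{eq:operators} and \cref{eq:bounded_op_norm_A}), the spectral theorem for compact self-adjoint operators gives an orthonormal basis $(\zeta_i)$ of eigenvectors with real eigenvalues accumulating only at $0$. Here the kernel of $A$ is completed with an arbitrary orthonormal basis.

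Next we use the SVD of $T$: $T=\sum_j c_j\,\psi_j\otimes\xi_j$ with $c_j>0$ non-increasing, $(\psi_j)$ and $(\xi_j)$ orthonormal, and $T\xi_j=c_j\psi_j$, $T^\ast\psi_j=c_j\xi_j$. A direct computation gives $A(\psi_j,\pm\xi_j)=\pm c_j(\psi_j,\pm\xi_j)$. Moreover $\ker A=\ker T^\ast\times\ker T$, and the vectors $\tfrac1{\sqrt2}(\psi_j,\pm\xi_j)$ together with a basis of $\ker A$ form an orthonormal basis of $\mathcal H$. So the nonzero spectrum of $A$ is exactly $\{\pm c_j\}$, which gives the pairing of negative and positive eigenvalues in both directions.

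To match the statement, which starts from an arbitrary eigenbasis $(\zeta_i)$, we argue as follows. If $A\zeta=s\zeta$ with $s\neq0$ and $\zeta=(u,v)$, then $Tv=su$ and $T^\ast u=sv$, so $T^\ast Tv=s^2v$ and $TT^\ast u=s^2u$. Squaring therefore maps eigenvectors of $A$ to singular pairs of $T$. When the $c_j$ are distinct, each eigenspace of $TT^\ast$ for $c_j^2$ is one-dimensional, hence so is the eigenspace of $A$ for $\pm c_j$. Consequently $\zeta_{\sigma_j}=\pm\tfrac1{\sqrt2}(\psi_j,\xi_j)$ and the negative eigenvectors are $\pm\tfrac1{\sqrt2}(\psi_j,-\xi_j)$; the sign is absorbed into the definition. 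Finally, $\|u\|=\|v\|$ follows from $s\|u\|^2=\langle u,Tv\rangle=s\|v\|^2$, and together with $\|\zeta\|=1$ this yields unit norms after the $\sqrt2$ rescaling. The identity $\mathbb E[\psi_j\xi_{j'}]=\langle\psi_j,T\xi_{j'}\rangle=c_j\delta_{jj'}$ then follows.

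For exhaustion, take $u\perp$ all $\psi_j$. Then $u\in(\operatorname{ran}T)^\perp=\ker T^\ast$, because the closure of the range is spanned by the $\psi_j$. Hence $\mathbb E[u(x)v(y)]=\langle T^\ast u,v\rangle=0$ for every $v$, and symmetrically for functions of $y$. The bound $c_j\le1$ follows from \cref{eq:bounded_op_norm_A}. We also note that constants give $c_1=1$ with uncentered covariances, which is consistent with the statement. The main obstacle is the bookkeeping in this third step: relating an arbitrary eigenbasis of $A$ to the SVD of $T$, and using distinctness to pin the eigenvectors down up to sign. The compactness step is routine once $f\in L_2$ is noticed.
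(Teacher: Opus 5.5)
Your proof is correct. The paper does not actually prove this proposition: it defers compactness of $A$ to \citet{Michaeli:2015} and states the $\pm c_j$ eigenvalue pairing without argument. Your write-up therefore supplies the standard reasoning behind that citation: the square-integrable density ratio makes the conditional-expectation operator $T$ Hilbert--Schmidt. It also supplies the spectral bookkeeping the paper omits, via the block form $A=\begin{pmatrix}0&T\\T^\ast&0\end{pmatrix}$ and the SVD of $T$.

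One small correction concerns an arbitrary eigenbasis. Once $(\psi_j,\xi_j)$ is fixed by $\sqrt2\,\zeta_{\sigma_j}$, the negative-eigenvalue vector only satisfies $\sqrt2\,\zeta_i=\pm(\psi_j,-\xi_j)$, and that sign cannot be absorbed a second time. This is harmless, because the proposition only asserts that a suitable basis exists. You can flip $\zeta_i$, or simply take the basis $\tfrac1{\sqrt2}(\psi_j,\pm\xi_j)$ built in your second paragraph, which already proves the statement without the third step.
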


\subsection{Top-$k$ subspaces and ratio-trace optimization.}\label{sec:appendix:topk}
As illustrated through CCA, one often only needs to recover a \emph{top-$k$ subspace representation}, i.e. an element $W = (U,V) \in \mathcal{U}\times \mathcal{V}$ whose $k$ components span the same subspace as the $k$ largest generalized eigenvalues of a given GEVD problem. 
For a GEVD problem of finite dimension $D$ and when $A\succeq 0$, this subspace was characterized as the span of rows of any maximizer of the ratio-trace objective
\citep{Wang:2007,Jia:2009}:
\begin{align}\label{eq:ratio_trace_problem}
	\max_{W\in \mathbb{R}^{k\times D}} Tr\Big((WAW^{\top})(WBW^{\top})^{-1}\Big).
\end{align} 
While \cref{eq:ratio_trace_problem} appears to offer an appealing route to direct GEVP via iterative optimization, it is \emph{basis-invariant}, i.e. for any invertible $Q\in\mathbb R^{k\times k}$, $W$ and $QW$ attain the same value. Consequently, the above objective is flat along re-parameterizations within the optimal subspace and is typically ill-conditioned for iterative optimization. 

The regularization we introduce in the linear predictor considered by \method results in a modified \emph{ratio-trace} objective that breaks this invariance and becomes smooth. The resulting criterion selects a particular subset of optimal subspaces, generally without enforcing $B$-orthonormality.

\section{General Proof Strategy and Main Results}\label[appendixsection]{sec:hilbert_schmidt_analysis}
\subsection{General Setting and Notations}\label{sec:appendix:setting}

\paragraph{Hilbert spaces of square-integrable functions.} 
Recall the product space $\mathcal{H} = L_2(\mathbb{P}_x)\times L_2(\mathbb{P}_y)$ of real-valued square integrable functions w.r.t. $\mathbb{P}_X$ and $\mathbb{P}_Y$,  where $\mathbb P_X$ and $\mathbb P_Y$ denote the marginals of the joint data distribution $\mathbb P$ of pairs $(x,y)$. Let  $\mathcal{W}$ be the product space $ \mathcal{U}\times \mathcal{V}$, where $\mathcal{U}$ and $\mathcal{V}$ are the function spaces of square-integrable encoders, $\mathcal U=L^2(\mathbb{P}_X;\mathbb{R}^k)$, $\mathcal V=L^2(\mathbb{P}_Y;\mathbb{R}^k)$. 
We will view elements $W = (U,V)\in \mathcal{W}$ as bounded operators from $\mathcal{H}$ to $\mathbb{R}^{k}$ acting on any element $w = (u,v)\in \mathcal{H}$ as $Ww = \mathbb{E}\brackets{U(x)u(x)+V(y)v(y)}\in \mathbb{R}^k$ and denote by $W^{\top}$ the adjoint of $W$. For an element $W = (U,V)\in \mathcal{W}$, we denote by $WAW^{\top}$ and $WW^{\top}$ the $k\times k$ matrices given by:
\begin{align}
	WAW^{\top} = \mathbb{E}\brackets{U(x)V(y)^{\top} + V(y)U(x)^{\top}},\quad WW^{\top} = \mathbb{E}\brackets{U(x)U(x)^{\top} + V(y)V(y)^{\top}}. 
\end{align}

\paragraph{Hilbert space of square-integrable sequences.} 
Let $I$ be the index set defined in \cref{prop:compactness_operator_A} which is equal to $\mathbb{N}$ if $\mathcal{H}$ has infinite dimensions, or to $\{1,\dots,\dim(\mathcal{H})\}$ otherwise.
Denote by $\ell^2(I)$ the space of square integrable sequences indexed by $I$ and let $(e_i)_{i\in I}$ be its canonical orthonormal basis. Note that if $I$ is finite, then $\ell^2(I)$ can be identified with $\mathbb{R}^d$. 
Furthermore, fix $k\in\mathbb N$ and consider the space of Hilbert--Schmidt operators  $\mathbb{W} = \mathrm{HS}(\ell^2(I),\mathbb R^k)$, i.e. the set of bounded linear operators  that map elements in $\ell^2(I)$ to $\mathbb{R}^{k}$ and have a finite Hilbert-Schmidt norm:
$\Verts{\cdot}_{\mathbb{W}}$: 
\begin{equation}
\|\ucoor\|_{\mathbb{W}}^2=\sum_{i\in I}\|\ucoor e_i\|_{\mathbb R^k}^2,\qquad
\langle \ucoor,\ucoor'\rangle_{\mathbb{W}}:=\sum_{i\in I} \langle \ucoor e_i,\ucoor' e_i\rangle_{\mathbb{R}^k},\qquad \forall \ucoor,\ucoor' \in \mathbb{W}.
\end{equation}
Any element $\ucoor\in \mathbb{W}$ can therefore be formally identified with a $k\times |I|$ matrix $\ucoor=(w_{p i})_{1\le p\le k,\ i\in I}$ with $\sum_{p,i}|w_{p i}|^2<\infty$. We denote by $\ucoor^{\top}$ its adjoint. 
Let $(b_1,\dots,b_k)$ be the standard orthonormal basis of $\mathbb R^k$ and define the entry-wise rank-one
operators which form an orthonormal basis of $\mathbb{W}$.
\begin{equation}
E_{p i}:=b_p\otimes e_i^\ast\in \mathbb{W},\qquad
E_{p i}e_i=b_p,\ \ E_{p i}e_j=0\ (j\neq i).
\end{equation}

\paragraph{Coordinate representation in the eigenbasis of the operator $A$.}
Denote by $\ucoor = (\ucoor_i)_{i\in I}$ the components of an element $W\in \mathcal{W}$ in the eigenbasis $(\zeta_i)_{i\in I}$ of the operator $A$ defined in \cref{eq:operators}. 
Each component $\ucoor_i$ is a vector in $\mathbb{R}^k$  so that $\ucoor$ can be viewed as an element of $\mathbb{W}$. Then the following holds for any $W\in \mathcal{W}$ with component $\ucoor\in \mathbb{W}$ in the eigenbasis
\begin{align}\label{eq:decomposition_in_basis}
	WAW^{\top} = \ucoor S\ucoor^{\top},\qquad WW^{\top} =  \ucoor\ucoor^{\top},
\end{align}
where $S$ is a bounded compact  operator on $\ell^2(I)$ that is diagonal in its canonical basis and whose diagonal elements contain the eigenvalues of $A$. 

\paragraph{Orthogonal decomposition  induced by the compact operator $S$.}
Let $S$ be the operator considered in \cref{eq:decomposition_in_basis}. Since $S$ is a diagonal operator in the canonical basis of $\ell^2(I)$, it satisfies $Se_i=s_i e_i$, where $(s_i)_{i\in I}$  are the eigenvalues of $A$. Define the index sets
\begin{align}\label{eq:indices_decomposition}
	I_+:=\{i\in I:\ s_i>0\},\qquad
I_0:=\{i\in I:\ s_i=0\},\qquad
I_-:=\{i\in I:\ s_i<0\},
\end{align} 
so that $I=I_+\sqcup I_-\sqcup I_0$, and assume, without loss of generality, that $(\verts{s_i})_{i\in I_{+}}$ and $(\verts{s_i})_{i\in I_{-}}$ are decreasing in their index. 
The following orthogonal decomposition holds:  
\begin{align}\label{eq:orthogonal_space_decomposition}
\ell^2(I)=\ell^2(I_+)\oplus \ell^2(I_0)\oplus \ell^2(I_-),\qquad
\ell^2(I_{\sigma}):=\overline{\mathrm{span}}\{e_i:\ i\in I_\sigma\}\ \ (\sigma\in\{+,-,0\}).
\end{align}
The above decomposition induces a block decomposition on the space  $\mathbb{W}$ in the sense that
\begin{equation}
\mathbb{W} 
=\mathrm{HS}(\ell^2(I_+),\mathbb R^k)\ \oplus\ \mathrm{HS}(\ell^2(I_0),\mathbb R^k)\ \oplus\ \mathrm{HS}(\ell^2(I_-),\mathbb R^k),
\end{equation}
with corresponding orthonormal bases $\{E_{p i}:\ i\in I_{\sigma}\}$ for $\sigma\in \{+,-,0\}$.

\paragraph{Vector fields on $\mathbb{W}$.}
Let $\lambda$ be a positive number. We define the family of vector fields over $\mathbb{W}$
\begin{align}\label{eq:general_vector_field}
	F^{\kappa}(\ucoor) = \ucoor\parens{ I - \parens{\ucoor^{\top}\ucoor + \lambda I}^{-1}S(\ucoor^{\top}\ucoor S)^{\kappa}\parens{\ucoor^{\top}\ucoor + \lambda I}^{-1}}, \qquad \forall \ucoor \in \mathbb{W}, 
\end{align}
where $\kappa$ is  a non-negative integer. In particular, we will consider two values: $\kappa=0$ will recover the coordinate representation of the vector field driving the gradient flow of \method's objective, whereas $\kappa=1$ will recover the coordinate representations of the vector fields driving the self-distillation dynamics as shown next.

\subsection{Reduction to Hilbert--Schmidt coordinate representations}
\label[appendixsection]{sec:reduction}
In this section we provide two results, proven in \cref{sec:proof_coordinate_decomposition}, allowing to reduce the analysis to the space of coordinate representations of elements $W\in \mathcal{W}$ in the eigenbasis of $A$, as we do later on.

\textbf{Operator expression of \method's gradient.} 
The following proposition allows expressing the \method's objective and its gradient in terms of the operator $A$, provides an expression of the coordinates of $\nabla\mathcal{E}_{\method}$ in the eigenbasis of $A$ and establishes the equivalence between the gradient flow dynamics of $\mathcal{E}_{\method}$ and the dynamics of its components in the eigenbasis of $A$.
\begin{proposition}[Operator expressions of \method's objective and its gradient]\label{prop:Lipschitz_gradient}
For given $W=(U,V) \in \mathcal{W}$, the objective $\mathcal{E}_{\method}$ is smooth and  admits  
an $L$-Lipschitz gradient, with $L = 4\lambda^{-1}$ when $0<\lambda <1$. Additionally, the following expressions hold for $\mathcal{E}_{\method}$ and $\nabla\mathcal{E}_{\method}$  in terms of the operator $A$ in \cref{eq:operators}: 
 \begin{equation}
 \begin{aligned}
	\mathcal{E}_{\method}(W) &= -\frac{1}{2}\Tr\parens{WAW^{\top}\parens{WW^{\top}+\lambda I}^{-1}} + \frac{\lambda}{2}\Tr\parens{WW^{\top}}\\
		\nabla \mathcal{E}_{\method}(W) 	 &= \lambda W\parens{ I- \parens{W^{\top}W+\lambda I}^{-1}A\parens{W^{\top}W+\lambda I}^{-1}}.
\end{aligned}
\end{equation}
Moreover, assuming \ref{assump:density_ratio} holds,  let $\ucoor = (\ucoor_i)_{i\in I}\in \mathbb{W}$ denote the components of an element $W\in \mathcal{W}$ in the eigenbasis $(\zeta_i)_{i\in I}$ of $A$ and let $S$ be the diagonal operator $S$ defined in \cref{eq:decomposition_in_basis}. Then the components of $\nabla\mathcal{E}_{\method}(W)$ in such eigenbasis are of the form: 
\begin{align}\label{eq:vector_field_kappa_0_proof}
	\mathbf{g} = \lambda F^{0}(\ucoor), \qquad F^{0}(\ucoor) =  \ucoor\parens{I-\parens{\ucoor^{\top}\ucoor +\lambda I}^{-1}S\parens{\ucoor^{\top}\ucoor +\lambda I}^{-1} }.
\end{align}
Finally, if $(W_t)_{t\geq 0}$ is a trajectory in $\mathcal{W}$ and $(\ucoor_t)_{t\geq 0}$ are its components in the eigenbasis of $A$, then the following equivalence holds:
\begin{align}
	\dot{W}_t=-\nabla \mathcal{E}_{\method}(W_t) \iff \dot{\ucoor}_t = -\lambda F^{0}(\ucoor_t).
\end{align}

\end{proposition}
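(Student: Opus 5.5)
The plan is to treat $W=(U,V)$ as a bounded operator $\mathcal H\to\mathbb R^k$, as in \cref{sec:appendix:setting}, and to do everything with trace calculus and the push-through identity.

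\textbf{Objective.} By \cref{eq:closed-form_expression_P} we have $\Sigma_{U,V}=WAW^{\top}$, $N_{U,V}=WW^{\top}$ and $\Verts{U}_2^2+\Verts{V}_2^2=\Tr(WW^{\top})$. Substituting these into \cref{eq:csr-objective} gives the stated expression for $\mathcal{E}_{\method}$ immediately.

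\textbf{Gradient.} Write $M=WW^{\top}+\lambda I$, which is a $k\times k$ matrix with $M\succeq\lambda I$, and $G=(W^{\top}W+\lambda I)^{-1}$, which is a bounded operator on $\mathcal H$. The directional derivative along $\delta W$ is
\[
-\Tr(\delta W\,AW^{\top}M^{-1})+\Tr(\delta W\,W^{\top}M^{-1}WAW^{\top}M^{-1})+\lambda\Tr(\delta W\,W^{\top}).
\]
This uses $\partial M^{-1}=-M^{-1}(\delta W W^{\top}+W\delta W^{\top})M^{-1}$, the symmetry of $A$, and cyclicity of the trace; all traces are finite because $W$ is Hilbert--Schmidt and $A$ is bounded. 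Reading off the gradient gives
\[
\nabla\mathcal E_{\method}(W)=-M^{-1}WA+M^{-1}WAW^{\top}M^{-1}W+\lambda W .
\]
The push-through identity $M^{-1}W=WG$ rewrites this as $-WGA(I-W^{\top}WG)+\lambda W$. Since $I-W^{\top}WG=\lambda G$, we obtain $\lambda W(I-GAG)$, as claimed. Smoothness follows because $W\mapsto M^{-1}$ is a composition of a polynomial map with matrix inversion on the open set $\{M\succ0\}$.

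\textbf{Coordinates and flow equivalence.} Let $\Phi:\ell^2(I)\to\mathcal H$, $e_i\mapsto\zeta_i$, which is unitary by \cref{prop:compactness_operator_A}. Then $A=\Phi S\Phi^{\top}$ and $W=\ucoor\Phi^{\top}$. Since $\Phi^{\top}\Phi=I$, we get $W^{\top}W=\Phi\ucoor^{\top}\ucoor\Phi^{\top}$ and hence $G=\Phi(\ucoor^{\top}\ucoor+\lambda I)^{-1}\Phi^{\top}$. Substituting, $\nabla\mathcal E_{\method}(W)\Phi=\lambda\ucoor\bigl(I-(\ucoor^{\top}\ucoor+\lambda I)^{-1}S(\ucoor^{\top}\ucoor+\lambda I)^{-1}\bigr)=\lambda F^0(\ucoor)$, which is \cref{eq:vector_field_kappa_0_proof}. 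The map $W\mapsto W\Phi$ is a linear isometry between $\mathcal W$ and $\mathbb W$, so it commutes with time differentiation. Therefore $\dot W_t=-\nabla\mathcal E_{\method}(W_t)$ holds if and only if $\dot\ucoor_t=-\lambda F^0(\ucoor_t)$.

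\textbf{Lipschitz constant (main obstacle).} The plan is to bound the operator norm of the derivative of $W\mapsto\lambda W-\lambda WGAG$ uniformly in $W$. Three basic bounds come from the spectral calculus of $W^{\top}W$:
\[
\|\lambda G\|\le1,\qquad \|WG\|\le\tfrac{1}{2\sqrt\lambda},\qquad \|W^{\top}WG\|\le1,
\]
and from \cref{eq:bounded_op_norm_A} we also have $\|A\|\le1$.

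The derivative of $W\mapsto WGAG$ along $\delta W$ has three pieces:
\begin{itemize}
\item $\delta W\,GAG$;
\item $-WG(\delta W^{\top}W+W^{\top}\delta W)GAG$;
\item the symmetric counterpart of the second piece, in which the perturbation of $G$ hits the right factor instead.
\end{itemize}
Each piece is bounded by products of the basic bounds. The first is at most $\|\delta W\|_{HS}\|G\|^2\le\lambda^{-2}\|\delta W\|_{HS}$. Pieces containing $WGW^{\top}$ are at most $\lambda^{-1}$, and pieces containing $WG\,\delta W^{\top}W G$ are at most $\tfrac{1}{4\lambda}\lambda^{-1}$.

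After multiplying by $\lambda$, some terms still scale like $\lambda^{-1}$. Collecting them, together with the identity contribution $\lambda\le1$, should give a total of at most $4/\lambda$ when $0<\lambda<1$. The delicate part is arranging the factors so that no $\lambda^{-2}$ survives after the multiplication by $\lambda$. If the naive bound on $\delta W\,GAG$ is too crude, I would instead group the derivative as $\lambda\,\delta W\,G(\lambda^{-1}\cdot\lambda)AG$ and use $\|\lambda G\|\le1$ on one side while keeping one $\|G\|\le\lambda^{-1}$ on the other. If that still fails, I would fall back on a coarser constant of the form $C/\lambda$ and refine it at the end.
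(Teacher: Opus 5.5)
Your argument follows the paper's proof step for step. You substitute the closed form $P_{U,V}=WAW^{\top}(WW^{\top}+\lambda I)^{-1}$, differentiate directly and simplify with the push-through identity, and conjugate by the unitary $\Phi$ to get the coordinate form and the flow equivalence. You also bound the Hessian in operator norm using $\Verts{G}\le\lambda^{-1}$, $\Verts{WG}\le\frac{1}{2\sqrt{\lambda}}$, $\Verts{WGW^{\top}}\le 1$ and $\Verts{A}\le 1$.

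The one gap is that the stated constant $L=4/\lambda$ is never derived. You only say the terms ``should'' sum to $4/\lambda$ and offer a fallback $C/\lambda$. One intermediate bound is also misstated: $\Verts{WGW^{\top}\,\delta W\,GAG}$ is at most $\lambda^{-2}\Verts{\delta W}$, not $\lambda^{-1}\Verts{\delta W}$, because two factors of $G$ remain once $WGW^{\top}$ is bounded by $1$.

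The difficulty you anticipate does not exist. With $D=\delta W^{\top}W+W^{\top}\delta W$, the derivative of the gradient along $\delta W$ is $\lambda\,\delta W-\lambda\,\delta W\,GAG+\lambda\,WGDGAG+\lambda\,WGAGDG$. After expanding $D$, the non-identity terms are $\delta W\,GAG$, $WG\,\delta W^{\top}W\,GAG$, $WGW^{\top}\delta W\,GAG$, $WGAG\,\delta W^{\top}WG$ and $WGAGW^{\top}\delta W\,G$. Before the prefactor $\lambda$, their operator factors are bounded by $\lambda^{-2}$, $\tfrac14\lambda^{-2}$, $\lambda^{-2}$, $\tfrac14\lambda^{-2}$ and $\tfrac14\lambda^{-2}$ respectively. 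No term is worse than $\lambda^{-2}$, so no regrouping is needed. After the prefactor, the Hessian norm is at most $\lambda+\frac{11}{4\lambda}\le\frac{15}{4\lambda}\le\frac{4}{\lambda}$ for $0<\lambda<1$. This is exactly the paper's bound $C_0\le\lambda\bigl(1+\frac{11}{4\lambda^{2}}\bigr)$.
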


\textbf{Operator expression of the Self-distillation vector field.} 
Denote by $\mathcal{F}(U,V)$ the vector field arising from the self-distillation dynamics in   \cref{eq:ode_ssl_appendix}:
\begin{align}
	(\dot{U}_t,\dot{V}_t) =  -\mathcal{F}(U_t,V_t),\qquad  \mathcal{F}(U,V)= (\partial_U \mathcal{P}_{U\triangleright V}(P_{U,V},U,V), \partial_V \mathcal{P}_{V\triangleright U}(P_{U,V},V,U)).
\end{align}
The following proposition expresses $\mathcal{F}$ in terms of the operator $A$ and expresses the coordinates of  $\mathcal{F}(W)$ in the eigenbasis of $A$ and establishes the equivalence between the dynamics in \cref{eq:ode_ssl_appendix} and that of its components in the eigenbasis of $A$. 
\begin{proposition}\label{eq:expression_SSL_vector_field}
The vector field $\mathcal{F}(U,V)$ driving the dynamics in \cref{eq:ode_ssl_appendix} admits the expression:
	\begin{align}\label{eq:vector_field_kappa_1_proof}
		\mathcal{F}(W) = \lambda W\parens{I- \parens{W^{\top}W+\lambda I}^{-1} AW^{\top} W A\parens{W^{\top}W+\lambda I}^{-1}}, \qquad \forall W\in \mathcal{W}.
	\end{align}
Moreover, assuming \ref{assump:density_ratio} holds,  let $\ucoor = (\ucoor_i)_{i\in I}\in \mathbb{W}$ denote the components of an element $W\in \mathcal{W}$ in the eigenbasis $(\zeta_i)_{i\in I}$ of $A$ and let $S$ be the diagonal operator $S$ defined in \cref{eq:decomposition_in_basis}. Then the components of $\mathcal{F}(W)$ in such eigenbasis are of the form: 
\begin{align}
	\mathbf{g} = \lambda F^{1}(\ucoor), \qquad F^{1}(\ucoor) =  \ucoor\parens{I-\parens{\ucoor^{\top}\ucoor +\lambda I}^{-1}S\ucoor^{\top}\ucoor S\parens{\ucoor^{\top}\ucoor +\lambda I}^{-1} }.
\end{align} 

Finally, if $(W_{t})_{t\geq 0}$ is a trajectory in $\mathcal{W}$ and $(\ucoor_t)_{t\geq 0}$ are its components in the eigenbasis of $A$, then the following equivalence holds:
\begin{align}
	\dot{W}_t=-\mathcal{F}(W_t) \iff \dot{\ucoor}_t = -\lambda F^{1}(\ucoor_t).
\end{align}

\end{proposition}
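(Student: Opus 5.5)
The plan is to compute the two partial derivatives in \cref{eq:ode_ssl_appendix} explicitly, write them in operator form using $A$, and then reduce to the stated expression with push-through identities. The coordinate statements then follow from the unitary change of basis given by the eigenbasis of $A$.

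\textbf{Step 1: explicit partial derivatives.} Write $\Sigma=\Sigma_{U,V}=WAW^{\top}$, $N=N_{U,V}=WW^{\top}$ and $P=P_{U,V}=\Sigma(N+\lambda I)^{-1}$, so that $P^{\top}=(N+\lambda I)^{-1}\Sigma$ since both matrices are symmetric. Differentiate $\mathcal{P}_{U\triangleright V}$ in $U$ with $P$ and $V$ frozen, pairing against a test direction $\delta U$. This gives
\begin{align*}
\mathbb{E}\brackets{\delta U(x)^{\top}\parens{P^{\top}PU(x)-P^{\top}V(y)+\lambda U(x)}},
\end{align*}
and symmetrically for $\partial_V\mathcal{P}_{V\triangleright U}$. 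In the operator notation of \cref{sec:appendix:setting}, the cross terms $\mathbb{E}[V(y)u(x)]$ and $\mathbb{E}[U(x)v(y)]$ are exactly $WA(u,v)$ by \cref{eq:operators}. Hence
\begin{align*}
\mathcal{F}(W)=(P^{\top}P+\lambda I)W-P^{\top}WA .
\end{align*}

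\textbf{Step 2: algebraic reduction.} The key tools are the push-through identity $(N+\lambda I)^{-1}W=W(W^{\top}W+\lambda I)^{-1}$ and the relation $\Sigma W=WAW^{\top}W$. Using them, $P^{\top}PW=(N+\lambda I)^{-1}\Sigma\,\Sigma(N+\lambda I)^{-1}W=(N+\lambda I)^{-1}\Sigma\, WAW^{\top}W(W^{\top}W+\lambda I)^{-1}$. Therefore
\begin{align*}
P^{\top}PW-P^{\top}WA=(N+\lambda I)^{-1}\Sigma WA\brackets{W^{\top}W(W^{\top}W+\lambda I)^{-1}-I}=-\lambda (N+\lambda I)^{-1}\Sigma WA(W^{\top}W+\lambda I)^{-1}.
\end{align*}
Applying $\Sigma W=WAW^{\top}W$ and push-through once more gives $(N+\lambda I)^{-1}\Sigma W=W(W^{\top}W+\lambda I)^{-1}AW^{\top}W$. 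Substituting back yields \cref{eq:vector_field_kappa_1_proof}.

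As a sanity check in the one-dimensional case ($A=s$, $W=w$), both sides equal $\lambda w-\lambda s^2w^3/(w^2+\lambda)^2$. The main obstacle is this step: recognizing that the left-multiplication form $(P^{\top}P+\lambda)W-P^{\top}WA$ must be converted into a right-multiplication form, and keeping track of which factors commute past $W$. One must also check that all inverses are bounded, which holds since $W^{\top}W\succeq 0$ and $\lambda>0$.

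\textbf{Step 3: coordinates and dynamics.} Under \ref{assump:density_ratio}, \cref{prop:compactness_operator_A} gives an orthonormal eigenbasis $(\zeta_i)_{i\in I}$ of $A$. Let $\Phi:\ell^2(I)\to\mathcal{H}$ be the unitary map $e_i\mapsto\zeta_i$. Then $A=\Phi S\Phi^{\top}$, $W=\ucoor\Phi^{\top}$ and $W^{\top}W=\Phi\ucoor^{\top}\ucoor\Phi^{\top}$, so $(W^{\top}W+\lambda I)^{-1}=\Phi(\ucoor^{\top}\ucoor+\lambda I)^{-1}\Phi^{\top}$. Substituting into \cref{eq:vector_field_kappa_1_proof} gives $\mathcal{F}(W)=\lambda F^{1}(\ucoor)\Phi^{\top}$, i.e.\ the components of $\mathcal{F}(W)$ are $\lambda F^{1}(\ucoor)$.

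Finally, $W\mapsto W\Phi$ is a linear isometry between $\mathcal{W}$ and $\mathbb{W}$, because $\Verts{W}^2=\Tr(WW^{\top})=\Tr(\ucoor\ucoor^{\top})$. It therefore commutes with time derivatives, and $\dot W_t=-\mathcal{F}(W_t)$ holds if and only if $\dot{\ucoor}_t=-\lambda F^{1}(\ucoor_t)$.
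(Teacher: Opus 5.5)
Your proposal is correct and follows essentially the same route as the paper. Like the paper, you compute $(P^{\top}P+\lambda I)W-P^{\top}WA$, convert it to the right-multiplication form using the push-through identity and $\Sigma_{U,V}=WAW^{\top}$, and then pass to coordinates in the eigenbasis of $A$; your explicit unitary $\Phi$ in Step 3 simply spells out the change of basis that the paper describes as immediate.
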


\textbf{Reduction to coordinate representation.} The above results imply that it suffices to study  dynamical systems of the form $\dot{\ucoor}_t=-\lambda F^{\kappa}(\ucoor)$ defined in $\mathbb{W}$, with $F^{\kappa}$ being the vector fields defined in \cref{eq:general_vector_field} for $\kappa\in \{0,1\}$. Next, we study the properties of such vector fields, including stability of its critical points and convergence of the dynamics arising from them.

\subsection{Convergence of Dynamical Systems to Critical Points}
\label[appendixsection]{sec:convergence}
Here, we will show  convergence to a critical point of $F^{\kappa}$ for the following dynamical system: 
\begin{align}\label{eq:synamical_system_kappa}
	\dot{\ucoor}_t = -\lambda F^{\kappa}(\ucoor_t),\qquad \ucoor_0\in \mathbb{W}.
\end{align} 
The key step is to show that the above system admits a suitable Lyapunov function that makes it a gradient-like system \citep{Chill:2009}.
We will then apply a general convergence result for gradient-like systems in infinite dimensional spaces (here $\mathbb{W}$), stated below and proven in  \cref{sec:convergence_gradient_like_systems}, after verifying that all conditions of such a result are satisfied by  \cref{eq:synamical_system_kappa}. 
\begin{proposition}\label{prop:general_convergence_critical_point_0}
Let $F$ be a smooth vector field defined on a separable Hilbert space $\mathbb{W}$ and consider the following dynamical system:
\begin{align}\label{eq:dynamical_system_0}
	\dot{\ucoor}_t = - F(\ucoor_t),\qquad \ucoor_0\in \mathbb{W}.
\end{align}
Assume there exists a real analytic function $L$ defined over $\mathbb{W}$ such that the following holds:
\begin{assumplist}[resume]   
\item\label{assump_dissipation_0}\textbf{Dissipation.} $L$ is a Lyapunov function for the system, i.e. $\dot{L}(\ucoor_t)\leq 0$ for all $t\geq 0$.  
\item\label{assump_sufficient_decrease_0} \textbf{Sufficient decrease.} There exists $c_{max}>0$ such that $\Verts{F(\ucoor_t)}^2\leq - c_{max} \dot{L}(\ucoor_t)$ for  $t\geq 0$.
\item\label{assump_error_bound_0} \textbf{Relative  bound.} There exists  a non-increasing real-valued function $\Omega: [0,+\infty)\rightarrow (0,+\infty)$  such that $\Omega(\Verts{\ucoor_t})\Verts{\nabla L(\ucoor_t)}^2 \leq \Verts{F(\ucoor_t)}^2$.
\item\label{assump_coercive_0}
\textbf{Coercivity.} $L$ is coercive, i.e. $L(\ucoor)\rightarrow +\infty$ when $\Verts{\ucoor}\rightarrow +\infty$.
\item\label{assump_lower_bound_0} \textbf{Lower bound.} $L$ admits a finite lower bound.
\item\label{assump_bounded_on_bounded_set_0} \textbf{Radial growth.} The vector field $F$, gradient $\nabla L$ and Hessian $\nabla^2 L$ all have radial growth, i.e. there exists a non-decreasing real-valued function $\omega:[0,+\infty) \rightarrow [0,+\infty)$ such that $\max\parens{\Verts{F(\ucoor)},\Verts{\nabla L(\ucoor)}, \Verts{\nabla^2 L(\ucoor) }}\leq \omega(\Verts{\ucoor})$ for all $\ucoor\in \mathbb{W}$.
\item\label{assump_kl_0} 
\textbf{Fredholm property.} For any $\ucoor\in \mathbb{W}$, the hessian $\nabla^2 L(\ucoor)$ can be decomposed as the sum of a boundedly invertible operator $R$ and a bounded compact operator $K$.
\item\label{assump_palais_smale_condition_0} \textbf{Palais-Smale condition.} Any bounded sequence $(\ucoor_n)_{n\geq 1}$ of $\mathbb{W}$ such that $\Verts{\nabla L(\ucoor_n)}\rightarrow 0$ admits a convergent subsequence.
\end{assumplist}
The dynamical system in \cref{eq:dynamical_system_0} is defined at all times and converges to a critical point of $F$.
\end{proposition}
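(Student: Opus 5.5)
The plan is the standard Łojasiewicz--Simon argument for gradient-like systems, in the version of \citet{Bolte:2014} and \citet{Chill:2009}.

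\textbf{Global existence.} Since $F$ is smooth, it is locally Lipschitz on $\mathbb{W}$, so a unique maximal solution exists on some interval $[0,T^\ast)$. By \ref{assump_dissipation_0}, $L(\ucoor_t)\le L(\ucoor_0)$. Coercivity \ref{assump_coercive_0} then confines the trajectory to a bounded set $\{\Verts{\ucoor}\le M\}$. On that set \ref{assump_bounded_on_bounded_set_0} gives $\Verts{F(\ucoor_t)}\le\omega(M)$, so $\ucoor_t$ is uniformly Lipschitz in time. It therefore cannot blow up, and $T^\ast=+\infty$.

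\textbf{Finite length.} Integrating \ref{assump_sufficient_decrease_0} and using \ref{assump_lower_bound_0} gives
\[
\int_0^\infty \Verts{F(\ucoor_t)}^2\,dt\le c_{max}\bigl(L(\ucoor_0)-\inf L\bigr)<\infty .
\]
By \ref{assump_error_bound_0} with $\Omega(\Verts{\ucoor_t})\ge\Omega(M)>0$, this also gives $\int_0^\infty\Verts{\nabla L(\ucoor_t)}^2dt<\infty$. The map $t\mapsto\Verts{\nabla L(\ucoor_t)}^2$ is uniformly continuous: its time derivative is bounded by $2\omega(M)^3$, using the Hessian bound in \ref{assump_bounded_on_bounded_set_0}. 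Hence $\nabla L(\ucoor_t)\to0$, and likewise $F(\ucoor_t)\to 0$.

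The Palais--Smale condition \ref{assump_palais_smale_condition_0} then gives a sequence $t_n\to\infty$ with $\ucoor_{t_n}\to\bar\ucoor$. By continuity $\nabla L(\bar\ucoor)=0$, and $L(\ucoor_t)\downarrow L(\bar\ucoor)$.

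\textbf{Łojasiewicz inequality (main obstacle).} Near the critical point $\bar\ucoor$ we need
\[
|L(\ucoor)-L(\bar\ucoor)|^{1-\theta}\le C\Verts{\nabla L(\ucoor)}
\]
for some $\theta\in(0,1/2]$ and all $\Verts{\ucoor-\bar\ucoor}<\delta$. In infinite dimensions this does not follow from analyticity alone. I would invoke the Łojasiewicz--Simon theorem \citep{chill2003lojasiewicz,Feehan:2020b}, whose hypotheses are that $L$ is real analytic and $\nabla^2L(\bar\ucoor)$ is Fredholm of index zero. Assumption \ref{assump_kl_0} supplies exactly this: an invertible operator plus a compact one is Fredholm of index zero. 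The proof goes by Lyapunov--Schmidt reduction onto the finite-dimensional kernel, followed by the classical finite-dimensional Łojasiewicz inequality.

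\textbf{Convergence of the trajectory.} Set $H(t)=(L(\ucoor_t)-L(\bar\ucoor))^\theta$; assume $H(t)>0$ for all $t$, since otherwise $L(\ucoor_t)$ is constant from some time on, so $\dot L=0$, hence $F=0$ by \ref{assump_sufficient_decrease_0} and the trajectory is stationary. While $\ucoor_t$ stays in the $\delta$-ball, we have
\[
-\dot H=\theta(L-L(\bar\ucoor))^{\theta-1}(-\dot L)\ge \frac{\theta}{C\,c_{max}}\,\frac{\Verts{F}^2}{\Verts{\nabla L}}\ge \frac{\theta\sqrt{\Omega(M)}}{C\,c_{max}}\Verts{F(\ucoor_t)}.
\]
The first inequality uses \ref{assump_sufficient_decrease_0} and the Łojasiewicz inequality; the second uses \ref{assump_error_bound_0}, i.e.\ $\Verts{\nabla L}\le\Verts{F}/\sqrt{\Omega(M)}$. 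Integrating gives $\int\Verts{\dot\ucoor_t}dt\lesssim H(t_0)$.

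Choose $n$ large enough that both $\Verts{\ucoor_{t_n}-\bar\ucoor}$ and $H(t_n)$ are small. The usual trapping argument then shows the trajectory never leaves the ball after $t_n$, so it has finite length. Hence $\ucoor_t$ converges, necessarily to $\bar\ucoor$. Finally $F(\bar\ucoor)=\lim_t F(\ucoor_t)=0$, so $\bar\ucoor$ is a critical point of $F$.
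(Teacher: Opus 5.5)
Your proposal is correct and follows essentially the same route as the paper: global existence from coercivity and radial growth, square-integrability of $\nabla L(\ucoor_t)$ plus uniform continuity (Barbalat) giving $\nabla L(\ucoor_t)\to 0$, a Palais--Smale cluster point, the Lojasiewicz--Simon inequality from analyticity and the Fredholm-index-zero Hessian via \citet{Feehan:2020b}, and the trapping/finite-length argument of \citet{Bolte:2014}. One step needs tightening: $F(\ucoor_t)\to 0$ does not follow ``likewise'', since no bound on $\nabla F$ is assumed, but it follows at once from $\Verts{F(\ucoor_t)}^2\le c_{max}\langle\nabla L(\ucoor_t),F(\ucoor_t)\rangle\le c_{max}\Verts{\nabla L(\ucoor_t)}\Verts{F(\ucoor_t)}$, which is exactly how the paper shows that the limit is a critical point of $F$.
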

To apply \cref{prop:general_convergence_critical_point_0}, we need to identify a suitable Lyapunov function $L^{\kappa}$ for the system and check that it verifies all conditions of the result. Specifically, we  identify the following candidate function $L^{\kappa}$:
\begin{align}\label{eq:Lyapunov_func}
	L^{\kappa}(\ucoor) = \frac{1}{3}Tr\parens{\parens{\ucoor\ucoor^{\top} + \lambda I}^3} - \frac{1}{\kappa + 1}Tr\parens{\parens{\ucoor S\ucoor^{\top}}^{\kappa+1} }.
\end{align}
Note that $L^{1}$ is the coordinate version of the Lyapunov function $\mathcal{L}$ stated in \cref{eq:lyapunov_ssl}. 
The following proposition, proven in \cref{sec:proof_smale_etc}, 
establishes that \cref{eq:Lyapunov_func} is indeed a Lyapunov function of the system satisfying the alignment conditions with the vector field $F^{\kappa}$ required by  \cref{prop:general_convergence_critical_point_0}.     
\begin{proposition}[Existence of a Lyapunov function]\label{prop:lyapunov_function}
Let $\kappa\in \{0,1\}$ and $\lambda >0$. 
The function $L^{\kappa}$ defined in \cref{eq:Lyapunov_func} is a Lyapunov function of the  dynamical system in \cref{eq:synamical_system_kappa}, i.e. $\dot{L}^{\kappa}(\ucoor_t)\leq 0$ for all $t\geq 0$. Moreover, the following inequalities hold along the trajectory:
\begin{align}\label{eq:sufficient_decrease}
	 \frac{1}{4(\Verts{\ucoor_t}^2+\lambda)^4}\Verts{\nabla L^{\kappa}(\ucoor_t)}^2\leq \Verts{F^{\kappa}(\ucoor_t)}^2\leq -\frac{1}{2\lambda^3}\dot{L}^{\kappa}(\ucoor_t), \qquad \forall t\geq 0.
\end{align}
\end{proposition}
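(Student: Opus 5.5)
The plan is a direct computation: express $\nabla L^{\kappa}$ as a two-sided multiplication of $F^{\kappa}$ by positive operators, then read off both inequalities from operator-norm bounds. Throughout I write $G:=\ucoor\ucoor^{\top}$ (a $k\times k$ PSD matrix) and $M:=\ucoor^{\top}\ucoor+\lambda I$ (a bounded self-adjoint operator on $\ell^2(I)$ with $M\succeq \lambda I$). I also set $T:=S(\ucoor^{\top}\ucoor S)^{\kappa}$, which is self-adjoint for $\kappa\in\{0,1\}$: it equals $S$ or $S\ucoor^{\top}\ucoor S$.

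\textbf{Step 1: gradient of $L^{\kappa}$.} Differentiating the trace polynomials in the Hilbert--Schmidt inner product of $\mathbb{W}$ gives
\begin{align*}
\nabla \tfrac13\Tr\bigl((G+\lambda I)^3\bigr) = 2(G+\lambda I)^2\ucoor,\qquad
\nabla \tfrac{1}{\kappa+1}\Tr\bigl((\ucoor S\ucoor^{\top})^{\kappa+1}\bigr)=2(\ucoor S\ucoor^{\top})^{\kappa}\ucoor S .
\end{align*}
The second formula uses cyclicity of the trace and symmetry of $\ucoor S\ucoor^{\top}$. Since $(\ucoor S\ucoor^{\top})^{\kappa}\ucoor S=\ucoor T$, we get $\nabla L^{\kappa}(\ucoor)=2(G+\lambda I)^2\ucoor-2\ucoor T$. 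All products are Hilbert--Schmidt because $\ucoor$ is and $S$ is bounded, so these formal computations are justified.

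\textbf{Step 2: the key factorization.} By the push-through identity, $\ucoor M=(G+\lambda I)\ucoor$ and hence $\ucoor M^{-1}=(G+\lambda I)^{-1}\ucoor$. From the definition $F^{\kappa}(\ucoor)=\ucoor-\ucoor M^{-1}TM^{-1}$ this yields
$$F^{\kappa}(\ucoor)M=(G+\lambda I)\ucoor-(G+\lambda I)^{-1}\ucoor T .$$
Multiplying on the left by $2(G+\lambda I)$ gives
$$\nabla L^{\kappa}(\ucoor)=2(G+\lambda I)\,F^{\kappa}(\ucoor)\,M .$$
I expect this identity to be the only non-routine point. The rest is bookkeeping, and the main care needed is to use the push-through identity in the right direction.

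\textbf{Step 3: dissipation and sufficient decrease.} Along the flow $\dot\ucoor_t=-\lambda F^{\kappa}(\ucoor_t)$ we have
$$\dot L^{\kappa}=\langle\nabla L^{\kappa},-\lambda F^{\kappa}\rangle_{\mathbb{W}}=-2\lambda\Tr\bigl((G+\lambda I)F^{\kappa}M(F^{\kappa})^{\top}\bigr).$$
Since $G+\lambda I\succeq\lambda I$ and $M\succeq\lambda I$, this trace is at least $\lambda^2\Verts{F^{\kappa}}^2$. Concretely, write it as $\Verts{(G+\lambda I)^{1/2}F^{\kappa}M^{1/2}}_{\mathbb{W}}^2$ and bound each square-root factor from below. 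This gives $\dot L^{\kappa}\le -2\lambda^3\Verts{F^{\kappa}}^2\le 0$, which is both the Lyapunov property and the right-hand inequality.

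\textbf{Step 4: relative bound.} From Step 2,
$$\Verts{\nabla L^{\kappa}}_{\mathbb{W}}\le 2\Verts{G+\lambda I}_{op}\Verts{M}_{op}\Verts{F^{\kappa}}_{\mathbb{W}}.$$
Both operator norms are at most $\Verts{\ucoor}_{op}^2+\lambda\le\Verts{\ucoor}_{\mathbb{W}}^2+\lambda$. Squaring gives the left-hand inequality with the constant $\frac{1}{4(\Verts{\ucoor_t}^2+\lambda)^4}$.
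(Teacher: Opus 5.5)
Your proof is correct and follows essentially the same route as the paper. Your identity $\nabla L^{\kappa}(\ucoor)=2(\ucoor\ucoor^{\top}+\lambda I)F^{\kappa}(\ucoor)(\ucoor^{\top}\ucoor+\lambda I)$ is the paper's pair of formulas $\nabla L^{\kappa}=2\ucoor\Phi(\ucoor^{\top}\ucoor)$ and $F^{\kappa}=\ucoor B\Phi B$ (with $B=(\ucoor^{\top}\ucoor+\lambda I)^{-1}$) combined via the push-through identity, and both inequalities then come from the fact that $\ucoor\ucoor^{\top}+\lambda I$ and $\ucoor^{\top}\ucoor+\lambda I$ have spectrum in $[\lambda,\Verts{\ucoor}^2+\lambda]$, which is exactly how the paper bounds its trace expressions.
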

The following four lemmas, also proven in \cref{sec:proof_smale_etc}, establish geometric properties of the $L^{\kappa}$ and $F^{\kappa}$ that are needed for the application of \cref{prop:general_convergence_critical_point_0}.

\begin{lemma}[Growth property of $F^{\kappa}$.]\label{lem:growth_F_kappa}
$F^{\kappa}$ is radially bounded in the sense that for any $\ucoor\in \mathbb{W}$, it holds that:
\begin{align}
	\Verts{F^{\kappa}(\ucoor)} \leq \omega(\Verts{\ucoor}), \qquad \text{with}\qquad  \omega(R) = R\parens{1 + \lambda^{-2}\Verts{S}_{op}^{1+\kappa}R^{2\kappa}} \quad \forall R\in [0,+\infty)
\end{align}

\end{lemma}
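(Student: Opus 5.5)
We bound $\Verts{F^{\kappa}(\ucoor)}$ directly from the definition in \cref{eq:general_vector_field}, splitting it with the triangle inequality as
\begin{align*}
\Verts{F^{\kappa}(\ucoor)} \leq \Verts{\ucoor} + \Verts{\ucoor M}, \qquad M := \parens{\ucoor^{\top}\ucoor+\lambda I}^{-1}S(\ucoor^{\top}\ucoor S)^{\kappa}\parens{\ucoor^{\top}\ucoor+\lambda I}^{-1},
\end{align*}
where $\Verts{\cdot}$ is the Hilbert--Schmidt norm on $\mathbb{W}$. The first term is already $R$ when $\Verts{\ucoor}=R$, so the work is in the second term.

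For the second term we use the ideal property $\Verts{\ucoor M}_{\mathbb{W}}\leq \Verts{\ucoor}_{\mathbb{W}}\Verts{M}_{op}$, valid since $\ucoor$ is Hilbert--Schmidt and $M$ is bounded on $\ell^2(I)$. We then bound $\Verts{M}_{op}$ by submultiplicativity of the operator norm, using three facts.
\begin{itemize}
\item $\ucoor^{\top}\ucoor$ is positive semidefinite, so $\ucoor^{\top}\ucoor+\lambda I\succeq \lambda I$ and $\Verts{(\ucoor^{\top}\ucoor+\lambda I)^{-1}}_{op}\leq \lambda^{-1}$. The two resolvent factors together give $\lambda^{-2}$.
\item The middle factor $S$ contributes $\Verts{S}_{op}$.
\item The factor $(\ucoor^{\top}\ucoor S)^{\kappa}$ contributes at most $\parens{\Verts{\ucoor^{\top}\ucoor}_{op}\Verts{S}_{op}}^{\kappa}$, with $\Verts{\ucoor^{\top}\ucoor}_{op}=\Verts{\ucoor}_{op}^2\leq \Verts{\ucoor}_{\mathbb{W}}^2$ because the operator norm is dominated by the Hilbert--Schmidt norm.
\end{itemize}
Combining these gives $\Verts{M}_{op}\leq \lambda^{-2}\Verts{S}_{op}^{1+\kappa}\Verts{\ucoor}^{2\kappa}$. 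Hence
\begin{align*}
\Verts{F^{\kappa}(\ucoor)}\leq \Verts{\ucoor}\parens{1+\lambda^{-2}\Verts{S}_{op}^{1+\kappa}\Verts{\ucoor}^{2\kappa}} = \omega(\Verts{\ucoor}),
\end{align*}
which is the claim. The function $\omega(R)$ is non-decreasing in $R$, as \cref{prop:general_convergence_critical_point_0} requires.

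The only delicate point is infinite dimensionality. The operators $(\ucoor^{\top}\ucoor+\lambda I)^{-1}$ and $S$ act on $\ell^2(I)$, so we must confirm that each is bounded with the stated norm and that composing $\ucoor$ with bounded operators keeps it Hilbert--Schmidt. The first point holds because $\ucoor^{\top}\ucoor$ is a bounded, self-adjoint, positive operator, so the spectral theorem yields the resolvent bound. The second is the standard HS ideal inequality. The case $\kappa=0$ is covered by the same argument with the middle factor set to the identity. Finiteness of $\Verts{S}_{op}$ follows from \cref{eq:bounded_op_norm_A}, which gives $\Verts{S}_{op}\leq 1$.
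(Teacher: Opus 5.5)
Your proposal is correct and follows the paper's proof essentially step for step. Both arguments apply the triangle inequality to $\ucoor$ minus $\ucoor$ times the sandwiched operator, then the Hilbert--Schmidt ideal bound. Both then use submultiplicativity with $\Verts{(\ucoor^{\top}\ucoor+\lambda I)^{-1}}_{op}\leq\lambda^{-1}$ and $\Verts{\ucoor^{\top}\ucoor}_{op}\leq\Verts{\ucoor}^{2}$ to reach exactly $\omega(\Verts{\ucoor})$.
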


\begin{lemma}[Growth, coercivity and lower boundedness of $L^{\kappa}$]\label{lem:energy_bounded_below}
 The function $L^{\kappa}$ defined in \cref{eq:Lyapunov_func} is coercive and lower-bounded over $\mathbb{W}$. 
 Moreover, the gradient $\nabla L^{\kappa}$ and Hessian $\nabla^2 L^{\kappa}$ both have a radial growth, i.e. there exists a non-decreasing real-valued function $\omega: [0,+\infty) \rightarrow [0,+\infty)$ such that:
 \begin{align}
 	\Verts{\nabla L^{\kappa}(\ucoor)}\leq \omega(\Verts{\ucoor}),\qquad \Verts{\nabla^2 L^{\kappa}(\ucoor)}\leq \omega(\Verts{\ucoor}),\qquad \forall \ucoor\in \mathbb{W}.
 \end{align}
\end{lemma}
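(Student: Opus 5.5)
We work directly with the explicit polynomial form of $L^{\kappa}$ in \cref{eq:Lyapunov_func}, writing $M=\ucoor\ucoor^{\top}\in\mathbb{R}^{k\times k}$ and $T=\ucoor S\ucoor^{\top}$. Both are finite $k\times k$ matrices, so all traces are finite. The main tool is the operator bound $\Verts{S}_{op}\le 1$, which follows from \cref{eq:bounded_op_norm_A} since $S$ carries the eigenvalues of $A$. It gives $\verts{\Tr(T^{\kappa+1})}\le \Verts{\ucoor}^{2(\kappa+1)}$.

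\textbf{Coercivity and lower bound.} Since $M+\lambda I\succeq \lambda I$ and $\Tr$ of a cube of a PSD matrix dominates the cube of its trace up to a factor $k^{-2}$, we have
$\frac13\Tr((M+\lambda I)^3)\ge \frac13\Tr(M^3)+\lambda^3 k/3\ge \frac{1}{3k^2}\Verts{\ucoor}^6$, using $\Tr(M)=\Verts{\ucoor}^2$.
For the subtracted term, $\Tr(T^{\kappa+1})\le \Verts{T}_F^{\kappa+1}\cdot k$, and $\Verts{T}_F\le \Verts{T}_{*}\le \Verts{\ucoor}^2$ by $\Verts{S}_{op}\le1$. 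Hence
$L^{\kappa}(\ucoor)\ge \frac{1}{3k^2}\Verts{\ucoor}^6-C_k\Verts{\ucoor}^{2(\kappa+1)}$ with $2(\kappa+1)\le 4<6$. This lower bound tends to $+\infty$, which gives coercivity. The polynomial $r\mapsto \frac{1}{3k^2}r^6-C_kr^{4}$ is bounded below on $[0,\infty)$, which gives the finite lower bound. This degree comparison is the only real point; the sextic term dominating the quartic is what makes the argument work.

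\textbf{Gradient and Hessian growth.} We differentiate using $\delta M=\delta\ucoor\,\ucoor^{\top}+\ucoor\,\delta\ucoor^{\top}$ and $\delta T=\delta\ucoor S\ucoor^{\top}+\ucoor S\delta\ucoor^{\top}$, which gives
$\nabla L^{\kappa}(\ucoor)=2(M+\lambda I)^2\ucoor-2T^{\kappa}\ucoor S$ (using symmetry of $S$ and $T$).
Each factor is bounded in operator or HS norm by powers of $\Verts{\ucoor}$:
$\Verts{(M+\lambda I)^2\ucoor}\le(\Verts{\ucoor}^2+\lambda)^2\Verts{\ucoor}$ and $\Verts{T^{\kappa}\ucoor S}\le \Verts{\ucoor}^{2\kappa+1}$.

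The Hessian is the bilinear form obtained by differentiating once more. It is a finite sum of terms of the form
$\Tr(\delta\ucoor^{\top}X\,\delta\ucoor'\,Y)$, where each $X,Y$ is a product of $M+\lambda I$, $\ucoor$, $\ucoor^{\top}$, $S$, $T$. Each such term is bounded by $\Verts{\delta\ucoor}\Verts{\delta\ucoor'}$ times a polynomial in $\Verts{\ucoor}$. The bound uses $\Verts{AB}_{HS}\le\Verts{A}_{op}\Verts{B}_{HS}$, which remains valid in the infinite-dimensional HS setting. Taking $\omega(R)$ as the maximum of these polynomials in $R$ (all non-decreasing) concludes the proof.

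The only care needed is confirming that the Fréchet derivatives exist in $\mathbb{W}=\mathrm{HS}(\ell^2(I),\mathbb{R}^k)$. This holds because $L^{\kappa}$ is a polynomial in the bounded multilinear maps $\ucoor\mapsto\ucoor\ucoor^{\top}$ and $\ucoor\mapsto \ucoor S\ucoor^{\top}$.
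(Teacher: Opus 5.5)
Your proof is correct and follows the same route as the paper. Coercivity and the lower bound come from the sextic bound $\frac{1}{3}\Tr((M+\lambda I)^3)\geq \frac{1}{3k^2}\Verts{\ucoor}^6$ (Jensen, with your $M=\ucoor\ucoor^{\top}$), which dominates the degree-$2(\kappa+1)\leq 4$ term controlled through $\Verts{S}_{op}$. For the growth of $\nabla L^{\kappa}$ and $\nabla^2 L^{\kappa}$, the paper only observes that they are polynomials in $\ucoor$; your explicit formula $\nabla L^{\kappa}(\ucoor)=2(M+\lambda I)^2\ucoor-2T^{\kappa}\ucoor S$ and term-by-term bounds make this concrete. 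One small imprecision: some Hessian terms involve $(\delta\ucoor')^{\top}$ rather than $\delta\ucoor'$, but they are bounded the same way.
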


\begin{lemma}[Hessian decomposition]\label{lem:kurdyka_simon_inequality}
The Hessian $\nabla^2 L^{\kappa}(\ucoor)$ at any point admits a decomposition of the form $\nabla^2 L^{\kappa}(\ucoor)=R+K$, where $R$ is a boundedly invertible operator (bounded with bounded inverse) and $K$ is a bounded compact operator.
\end{lemma}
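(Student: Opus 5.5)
We will compute the Hessian of $L^{\kappa}$ explicitly and sort its terms into two groups. The first group consists of left multiplications $\mathbf{h}\mapsto C\mathbf{h}$ by $k\times k$ positive semidefinite matrices; these form the invertible part $R$. The second group consists of operators that are compact because their "$\ell^2$-side" involves either a finite-rank operator built from $\ucoor$ or the compact operator $S$; these form $K$. Throughout, write $M=\ucoor\ucoor^{\top}\in\mathbb{R}^{k\times k}$. The two structural facts used are that $\ucoor$ has rank at most $k$, and that $\mathbb{W}\cong\mathbb{R}^k\otimes\ell^2(I)$ with $\mathbb{R}^k$ finite-dimensional.

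\textbf{Expanding the first term.} We expand
\begin{align*}
\tfrac13\Tr\big((M+\lambda I)^3\big)=\tfrac13\Tr(M^3)+\lambda\Tr(M^2)+\lambda^2\Tr(M)+\tfrac{\lambda^3 k}{3}.
\end{align*}
Since $\Tr(M)=\Verts{\ucoor}_{\mathbb{W}}^2$, its Hessian contributes $2\lambda^2\,\mathrm{Id}_{\mathbb{W}}$. Using the gradients $\nabla\Tr(M^2)=4M\ucoor$ and $\nabla\Tr(M^3)=6M^2\ucoor$, and differentiating once more with $DM[\mathbf h]=\mathbf h\ucoor^{\top}+\ucoor\mathbf h^{\top}$, every resulting term falls into one of three shapes.
\begin{itemize}
\item Left multiplications $\mathbf h\mapsto M\mathbf h$ and $\mathbf h\mapsto M^2\mathbf h$, with total coefficient giving $4\lambda M+2M^2$.
\item Right multiplications $\mathbf h\mapsto C\mathbf h B$ with $B\in\{\ucoor^{\top}\ucoor,\ \ucoor^{\top}M\ucoor\}$. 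Here $B$ is finite rank on $\ell^2(I)$, so $C\otimes B$ is finite rank.
\item Terms like $\mathbf h\mapsto \ucoor\mathbf h^{\top}(M^j\ucoor)$ or $\mathbf h\mapsto M^j\ucoor\mathbf h^{\top}\ucoor$. These can be written as $(\text{matrix depending on }\mathbf h)\cdot G$ with $G$ a fixed $k\times|I|$ operator, so their range lies in the finite-dimensional space $\{BG:B\in\mathbb{R}^{k\times k}\}$, and hence they are finite rank.
\end{itemize}

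\textbf{The $S$-dependent term.} For $\kappa=0$, the Hessian of $-\Tr(\ucoor S\ucoor^{\top})$ is $\mathbf h\mapsto -2\mathbf hS$, which is $-2\,\mathrm{Id}_k\otimes S$. This is compact because $S$ is compact and $k<\infty$. For $\kappa=1$, we start from the gradient $2(\ucoor S\ucoor^{\top})\ucoor S$ of $\tfrac12\Tr((\ucoor S\ucoor^{\top})^2)$. Its derivative produces three kinds of terms:
\begin{itemize}
\item $\mathbf h\mapsto \mathbf h(S\ucoor^{\top}\ucoor S)$, which is compact since the right factor is finite rank;
\item $\mathbf h\mapsto \ucoor S\mathbf h^{\top}\ucoor S$, whose range lies in $\{B\ucoor S\}$ and which is therefore finite rank;
\item $\mathbf h\mapsto (\ucoor S\ucoor^{\top})\mathbf h S=C\otimes S$, which is compact.
\end{itemize}

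\textbf{Assembling the decomposition.} We set
\begin{align*}
R=(2\lambda^2 I_k+4\lambda M+2M^2)\otimes\mathrm{Id}_{\ell^2(I)},
\end{align*}
and let $K$ collect all remaining terms. The operator $R$ is bounded and self-adjoint. Since $M\succeq0$, we have $R\succeq2\lambda^2\,\mathrm{Id}$, so $R$ is boundedly invertible with $\Verts{R^{-1}}\le(2\lambda^2)^{-1}$. The operator $K$ is a finite sum of compact operators, hence compact. Boundedness of each piece follows from $\Verts{\ucoor}_{op}\le\Verts{\ucoor}_{\mathbb{W}}$ and $\Verts{S}_{op}\le1$ (see \cref{eq:bounded_op_norm_A}).

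\textbf{Expected obstacle.} The main point of care is the bookkeeping. The key observation is that the left-multiplication terms $M\otimes\mathrm{Id}$ are \emph{not} compact in infinite dimensions, so they cannot be placed in $K$. They must be absorbed into $R$, and this works precisely because their coefficients are positive semidefinite. A secondary check is verifying, term by term, that every remaining term carries either $S$ or a factor of $\ucoor$ on the $\ell^2$ side.
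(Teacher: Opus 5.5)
Your proposal is correct and follows the paper's proof: the paper also takes $R$ to be left multiplication by the invertible $k\times k$ matrix $2(\ucoor\ucoor^{\top}+\lambda I)^2$, and puts everything else in $K$, justified either by a factor of the compact operator $S$ or by a range contained in $\{B\ucoor : B\in\mathbb{R}^{k\times k}\}$. The only cosmetic difference is bookkeeping: the paper differentiates $2(\ucoor\ucoor^{\top}+\lambda I)^2\ucoor$ directly and gathers all non-leading terms of the cubic part into the single finite-rank map $\ucoor'\mapsto Q(\ucoor')\ucoor$, rather than expanding $\Tr((M+\lambda I)^3)$ and treating the terms one by one as you do.
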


\begin{lemma}[A Palais-Smale condition.]\label{lem:palais_smale}
The function $L^{\kappa}$ satisfies the following  condition: any bounded sequence $(\ucoor_n)_{n\geq 1}$ in $\mathbb{W}$ satisfying $\Verts{\nabla L^{\kappa}(\ucoor_n)}\rightarrow 0$ admits a convergent subsequence. 
\end{lemma}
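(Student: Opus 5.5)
The plan is to use the explicit form of $\nabla L^{\kappa}$ to write $\ucoor_n$ as a continuous, uniformly well-conditioned function of quantities that converge strongly along a subsequence. Compactness of $S$ supplies that strong convergence. The key structural fact is that every $\ucoor\in\mathbb{W}$ has finite rank at most $k$, so the only infinite-dimensional ingredient is the right factor $\ucoor$ or $\ucoor S$, and the left multiplications only involve $k\times k$ matrices.

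\textbf{Step 1 (gradient formula).} Differentiating \cref{eq:Lyapunov_func} with $\ucoor\ucoor^{\top}$ and $\ucoor S\ucoor^{\top}$ as $k\times k$ matrices gives
\begin{align*}
\nabla L^{\kappa}(\ucoor) = 2\parens{\ucoor\ucoor^{\top}+\lambda I}^2\ucoor - 2\parens{\ucoor S\ucoor^{\top}}^{\kappa}\ucoor S .
\end{align*}
For $\kappa=0$ this uses $\nabla\Tr(\ucoor S\ucoor^{\top})=2\ucoor S$, and for $\kappa=1$ it uses $\nabla\tfrac12\Tr((\ucoor S\ucoor^{\top})^2)=2\,\ucoor S\ucoor^{\top}\ucoor S$, where $S$ is self-adjoint. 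Since $\ucoor\ucoor^{\top}\succeq 0$, the matrix $(\ucoor\ucoor^{\top}+\lambda I)^2$ is invertible with inverse of operator norm at most $\lambda^{-2}$. This yields the identity
\begin{align*}
\ucoor = \parens{\ucoor\ucoor^{\top}+\lambda I}^{-2}\parens{\tfrac12\nabla L^{\kappa}(\ucoor) + \parens{\ucoor S\ucoor^{\top}}^{\kappa}\ucoor S}.
\end{align*}

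\textbf{Step 2 (extraction).} Let $(\ucoor_n)$ be bounded with $\nabla L^{\kappa}(\ucoor_n)\to0$. The $k\times k$ matrices $\ucoor_n\ucoor_n^{\top}$ and $\ucoor_n S\ucoor_n^{\top}$ are bounded, with norms at most $\Verts{\ucoor_n}^2$ and $\Verts{S}_{op}\Verts{\ucoor_n}^2$. Hence, along a subsequence, they converge to some matrices $M$ and $C$. Since $\mathbb{W}$ is a Hilbert space, we can also extract a further subsequence with $\ucoor_n\rightharpoonup\ucoor$ weakly.

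\textbf{Step 3 (compactness gives strong convergence of $\ucoor_n S$).} This is the main point. Write
\begin{align*}
\Verts{\ucoor_n S-\ucoor S}_{\mathbb{W}}^2=\sum_{p=1}^k\Verts{S(\ucoor_n-\ucoor)^{\top}b_p}_{\ell^2}^2 .
\end{align*}
This equality holds because the Hilbert--Schmidt norm is preserved under adjoints and $S$ is self-adjoint. Each vector $(\ucoor_n-\ucoor)^{\top}b_p$ converges weakly to $0$ in $\ell^2(I)$, because $\langle(\ucoor_n-\ucoor)^{\top}b_p,h\rangle=\langle \ucoor_n-\ucoor, b_p\otimes h^\ast\rangle_{\mathbb{W}}$. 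A compact operator maps weakly null sequences to norm-null sequences, so each of the finitely many terms tends to $0$. Hence $\ucoor_n S\to\ucoor S$ in $\mathbb{W}$.

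\textbf{Step 4 (conclude).} Left multiplication by a $k\times k$ matrix $B$ is bounded on $\mathbb{W}$ with norm $\Verts{B}_{op}$. Matrix inversion is continuous on $\{B\succeq\lambda I\}$. Combining these with Steps 2--3 and the identity of Step 1, we get
\begin{align*}
\ucoor_n\to (M+\lambda I)^{-2}C^{\kappa}\ucoor S
\end{align*}
in norm. Indeed, the term $\tfrac12(\ucoor_n\ucoor_n^{\top}+\lambda I)^{-2}\nabla L^{\kappa}(\ucoor_n)$ vanishes by the uniform $\lambda^{-2}$ bound, and the remaining products converge because each factor converges with uniformly bounded norms. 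This gives the convergent subsequence. The only delicate step is Step 3, namely checking that weak convergence in $\mathbb{W}$ plus compactness of $S$ gives norm convergence of $\ucoor_n S$; the finite rank $k$ is what makes this elementary.
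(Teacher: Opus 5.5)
Your proof is correct, and it takes a more elementary route than the paper's.

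The paper works with the infinite-dimensional Gram operator $M_n=\ucoor_n^{\top}\ucoor_n$ on $\ell^2(I)$. Its argument runs in four steps:
\begin{enumerate}
\item Left-multiply the gradient by $\ucoor_n^{\top}$ to get $X_n-B_nS\to 0$, with $X_n=2M_n(M_n+\lambda I)^2$.
\item Invoke the auxiliary \cref{lem:relative_compactness} (finite-rank projections plus an $\epsilon$-cover argument) to extract an operator-norm convergent subsequence of $X_n$.
\item Invert $M\mapsto 2M(M+\lambda I)^2$ by continuous functional calculus, which gives convergence of $M_n$.
\item Right-multiply by $(M_n+\lambda I)^{-2}$ and assert in one line that, by compactness of $S$, $(\ucoor_n S)$ has a convergent subsequence.
\end{enumerate}

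You instead use the push-through identity to place the regularized Gram factor on the left, as the $k\times k$ matrix $(\ucoor\ucoor^{\top}+\lambda I)^{-2}$. All the nonlinear factors ($\ucoor_n\ucoor_n^{\top}$ and $\ucoor_nS\ucoor_n^{\top}$) then live in $\mathbb{R}^{k\times k}$. Bolzano--Weierstrass replaces both the relative-compactness lemma and the functional-calculus homeomorphism.

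Your Step 3 also spells out what the paper leaves implicit. You show why $\ucoor_nS$ converges in Hilbert--Schmidt norm (not merely boundedly): weak compactness of bounded sets in $\mathbb{W}$, plus the fact that a compact operator sends weakly null sequences to norm-null ones. The rank-$k$ structure then reduces the Hilbert--Schmidt norm to $k$ vector norms.

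What the paper's route buys is a stronger intermediate conclusion, namely operator-norm convergence of $M_n$ on $\ell^2(I)$. It also yields a compactness lemma that does not rely on finite rank, and it parallels the functional-calculus argument used to characterize critical points. For the statement at hand, your argument is shorter and fully self-contained.
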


We are now in position to prove the main result of this section stated below.
\begin{proposition}\label{prop:convergence_critial_point_F_kappa}
	The dynamical system \cref{eq:synamical_system_kappa} converges to a critical point of $F^{\kappa}$. 
\end{proposition}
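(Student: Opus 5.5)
The plan is to apply \cref{prop:general_convergence_critical_point_0} to the vector field $F=\lambda F^{\kappa}$ on the separable Hilbert space $\mathbb{W}=\mathrm{HS}(\ell^2(I),\mathbb{R}^k)$, with candidate function $L=L^{\kappa}$ from \cref{eq:Lyapunov_func}. The argument reduces to checking the eight hypotheses \ref{assump_dissipation_0}--\ref{assump_palais_smale_condition_0} one by one. Almost all of them are supplied by the preceding results.

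\textbf{Regularity and growth hypotheses.} First, $L^{\kappa}$ is real analytic. It is a polynomial (degree at most six) in $\ucoor$, built from continuous multilinear maps: $\ucoor\mapsto \ucoor\ucoor^{\top}$ and $\ucoor\mapsto\ucoor S\ucoor^{\top}$ are bounded quadratic maps into $k\times k$ matrices, since $S$ is bounded, followed by traces of matrix powers. Smoothness of $F^{\kappa}$ holds because $\ucoor\mapsto(\ucoor^{\top}\ucoor+\lambda I)^{-1}$ is smooth; inversion is analytic on invertible operators and $\ucoor^{\top}\ucoor+\lambda I\succeq\lambda I$.

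\textbf{Lyapunov hypotheses.} Dissipation \ref{assump_dissipation_0} and sufficient decrease \ref{assump_sufficient_decrease_0} follow from \cref{prop:lyapunov_function}. Its right inequality gives $\Verts{\lambda F^{\kappa}}^2\le -\tfrac{1}{2\lambda}\dot L^{\kappa}$, after accounting for the factor $\lambda$ in the time scaling, so one may take $c_{max}=\tfrac{1}{2\lambda}$. For the relative bound \ref{assump_error_bound_0}, the left inequality of \cref{prop:lyapunov_function} gives
\[
\Omega(R)=\frac{\lambda^2}{4(R^2+\lambda)^4},
\]
which is positive and non-increasing. Coercivity \ref{assump_coercive_0}, the lower bound \ref{assump_lower_bound_0}, and the radial growth of $\nabla L^{\kappa}$ and $\nabla^2L^{\kappa}$ in \ref{assump_bounded_on_bounded_set_0} come from \cref{lem:energy_bounded_below}. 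Radial growth of $F^{\kappa}$ comes from \cref{lem:growth_F_kappa}. To obtain a single non-decreasing $\omega$, take the maximum of the two functions, rescaling the one for $F$ by $\lambda$.

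\textbf{Fredholm and Palais--Smale hypotheses.} The Fredholm property \ref{assump_kl_0} is exactly \cref{lem:kurdyka_simon_inequality}, and the Palais--Smale condition \ref{assump_palais_smale_condition_0} is \cref{lem:palais_smale}. With all hypotheses verified, \cref{prop:general_convergence_critical_point_0} shows that the trajectory exists for all time and converges to some $\ucoor^\star$ with $F^{\kappa}(\ucoor^\star)=0$, since $\lambda>0$.

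\textbf{Main obstacle.} The only delicate point is that \cref{prop:lyapunov_function} states its inequalities along trajectories, while \cref{prop:general_convergence_critical_point_0} also requires them along trajectories. The hypotheses therefore match directly, but I would confirm that the $\lambda$ time-rescaling constants are tracked consistently. I would also check that the factor $\lambda^{-3}$ versus $\lambda^{-1}$ in $c_{max}$ is harmless, which it is, since any positive constant works.
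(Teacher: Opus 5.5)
Your proposal is correct and is essentially the paper's own proof. Both apply \cref{prop:general_convergence_critical_point_0} with $F=\lambda F^{\kappa}$ and $L=L^{\kappa}$, take dissipation, sufficient decrease and the relative bound from \cref{prop:lyapunov_function}, and take the remaining hypotheses from \cref{lem:growth_F_kappa,lem:energy_bounded_below,lem:kurdyka_simon_inequality,lem:palais_smale}. Your explicit constants $c_{max}=\frac{1}{2\lambda}$ and $\Omega(R)=\frac{\lambda^2}{4(R^2+\lambda)^4}$ only make precise the paper's remark that $\lambda F^{\kappa}$ satisfies the same conditions as $F^{\kappa}$ up to a positive factor.
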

\begin{proof}
We need to verify that assumptions \cref{assump_dissipation_0,assump_sufficient_decrease_0,assump_error_bound_0,assump_palais_smale_condition_0,assump_kl_0,assump_coercive_0,assump_lower_bound_0,assump_bounded_on_bounded_set_0} in \cref{prop:general_convergence_critical_point_0} hold for $F = \lambda F^{\kappa}$ and $L = L^{\kappa}$. Note that $F$ satisfies the same conditions in \cref{prop:general_convergence_critical_point_0} as $F^{\kappa}$ up to a positive factor. Hence, it suffices to establish such conditions for $F^{\kappa}$.   
By \cref{prop:lyapunov_function}, we know that $L^{\kappa}$ is a Lyapunov function of the dynamical system \cref{eq:synamical_system_kappa}. As a polynomial function, $L^{\kappa}$ is clearly real analytic. Hence, \cref{assump_dissipation_0} holds.  
	Moreover, the system is gradient-like, since  \cref{eq:sufficient_decrease} holds, so that \cref{assump_sufficient_decrease_0,assump_error_bound_0} are fulfilled. \cref{lem:energy_bounded_below} ensures that $L^{\kappa}$ is coercive and bounded below, so that \cref{assump_coercive_0,assump_lower_bound_0} hold. Moreover, it also ensures that $\nabla L^{\kappa}$ and $\nabla^2 L^{\kappa}$ are radially bounded, as required by \cref{assump_bounded_on_bounded_set_0}. By \cref{lem:growth_F_kappa}, we also have that $F^{\kappa}$ is radially bounded, as required by \cref{assump_bounded_on_bounded_set_0}. 
	By \cref{lem:kurdyka_simon_inequality}, we are guaranteed that \cref{assump_kl_0} on the Hessian decomposition holds. 
	Finally, \cref{lem:palais_smale} ensures that $L^{\kappa}$ satisfies the Palais-Smale condition of \cref{assump_palais_smale_condition_0}. 
	Therefore, we can apply \cref{prop:general_convergence_critical_point_0} to deduce that \cref{eq:synamical_system_kappa} converges to a critical point $\ucoor^{\star}$ of $F^{\kappa}$.
\end{proof}

\subsection{Characterization of Critical Points of $F^{\kappa}$}
\label[appendixsection]{sec:characterization}
The following proposition, proven in \cref{sec:proof_characterization_equilibria}, provides an explicit characterization of all critical points of the vector field $F^{\kappa}$.
\begin{proposition}\label{eq:characterization_fixed_point_general} 
Assume all non-zero eigenvalues of $S$ to be distinct. Then, critical points of $F^{\kappa}$ are exactly elements of $\mathbb{W}$ of the form:
\begin{align}\label{eq:critical_point_matrix_characterization}
	\ucoor = Q \parens{\sum_{p=1}^r E_{p,i(p)}} M^{\frac{1}{2}} = \sum_{p=1}^r m_{i(p)}^{\frac{1}{2}} q_{p}\otimes e_{i(p)}^{\star},
\end{align}
where $Q = (q_p)_{1\leq p\leq k}$ is any orthogonal matrix in $\mathbb{R}^{k\times k}$, $r$ is any integer between $0$ and $k$, $I_{sup} = \{i(1),\dots, i(r)\}\subset I_{+}\cup I_{-}$ is any increasing sequence of integers greater than or equal to $1$, and $M$ is a PSD operator on $\ell^2(I)$ diagonal in the canonical basis of $\ell^2(I)$ and whose only non-zero diagonal coefficients $(m_{i(p)})_{1\leq p\leq r}$ are positive roots of the equations $(x+\lambda)^{2} = x^{\kappa}s_{i(p)}^{\kappa+1}$ of unknown $x$ for $1\leq p\leq r$, where $(s_{i})_{i\in I}$ are the eigenvalues of the operator $A$. 
Specifically,  
\begin{align}
	m_{i(p)}^{\frac{1}{2}} = \begin{cases}
		g(s_{i(p)},\lambda)
		 & \kappa =0\\
		f_{\epsilon_{p}}(s_{i(p)},\lambda), & \kappa=1
	\end{cases},
\end{align}
where $(\epsilon_p)_{p=1}^r$ can take any values in $\{-1,1\}$ and $g$, $f_{\epsilon}(s,\lambda)$ are given by:
\begin{align}
	g(s,\lambda) = 
	\begin{cases}
		\parens{\sqrt{s}-\lambda}_+^{\frac{1}{2}} & s\geq 0\\
		0 & s<0
	\end{cases},\qquad 
	f_{\epsilon}(s,\lambda) = \frac{1}{2}\parens{ \verts{s} + \epsilon \sqrt{s^2 -4\lambda} }\mathds{1}_{s^2-4\lambda\geq 0}.
\end{align}
\end{proposition}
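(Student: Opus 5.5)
The plan is to turn the critical-point equation $F^{\kappa}(\ucoor)=0$ into an algebraic identity for the finite-rank positive operator $G:=\ucoor^{\top}\ucoor$ on $\ell^2(I)$. Then show that $G$ must be diagonal in the canonical basis $(e_i)$, and finally solve a scalar equation on each active coordinate.

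\textbf{Step 1: reduce to an identity in $G$.} Write $X:=(G+\lambda I)^{-1}S(GS)^{\kappa}(G+\lambda I)^{-1}$, so that $F^{\kappa}(\ucoor)=0$ reads $\ucoor=\ucoor X$. Multiplying on the left by $\ucoor^{\top}$ gives $G=GX$. Since $G$ commutes with $(G+\lambda I)^{-1}$, multiplying both sides on the left and on the right by $(G+\lambda I)$ yields the key identity
\begin{align*}
G(G+\lambda I)^2=(GS)^{\kappa+1}.
\end{align*}
Conversely, if $\ucoor$ is such that $G$ satisfies this identity, then $\ucoor(I-X)$ vanishes. Indeed, $\ucoor=\ucoor P_G$, where $P_G$ is the projector onto $\mathrm{ran}\,G=\mathrm{ran}\,\ucoor^{\top}$, and $G(I-X)=0$ with $G$ invertible on its range gives $P_G(I-X)=0$. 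So critical points are exactly the $\ucoor$ whose Gram operator satisfies the identity. Note that $\mathrm{ran}\,G$ has dimension $r\le k$, so everything below is finite-dimensional linear algebra on $\mathcal{R}:=\mathrm{ran}\,G$.

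\textbf{Step 2: $\mathcal{R}$ is spanned by eigenvectors $e_i$ of $S$ with $i\in I_+\cup I_-$.}
\begin{itemize}
\item For $\kappa=0$ the identity reads $G(G+\lambda I)^2=GS$. The left side is self-adjoint, so taking adjoints gives $GS=SG$. Hence $\mathcal{R}$ is $S$-invariant and $G|_{\mathcal{R}}$, $S|_{\mathcal{R}}$ are simultaneously diagonalizable. Because the nonzero $s_i$ are distinct, every common eigenvector with $s\neq0$ is some $e_i$.
\item A common eigenvector in the kernel of $S$ is ruled out: it would give $m(m+\lambda)^2=0$ with $m>0$.
\item For $\kappa=1$, set $H:=G^{1/2}SG^{1/2}$, which is self-adjoint. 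Then $GSGS\,G^{1/2}=G^{1/2}H^2$, and the identity gives $G^{1/2}H^2=G^{3/2}(G+\lambda I)^2$. Cancelling the factor $G^{1/2}$, which is invertible on $\mathcal{R}$, gives $H^2=G(G+\lambda I)^2$ on $\mathcal{R}$. Thus $H^2$, and hence $|H|$, commutes with $G$.
\item I then want to deduce that $\mathcal{R}$ is $S$-invariant and that $S$ commutes with $G$ there. Applying the identity to $\ker G=\mathcal{R}^{\perp}$ gives $GSGS\,v=0$ for $v\perp\mathcal{R}$, and taking adjoints of $GSGS=G(G+\lambda)^2$ gives $GSGS=SGSG$, so $S$ maps $\mathcal{R}$ into $\mathcal{R}$. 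Then $H$, $G$ and $S|_{\mathcal{R}}$ all commute, and one concludes exactly as in the $\kappa=0$ case.
\end{itemize}
This step, and specifically the $S$-invariance of $\mathcal{R}$ when $\kappa=1$, is the one I expect to be the main obstacle. The fallback is to work with the symmetric identity $GSGS=SGSG$ together with the positivity of $G(G+\lambda I)^2$ on $\mathcal{R}$.

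\textbf{Step 3: solve the scalar equations and assemble.} On each active $e_{i(p)}$ with eigenvalue $m=m_{i(p)}>0$ of $G$, the identity becomes $m(m+\lambda)^2=m^{\kappa+1}s_{i(p)}^{\kappa+1}$, i.e. $(m+\lambda)^2=m^{\kappa}s_{i(p)}^{\kappa+1}$.
\begin{itemize}
\item For $\kappa=0$: $m+\lambda=\sqrt{s}$, which forces $s>\lambda^2>0$, and gives $m^{1/2}=(\sqrt{s}-\lambda)^{1/2}=g(s,\lambda)$.
\item For $\kappa=1$: $m+\lambda=|s|\sqrt{m}$, a quadratic in $t=\sqrt{m}$ with roots $t=f_{\pm1}(s,\lambda)$, real exactly when $s^2\ge4\lambda$. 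Both signs of $s$ are admissible, since only $s^2$ enters.
\end{itemize}
Finally, $G=\sum_p m_{i(p)}\,e_{i(p)}\otimes e_{i(p)}^{\star}$. Taking a polar/SVD decomposition of $\ucoor$, the relation $\ucoor^{\top}\ucoor=G$ forces $\ucoor=\sum_p m_{i(p)}^{1/2}\,q_p\otimes e_{i(p)}^{\star}$ for some orthonormal $q_p\in\mathbb{R}^k$, which can be completed to an orthogonal $Q$. Distinctness of the nonzero $s_i$ ensures that each index $i(p)$ appears at most once.

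For the converse, a direct substitution of such a $\ucoor$ into the identity of Step 1 verifies that it is a critical point. This gives exactly the form in \cref{eq:critical_point_matrix_characterization}.
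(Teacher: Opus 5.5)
Your reduction to $G(G+\lambda I)^2=(GS)^{\kappa+1}$, your $\kappa=0$ argument, the scalar equations and the final assembly are all fine. For $\kappa=0$ your route is essentially the paper's: commutation from self-adjointness, then simultaneous diagonalization instead of functional calculus. The step you flagged for $\kappa=1$ is true, but the ``so'' does not follow from the two facts you cite. The missing ingredient is that $GSG$ is injective on $\mathcal R$, and this comes from your own identity $H^2=G(G+\lambda I)^2$. Indeed, $H=G^{1/2}SG^{1/2}$ maps $\mathcal R$ into itself and vanishes on $\mathcal R^{\perp}$, and $H^2$ is positive definite on $\mathcal R$. Hence $H$ is invertible on $\mathcal R$, and $\ker(GSG)=\ker(G^{1/2}HG^{1/2})=\mathcal R^{\perp}$. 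Now for $v\perp\mathcal R$ the identity gives $GSG(Sv)=G(G+\lambda I)^2v=0$, so $Sv\perp\mathcal R$, and self-adjointness of $S$ yields $S\mathcal R\subseteq\mathcal R$. Equivalently, combine $SGSG=GSGS$ with $GSG\,\mathcal R=\mathcal R$. A second, smaller point: ``$H^2$, hence $|H|$, commutes with $G$'' is the wrong direction for what you use next. You need $[H,G]=0$, which holds because $G=\phi^{-1}(H^2)$ with $\phi(x)=x(x+\lambda)^2$ strictly increasing on $[0,\infty)$. Then $S|_{\mathcal R}=G^{-1/2}HG^{-1/2}$ commutes with $G|_{\mathcal R}$, and your simultaneous-diagonalization conclusion goes through.

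The paper takes a shorter route that you were one step away from. Set $T:=(GS)^{\kappa}G$, so $T=G$ or $T=GSG$. The identity then reads $G(G+\lambda I)^2=TS$, and self-adjointness of the left side gives $TS=ST$; for $\kappa=1$ this is exactly your $GSGS=SGSG$. Simplicity of the nonzero spectrum of $S$ then forces $Te_i\in\mathrm{span}(e_i)$ for $i\in I_+\cup I_-$ and $T\ell^2(I_0)\subseteq\ell^2(I_0)$. Hence $G(G+\lambda I)^2=ST$ is diagonal with zero $I_0$-block, and $G$ itself is diagonal by functional calculus for the strictly increasing map $x\mapsto x(x+\lambda)^2$. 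This treats both values of $\kappa$ uniformly. It needs neither $H$, nor the $S$-invariance of $\mathrm{ran}\,G$, nor a separate exclusion of $\ker S$ directions. Your route does give slightly more explicit structure: $[G,S]=0$ with $S$-invariant range, and a converse valid for any element of $\mathbb{W}$ whose Gram operator satisfies the identity. The price is the extra invertibility argument for $\kappa=1$.
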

The functions $g$ and $f_{\epsilon}$ are spectral filters as they can filter out some eigendirections based on the value of the corresponding eigenvalue. For the case $\kappa=1$, we will call $f_{1}$ the \emph{stable filter}  and $f_{-1}$ the \emph{unstable filter} for reasons that will be clear in \cref{sec:stability}. 
 
The key idea behind the proof of \cref{eq:characterization_fixed_point_general} is to obtain an algebraic equation satisfied by the operator $M = (\ucoor^{\star})^{\top}\ucoor^{\star}$. From such equation, we then show that a certain polynomial of $M$ must commute with the diagonal operator $S$ (containing the eigenvalues of the cross-covariance operator $A$). The fact that all eigenvalues of $S$ are distinct ensures that such polynomial must also be diagonal. Then, after showing that the polynomial defines a bijective  spectral map, we are able to deduce that $M$ must also be diagonal. This diagonal structure  implies that the fixed point $\ucoor^{\star}$ must satisfy an orthogonality constraint which allows decoupling the critical point equation it satisfies into scalar equations. The fact that $M$ is of rank at most $k$ implies that at most $k$ of these equations can have non-zero solution. 
Solving these equations explicitly yields the desired characterization.

\subsection{Stability Analysis of Critical Points of $F^{\kappa}$}\label[appendixsection]{sec:stability}
We are now interested in characterizing which critical points of the vector field $F^{\kappa}$ are stable for a dynamics of the form $\dot{\ucoor} = -\lambda F^{\kappa}(\ucoor)$, i.e. critical points for which the Jacobian of $F^{\kappa}$ has only non-negative real parts of the eigenvalues, and those that are unstable (negative real part). 
The key ingredient to obtain such a characterization is to find a  diagonalization of the Jacobian $\nabla F^{\kappa}(\ucoor)$ at any critical point $\ucoor$ of $F^{\kappa}$ which allows studying stability: either verify that all eigenvalues are non-negative (stable equilibrium) or identify a negative eigenvalue (unstable equilibrium).

\textbf{Eigen decomposition of the Jacobian of the vector field $F^{\kappa}$.}
The following proposition, whose proof is deferred to \cref{sec:proof_diagonalization_jecobian}, provides the desired  decomposition.
\begin{proposition}[Diagonalization of the Jacobian at critical points]\label{prop:diagonalization_jacobian}
Assume all non-zero eigenvalues of $S$ to be distinct and let $\ucoor$ be a critical point of $F^{\kappa}(\ucoor)$, which, by virtue of \cref{eq:characterization_fixed_point_general},  necessarily has the form:
\begin{align}\label{eq:critical_point_matrix_characterization_2}
	\ucoor = Q \underbrace{\parens{\sum_{p=1}^r E_{p,i(p)}}}_{\ucoor_0}M^{\frac{1}{2}},
\end{align}
where $Q$ is an orthogonal matrix in $\mathbb{R}^{k\times k}$, $r$ is an integer between $0$ and $k$, $I_{sup} = \{i(1),\dots, i(r)\}\subset I_{+}\cup I_{-}$ an increasing sequence of integers greater or equal to $1$ and $M$ is a positive semi-definite diagonal operator on $\ell^{2}(I)$  whose only non-zero diagonal coefficients $(m_{i(p)})_{1\leq p\leq r}$ are positive roots of the equations $(x+\lambda)^{2} = x^{\kappa}s_{i(p)}^{\kappa+1}$ of unknown $x$ for $1\leq p\leq r$. 
Then the Jacobian $\nabla F^{\kappa}(\ucoor)$ at $\ucoor$ of the vector field admits an eigendecomposition of the form: $(\bar{E}_{p,j}, \mu_{p,j})_{ 1\leq p\leq k, j\in I}$, satisfying:
\begin{align}
	\bar{E}_{p,j} 
	= 
	\begin{cases}
		QE_{p,j}, & p>r \text{ or } j\notin I_{sup} \text{ or } (p\leq r \text{ and } j=i(p)) \\
		(1-c_{p,p'})^{\frac{1}{2}}QE_{p,i(p')} +c_{p,p'}^{\frac{1}{2}}QE_{p', i(p)},  &   p,p'\leq r  \text{ and } j=i(p')>i(p)\\
		c_{p,p'}^{\frac{1}{2}}QE_{p,i(p')} -  (1-c_{p,p'})^{\frac{1}{2}}QE_{p', i(p)},  &  p,p'\leq r  \text{ and } j=i(p')<i(p),
	\end{cases}
\end{align}
\begin{align}
	\mu_{p,j} = \begin{cases}
		1-\lambda^{-2}s_{j}\mathds{1}_{\kappa=0}, &  j\notin I_{sup} \text{ and } p>r\\
		\delta_{p,j}, &  j\notin I_{sup} \text{ and }  p\leq r\\
		\frac{4m_j}{m_j+\lambda} - 2\mathds{1}_{\kappa=1} , & j=i(p) \text{ and } p\leq r \\
		\gamma_{p,p'}, & p,p' \leq r \text{ and } j=i(p')> i(p)\\		
		0, & p' \leq r \text{ and } j=i(p') \text{ and }  \parens{ \parens{p\leq r \text{ and } j< i(p)} \text{ or } p>r}   
	\end{cases}
\end{align}
 with $c_{p,p'}$, $\gamma_{p,p'}$ and $\delta_{p,j}$ given by:
 \begin{equation}
 \begin{aligned}
 	c_{p,p'} &= \frac{m_{i(p')}}{m_{i(p')}+m_{i(p)}},\\
 	\gamma_{p,p'} &= \frac{\parens{m_{i(p)}+m_{i(p')}}\parens{ \verts{s_{i(p)}}^{\frac{\kappa+1}{2}}m_{i(p)}^{\frac{\kappa}{2}} + \verts{s_{i(p')}}^{\frac{\kappa+1}{2}}m_{i(p')}^{\frac{\kappa}{2}} - s_{i(p)}s_{i(p')}\mathds{1}_{\kappa=1} }  }{\parens{m_{i(p)}+\lambda}\parens{m_{i(p')}+\lambda} }\\
 	\delta_{p,j} &= \frac{(s_{i(p)}-s_j)}{\lambda (m_{i(p)} +\lambda)}\parens{1 + \parens{m_{i(p)}s_{i(p)}-1}\mathds{1}_{\kappa=1}}.
 \end{aligned}
 \end{equation}
\end{proposition}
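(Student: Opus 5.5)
The strategy is to compute the directional derivative $\nabla F^{\kappa}(\ucoor)[H]$ at a critical point and show that the coordinate system built from $Q$ and the $E_{p,j}$ block-diagonalizes it. Since $F^{\kappa}(Q\ucoor')=QF^{\kappa}(\ucoor')$ (because $\ucoor^\top\ucoor$ is invariant under left multiplication by orthogonal $Q$), we may take $Q=I$. Write $G=\ucoor^{\top}\ucoor+\lambda I$ and $T=G^{-1}S(\ucoor^\top\ucoor S)^{\kappa}G^{-1}$, so that $F^{\kappa}(\ucoor)=\ucoor(I-T)$. At $\ucoor=\ucoor_0M^{1/2}$ we have $\ucoor^\top\ucoor=M$, which is diagonal, so $G$ and $T$ are diagonal in $(e_j)$ with explicit entries. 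The critical-point equation $(m+\lambda)^2=m^{\kappa}s^{\kappa+1}$ gives $T_{i(p)i(p)}=1$ on the support.

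\textbf{Differentiation.} The derivative is
\[
\nabla F^{\kappa}(\ucoor)[H]=H(I-T)-\ucoor\,\dot T[H],
\]
where $\dot T$ collects the terms from $\dot G=H^\top\ucoor+\ucoor^\top H$ via $-G^{-1}\dot G G^{-1}$ and, for $\kappa=1$, the middle factor $S\dot G S$. Apply this to $H=E_{p,j}$, which yields rank-one variations. Since $\ucoor$ has rows supported only on $p\le r$ at columns $i(p)$, $\ucoor\,\dot T$ is nonzero only in rows $p\le r$, and its column structure involves $e_j$ and $e_{i(p)}$. The case analysis then runs as follows.

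(i) If $p>r$, then $\ucoor^\top E_{p,j}=0$, but $E_{p,j}^\top\ucoor$ may still be nonzero. It only contributes when $j\in I_{sup}$ (through $\ucoor$ applied to it), which produces the zero or triangular entries. If $j\notin I_{sup}$, only $H(I-T)$ survives, and $T_{jj}=\lambda^{-2}s_j$ for $\kappa=0$ and $0$ for $\kappa=1$ (because $m_j=0$). This gives $\mu=1-\lambda^{-2}s_j\mathds 1_{\kappa=0}$.

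(ii) If $p\le r$ and $j\notin I_{sup}$, then $\dot G=m_{i(p)}^{1/2}(e_je_{i(p)}^\top+e_{i(p)}e_j^\top)$, which couples only the entries $(p,j)$ and $(p,i(p))$. A direct scalar computation using diagonality gives $\delta_{p,j}$, possibly with an off-diagonal coupling into an already-handled direction. That coupling makes the operator block triangular, which is consistent with the zero eigenvalues listed.

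(iii) If $j=i(p)$, this is a scalar radial variation. Differentiating $m\mapsto m^{1/2}(1-m^{\kappa}s^{\kappa+1}/(m+\lambda)^2)$ and using the root equation yields $4m/(m+\lambda)-2\mathds 1_{\kappa=1}$.

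(iv) For $p\ne p'\le r$, the pair $E_{p,i(p')},E_{p',i(p)}$ spans an invariant 2D subspace. Its $2\times2$ matrix has a zero eigenvalue, coming from the infinitesimal rotation generator $\ucoor\mapsto\Omega\ucoor$ with $\Omega$ skew (eigenvector proportional to $(m_{i(p')}^{1/2},-m_{i(p)}^{1/2})$, matching the $c_{p,p'}$ weights). By the trace, the other eigenvalue is $\gamma_{p,p'}$. Non-orthogonality of the pair is expected, because the Jacobian is not symmetric for $\kappa=1$.

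\textbf{Main obstacle.} The hardest part is bookkeeping in cases (ii) and (iv). For $\kappa=1$ the product $S\dot G S$ produces extra terms, and ensuring that they collapse to the stated $\delta$ and $\gamma$ requires repeated use of $(m+\lambda)^2=m s^2$ to eliminate $\lambda$. I would first verify the $2\times2$ block symbolically for both $\kappa$ values, checking that it annihilates the rotation direction. After that, I would assemble the blocks and note that the resulting system spans $\mathbb{W}$. It is triangular rather than orthogonal, so the listed vectors form a (Riesz) eigenbasis, and the zero eigenvalues in the last case come from the triangular structure.
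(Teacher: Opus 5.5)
Your outline is the paper's: differentiate $F^{\kappa}$, apply the Jacobian to $QE_{p,j}$, split on whether $p\le r$ and $j\in I_{sup}$, and isolate $2\times 2$ blocks. Three of your additions are good shortcuts:
\begin{itemize}
\item reducing to $Q=I$ by the equivariance $F^{\kappa}(Q\ucoor)=QF^{\kappa}(\ucoor)$;
\item identifying the block's zero mode with the infinitesimal rotation $\Omega\ucoor$, where $\Omega=b_pb_{p'}^{\top}-b_{p'}b_p^{\top}$;
\item obtaining $\frac{4m}{m+\lambda}-2\mathds{1}_{\kappa=1}$ by differentiating along $\ucoor+tE_{p,i(p)}$, which stays in the critical-point family.
\end{itemize}
However, the structural picture you build around these computations is wrong, and as written it contradicts the proposition.

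In case (i), $E_{p,j}^{\top}\ucoor=(\ucoor^{\top}E_{p,j})^{\top}$, so both vanish when $p>r$. Then $\dot G=0$ and $\nabla F^{\kappa}(\ucoor)[E_{p,j}]=(1-T_{jj})E_{p,j}$ for every $j$. The zero eigenvalues for $p>r$, $j\in I_{sup}$ come from $T_{jj}=1$ on the support; they are again rotation modes, mixing an occupied row into an empty one. They do not come from ``triangular entries''.

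In case (ii) there is no coupling into $E_{p,i(p)}$. Since $\ucoor$ only has columns in $I_{sup}$ and $j\notin I_{sup}$, every term $\ucoor D_1\dot G D_2$ with $D_1,D_2$ diagonal equals $m_{i(p)}(D_1)_{i(p)i(p)}(D_2)_{jj}E_{p,j}$. If such a coupling existed, $QE_{p,j}$ would not be an eigenvector, contrary to the statement.

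What you need, and what the paper derives, is one exact identity valid in all cases (with $Q=I$):
\[
\nabla F^{\kappa}(\ucoor)[E_{p,j}]=\bigl(1-r_j^2g_j+m_{i(p)}\beta_{i(p),j}\mathds{1}_{p\le r}\bigr)E_{p,j}+\sqrt{m_jm_{i(p)}}\,\beta_{i(p),j}\,E_{i^{-1}(j),i(p)}\mathds{1}_{p\le r,\ j\in I_{sup}},
\]
where $r_i=(m_i+\lambda)^{-1}$, $g_i=s_i^{\kappa+1}m_i^{\kappa}$, $\beta_{i,j}=r_ir_j^2g_j+r_jr_i^2g_i-r_ir_js_is_j\mathds{1}_{\kappa=1}$, and $i^{-1}(i(q))=q$.

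This shows that the only off-diagonal action is the swap between $E_{p,i(p')}$ and $E_{p',i(p)}$. Since $\beta_{i,j}=\beta_{j,i}$, the block on that pair is $\beta_{i(p),i(p')}$ times the symmetric rank-one matrix with entries $m_{i(p)}$, $\sqrt{m_{i(p)}m_{i(p')}}$, $m_{i(p')}$. Hence, for $\kappa=1$ as well, the Jacobian at a critical point is diagonal in the orthonormal basis $(\bar E_{p,j})$; the vectors listed in the statement are visibly orthonormal. You should drop the non-orthogonal, triangular, Riesz-basis picture and your claim that the Jacobian is non-symmetric at these points.

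Your trace argument for $\gamma_{p,p'}$ then goes through. Use the diagonal entries $m_{i(p)}\beta_{i(p),i(p')}$ and $m_{i(p')}\beta_{i(p),i(p')}$, together with two identities on the support: $\beta_{i,j}=r_i+r_j-r_ir_js_is_j\mathds{1}_{\kappa=1}$ and $m_i+\lambda=\verts{s_i}^{\frac{\kappa+1}{2}}m_i^{\frac{\kappa}{2}}$.
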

A striking fact about the above proposition is that most  eigenvectors are simply \emph{rotated} versions of the canonical vectors $E_{p,j}$, with rotation $Q$ corresponding to the arbitrary orthogonal matrix defining the critical point $\ucoor$. The remaining ones are linear combinations of two rotated versions of paired canonical vectors $E_{p,i(p')}$ and $E_{p',i(p)}$ on which $\ucoor$ admits non-zero components. 
The main proof strategy here is to first derive the Jacobian expression at the critical point, and express it in terms of the operator $M=\ucoor^{\top}\ucoor$ which must be diagonal by virtue of \cref{eq:characterization_fixed_point_general}. The diagonal structure implies some terms involve commuting factors which are key to simplifying the expressions. Applying such Jacobian to each rotated canonical basis element of the form $\tilde{E}_{p,j}$ and employing suitable linear algebra identities results in two types of expressions: either an expression colinear to the original element $\tilde{E}_{p,j}$, or if $j=i(p')$ and $p\leq r$ it is a linear combination of it  and another rotated canonical basis element $\tilde{E}_{p',i(p)}$. We then further show that the Jacobian is stable under the subspace spanned by those two elements, which allows recovering the eigenvectors and eigenvalues corresponding to such subspace.

\textbf{Stability of equilibria.}
Using the above eigen decomposition, we can study the stability of any critical point $\ucoor$ by investigating the sign of the eigenvalues associated to each eigenvector of the Jacobian of the vector field $F^{\kappa}$ at that point. The following propositions, proven in \cref{sec:proofs_stability}, provide such characterization for each case $\kappa=0$ and $\kappa=1$ separately.

\begin{proposition}[Stability of maximal top-$r$ critical points $(\kappa=0)$]\label{prop:stability_kappa_0}
Assume $\kappa=0$, $0<\lambda < 1$, $k\geq 1$, and that all non-zero eigenvalues of $S$ are distinct. 
Let $\rank_{\mathrm{max}}$ be the largest integer such that $0<\rank_{\mathrm{max}}\leq \min(k,\numcca)$  and $\sqrt{s_{i(\rank_{\mathrm{max}})}} > \lambda $ and denote by $I_{sup} = \{i(1),\dots,i(\rank_{\mathrm{max}})\}$ the set of indices of the $\rank_{\mathrm{max}}$ largest positive eigenvalues of $A$ by decreasing order.
Let $M$ be the positive semi-definite diagonal operator on $\ell^2(I)$ whose only positive diagonal elements $(m_{i(p)})_{1\leq p \leq \rank_{\mathrm{max}}}$ are given by:
\begin{align}\label{eq:solution_m_i_p}
	m_{i(p)} = \sqrt{s_{i(p)}}-\lambda.
\end{align}
Then, for any orthogonal matrix $Q\in \mathbb{R}^{k\times k}$, the following critical point is stable for $F^{0}$:
\begin{align}\label{eq:decomposition_local_minimizers_kappa_0}
	\ucoor = Q \parens{\sum_{p=1}^{\rank_{\mathrm{max}}} E_{p,i(p)} }M^{\frac{1}{2}}.
\end{align}
Conversely, the only stable critical points of $F^{0}$ are of the above form for any orthogonal matrix $Q$. All other critical points of $F^{0}$ are unstable as the Jacobian of $F^{0}$ at these points necessarily admits a negative eigenvalue.  
\end{proposition}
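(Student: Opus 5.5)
The plan is to read stability directly off the spectrum supplied by \cref{prop:diagonalization_jacobian} with $\kappa=0$. By \cref{eq:characterization_fixed_point_general}, every critical point has the form $Q(\sum_{p\le r}E_{p,i(p)})M^{1/2}$. Here $I_{sup}\subset I_+$, since $g(s,\lambda)=0$ for $s<0$, and $m_{i(p)}=\sqrt{s_{i(p)}}-\lambda>0$, which forces $\sqrt{s_{i(p)}}>\lambda$. Using $(m+\lambda)^2=s$, we will simplify each eigenvalue family:
\begin{itemize}
\item $\mu_{p,j}=1-\lambda^{-2}s_j$ for $p>r$, $j\notin I_{sup}$;
\item $\delta_{p,j}=(s_{i(p)}-s_j)/(\lambda\sqrt{s_{i(p)}})$ for $p\le r$, $j\notin I_{sup}$;
\item $4m_j/(m_j+\lambda)>0$ on the diagonal pairs;
\item $\gamma_{p,p'}=(m_{i(p)}+m_{i(p')})(\sqrt{s_{i(p)}}+\sqrt{s_{i(p')}})/(\sqrt{s_{i(p)}}\sqrt{s_{i(p')}})>0$;
\item the remaining zero eigenvalues, which correspond to the $O(k)$ rotation invariance.
\end{itemize}
So the only eigenvalues that can be negative are $1-s_j/\lambda^2$ (needing $p>r$) and $\delta_{p,j}$.

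\textbf{Sufficiency.} Take $r=\rank_{\mathrm{max}}$ and $I_{sup}$ equal to the indices of the top $r$ positive eigenvalues. For $j\notin I_{sup}$ we have either $s_j\le 0$, or $j\in I_+$ with $s_j<s_{i(r)}\le s_{i(p)}$ (distinctness gives strictness). Hence every $\delta_{p,j}>0$; for negative or zero $s_j$ this is immediate.

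For the family with $p>r$, which only exists if $r<k$, we need $s_j\le\lambda^2$ for every $j\notin I_{sup}$. There are two cases.
\begin{itemize}
\item If $\rank_{\mathrm{max}}<\min(k,\numcca)$, the maximality of $\rank_{\mathrm{max}}$ gives $\sqrt{s_{i(r+1)}}\le\lambda$, and all later positive eigenvalues are smaller still.
\item If $\rank_{\mathrm{max}}=\numcca<k$, there are no remaining positive eigenvalues.
\end{itemize}
Thus all eigenvalues are nonnegative and the point is stable. If $r=k$, the $p>r$ family is empty.

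\textbf{Necessity.} Consider any other critical point. I will split into two cases.
\begin{itemize}
\item Case (a): $I_{sup}$ is not the top-$r$ set. Then some positive index $j\notin I_{sup}$ has $s_j>s_{i(p)}$ for some $p\le r$. This gives $\delta_{p,j}<0$, so the point is unstable.
\item Case (b): $I_{sup}$ is the top-$r$ set but $r<\rank_{\mathrm{max}}$. Then $r<k$, so some $p>r$ exists. Taking $j=i(r+1)$, which lies outside $I_{sup}$ and satisfies $s_j>\lambda^2$, gives $1-s_j/\lambda^2<0$.
\end{itemize}
Since $r\le\rank_{\mathrm{max}}$ always holds (every included index needs $\sqrt{s}>\lambda$ and $r\le k$), these two cases exhaust all non-maximal configurations. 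The case $r=0$, the collapsed point, falls into case (b) because $\rank_{\mathrm{max}}\ge 1$.

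\textbf{Main obstacle.} The delicate step is the bookkeeping of index sets. I must check that the eigenpairs in \cref{prop:diagonalization_jacobian} really cover the whole space (completeness in the infinite-dimensional setting) and that the strict inequalities follow from distinctness. For stable points, I also need to confirm that the zero eigenvalues (the paired $j<i(p)$ directions, and $p>r$ with $j\in I_{sup}$) are accounted for as neutral rather than unstable. This follows from the stability convention used here, which only requires no negative eigenvalue.
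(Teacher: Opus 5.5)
Your proposal is correct and follows essentially the same route as the paper. You read the Jacobian spectrum off the diagonalization result, note that only $1-\lambda^{-2}s_j$ and $\delta_{p,j}$ can be negative, and split the non-stable critical points into a ``gap'' case (where $\delta_{p_0,j_0}<0$) and a ``too few components'' case (where $\mu_{r+1,j_0}<0$). Your extra care—simplifying via $(m_{i(p)}+\lambda)^2=s_{i(p)}$, treating the subcase $\rank_{\mathrm{max}}=\numcca<k$ explicitly, and proving the cases are exhaustive via $r\le\rank_{\mathrm{max}}$—tightens steps the paper leaves implicit, but it is the same argument.
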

The proof requires verifying that all eigenvalues of the identified stable critical points are non-negative. To verify all the other critical points are unstable, we identify precisely unstable directions which are of two types depending on the form of the critical points: type I or II.
Type I critical points are those that skip an eigen direction of $A$ of index $j_0$ associated to a positive eigenvalue $s_{j_0}$ while including another direction $i(p_0)$ with smaller eigenvalue $s_{i(p_0)}$. These possess an unstable direction associated to negative eigenvalue $\mu_{p_0,j_0}$ of $\nabla F^{\kappa}(\ucoor)$. 
Type II critical points contain  $r<k$ eigen components without gap, but still miss an eigen direction $j_0$ with small positive eigenvalue that is not filtered out by the spectral filter, i.e. $g(s_{j_0},\lambda)>0$, with $g(s,\lambda) = (\sqrt{s}-\lambda)_{+}^{\frac{1}{2}}$. These possess an unstable direction associated to negative eigenvalue $\mu_{r+1,j_0}$ of $\nabla F^{\kappa}(\ucoor)$.

\begin{proposition}[Stability of the top-$r$ critical points $(\kappa=1)$]\label{prop:stability_kappa_1}
Assume $\kappa=1$, $\lambda \leq 1$ and that all non-zero eigenvalues of $S$ are distinct. 
Let $r^{+}$ and $r^{-}$ be any non-negative integers in $[0,\numcca]$ such that  $r=  r^{+}+r^{-}\leq k$. 
Denote by $I_{sup}^{+} = \{i(1),\dots, i(r^{+})\}$ and  $I_{sup}^{-} = \{i(r^{+}+1),\dots, i(r)\}$, the  sets of indices of the $r^{+}$ largest positive eigenvalues and $r^{-}$ smallest negative eigenvalues of $A$ satisfying $\verts{s_{j}}\geq 2\sqrt{\lambda}$ for all $j\in I_{sup}= I_{sup}^{+}\cup I_{sup}^{-}$. 
We use the convention that $I_{sup}^{+} = \emptyset$ if $r^{+}=0$ and $I_{sup}^{-} = \emptyset$ if $r^{-}=0$. Let $M$ be the positive semi-definite diagonal operator on $\ell^2(I)$ whose diagonal elements $m_j=0$ for $j\notin I_{sup}$, whereas the elements  $(m_{i(p)})_{1\leq p \leq r}$ are given by:
\begin{align}\label{eq:solution_m_i_p_kappa_1}
	m_{i(p)}^{\frac{1}{2}} = \frac{1}{2}\parens{\verts{s_{i(p)}} + \sqrt{s_{i(p)}^2-4\lambda} }\mathds{1}_{s_{i(p)}^2-4\lambda\geq 0}.
\end{align}
Then, for any orthogonal matrix $Q\in \mathbb{R}^{k\times k}$, the following critical point is stable for $F^{1}$:
\begin{align}\label{eq:decomposition_local_minimizers_E}
	\ucoor = Q \parens{\sum_{p=1}^{r} E_{p,i(p)}}M^{\frac{1}{2}}.
\end{align}
Conversely, the only stable critical points of $F^{1}$ are of the above form for any orthogonal matrix $Q$ and any $0\leq r^{+},r^{-}\leq \numcca$ satisfying $r^{+}+r^{-}\leq k$. All other critical points are unstable. 
\end{proposition}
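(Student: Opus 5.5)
The plan is to read off stability directly from the spectral decomposition of the Jacobian in \cref{prop:diagonalization_jacobian}, specialized to $\kappa=1$. By \cref{eq:characterization_fixed_point_general}, every critical point has the form \cref{eq:critical_point_matrix_characterization_2} with $m_{i(p)}^{1/2}=f_{\epsilon_p}(s_{i(p)},\lambda)$. The family $(\bar E_{p,j})$ is a complete (Riesz) basis of $\mathbb{W}$ made of eigenvectors. So stability is equivalent to all $\mu_{p,j}\ge 0$, and instability follows from exhibiting a single negative $\mu_{p,j}$.

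First I simplify each eigenvalue family at $\kappa=1$. The key identity comes from the fixed-point equation $(m+\lambda)^2=m s^2$ with $m>0$, which gives $m+\lambda=|s|\,m^{1/2}$.
\begin{itemize}
\item If $j\notin I_{sup}$ and $p>r$, then $\mu=1>0$.
\item If $j=i(p)$, then $\mu=\frac{4m}{m+\lambda}-2=\frac{2(m-\lambda)}{m+\lambda}$. The two roots of $x^2+(2\lambda-s^2)x+\lambda^2=0$ have product $\lambda^2$, so the root $f_1^2$ is $\ge\lambda$ and the root $f_{-1}^2$ is $\le\lambda$, with equality only when $s^2=4\lambda$. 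Hence $\mu\ge0$ if and only if $\epsilon_p=1$ (or the double-root case, where $\mu=0$).
\item If $j\notin I_{sup}$ and $p\le r$, then the factor $1+(m s-1)$ reduces to $m_{i(p)}s_{i(p)}$. Hence $\operatorname{sign}\delta_{p,j}=\operatorname{sign}\bigl(s_{i(p)}(s_{i(p)}-s_j)\bigr)$.
\item For the paired eigenvalues, the numerator of $\gamma_{p,p'}$ is $(m+m')\bigl(|s|m^{1/2}+|s'|m'^{1/2}-ss'\bigr)$. If $s,s'$ have opposite signs this is positive. If they have the same sign, I use $m^{1/2}\ge |s|/2$, valid because both are $f_1$ roots, to get $|s|m^{1/2}+|s'|m'^{1/2}\ge \tfrac12(s^2+s'^2)\ge|ss'|$, so $\gamma\ge0$.
\item The remaining eigenvalues are $0$.
\end{itemize}

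\textbf{Sufficiency.} For the stated critical points, all $\epsilon_p=1$, which handles the eigenvalues with $j=i(p)$. For $\delta_{p,j}$: when $s_{i(p)}>0$, the index $i(p)$ is among the $r^+$ largest positive eigenvalues. Every $j\notin I_{sup}$ then satisfies $s_j<s_{i(p)}$, whether $s_j$ is positive, zero or negative, so $\delta\ge0$. Symmetrically, when $s_{i(p)}<0$, it is among the $r^-$ most negative eigenvalues, so $s_j>s_{i(p)}$ and again $\delta>0$. The distinctness of nonzero eigenvalues ensures that no ties occur off the support. Together with the $\gamma$ bound, every eigenvalue is non-negative, so the point is stable.

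\textbf{Converse.} Take any critical point not of the stated form. Since directions with $s^2<4\lambda$ have $m=0$, they are never in the support, so only two failures are possible.
\begin{itemize}
\item Some $\epsilon_p=-1$ with $s_{i(p)}^2>4\lambda$. Then $\mu_{p,i(p)}<0$.
\item The positive part of the support is not an initial segment of the positive eigenvalues. Then there are $j\notin I_{sup}$ and $p$ with $s_j>s_{i(p)}>0$, and $\delta_{p,j}<0$. Because $s_j>s_{i(p)}\ge 2\sqrt\lambda$, such a skipped $j$ automatically passes the threshold, so the gap is genuinely one of the excluded configurations. The negative part is handled in the same way.
\end{itemize}
A negative Jacobian eigenvalue means instability. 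Finally, the counts satisfy $r^\pm\le\numcca$ because \cref{prop:compactness_operator_A} pairs each negative eigenvalue with some $c_j$.

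The main obstacle is the sign of $\gamma_{p,p'}$ for same-sign pairs. It relies on using $m^{1/2}\ge|s|/2$, which holds only on the $f_1$ branch. I would verify this inequality carefully, including the boundary case $s^2=4\lambda$. I would also double-check the simplification of $\delta_{p,j}$ against the exact $\kappa=1$ expression in \cref{prop:diagonalization_jacobian}.
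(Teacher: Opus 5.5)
Your proposal is correct and follows essentially the same route as the paper. Both read off the signs of the $\kappa=1$ Jacobian eigenvalues from \cref{prop:diagonalization_jacobian}, use $m_{i(p)}^{1/2}\ge |s_{i(p)}|/2$ on the $f_1$ branch to control $\mu_{p,i(p)}$ and $\gamma_{p,p'}$, and for every other critical point exhibit a negative eigenvalue: $\mu_{p_0,i(p_0)}$ for an $f_{-1}$ component, or $\delta_{p_0,j_0}$ for a gap in the support. Your product-of-roots observation is only a slightly more compact way of treating the stable-filter and unstable-filter cases together.
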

The proof of \cref{prop:stability_kappa_1} proceeds similarly as that of \cref{prop:stability_kappa_0}. To verify all the other critical points are unstable, we identify precisely unstable directions which are, again, of two types depending on the form of the critical points: type I or II.
Type I critical points are those that include an eigen direction of $A$ of index $i(p_0)$ but weighted by the unstable spectral filter $f_{-1}$ instead of the stable one $f_{1}$. These  possess an unstable direction associated to negative eigenvalue $\mu_{p_0,i(p_0)}$ of $\nabla F^{\kappa}(\ucoor)$. 
Type II critical points all use the stable spectral filter $f_{1}$ but present a gap in their spectra: they miss an eigen direction of index $j_0$ of eigenvalue $s_{j_0}$ but contain an eigendirection indexed by $i(p_0)$ of eigenvalue $s_{i(p_0)}$ with the same sign as $s_{j_0}$ but smaller in absolute value: $\verts{s_{j_0}}> \verts{s_{i(p_0)}}$. These critical points possess an unstable direction associated to negative eigenvalue $\mu_{p_0,j_0}$ of $\nabla F^{\kappa}(\ucoor)$.
\section{Proofs of the main results}\label[appendixsection]{sec:proofs_main_results}
\subsection{Gradient flow of the \method objective}

\subsubsection{Convergence to critical points and characterization of critical points}\label[appendixsection]{proof:stability_method}

\begin{proof}[Proof of \cref{prop:stability-peira}] 
By \cref{prop:Lipschitz_gradient}, we have the following equivalence between the gradient flow $(W_t)_{t\geq 0}$ in $\mathcal{W}$ and an ODE defined on its components  $(\ucoor_t)_{t\geq 0}$ in the eigenbasis of $A$:
\begin{align}
	\dot{W}_t=-\nabla \mathcal{E}_{\method}(W_t) \iff \dot{\ucoor}_t = -\lambda F^{0}(\ucoor_t),
\end{align}
where the vector field $F^{0}$ is defined in \cref{eq:vector_field_kappa_0_proof}. 
Thus we can apply the results obtained for the dynamics arising from the vector field $F^{0}$. Specifically, we directly apply \cref{prop:convergence_critial_point_F_kappa} to such dynamics, which ensures convergence to a critical point $\ucoor^{\star}$.

Consider now the re-ordering $\sigma= \{1,\dots,\numcca\}\rightarrow I_{+}$ of indices associated to positive eigenvalues of $A$ from \cref{prop:compactness_operator_A}, so that $(s_{\sigma(i)})_{i=1}^\numcca$ are non-increasing with $\numcca=\verts{I_+}$ possibly infinite. Then, by \cref{eq:characterization_fixed_point_general}, we know that such critical point is necessarily of the form:
\begin{align}\label{eq:characterization_critical_points_kappa_0}
	\ucoor^{\star} = \sum_{i\in D}  \parens{\sqrt{s_{\sigma(i)}}-\lambda}^{\frac{1}{2}}_{+} q_i\otimes e_{\sigma(i)} 
	\end{align}
where $D$ is any, possibly empty, 
subset of $\{1,\dots,\numcca\}$ of at most $k$ elements, $(q_i)_{i\in D}$ is an orthonormal family in $\mathbb{R}^k$ and $(e_i)_{i\in I}$ is the canonical basis of $\ell^2(I)$.  
It remains to express critical points $W^{\star}= (U^{\star},V^{\star})$ of the gradient flow of $\mathcal{E}_{\method}$ in the eigenbasis of $A$ using the characterization of their components in \cref{eq:characterization_critical_points_kappa_0}. Let $W^{\star}$ be an element in $\mathcal{W}$ whose components in the eigenbasis $(\zeta_i)_{i\in I}$ are $\ucoor^{\star}$. Then, by definition, we directly have:
\begin{align}\label{eq:critical_point_proof_expression_kappa_0}
	W^{\star}(x,y) =  \sum_{i\in D} q_i \parens{\sqrt{s_{\sigma(i)}}-\lambda}^{\frac{1}{2}}_{+} \zeta_{\sigma(i)}(x,y),
\end{align}
Moreover, by \cref{prop:compactness_operator_A}, we know that for any $1\leq i\leq \numcca$, the elements $(\psi_i)_{i=1}^\numcca$ and $(\xi_i)_{i=1}^\numcca$ of $L_2(\mathbb{P}_X)$ and $L_2(\mathbb{P}_Y)$ given by $(\psi_i,\xi_i) = \sqrt{2}\zeta_{\sigma(i)}$ form orthonormal families satisfying $\mathbb{E}\brackets{\psi_i(x)\xi_j(y)}= c_i \mathds{1}_{i=j}$ with $c_i = s_{\sigma(i)}$. Consequently, we can write $W^{\star} = (U^{\star},V^{\star})$ with:
\begin{align}
	U^{\star}(x) =  \frac{1}{\sqrt{2}}\sum_{i\in D}  q_{i} \parens{\sqrt{c_i}-\lambda}^{\frac{1}{2}}_{+} \psi_i(x),\qquad V^{\star}(y) =  \frac{1}{\sqrt{2}}\sum_{i\in D} q_{i} \parens{\sqrt{c_i}-\lambda}^{\frac{1}{2}}_{+} \xi_i(y).
\end{align}
\end{proof}

\subsubsection{Minimizers span thresholded canonical correlation subspaces}\label[appendixsection]{proof:maximizers} 
\begin{proof}[Proof of \cref{prop:peira-thresholded-subspace}]
By \cref{prop:Lipschitz_gradient}, we have the following equivalence between the gradient flow $(W_t)_{t\geq 0}$ in $\mathcal{W}$ and an ODE defined on its components  $(\ucoor_t)_{t\geq 0}$ in the eigenbasis of $A$:
\begin{align}
	\dot{W}_t=-\nabla \mathcal{E}_{\method}(W_t) \iff \dot{\ucoor}_t = -\lambda F^{0}(\ucoor_t),
\end{align}
where the vector field $F^{0}$ is defined in \cref{eq:vector_field_kappa_0_proof}. Thus, we only need to study the stability of critical points of $F^{0}$. This is a direct consequence of   \cref{prop:stability_kappa_0} which ensures that the only stable critical points of $F^{0}$ are of the form:
\begin{align}
	\ucoor^{\star} = \sum_{i=1}^{\rank_{\mathrm{max}}} \parens{\sqrt{s_{\sigma(i)}}-\lambda}^{\frac{1}{2}}_{+} q_i \otimes e_{\sigma(i)} 
\end{align}
with $\rank_{\mathrm{max}}$ the largest integer such that $\sqrt{s_{\sigma(\rank_{\mathrm{max}})}}>\lambda$ and $\rank_{\mathrm{max}}\leq \min(\numcca,k)$. These are the coordinates of the following elements $(U^{\star},V^{\star})\in \mathcal{W}$ which are the critical points of $\nabla\mathcal{E}_{\method}$ described in \cref{prop:stability-peira}: 
\begin{align}
	U^{\star}(x) = \frac{1}{\sqrt{2}}\sum_{i=1}^{\rank_{\mathrm{max}}} q_i \parens{\sqrt{c_i}-\lambda}^{\frac{1}{2}}_{+} \psi_{i}(x),\qquad V^{\star}(y) =\frac{1}{\sqrt{2}} \sum_{i=1}^{\rank_{\mathrm{max}}} q_i \parens{\sqrt{c_i}-\lambda}^{\frac{1}{2}}_{+} \xi_{i}(y).
\end{align}
We will now show that these stable solutions are precisely the global minimizers of $W\mapsto\mathcal{E}_{\method}(W)$.  
Since \cref{assump:density_ratio} holds, we know by \cref{prop:compactness_operator_A} that $A$ is a compact operator and that there exists an orthonormal basis $\mathcal{B} = (\zeta_i)_{i\in I}$ of $\mathcal{H}$ consisting of eigenvectors of $A$, i.e. $A\zeta_i = s_i \zeta_i$, with $(s_i)_{i\in I}$ being the eigenvalues of $A$.

We first treat the  case where $k \leq dim(\mathcal{H})$. 
Consider $W\in \mathcal{W}$. Such element can be viewed as a Hilbert-Schmidt operator from $\mathcal{H}$ to $\mathbb{R}^k$. It is therefore compact, and thus, by the spectral theorem for compact operators \citep{Gohberg:2003}, it admits a singular value decomposition of the form:
\begin{align}
W=Q\Sigma T^*,
\end{align}
where \(Q\in O(k)\), \(\Sigma=\operatorname{diag}(\sigma_1,\dots,\sigma_k)\), and
\(T:\mathbb{R}^k\to \mathcal{H}\) is an isometric embedding.  Then
\begin{align}
WW^T=Q\operatorname{diag}(\sigma_1^2,\dots,\sigma_k^2)Q^*,
\end{align}
and, by cyclicity of the trace,
\begin{align}
\mathcal{E}_{\method}(W)
&=
-\frac12\operatorname{tr}\Bigl(T^*AT\,\operatorname{diag}\Bigl(\frac{\sigma_i^2}{\sigma_i^2+\lambda}\Bigr)\Bigr)
+\frac{\lambda}{2}\sum_{i=1}^k \sigma^2_i.
\end{align}
Moreover, let us decompose $A$ into the difference of two positive semi-definite operators $A_+$ and $A_{-}$, i.e. $A = A_{+} - A_{-}$, with $A_{+}$ containing all non-negative eigendirections, while $A_{-}$ contains the absolute value of the negative ones. More precisely, $A \zeta_i = A_{+}\zeta_i = s_i \zeta_i$ and $A_{-}\zeta_i =0$  for all indices $i$ such that $s_i\geq 0$, whereas $A \zeta_i = -A_{-}\zeta_i = s_i \zeta_i$  and $A_{+}\zeta_i=0$, for all indices such that $s_i<0$.  It is easy to see that 
\begin{equation}\label{eq:bound_trace}
\begin{aligned}
	\operatorname{tr}\Bigl(T^*AT\,\operatorname{diag}\Bigl(\frac{\sigma_i^2}{\sigma_i^2+\lambda}\Bigr)\Bigr) = &\operatorname{tr}\Bigl(T^*(A_+-A_{-})T\,\operatorname{diag}\Bigl(\frac{\sigma_i^2}{\sigma_i^2+\lambda}\Bigr)\Bigr)\\
	\leq & \operatorname{tr}\Bigl(T^*A_{+}T\,\operatorname{diag}\Bigl(\frac{\sigma_i^2}{\sigma_i^2+\lambda}\Bigr)\Bigr),
\end{aligned}
\end{equation}
since $A_{-}$ is positive semi-definite. 
For fixed \((\sigma_i)\), the numbers $ d_i:=\frac{\sigma_i^2}{\sigma_i^2+\lambda}\ge 0$, we apply von Neumann's trace inequality  \citep{Grigorieff:1991}, which implies that the trace of the product of matrices $T^{*}A_{+}T$ and $diag(d_i)$  is upper-bounded by the sum of products of their singular values ordered by non-increasing order:
\begin{align}
	\operatorname{tr}\Bigl(T^*A_+T\,\operatorname{diag}\Bigl(\frac{\sigma_i^2}{\sigma_i^2+\lambda}\Bigr)\Bigr) \leq \sum_{i=1}^k \tilde{s}_id_{\beta(i)},
\end{align}
where  $(\tilde{s}_i)_{1\leq i\leq k}$ are the singular values of $T^*A_{+}T$ ordered  by non-increasing order, whereas $\beta(1), \dots, \beta(k)$ is a re-indexing to ensure that $d_{\beta(1)}\geq \dots \geq d_{\beta(k)}$. Since both matrices are PSD, these singular values are also eigenvalues. 
Thus, by Courant-Fisher's theorem~\citep[Corollary III.1.2]{Bhatia:2013}, such eigenvalues are upper-bounded by the top-$k$ eigenvalues values of $A_{+}$ ordered by non-increasing order, which we denote as $s^{+}_1\geq \dots \geq s^{+}_k$, so that: 
\begin{align}\label{eq:courant_fisher}
	\sup_{T^{\star}T=I}\operatorname{tr}\Bigl(T^*A_+T\,\operatorname{diag}\Bigl(\frac{\sigma_i^2}{\sigma_i^2+\lambda}\Bigr)\Bigr) \leq \sum_{i=1}^k s^{+}_id_{\beta(i)}.  
\end{align}
The above inequality with \cref{eq:bound_trace} yields:
\begin{align}\label{eq:max_l_objective}
\inf_{W}\mathcal{E}_{\method}(W)
\geq
\min_{\sigma_1,\dots \sigma_k}\frac12\sum_{i=1}^k
\left(-s^{+}_i\frac{\sigma_{\beta(i)}^2}{\sigma_{\beta(i)}^2+\lambda}+\lambda\sigma^2_{\beta(i)}\right) = \min_{\sigma_1,\dots \sigma_k} \frac12\sum_{i=1}^k f_{s_i^{+}}(\sigma_{\beta(i)}),
\end{align}
for $f_{s}(\sigma)$ defined as follows:
\begin{align}
f_{s}(\sigma)
:=
\left(-s\frac{\sigma^2}{\sigma^2+\lambda}+\lambda\sigma^2\right)
\qquad \sigma\ge 0.
\end{align}
Thus the problem decouples into \(k\) scalar minimizations of the functions $\sigma \mapsto f_{s_i^+}(\sigma)$. By studying the variations of such a function, we see that the minimum value is given by $\sigma_i^{\star}= (\sqrt{s_i^{+}}-\lambda)_{+}^{\frac{1}{2}}$, whenever $s_i^{+}>0$ and by $\sigma_i^{\star}=0$ otherwise. Moreover, when $s_i^{+}>0$, this corresponds exactly to the positive eigenvalue $s_i$ of operator $A$ (without loss of generality we assumed that top-$k$ eigenvalues of $A$ are $s_1,\dots, s_k$). 
We know, by \cref{prop:compactness_operator_A}, that to each of the $\numcca$ positive eigenvalues of $A$ corresponds a negative one with opposite sign and vice versa, with $\numcca$ possibly infinite. Moreover, $\numcca\geq 1$ since we know that $1$ is necessarily an eigenvalue of $A$. 
Therefore $\mathcal{E}_{\method}$ satisfies:
\begin{align}
	\inf_{W\in \mathcal{W}} \mathcal{E}_{\method}(W) \geq  \frac{1}{2} \sum_{i=1}^{k} f_{s^{+}_i}(\sigma_i^{\star}) = \frac{1}{2} \sum_{i=1}^{\min(k,\numcca)} f_{s_i}(\sigma_i^{\star})  =-   \frac{1}{2}\sum_{i=1}^{\min(k,\numcca)} (\sqrt{s_i}-\lambda)_{+}^{2}. 
\end{align}
Furthermore, denoting by $\rank_{\mathrm{max}}\leq k$ the number of non-zero components, i.e. those for which $\sqrt{s_i}>\lambda$ and which we know satisfy $\rank_{\mathrm{max}}\leq \min(\numcca,k)$, we directly get 
\begin{align}\label{eq:opt_piera}
\inf_{W\in \mathcal{W}} \mathcal{E}_{\method}(W) \geq  -\frac{1}{2} \sum_{i=1}^{\rank_{\mathrm{max}}} (\sqrt{s_i}-\lambda)_{+}^{2}.
\end{align}
This lower bound is clearly attained for $W$ of the form 
\begin{align}\label{eq:minimizers_proof}
W^{\star}(x,y)=\sum_{i=1}^{\rank_{\mathrm{max}}} q_{i} (\sqrt{s_i}-\lambda)_{+}^{\frac{1}{2}} \zeta_i(x,y),
\end{align}
 where $(q_i)_{i=1}^{\rank_{\mathrm{max}}}$ are any orthonormal family of $\mathbb{R}^k$. Conversely, any minimizer of $\mathcal{E}_{\method}$ is of the form in \cref{eq:minimizers_proof}. To see this,  let $W$ be a minimizer of $\mathcal{E}_{\method}$. Since $\mathcal{E}_{\method}$ is differentiable, then $W$ must be a critical point of $\mathcal{E}_{\method}$. By \cref{prop:stability-peira}, we know such critical points must be of the form $W= (U,V)$ with
\begin{align}
	U(x) = \frac{1}{\sqrt{2}} \sum_{i\in D} q_i \parens{\sqrt{c_{i}}-\lambda}^{\frac{1}{2}}_{+} \psi_{i}(x), \qquad V(y) = \frac{1}{\sqrt{2}}  \sum_{i\in D} q_i \parens{\sqrt{c_{i}}-\lambda}^{\frac{1}{2}}_{+} \xi_{i}(y),
\end{align}
where $D$ is any, possibly empty, 
subset of $\{1,\dots, \numcca\}$ of at most $k$ elements, $(q_i)_{i\in D}$ is an orthonormal family in $\mathbb{R}^k$. Evaluating $\mathcal{E}_{\method}$ at such point yields:
\begin{align}
	\mathcal{E}_{\method}(W) = -\frac{1}{2} \sum_{i\in D} (\sqrt{c_i}-\lambda)_+^2.
\end{align}
 The above quantity is minimized exactly when $(c_i)_{i\in D}$ corresponds to the top $\rank_{\max}$-singular values of $A$, since these are assumed to be distinct. These are  also equal to the top-$\rank_{\max}$ positive eigenvalues of $A$ by recalling the structure of the eigenvalues of $A$ from \cref{prop:compactness_operator_A}. Since, we assumed, without loss of generality that $s_1, \dots, s_k$ are also non-increasing, then we may set $c_i = s_i$  and  $(\psi_i,\xi_i) = \sqrt{2}\zeta_i$ for $1\leq i\leq \rank_{\mathrm{max}}$ by application of \cref{prop:compactness_operator_A}, thus recovering an expression of the form in 
\cref{eq:minimizers_proof} for the minimizer $W$. 

The case when $k>\dim(\mathcal{H})$ is treated similarly with the difference that the SVD of $W$ contains at most $\dim(\mathcal{H})$ non-zero singular values, so that $k$ is replaced by $\dim(\mathcal{H})$ in \cref{eq:max_l_objective} and the same conclusion follows.

\end{proof}

\subsubsection{The gradient of \method's objective matches that of  the auxiliary loss $\mathcal{L}_{\mathrm{aux}}$}
\label[appendixsection]{sec:proof_aux_obj}
\begin{proof}[Proof of \cref{prop:gradient_expression_appendix}]
Recall the auxiliary loss defined in the proposition, which we can express in terms of the operator $A$:
\begin{equation}\label{eq:auxiliary-loss_appendix}
\begin{aligned}
\mathcal L_{\mathrm{aux}}(U,V;P,Q)
=&
\frac{1}{2} \Tr\parens{\parens{QP+\lambda I}\mathbb{E}\brackets{U(x)U(x)^{\top} + V(y)V(y)^{\top} }} \\
 &- \Tr\parens{Q\mathbb{E}\brackets{U(x)V(y)^{\top} + V(y)U(x)^{\top}}}\\
=&\frac{1}{2} \Tr\parens{\parens{QP+\lambda I}WW^{\top} - QWAW^{\top}},
\end{aligned}
\end{equation}
where we replaced expectations by $WW^{\top}$ and $WAW^{\top}$. For $P$ and $Q$ such that $Q$ and $QP$ are symmetric, we get by direct differentiation w.r.t. $W$ that:
\begin{align}
	\langle \partial_{W}\mathcal L_{\mathrm{aux}}(W;P,Q), \delta W\rangle_{\mathcal{W}} =- 
	Tr\parens{\delta W\parens{QWA -(QP +\lambda I) W}^{\top} }.
\end{align}
In particular, the above expression holds when $P=P_W$ and $Q=Q_W$, since  $Q_{W}$ and $Q_WP_W$ are symmetric, by definition. Hence, we have shown that $\partial_{W}\mathcal{L}_{\mathrm{aux}}(W,P_W,Q_W)= -Q_{W}WA +(Q_WP_W+\lambda I) W$. 
Using the property $\parens{WW^{\top}+\lambda I}^{-1}W = W\parens{W^{\top}W+\lambda I}^{-1}$ and recalling that $A$ is bounded directly yields:
\begin{equation}
\begin{aligned}
	\partial_{W}\mathcal{L}_{\mathrm{aux}}(W,P_W,Q_W)
	&= -Q_{W}WA +(Q_WP_W+\lambda I) W\\
	&= \lambda W\parens{I- \parens{W^{\top}W+\lambda I}^{-1}A\parens{W^{\top}W+\lambda I}^{-1}}.
\end{aligned}
\end{equation}
The above expression matches that of $\nabla\mathcal{E}_{\method}(W)$ from  \cref{prop:Lipschitz_gradient}. Thus, we have shown that $\partial_{W}\mathcal{L}_{\mathrm{aux}}(W,P_W,Q_W) = \nabla\mathcal{E}_{\method}(W)$. Finally, by the same proposition we know that the objective $\mathcal{E}_{\method}$ is smooth, has a Lipschitz gradient with Lipschitz constant $\frac{4}{\lambda}$. 
\end{proof}

\subsection{Self-distillation dynamics}
\label[appendixsection]{sec:self_distillation_proof}

Here we provide a proof of \cref{prop:convergence_ssl} following a similar scheme as for \method. 
We will first express the vector field driving the self-distillation dynamics in terms of the operator $A$, then proceed to prove the main result. 

\subsubsection{Convergence of the self-distillation dynamics and characterization of equilibria}\label[appendixsection]{proof:stability_ssl}
Using the vector field expression in \cref{eq:expression_SSL_vector_field}, we will be able to prove \cref{prop:convergence_ssl} by reducing the problem to the components of the dynamics in the eigenbasis of the operator $A$.

\begin{proof}[Proof of \cref{prop:convergence_ssl}]
Denote by $\ucoor = (\ucoor_i)_{i\in I}$ the components of an element $W\in \mathcal{W}$ in the eigenbasis $(\zeta_i)_{i\in I}$ of $A$ (see \cref{eq:operators}). 
By \cref{eq:expression_SSL_vector_field}, we have the following equivalence between the dynamics $(W_{t})_{t\geq 0}$ in $\mathcal{W}$ driven by the vector field $\mathcal{F}$ and an ODE defined on its components  $(\ucoor_t)_{t\geq 0}$ in the eigenbasis of $A$ driven by a vector field $F^{1}$ defined in \cref{eq:vector_field_kappa_1_proof}:
\begin{align}
	\dot{W}_t=-\mathcal{F}(W_t) \iff \dot{\ucoor}_t = -\lambda F^{1}(\ucoor_t).
\end{align}
Thus it suffices to study the dynamics of the components, which is precisely the analysis in \cref{sec:hilbert_schmidt_analysis}.

Specifically, we directly apply \cref{prop:convergence_critial_point_F_kappa} to the dynamics driven by the vector field $F^{1}$, which ensures convergence to a critical point $\ucoor^{\star}$.

Then, by \cref{eq:characterization_fixed_point_general}, we know that such critical point is necessarily of the form:
\begin{align}\label{eq:characterization_critical_points_kappa_1}
	\ucoor^{\star} = \sum_{i=1}^{\rank}  f_{\epsilon_i}(s_{\alpha(i)},\lambda) q_i\otimes e_{\alpha(i)},\qquad f_{\epsilon}(s,\lambda) = \frac{1}{2}\parens{ \verts{s} + \epsilon \sqrt{s^2 -4\lambda} }\mathds{1}_{\verts{s}^2-4\lambda\geq 0}, 
	\end{align}
where $\rank$ is any non-negative integer such that $0 \leq \rank\leq k$, the vectors $(q_i)_{i=1}^{\rank}$ form an orthonormal family in $\mathbb{R}^k$, $(e_i)_{i\in I}$ is the canonical basis of $\ell^2(I)$, $(\alpha(i))_{i=1}^{\rank}$, are distinct indices associated to non-zero eigenvalues $(s_{\alpha(i)})_{i=1}^{\rank}$ and $\epsilon_{i}$ are numbers taking two possible values $-1$ or $1$. Hence, $\ucoor^{\star}$ are the components of the following critical point $W^{\star}$ of the vector field $\mathcal{F}$:
\begin{align}\label{eq:critical_point_proof_expression_kappa_1}
	W^{\star}(x,y) =  \sum_{i=1}^{\rank} q_i f_{\epsilon_i}(s_{\alpha(i)},\lambda) \zeta_{\alpha(i)}(x,y).
\end{align}
This is equivalent to the characterization in the statement of the result after noticing that $\verts{s_{\alpha(i)}} =c_{j(i)}$ for the corresponding canonical index $j(i)\in \{1,\dots, \numcca \}$, and that eigenvectors associated to positive eigenvalues are of the form $\zeta_{\alpha(i)} = \frac{1}{\sqrt{2}}(\psi_{j(i)},\xi_{j(i)})$, whereas those associated to negative eigenvalues are of the form $\frac{1}{\sqrt{2}}(\psi_{j(i)},-\xi_{j(i)})$.  
\end{proof}

\subsubsection{Stability of equilibria of the self-distillation dynamics}
\label[appendixsection]{sec:stability_ssl}
\begin{proof}[Proof of \cref{prop:stability_ssl}]
By  \cref{eq:expression_SSL_vector_field}, we have the following equivalence between the dynamics $(W_{t})_{t\geq 0}$ in $\mathcal{W}$ driven by the vector field $\mathcal{F}$ and an ODE defined on its components  $(\ucoor_t)_{t\geq 0}$ in the eigenbasis of $A$ driven by a vector field $F^{1}$ defined in \cref{eq:vector_field_kappa_1_proof}:
\begin{align}
	\dot{W}_t=-\mathcal{F}(W_t) \iff \dot{\ucoor}_t = -\lambda F^{1}(\ucoor_t).
\end{align}
In particular, a point $W^{\star}\in \mathcal{W}$ is stable for $\mathcal{F}$ iff its  coordinate representation $\ucoor^{\star}$ is stable for $F^{1}$. 

Stability of critical points of $F^{1}$ is studied in \cref{prop:stability_kappa_1}, which implies that, amongst all critical points of the vector field $F^{1}$, the only ones that are stable are of the form:
\begin{align}
	\ucoor^{\star} = \sum_{i=1}^{\rank^{+}}  f_{1}(s_{\alpha^{+}(i)},\lambda) q_i^{+}\otimes e_{\alpha^{+}(i)}  + \sum_{i=1}^{\rank^{-}}  f_{1}(s_{\alpha^{-}(i)},\lambda) q_i^{-}\otimes e_{\alpha^{-}(i)} 
\end{align}
where $\rank^{+}$ and $\rank^{-}$ are non-negative integers  in $[0,\numcca]$ satisfying  $0\leq \rank^++\rank^{-}\leq k$, $(q_i^{\pm})_{\sigma\in \{+,-\}, i \in \{1,\dots,\rank^{\sigma}\}}$ are any orthonormal vectors in $\mathbb{R}^k$, and where $\alpha^{+}(1),\dots, \alpha^{+}(\rank^{+})$ are the indices of the top-$\rank^{+}$ eigenvalues of $A$ by decreasing order, whereas  $\alpha^{-}(1),\dots, \alpha^{-}(\rank^{-})$ are the indices of the bottom-$\rank^{-}$ eigenvalues of $A$ by increasing order. We get the desired result by expressing the element $W^{\star}$  based on its coordinates $\ucoor$. 

\end{proof}

\section{Proofs of the reduction to coordinate representations}
\label[appendixsection]{sec:proof_coordinate_decomposition}
\subsection{Proof of \cref{prop:Lipschitz_gradient}:  Expressions of the gradient of \method's objective}

\begin{proof}[Proof of \cref{prop:Lipschitz_gradient}]
Let $(U,V)$ be in $\mathcal{U}\times \mathcal{V}$. We start by expressing the objective in terms of the operator $A$, then its gradient.

\textbf{Expression of the objective.} 
Recall the expression of $\mathcal{E}_{\method}(U,V)$:
\begin{align}
	\mathcal{E}_{\method}(U,V) = -\frac{1}{2}\Tr\!\bigl(P_{U,V}\bigr)
+
\frac{\lambda}{2}\,
\mathbb{E}\!\left[
\|U(x)\|_2^2+\|V(y)\|_2^2
\right].
\end{align}
A first step is to express $P_{U,V}$ in closed form in terms of $W$ and operator $A$. 
As the solution of the linear regression problem in \cref{eq:csr-optimal-predictor}, $P_{U,V}$ admits the following expression:
\begin{equation}
\begin{aligned}
	P_{U,V} 
	&= \parens{\mathbb{E}\brackets{ U(x)V(y)^{\top} + V(y)U(x)^{\top}   }} \parens{ \mathbb{E}\brackets{U(x)U(x)^{\top} + V(y)V(y)^{\top}}  +\lambda I }^{-1}\\
	&= WAW^{\top}\parens{WW^{\top}+\lambda I}^{-1}.  
\end{aligned}
\end{equation}
Therefore, $\mathcal{E}_{\method}(U,V)$ can be written as:
\begin{align}
	\mathcal{E}_{\method}(U,V) = \mathcal{E}_{\method}(W) = -\frac{1}{2}\Tr\parens{WAW^{\top}\parens{WW^{\top}+\lambda I}^{-1}} + \frac{\lambda}{2}\Tr\parens{WW^{\top}}.
\end{align}
Hence, $\mathcal{E}_{\method}$ is a rational function of $W$. To prove smoothness, recall that by definition,  $A$ is a bounded operator. Hence, $WAW^{\top}$ is a smooth function of $W$, i.e. infinitely differentiable. 
Moreover, since $\lambda>0$, then $WW^{\top}+\lambda I$ is smooth and invertible for any $W$, therefore $W\mapsto Q_{W} = (WW^{\top}+\lambda I)^{-1}$ is smooth by composition with the inverse map. Finally,   $\mathcal{E}_{\method}$ is also smooth by composition of smooth functions.

\textbf{Gradient expression.}  
The expression of the gradient is obtained by direct differentiation along any direction $\delta W\in \mathcal{W}$
\begin{equation}
\begin{aligned}
	\langle\nabla \mathcal{E}_{\method}(W),\delta W\rangle_{\mathcal{W}} 
	=& -\frac{1}{2}Tr\parens{\parens{\delta WAW^{\top}+WA\delta W^{\top}}Q_W}+\lambda Tr(\delta WW^{\top})\\ 
	& + \frac{1}{2}Tr\parens{WAW^{\top}Q_W\parens{\delta WW^{\top}+ W\delta W^{\top}}Q_W}\\
	=& -Tr\parens{\delta WAW^{\top}Q_W}+\lambda Tr(\delta WW^{\top})\\ 
	& + Tr\parens{\delta W W^{\top}Q_WWAW^{\top}Q_W}\\
	=& -Tr\parens{\delta W\parens{AW^{\top}Q_{W} - W^{\top}Q_WP_W -\lambda W^{\top}} }\\
	=& -Tr\parens{\delta W\parens{Q_{W}WA -(Q_WP_W +\lambda I) W}^{\top} }.
\end{aligned}
\end{equation}
This shows that the gradient is given by: 
\begin{align}\label{eq:direct_gradient_expression}
	\nabla \mathcal{E}_{\method}(W) = -Q_{W}WA +Q_WP_WW  +\lambda W.
\end{align}

To get the second expression, recall that $P_W  = WAW^{\top}(WW^{\top}+\lambda I)^{-1} = WAW^{\top}Q_W$. Hence,
\begin{equation} 
\begin{aligned}
	\nabla \mathcal{E}_{\method}(W) 
	&= -Q_{W}WA +Q_WP_WW  +\lambda W\\
	&= Q_WWA\parens{W^{\top}(WW^{\top}+\lambda I)^{-1}W-I}+ \lambda W\\
	&= Q_WWA\parens{(W^{\top}W+\lambda I)^{-1}W^{\top}W - I}+ \lambda W\\
	&= -\lambda Q_WWA(W^{\top}W+\lambda I)^{-1}+ \lambda W\\
	&= -\lambda W \parens{W^{\top}W+\lambda I}^{-1}A\parens{W^{\top}W+\lambda I}^{-1}+ \lambda W\\
	&= \lambda W\parens{I- \parens{W^{\top}W+\lambda I}^{-1}A\parens{W^{\top}W+\lambda I}^{-1}},
\end{aligned}
\end{equation}
where we used the following property $(WW^{\top}+\lambda I)^{-1}W = W(W^{\top}W+\lambda I)^{-1}$ in the third and fifth lines.

To show that the gradient is Lipschitz, we will show that its Hessian is bounded over $\mathcal{W}$. Let us first express the action of the Hessian on a direction $\delta W$ by differentiating the second expression of the gradient along such direction. For conciseness we define $R = (W^{\top}W+\lambda I)^{-1}$ and get:
\begin{equation}
\begin{aligned}
	\nabla^{2}\mathcal{E}_{\method}(W)[\delta W] =&  \lambda \delta W \parens{I-RAR}+\lambda WR\parens{\delta W^{\top}W + W^{\top}\delta W}RAR\\
	 &+ \lambda WRAR  \parens{\delta W^{\top}W + W^{\top}\delta W}R.
\end{aligned}
\end{equation}
Let us now find an upper bound $C$ on the operator norm of the Hessian. To this end it suffices to find an upper bound $\Verts{\nabla^{2}\mathcal{E}_{\method}(W)[\delta W]}$ of the form $\Verts{\nabla^{2}\mathcal{E}_{\method}(W)[\delta W]}\leq C\Verts{\delta W}$.  Standard operator norm inequalities yield:
\begin{equation}
\begin{aligned}
	\Verts{\nabla^{2}\mathcal{E}_{\method}(W)[\delta W]}
	\leq & \lambda\parens{\Verts{RAR}_{op}+1}\Verts{\delta W}\\
	&+ \lambda  \Verts{WR}_{op}\Verts{WRAR}_{op}\Verts{\delta W} +
		\lambda \Verts{WRW^{\top}}_{op}\Verts{RAR}_{op}\Verts{\delta W}\\
	&+ \lambda \Verts{WRAR}_{op}\Verts{WR}_{op}\Verts{\delta W} + \lambda \Verts{WRARW^{\top}}_{op}\Verts{R}_{op}\Verts{\delta W}	.
\end{aligned}
\end{equation}
Define $C_0$ as:
\begin{equation}
\begin{aligned}
	C_0 =&  \lambda\parens{1+\Verts{RAR}_{op}  +  2\Verts{WR}_{op}\Verts{WRAR}_{op}}\\
	& + \lambda\parens{\Verts{WRW^{\top}}_{op}\Verts{RAR}_{op} +\Verts{WRARW^{\top}}_{op}\Verts{R}_{op}}
\end{aligned}
\end{equation}
We have shown, so far that $\Verts{\nabla^2\mathcal{E}_{\method}(W)}_{op}\leq C_0$. It remains to show that $C_0$ is uniformly bounded w.r.t. $W$. To this end, let us further upper-bound $C_0$:
\begin{align}
	C_0 \leq & \lambda\parens{1+ \Verts{A}_{op}\parens{\Verts{R}_{op}^2 + 3\Verts{WR}_{op}^2\Verts{R}_{op} + \Verts{R}_{op}^2 \Verts{WRW^{\top}}_{op}  }  } 
\end{align}
Recalling that $R = (W^{\top}W+\lambda I)^{-1}$, we directly have that $\Verts{R}_{op}\leq \lambda^{-1}$. Moreover, since $W$ has finite dimensional range (of dimension $k$), it must be a compact operator, so that it admits a singular-value decomposition, by the spectral theorem for compact operators \citep{Gohberg:2003}. Using such a decomposition, we can show that the singular values of $WR$ and $WRW^{\top}$ are given by $\frac{w_i}{w_i^2+\lambda}$ and $\frac{w_i^2}{w_i^2+\lambda}$ where $w_i$ are the singular values of $W$. These quantities are uniformly bounded by $\frac{1}{2\sqrt{\lambda}}$ and $1$ respectively. Consequently, we have $\Verts{WR}_{op}\leq \frac{1}{2\sqrt{\lambda}}$ and $\Verts{WRW^{\top}}_{op}\leq 1$. Finally, recalling that $\Verts{A}_{op}\leq 1$ by \cref{eq:bounded_op_norm_A} it follows that:
\begin{align}
	C_0\leq \lambda\parens{1 + \frac{11}{4\lambda^2}}\leq \frac{4}{\lambda},
\end{align}
where we used that $\lambda<1$. This allows us to conclude that the gradient $\nabla \mathcal{E}_{\method}$ is Lipschitz, with Lipschitz constant $\frac{4}{\lambda}$.

\textbf{Components of the gradient in the eigenbasis of $A$.}
Using the decomposition in \cref{eq:decomposition_in_basis} of $WAW^{\top}$ and $WW^{\top}$ in such basis and recalling the expression of the gradient $\nabla\mathcal{E}_{\method}(W)$ above, we directly get the desired expression for its components $\mathbf{g}$ in the basis $(\zeta_i)_{i\in I}$ in terms of $\ucoor$ and  the diagonal operator $S$ which is given by the vector field $F^{0}$. Equivalence between the ODE in $\mathcal{W}$ and the ODE of its coordinates in $\mathbb{W}$ follows directly. 
\end{proof}

\subsection{Proof of \cref{eq:expression_SSL_vector_field}:  Expressions of the self-distillation vector field}\label{sec:appendix:proof-ssl-vf}

\begin{proof}[Proof of \cref{eq:expression_SSL_vector_field}]
Let us first express the partial derivatives of  $\mathcal{P}_{U\triangleright V}(P, U, V)$  along some direction $\delta U$:
\begin{equation}
\begin{aligned}
	\langle \partial_U \mathcal{P}_{U\triangleright V}(P, U, V), \delta U\rangle_{\mathcal{U}} =&
	\mathbb{E}\brackets{ (P\delta U(x))^{\top}(PU(x)-V(y))} + \lambda \mathbb{E}\brackets{\delta U(x)^{\top} U(x)}\\
		=& \Tr\parens{P^{\top}P\mathbb{E}\brackets{U(x)(\delta U(x))^{\top}}- \mathbb{E}\brackets{V(y)(\delta U(x))^{\top}} P^{\top}}  \\
		&+ \lambda \Tr\parens{\mathbb{E}\brackets{U(x)(\delta U(x))^{\top}}}
\end{aligned}
\end{equation}
A similar expression can be  obtained for $\mathcal{P}_{V\triangleright U}(P, V, U)$ along some perturbation $\delta V$,  by exchanging the roles of $U$ and $V$. Hence, denoting by $W = (U,V)$ and $\delta W = (\delta U, \delta V)$  and summing both expressions above yields:
\begin{equation}
\begin{aligned}
	\langle \partial_U \mathcal{P}_{U\triangleright V}(P, U, V), \delta U\rangle_{\mathcal{U}}  +& \langle \partial_V \mathcal{P}_{V\triangleright U}(P, V, U), \delta V\rangle_{\mathcal{V}} \\= &\Tr\parens{P^{\top}P\mathbb{E}\brackets{U(x)(\delta U(x))^{\top} + V(y)(\delta V(y))^{\top}}}\\
	&- Tr\parens{\mathbb{E}\brackets{V(y)(\delta U(x))^{\top} + U(x)(\delta V(y))^{\top}} P^{\top}}  \\
		&+ \lambda \Tr\parens{\mathbb{E}\brackets{U(x)(\delta U(x))^{\top} +V(y)(\delta V(y))^{\top} } }\\
		= &Tr\parens{P^{\top}P W(\delta W)^{\top} } - Tr\parens{WA(\delta W)^{\top}P^{\top}}\\
		&+ \lambda Tr\parens{W(\delta W)^{\top}},
\end{aligned}
\end{equation}
where we viewed $W$ as a Hilbert-Schmidt operator from $\mathcal{H}$ to $\mathbb{R}^k$. 
Therefore, the vector field  $(\partial_U \mathcal{P}_{U\triangleright V}(P, U, V), \partial_V \mathcal{P}_{V\triangleright U}(P, V, U))$ is given by:
\begin{align}
	(\partial_U \mathcal{P}_{U\triangleright V}(P, U, V), \partial_V \mathcal{P}_{V\triangleright U}(P, V, U)) = P^{\top}PW-P^{\top}WA + \lambda W.
\end{align}
Thus we only need to evaluate the above expression at optimal regressor $P_{U,V}$ to get an expression for the vector field $\mathcal{F}(W)$ driving the self-distillation dynamics. To this end, note that $P_{U,V}$ can be expressed in closed form as follows:
\begin{equation}
\begin{aligned}
	P_{U,V} 
	&= \parens{\mathbb{E}\brackets{ U(x)V(y)^{\top} + V(y)U(x)^{\top}   }} \parens{ \mathbb{E}\brackets{U(x)U(x)^{\top} + V(y)V(y)^{\top}}  +\lambda I }^{-1}\\
	&= WAW^{\top}\parens{WW^{\top}+\lambda I}^{-1},  
\end{aligned}
\end{equation}

Consequently, using the above expression along with the one on partial derivatives $(\partial_U \mathcal{P}_{U\triangleright V}, \partial_V \mathcal{P}_{V\triangleright U})$, yields an expression for the vector field  $\mathcal{F}(W)$:
\begin{equation}
\begin{aligned}
	\mathcal{F}(W) 
	&= \parens{P_{U,V}}^{\top}WA\parens{ W^{\top}\parens{WW^{\top}+\lambda I}^{-1}W - I } + \lambda W\\
	&= \parens{P_{U,V}}^{\top}WA\parens{ \parens{W^{\top}W+\lambda I}^{-1}W^{\top}W - I } + \lambda W\\
	&= -\lambda \parens{P_{U,V}}^{\top}WA \parens{W^{\top}W+\lambda I}^{-1} + \lambda W\\
	&= -\lambda \parens{WW^{\top}+\lambda I}^{-1} WA^{\top}W^{\top} W A\parens{W^{\top}W+\lambda I}^{-1} + \lambda W\\
	&= -\lambda W \parens{W^{\top}W+\lambda I}^{-1} AW^{\top}W A\parens{W^{\top}W+\lambda I}^{-1} + \lambda W\\
	&= \lambda W\parens{I- \parens{W^{\top}W+\lambda I}^{-1} AW^{\top} W A\parens{W^{\top}W+\lambda I}^{-1}}.
\end{aligned}
\end{equation}
 In the above, we used that $A$ is symmetric, i.e.  $A^{\top}=A$ in the fifth line. Moreover, we used the property that $W^{\top}\parens{WW^{\top}+\lambda I}^{-1} = \parens{W^{\top}W+\lambda I}^{-1}W^{\top}$ in the second line and that $\parens{WW^{\top}+\lambda I}^{-1}W = W\parens{W^{\top}W+\lambda I}^{-1}$ in the fifth line.
 
 \textbf{Components of the vector field in the eigenbasis of $A$.}
Using the decomposition in \cref{eq:decomposition_in_basis} of $WAW^{\top}$ and $WW^{\top}$ in such basis and recalling the expression of the vector field $\mathcal{F}(W)$ above, we directly get the desired expression for its components $\mathbf{g}$ in the basis $(\zeta_i)_{i\in I}$ in terms of $\ucoor$ and  the diagonal operator $S$ which is given by the vector field $F^{1}$. Equivalence between the ODE in $\mathcal{W}$ and the ODE of its coordinates in $\mathbb{W}$ follows directly. 
\end{proof}

\section{Proof of convergence to critical points}\label[appendixsection]{sec:convergence_gradient_like_systems}

\subsection{Proof of convergence to critical points for general gradient-like dynamical systems}

For convenience, let us recall here the statement of the general convergence result before proceeding with the proof.
\begin{proposition}\label{prop:general_convergence_critical_point}
Let $F$ be a smooth vector field defined on a separable Hilbert space $\mathbb{W}$ and consider the following dynamical system:
\begin{align}\label{eq:dynamical_system}
	\dot{\ucoor}_t = - F(\ucoor_t),\qquad \ucoor_0\in \mathbb{W}.
\end{align}
Assume there exists a real analytic function $L$ defined over $\mathbb{W}$ such that the following holds:
\begin{assumplist}
\item\label{assump_dissipation}\textbf{Dissipation.} $L$ is a Lyapunov function for the system, i.e. $\dot{L}(\ucoor_t)\leq 0$ for all $t\geq 0$.
\item\label{assump_sufficient_decrease} \textbf{Sufficient decrease.} There exists $c_{max}>0$ such that $\Verts{F(\ucoor_t)}^2\leq - c_{max} \dot{L}(\ucoor_t)$ for  $t\geq 0$.
\item\label{assump_error_bound} \textbf{Relative  bound.} There exists  a non-increasing real-valued function $\Omega: [0,+\infty)\rightarrow (0,+\infty)$  such that $\Omega(\Verts{\ucoor_t})\Verts{\nabla L(\ucoor_t)}^2 \leq \Verts{F(\ucoor_t)}^2$.
\item\label{assump_coercive}
\textbf{Coercivity.} $L$ is coercive, i.e. $L(\ucoor)\rightarrow +\infty$ when $\Verts{\ucoor}\rightarrow +\infty$.
\item\label{assump_lower_bound} \textbf{Lower bound.} $L$ admits a finite lower bound.
\item\label{assump_bounded_on_bounded_set} \textbf{Radial growth.} The vector field $F$, gradient $\nabla L$ and Hessian $\nabla^2 L$ all have radial growth, i.e. there exists a non-decreasing real-valued function $\omega:[0,+\infty) \rightarrow [0,+\infty)$ such that $\max\parens{\Verts{F(\ucoor)},\Verts{\nabla L(\ucoor)}, \Verts{\nabla^2 L(\ucoor) }}\leq \omega(\Verts{\ucoor})$ for all $\ucoor\in \mathbb{W}$.
\item\label{assump_kl}
\textbf{Fredholm property.} For any $\ucoor\in \mathbb{W}$, the hessian $\nabla^2 L(\ucoor)$ can be decomposed as the sum of a boundedly invertible operator $R$ and a bounded compact operator $K$.
\item\label{assump_palais_smale_condition} \textbf{Palais-Smale condition.} Any bounded sequence $(\ucoor_n)_{n\geq 1}$ of $\mathbb{W}$ such that $\Verts{\nabla L(\ucoor_n)}\rightarrow 0$ admits a convergent subsequence.
\end{assumplist}
Then the dynamical system in \cref{eq:dynamical_system} is defined at all times and converges to a critical point of the vector field $F$.
\end{proposition}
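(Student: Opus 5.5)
The plan is the standard Łojasiewicz--Simon route for gradient-like systems, in the style of \citet{Bolte:2014,Chill:2009}, carried out in four steps.

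\textbf{Step 1: global existence and boundedness.} By smoothness of $F$ there is a unique local solution. Along it, \ref{assump_dissipation} gives $L(\ucoor_t)\le L(\ucoor_0)$. Coercivity \ref{assump_coercive} then confines the trajectory to a bounded sublevel set, say $\Verts{\ucoor_t}\le R_0$. Radial growth \ref{assump_bounded_on_bounded_set} bounds $\Verts{F(\ucoor_t)}\le\omega(R_0)$, so the solution cannot blow up and extends to all $t\ge0$.

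\textbf{Step 2: finite length implies no wandering and $\nabla L\to 0$.} Integrating \ref{assump_sufficient_decrease} and using the lower bound \ref{assump_lower_bound} gives
\[
\int_0^\infty\Verts{F(\ucoor_t)}^2\,dt\le c_{max}\bigl(L(\ucoor_0)-\inf L\bigr)<\infty .
\]
Since $F$ and $\nabla L$ are Lipschitz on bounded sets (their derivatives have radial growth), $t\mapsto\Verts{F(\ucoor_t)}^2$ is uniformly continuous. Barbalat's lemma then yields $F(\ucoor_t)\to0$. By \ref{assump_error_bound}, with $\Omega(\Verts{\ucoor_t})\ge\Omega(R_0)>0$, we also get $\nabla L(\ucoor_t)\to0$.

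\textbf{Step 3: nonempty limit set of critical points.} Palais--Smale \ref{assump_palais_smale_condition} applied to $\ucoor_{t_n}$ for any $t_n\to\infty$ shows the $\omega$-limit set $\Gamma$ is nonempty and compact. $L$ is constant on $\Gamma$, equal to $L_\infty=\lim L(\ucoor_t)$. Every point of $\Gamma$ satisfies $\nabla L=0$, and also $F=0$ by continuity.

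\textbf{Step 4: a Łojasiewicz inequality near $\Gamma$, then finite length.} This is the main obstacle, since $\mathbb{W}$ is infinite dimensional. At each $\bar\ucoor\in\Gamma$ I would invoke the Łojasiewicz--Simon gradient inequality for analytic functionals whose Hessian is Fredholm of index zero (\citealp{Feehan:2020b,chill2003lojasiewicz}). Assumption \ref{assump_kl} gives exactly this: $R+K$ is Fredholm of index zero. The inequality reads $|L(\ucoor)-L(\bar\ucoor)|^{1-\theta}\le C\Verts{\nabla L(\ucoor)}$ on a ball around $\bar\ucoor$. By compactness of $\Gamma$, finitely many balls suffice, giving uniform constants $C,\theta$ on a neighborhood $\mathcal N$ of $\Gamma$. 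For large $t$, the trajectory lies in $\mathcal N$ (again by Palais--Smale and Step 2). If $L(\ucoor_t)=L_\infty$ at some finite time, then $F=0$ thereafter and we are done. Otherwise set $\varphi(s)=s^\theta$ and compute
\[
-\tfrac{d}{dt}\varphi\bigl(L(\ucoor_t)-L_\infty\bigr)
=\theta\,(L-L_\infty)^{\theta-1}(-\dot L)
\ge\frac{\theta}{C\Verts{\nabla L}}\,\frac{\Verts{F}^2}{c_{max}}
\ge\frac{\theta\,\Omega(R_0)^{1/2}}{C\,c_{max}}\Verts{F(\ucoor_t)} .
\]
The last step uses $\Verts{\nabla L}\le\Omega(R_0)^{-1/2}\Verts{F}$. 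Integrating gives $\int^\infty\Verts{\dot\ucoor_t}\,dt<\infty$. Hence $\ucoor_t$ is Cauchy and converges to a single point $\ucoor^\star\in\Gamma$, which satisfies $F(\ucoor^\star)=0$ by Step 2.

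The delicate point is checking that the Łojasiewicz--Simon inequality of \citet{Feehan:2020b} indeed applies under \ref{assump_kl} together with analyticity. I would verify this via a Lyapunov--Schmidt reduction to the finite-dimensional kernel of $\nabla^2L(\bar\ucoor)$, followed by the classical finite-dimensional Łojasiewicz inequality.
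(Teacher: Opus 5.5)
Your overall architecture (global existence from coercivity, Barbalat, Palais--Smale for cluster points, Lojasiewicz--Simon from analyticity plus the Fredholm property, finite length) matches the paper's. However, Step~2 relies on a hypothesis you do not have.

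\textbf{The gap in Step~2.} You apply Barbalat's lemma to $t\mapsto\Verts{F(\ucoor_t)}^2$, and you justify its uniform continuity by saying that $F$ is Lipschitz on bounded sets because its derivative has radial growth. The radial-growth assumption only controls $F$, $\nabla L$ and $\nabla^2 L$; nothing is assumed about the Jacobian of $F$. In an infinite-dimensional space, smoothness of $F$ gives Lipschitz continuity only locally around each point, not on bounded sets. You also cannot invoke relative compactness of the trajectory at this stage. You only obtain it later through Palais--Smale, which itself requires $\nabla L(\ucoor_t)\to 0$.

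\textbf{The repair.} Run Barbalat on $\nabla L$ instead, which is what the paper does. Combining the relative bound with sufficient decrease gives $\Omega(R_0)\Verts{\nabla L(\ucoor_t)}^2\le\Verts{F(\ucoor_t)}^2\le -c_{max}\dot L(\ucoor_t)$. Hence $\int_0^\infty\Verts{\nabla L(\ucoor_t)}^2\,dt<\infty$. The map $t\mapsto\Verts{\nabla L(\ucoor_t)}^2$ is uniformly continuous, because $\nabla L$ and $\nabla^2 L$ are bounded on the ball of radius $R_0$ and $\dot\ucoor_t=-F(\ucoor_t)$ is bounded there. You then recover $F(\ucoor_t)\to 0$ directly. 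Since $\dot L=-\langle\nabla L,F\rangle$, sufficient decrease gives $\Verts{F}^2\le c_{max}\langle\nabla L,F\rangle\le c_{max}\Verts{\nabla L}\Verts{F}$, so $\Verts{F(\ucoor_t)}\le c_{max}\Verts{\nabla L(\ucoor_t)}$. This is all you use afterwards, in Step~3 and to conclude $F(\ucoor^\star)=0$.

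\textbf{Step~4 compared with the paper.} With that fix the rest is sound, and your Step~4 differs only mildly from the paper. You uniformize the Lojasiewicz inequality over the whole $\omega$-limit set $\Gamma$. This uses that $\Gamma$ is compact by Palais--Smale and that $L\equiv L_\infty$ on it; you then cover $\Gamma$ by finitely many balls and take the smallest exponent once $|L-L_\infty|<1$. You then show the trajectory eventually enters this neighbourhood and has finite length. The paper instead fixes a single cluster point $\ucoor^\star$ and uses the inequality only on one ball around it. It then runs a trapping argument, choosing $\epsilon$ with $\epsilon+\alpha\epsilon^{1-\theta}\le\sigma/2$, to show the trajectory never leaves that ball. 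Your route needs the extra compactness and covering step but avoids the bootstrapping; the paper's needs only one accumulation point. Both are standard and reach the same conclusion.
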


\begin{proof}[Proof of \cref{prop:general_convergence_critical_point}]
Let $\ucoor_t$ be a solution to the ODE and $L$ be a Lyapunov function, which exists by \cref{assump_dissipation}. 
   
\paragraph{Local existence.}
Since $F$ is a smooth vector field, it is therefore locally Lipschitz. Hence, by Cauchy-Lipschitz theorem~\citep[Theorem 1.9]{Teschl:2000}, the ODE \cref{eq:dynamical_system} admits a unique maximal solution starting from a point $\ucoor_0$, continuous in time and defined on a maximal interval $[0,T)$ with $T>0$ possibly equal to $+\infty$. We will later show that $T=+\infty$.

\paragraph{Bounded trajectories and velocities.}
We know by \cref{assump_dissipation} that $L$ is a Lyapunov function for the ODE \cref{eq:dynamical_system}. Therefore, $\mathbf{\ucoor}_t$ remains in the level set $\{\ucoor\in \mathbb{W}| L(\ucoor)\leq L(\ucoor_0)\}$. Moreover, since $L$ is coercive by \cref{assump_coercive}, such level set must be bounded so that $\ucoor_t$ is also bounded, i.e. there exists $b>0$ so that $\Verts{\ucoor_t}\leq b$ for all $0\leq t<T$. 
To show that $\dot{\ucoor}_t$ is bounded, recall that $F(\ucoor)$ is bounded over the centered ball of radius $b$ by \cref{assump_bounded_on_bounded_set}. In particular, $t\mapsto \Verts{F(\ucoor_t)}$ is also bounded, since we have just established that $\ucoor_t$ belongs to such a ball for all $0\leq t<T$. Consequently, $t\mapsto\dot{\ucoor}_t$ remains also bounded for all $0\leq t<T$, since, by definition $\dot{\ucoor}_t = -F(\ucoor_t)$.

\paragraph{Global existence.}
Let us now show that $T=+\infty$. By contradiction, assume that $T$ is finite. We have shown that $\dot{\ucoor}_t$ is bounded for all $0\leq t<T$. Hence, there exists a positive constant $M$ such that $\Verts{\dot{\ucoor}_t}\leq M$ for all $0\leq t<T$. Hence, for any $0\leq s<t<T$,
\begin{align}
\Verts{\ucoor_t-\ucoor_s}
\leq \int_s^t \Verts{\dot{\ucoor}_\tau}\,\diff \tau
\leq M(t-s).
\end{align}
Thus $(\ucoor_t)_{t<T}$ is Cauchy as $t\uparrow T$. Since $\mathbb W$ is complete, there exists $\ucoor_T\in\mathbb W$ such that $\ucoor_t\rightarrow \ucoor_T$ as $t\uparrow T$.
Applying the local existence theorem again with initial condition $\ucoor_T$ extends the solution beyond time $T$, contradicting the maximality of $T$. Therefore, $T=+\infty$.

\paragraph{Convergence of the energy gradient $\nabla L(\ucoor_t)$ to $0$.} 
First, by \cref{assump_error_bound}, we know that $\Omega(\Verts{\ucoor}_t)\Verts{\nabla L(\ucoor_t)}^2\leq \Verts{F(\ucoor_t)}^2$. Since $\Verts{\ucoor_t}$ is bounded by $b>0$ and $\Omega$ is non-increasing, it follows that $0<\Omega(b)\leq \Omega(\Verts{\ucoor}_t)$. Denoting by $c_{min} = \Omega(b)$, we have shown that:
\begin{align}\label{eq:relative_error_general}
	c_{min}\Verts{\nabla L(\ucoor_t)}^2\leq \Verts{F(\ucoor_t)}^2.
\end{align}
Moreover, further using \cref{assump_sufficient_decrease}, it follows that $\Verts{\nabla L(\ucoor_t)}^2\leq -\frac{c_{max}}{c_{min}} \dot{L}(\ucoor_t)$. Therefore, by integrating such inequality between $0$ and any positive time $T$, it follows that:
\begin{align}\label{eq:integral_energy}
	\int_0^{T} \Verts{\nabla L(\ucoor_t)}^2\diff t\leq \frac{c_{max}}{c_{min}} \parens{L(\ucoor_0)-L(\ucoor_{T})}.
\end{align}
Now recall that, by \cref{assump_dissipation,assump_lower_bound}, $t\mapsto L(\ucoor_t)$ is non-increasing with $L$ being lower-bounded. Therefore, $L(\ucoor_t)$ must converge to a finite limit $L_{\infty}$ satisfying $L(\ucoor_t)\geq L_{\infty}$ for all $t\geq 0$. Using that $L(\ucoor_T)\geq L_{\infty}$ in \cref{eq:integral_energy} implies that $\int_0^{+\infty} \Verts{\nabla L(\ucoor_t)}^2\diff t$ is finite. Such an integrability condition alone is not enough to deduce that $\Verts{\nabla L(\ucoor_t)}^2$ converges to $0$. However, provided that $\Verts{\nabla L(\ucoor_t)}^2$ is uniformly continuous, we can deduce convergence to $0$ of $\Verts{\nabla L(\ucoor_t)}^2$ by application of Barbalat's theorem \citep{Barbalat:1959}. We obtain uniform continuity  by  \cref{lem:uniform_continuity} which applies here, as $\ucoor_t$ and $\dot{\ucoor}_t$ are bounded and $\ucoor\mapsto \nabla L(\ucoor)$ and $\ucoor\mapsto  \nabla^2 L(\ucoor)$ are radially bounded by \cref{assump_bounded_on_bounded_set}. 

\paragraph{Existence of cluster points.}
We wish to establish the existence of a cluster point for the dynamics, i.e., a point $\ucoor^{\star}$ such that there exists a subsequence along the trajectory $\ucoor_t$ converging towards it.
Let $(t_n)_{n\geq 1}$ be any increasing sequence of positive real numbers with $t_n\rightarrow +\infty$. We know that $(\ucoor_{t_n})_{n\geq 1}$ is bounded since we established that $t\mapsto \ucoor_t$ is bounded. By   \cref{assump_palais_smale_condition}, we deduce that $(\ucoor_{t_n})$ admits a convergent subsequence, thus establishing the existence of a cluster point as the limit point of such a subsequence. If $\ucoor^{\star}$ is a cluster point, then it must be a critical point of $\nabla L$. Indeed, we have already established that $\nabla L(\ucoor_t)\rightarrow 0$. Hence, by continuity of $\nabla L$, we directly get that $\nabla L(\ucoor^{\star})=0$.

\paragraph{A Lojasiewicz-Simon's inequality.} 
For any $\ucoor\in \mathbb{W}$, we know, by \cref{assump_kl},   that the Hessian $\nabla^2 L(\ucoor)$ can be decomposed as the sum of a boundedly invertible operator $R$ and a bounded compact operator $K$. 
The operator $R$ is necessarily a Fredholm operator of index $0$, i.e. $ker(R) = \{0\}$ and $ker(R^*)=\{0\}$ and $R$ has a closed range. Moreover, by stability of Fredholm operators to compact perturbations, e.g.~\citep[Theorem 3.11]{Conway:2019}, it follows that $\nabla^2 L(\ucoor) = R + K$ is also Fredholm with index $0$. Moreover, by assumption, $L$ is analytic. Therefore, by direct application of \citep[Theorem 1]{Feehan:2020b}, it follows  that $L$ satisfies a desired Lojasiewicz–Simon gradient inequality near its critical points. In other words, if $\ucoor^{\star}$ is a critical point of $\nabla L$, then there exist finite constants $Z>0$, $0<\sigma < 1$ and $\frac{1}{2}\leq \theta < 1 $ so that for any $\ucoor$ satisfying $\Verts{\ucoor-\ucoor^{\star}}\leq \sigma$, it must hold that:
\begin{align}\label{eq:Lojasiewicz-Simon_ssl}
	\Verts{\nabla L(\ucoor)}\geq Z \verts{L(\ucoor)-L(\ucoor^{\star})}^{\theta}.
\end{align}

\paragraph{Trapping near a cluster point.}
Let $\ucoor^{\star}$ be an accumulation point of $\ucoor_t$, which was shown to exist. We have also shown  that $L(\ucoor_t)$ converges to a finite value $L_{\infty}$ with $L(\ucoor_t)\geq L_{\infty}$. Here we are only interested in the case where $L(\ucoor_t)> L_{\infty}$ at all times. The other will be treated separately. To show that the trajectory remains close to the same cluster points, we will use a trapping argument near $\ucoor^{\star}$ following the scheme from \citep{Bolte:2014} for Lyapunov functions satisfying a Lojasiewicz-Simon inequality. 
Fix $\epsilon>0$. Since $\ucoor^{\star}$ is an accumulation point of $\ucoor_t$, and that $L(\ucoor_t)\rightarrow L_{\infty}>-\infty$,  there exists $T>0$ so that $\Verts{\ucoor_T-\ucoor^{\star}} \leq \epsilon $ and $0\leq L(\ucoor_t) - L_{\infty}\leq \epsilon$ for all $t\geq T$. By choosing $\epsilon< \sigma$, we can guarantee that \cref{eq:Lojasiewicz-Simon_ssl} holds. Let us show that  $\Verts{\ucoor_t-\ucoor^{\star}}\leq \sigma$ for any $t\geq T$. Consider $T_{max}$, the largest time greater than $T$ for which $\Verts{\ucoor_t-\ucoor^{\star}}\leq \sigma$ for all $t\in (T,T_{max})$. We need to show that $T_{max} = \infty$. It is clear by continuity of $\ucoor_t$ that $T_{max}>T$ so that the interval $(T, T_{max})$ is non-empty. Recalling that $\dot{\ucoor}_t= -F(\ucoor_t)$, the following holds, for all $t\in (T, T_{max})$:
\begin{equation}
\begin{aligned}
	\Verts{\dot{\ucoor}_t}= \Verts{F(\ucoor_t)}&\leq \sqrt{c_{min}^{-1}}\frac{\Verts{F(\ucoor_t)}^2}{\Verts{\nabla L(\ucoor_t)}}\\
	&\leq -\sqrt{c_{min}^{-1}} c_{max}\frac{\dot{L}(\ucoor_t)}{\Verts{\nabla L(\ucoor_t)}}\\
	&\leq -Z^{-1}\sqrt{c_{min}^{-1}} c_{max}\frac{\dot{L}(\ucoor_t)}{\verts{L(\ucoor_t)-L(\ucoor^{\star})}^{\theta}},\\
	&= -Z^{-1}\sqrt{c_{min}^{-1}} c_{max} \frac{\dot{L}(\ucoor_t)}{\parens{L(\ucoor_t)-L_{\infty}}^{\theta}},
\end{aligned}
\end{equation}
where we used \cref{eq:relative_error_general} for the first line, then \cref{assump_sufficient_decrease} for the second line. The third inequality, uses  \cref{eq:Lojasiewicz-Simon_ssl}, whereas the last one uses that $L(\ucoor_t)\geq L_{\infty}= L(\ucoor^{\star})$, where the equality $L_{\infty}= L(\ucoor^{\star})$ follows by continuity of $L$ and uniqueness of the limit $L_{\infty}$. As we are in the case where $L(\ucoor_t)-L_{\infty}>0$, we must have $\Verts{\nabla L(\ucoor_t)}>0$ thanks to \cref{eq:Lojasiewicz-Simon_ssl}, so that the above upper-bounds are always finite. Integrating the above inequality over an interval $(t,s)$ with $T\leq t\leq s < T_{max}$ yields:
\begin{equation}\label{eq:bound_lenght}
\begin{aligned}
	\int_t^{s} \Verts{\dot{\ucoor}_{\tau}}\diff \tau 
	&\leq \alpha \parens{ \parens{L(\ucoor_t)- L_{\infty}}^{1-\theta} -  \parens{L(\ucoor_{s})- L_{\infty}}^{1-\theta}}\\
	&\leq \alpha \parens{L(\ucoor_{t}) - L_{\infty} }^{1-\theta}
\end{aligned}
\end{equation}
where $\alpha = \frac{1}{1-\theta}Z^{-1}\sqrt{c_{min}^{-1}} c_{max} < \infty$. 
Consequently, the above inequality implies, in particular, that:
\begin{equation}
\begin{aligned}
	\Verts{\ucoor_t- \ucoor^{\star}} 
	&\leq \Verts{\ucoor_T- \ucoor^{\star}} + \int_T^{t} \Verts{\dot{\ucoor_{\tau}}}\diff \tau\\
	&\leq \Verts{\ucoor_T- \ucoor^{\star}} + \alpha\parens{L(\ucoor_{T}) - L_{\infty} }^{1-\theta}\leq \epsilon + \alpha\epsilon^{1-\theta}. 
\end{aligned}
\end{equation}
We can choose $\epsilon$ small enough so that $\epsilon + \alpha \epsilon^{1-\theta} \leq \frac{\sigma}{2}$, thus yielding $\Verts{\ucoor_t- \ucoor^{\star}}\leq \frac{\sigma}{2}$ for all $T\leq t<T_{max}$. If, by contradiction,  $T_{max}$ were finite, then, by continuity of $\ucoor_t$ we would also have $\Verts{\ucoor_{T_{max}}- \ucoor^{\star}}\leq \frac{\sigma}{2}$ contradicting the definition of $T_{max}$. Therefore $T_{max}= +\infty$ and $\Verts{\ucoor_t-\ucoor^{\star}}\leq \frac{\sigma}{2}$ for all $t\geq T$.

\paragraph{Convergence to an accumulation point.} 
To prove convergence towards $\ucoor^{\star}$, we consider two separate cases depending on whether $L(\ucoor_t)$ reaches its lower bound $L_{\infty}$ or not.

\begin{itemize}
	\item \textbf{Case 1}: There exists $t_0\geq 0$ so that $L(\ucoor_{t_0})= L_{\infty}$. 
\end{itemize}
Let $t\geq t_0$. Since $L(\ucoor_t)$ is non-increasing by \cref{assump_dissipation}, it holds that $L_{\infty} = L(\ucoor_{t_0})\geq L(\ucoor_t)$.  On the other hand, we have $L(\ucoor_t)\geq L_{\infty}$ so that $L(\ucoor_t)=L_{\infty}$. As this equality holds for any $t\geq t_0$, it must be that $\dot{L}(\ucoor_t)=0$ for all $t\geq t_0$. Along with  \cref{assump_sufficient_decrease}, we deduce that $\Verts{F(\ucoor_t)}^2\leq -c_{max} \dot{L}(\ucoor_t)=0$ for all $t\geq t_0$.  Consequently, $\ucoor_t$ remains constant, but since we know that $\ucoor^{\star}$ is an accumulation point of the trajectory, it must hold that $\ucoor_t = \ucoor^{\star}$ for all $t\geq t_0$, which, a fortiori, implies convergence of $\ucoor_t$ towards $\ucoor^{\star}$.

\begin{itemize}
	\item \textbf{Case 2}: $L(\ucoor_{t})> L_{\infty}$ for all $t\geq 0$. 
\end{itemize}
We have already established that $\Verts{\ucoor_t-\ucoor^{\star}}\leq \sigma$ for all $t\geq T$, so that \cref{eq:Lojasiewicz-Simon_ssl} always holds and we can use \cref{eq:bound_lenght}. 
Let $t\geq T$ and consider a sequence $\ucoor_{s_n}$ converging towards $\ucoor^{\star}$ with $s_{n}$ increasing and greater than $t$. Such sequence  always exists as $\ucoor^{\star}$ is an accumulation point of $\ucoor_{t}$. Recalling that $\ucoor_{s_n}- \ucoor_{t} = \int_t^{s_n}\dot{\ucoor}_{t'}\diff t'$  and using \cref{eq:bound_lenght}, it follows that for any $n\geq 1$:
\begin{align}
	\Verts{\ucoor_{s_n}- \ucoor_{t}}\leq \int_t^{s_n} \Verts{\dot{\ucoor}_{t'}}\diff t'\leq \alpha\parens{L(\ucoor_{t}) - L_{\infty} }^{1-\theta}.
\end{align}
Taking the limit $n\rightarrow +\infty$ yields: $\Verts{\ucoor^{\star}- \ucoor_t}\leq \alpha\parens{L(\ucoor_{t}) - L_{\infty} }^{1-\theta}$, which holds for any $t\geq T$. Furthermore, since $L(\ucoor_{t}) - L_{\infty} \rightarrow 0$, $\alpha>0$ and $1-\theta>0$, we deduce that $\ucoor_t\rightarrow \ucoor^{\star}$. By continuity of $\nabla L$ and $F$, we also have that $\nabla L(\ucoor_t)\rightarrow \nabla L(\ucoor^{\star})$ and $F(\ucoor_t)\rightarrow F(\ucoor^{\star})$. Recall also that $\ucoor^{\star}$  is a critical point of $\nabla L$, i.e. $\nabla L(\ucoor^{\star}) = 0$. Finally, by \cref{assump_sufficient_decrease}, we get 
\begin{align}
	\Verts{F(\ucoor_t)}^2\leq -c_{max} \dot{L}(\ucoor_t) = c_{max}\langle \nabla L(\ucoor_t),F(\ucoor_t)\rangle, 
\end{align}
 taking the limit to infinity yields $\Verts{F(\ucoor^{\star})}^2\leq 0$. Thus $\ucoor^{\star}$ is a critical point of $F$. 
\end{proof}

\begin{lemma}
\label{lem:uniform_continuity}
	Consider a differentiable map $t\mapsto \ucoor_t$ with values in a separable Hilbert space $\mathbb{W}$ with $\ucoor_t$ and $\dot{\ucoor}_t$ bounded. Let $G$ be a smooth map on $\mathbb{W}$ so that $G$ and $\nabla G$ have radial growth, i.e. there exists a non-decreasing function $\omega: [0,+\infty)\rightarrow [0,+\infty)$ so that $\Verts{G(\ucoor)},\Verts{\nabla G(\ucoor)}\leq \omega(\Verts{\ucoor})$ for all $\ucoor\in \mathbb{W}$. Then $t\mapsto \Verts{G(\ucoor_t)}^2$ is uniformly continuous.
\end{lemma}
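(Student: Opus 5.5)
The plan is to establish a uniform Lipschitz bound for $h(t):=\Verts{G(\ucoor_t)}^2$ in $t$, which of course implies uniform continuity. Fix $b>0$ and $M>0$ with $\Verts{\ucoor_t}\le b$ and $\Verts{\dot{\ucoor}_t}\le M$ for all $t$. Because $\omega$ is non-decreasing, the radial growth hypothesis yields $\Verts{G(\ucoor)}\le\omega(b)$ and $\Verts{\nabla G(\ucoor)}_{op}\le\omega(b)$ for every $\ucoor$ in the closed ball $B_b$ of radius $b$.

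\textbf{Step 1: $G$ is Lipschitz on $B_b$.} The ball is convex, so for $\ucoor,\ucoor'\in B_b$ the whole segment $\ucoor_\tau=\ucoor'+\tau(\ucoor-\ucoor')$, $\tau\in[0,1]$, lies in $B_b$. Applying the fundamental theorem of calculus along this segment, which is valid since $G$ is smooth (hence $C^1$ in the Fréchet sense), gives
\begin{align}
\Verts{G(\ucoor)-G(\ucoor')}\le \sup_{\tau\in[0,1]}\Verts{\nabla G(\ucoor_\tau)}_{op}\Verts{\ucoor-\ucoor'}\le \omega(b)\Verts{\ucoor-\ucoor'}.
\end{align}

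\textbf{Step 2: the trajectory is Lipschitz in time.} Since $\ucoor_t$ is differentiable with bounded derivative, the same integration argument gives $\Verts{\ucoor_t-\ucoor_s}\le M\verts{t-s}$. One can also use the mean value inequality for Hilbert-valued maps here, which avoids needing $\dot{\ucoor}$ to be integrable.

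\textbf{Step 3: combine.} Write
\begin{align}
\verts{h(t)-h(s)}=\verts{\langle G(\ucoor_t)-G(\ucoor_s),G(\ucoor_t)+G(\ucoor_s)\rangle}\le 2\omega(b)\Verts{G(\ucoor_t)-G(\ucoor_s)}\le 2\omega(b)^2M\verts{t-s}.
\end{align}
This shows $h$ is globally Lipschitz, and therefore uniformly continuous.

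The only point requiring care is how $\nabla G$ is interpreted. In the application, $G=\nabla L$, so $\nabla G$ is the Hessian, and the bound must be in operator norm; assumption \ref{assump_bounded_on_bounded_set} provides exactly this. I expect no other obstacle. Step 2 could be delicate only if $\dot{\ucoor}$ were not integrable. That issue is avoided by applying the mean value inequality to $s\mapsto\langle \ucoor_s,e\rangle$ for unit vectors $e$ and then taking the supremum over $e$.
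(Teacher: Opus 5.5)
Your proof is correct and follows the same route as the paper: bound $G$ and $\nabla G$ on the ball of radius $b$ that contains the trajectory, deduce that $t\mapsto G(\ucoor_t)$ is Lipschitz from the mean value inequality and the bound on $\dot{\ucoor}_t$, and conclude with the difference-of-squares estimate, which gives a Lipschitz constant of order $\omega(b)^2 M$. Your extra care about the convexity of the ball and the scalarized mean value inequality just makes explicit steps that the paper takes for granted.
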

\begin{proof}
We know that $\ucoor_t$ is bounded, hence, there exists a positive real number $b$ so that $\ucoor_t$ is included in a centered ball of radius $b$ for all $t$. As $G$ and $\nabla G$ have radial growth, by assumption, they must be bounded on such a ball. Therefore, one can find a positive constant $L$ so that the following holds:
\begin{align}
	\Verts{G(\ucoor_t) - G(\ucoor_s)}\leq \sup_{\substack{\ucoor\in \mathbb{W}\\
	\Verts{\ucoor}\leq b}} \Verts{\nabla G(\ucoor)} \Verts{\ucoor_t-\ucoor_s}\leq L \sup_{t'\geq 0}\Verts{\dot{\ucoor}_{t'}}\verts{t-s},\qquad \Verts{G(\ucoor_t)}\leq L
\end{align}
Since, by assumption, $\sup_{t'\geq 0}\Verts{\dot{\ucoor}_{t'}} <+\infty$, we deduce that  $t\mapsto G(\ucoor_t)$ is Lipschitz and bounded. We can therefore deduce that $t\mapsto \Verts{G(\ucoor_t)}^2$ is Lipschitz thanks to the inequalities:
\begin{equation}
\begin{aligned}
\verts{\Verts{G(\ucoor_t)}^2 - \Verts{G(\ucoor_s)}^2} 
	&\leq \Verts{G(\ucoor_t)-G(\ucoor_s)}\parens{\Verts{G(\ucoor_t)} + \Verts{G(\ucoor_s)}}\\
	&\leq 2L^2\sup_{t'\geq 0}\Verts{\dot{\ucoor}_{t'}}\verts{t-s}.
\end{aligned}
\end{equation}
A fortiori, $\Verts{G(\ucoor_t)}^2$ is uniformly continuous. 
\end{proof}

\subsection{Proof of the auxiliary results of  \cref{prop:lyapunov_function,lem:growth_F_kappa,lem:energy_bounded_below,lem:kurdyka_simon_inequality,lem:palais_smale}}\label[appendixsection]{sec:proof_smale_etc}

\paragraph{Existence of a Lyapunov function}
\begin{proof}[Proof of \cref{prop:lyapunov_function} (Existence of a Lyapunov function)]
Differentiating $L^{\kappa}$ w.r.t. $\ucoor$ yields:
\begin{align}
\nabla L^{\kappa}(\ucoor) = 2\ucoor\Phi(\ucoor^{\top}\ucoor),	\qquad \Phi(M) = (M+\lambda I)^{2} - S(MS)^{\kappa}.
\end{align}
Note the vector field $F^{\kappa}$ can be expressed in terms of $\Phi$ as follows:
\begin{align}
	F^{\kappa}(\ucoor) 
	=& \ucoor B(M) \Phi(M)B(M), \qquad M = \ucoor^{\top}\ucoor, \qquad B(M)= (M+\lambda I)^{-1}.
\end{align}
Introducing $M_t =\ucoor_t^{\top}\ucoor_t$ for conciseness and computing the time derivatives of $L^{\kappa}(\ucoor_t)$ yields: 
\begin{equation}
\begin{aligned}
	\dot{L}^{\kappa}(\ucoor_t) &= Tr\parens{\nabla L^{\kappa}(\ucoor_t)^{\top}\dot{\ucoor}_t}\\
	&=  -2\lambda Tr\parens{\Phi(M_t)M_tB(M_t)\Phi(M_t)B(M_t)}\\
	&= -2\lambda Tr\parens{\Phi(M_t)B(M_t)M_t\Phi(M_t)B(M_t)}\\
	&= -2\lambda Tr\parens{B(M_t)^{\frac{1}{2}}M_t\Phi(M_t)B(M_t)\Phi(M_t)B(M_t)^{\frac{1}{2}}}\\
	&= -2\lambda Tr\parens{M_tB(M_t)^{\frac{1}{2}}\Phi(M_t)B(M_t)\Phi(M_t)B(M_t)^{\frac{1}{2}}}\\
	&= -2\lambda Tr\parens{M_tQ_t^2} = -2\lambda \Verts{\ucoor_t Q_t}^2, 
\end{aligned}
\end{equation}
where we defined $Q_t  = B(M_t)^{\frac{1}{2}}\Phi(M_t)B(M_t)^{\frac{1}{2}}$ and used that $M_t$ always commutes with $B(M_t)$. 
\paragraph{Sufficient decrease.} 
First, recall that by definition of $F^{\kappa}$, we have that:
\begin{align}
\Verts{F^{\kappa}(\ucoor)}^2 
	&= Tr\parens{B \Phi BMB\Phi B},
\end{align}
where, we used that $M = \ucoor^{\top}\ucoor$, $B = B(M) = (M +\lambda I)^{-1}$ and $\Phi = \Phi(M)$ for conciseness. 
Recalling that $M$ and $B$ are PSD and  commute, by construction, and that $B\leq \frac{1}{\lambda} I$, it follows:
\begin{equation}
\begin{aligned}
	\Verts{F^{\kappa}(\ucoor)}^2 &=  Tr\parens{B\Phi B^{\frac{1}{2}}M^{\frac{1}{2}}BM^{\frac{1}{2}}B^{\frac{1}{2}}\Phi B}\\
	&\leq  \lambda^{-1} Tr\parens{B\Phi B^{\frac{1}{2}}M^{\frac{1}{2}}M^{\frac{1}{2}}B^{\frac{1}{2}}\Phi B}\\
	&= \lambda^{-1} Tr\parens{M^{\frac{1}{2}}B^{\frac{1}{2}}\Phi B^{\frac{1}{2}} B B^{\frac{1}{2}}\Phi B^{\frac{1}{2}}M^{\frac{1}{2}}}\\
	&\leq \lambda^{-2}  Tr\parens{M^{\frac{1}{2}}B^{\frac{1}{2}}\Phi B^{\frac{1}{2}} B^{\frac{1}{2}}\Phi B^{\frac{1}{2}}M^{\frac{1}{2}}}\\
	&=  \lambda^{-2}Tr\parens{MB^{\frac{1}{2}}\Phi B\Phi B^{\frac{1}{2}}} = -\frac{1}{2\lambda^3}\dot{L}^{\kappa}(\ucoor_t).
\end{aligned}
\end{equation}

\paragraph{Relative error.} 
Similarly, using again that $B$ and $M$ commute:
\begin{equation}
\begin{aligned}
\Verts{F^{\kappa}(\ucoor)}^2 &=  Tr\parens{B\Phi M^{\frac{1}{2}}B^2M^{\frac{1}{2}}\Phi B}\\
	&\geq   \sigma_{min}(B^2) Tr\parens{B\Phi M^{\frac{1}{2}}M^{\frac{1}{2}}\Phi B}\\
	&\geq \sigma_{min}(B^2) Tr\parens{M^{\frac{1}{2}}\Phi B^2\Phi M^{\frac{1}{2}}}\\
	&\geq   \sigma_{min}(B^2)^2 Tr\parens{M^{\frac{1}{2}}\Phi\Phi M^{\frac{1}{2}}}\\
	&=  \sigma_{min}(B^2)^2 Tr\parens{\Phi M\Phi}\\
	&= \frac{1}{4}\sigma_{min}(B^2)^2 \Verts{\nabla L^{\kappa}(\ucoor)}^2,
\end{aligned}
\end{equation}
where we used that $B^2\geq \sigma_{min}(B^2) I$ the smallest singular value of $B^2$.  Moreover, simple estimates show that $\sigma_{min}(B^2)\geq \frac{1}{(\Verts{\ucoor}^2+\lambda)^2}$. Hence, we have established the relative error bound:
$$\Verts{F^{\kappa}(\ucoor)}^2\geq \frac{1}{4(\Verts{\ucoor}^2+\lambda)^4}\Verts{\nabla L^{\kappa}(\ucoor)}^2. 
$$

\end{proof}

\paragraph{Growth of the vector field $F^{\kappa}$.}
\begin{proof}[Proof of \cref{lem:growth_F_kappa}]
The bound holds by direct application of operator norm properties. Specifically, for $\ucoor\in \mathbb{W}$, and denoting by $R = \parens{\ucoor^{\top}\ucoor +\lambda I}^{-1}$,  we have:
\begin{equation}
\begin{aligned}
	\Verts{F^{\kappa}(\ucoor)} &\leq \Verts{\ucoor}\parens{1 + \Verts{RS(\ucoor^{\top}\ucoor S)^{\kappa}R}_{op}}\\
	 & \leq  \Verts{\ucoor}\parens{1 + \underbrace{\Verts{R}_{op}^2}_{\leq \lambda^{-2}}\Verts{S}_{op}^{1+\kappa}\Verts{\ucoor^{\top}\ucoor}_{op}^{\kappa}}\\
	 & \leq  \Verts{\ucoor}\parens{1 + \lambda^{-2}\Verts{S}_{op}^{1+\kappa}\Verts{\ucoor}^{2\kappa}}.
\end{aligned}
\end{equation}
This establishes that $F^{\kappa}$ is radially bounded by the function $\omega(\Verts{\ucoor}) = \Verts{\ucoor}\parens{1 + \lambda^{-2}\Verts{S}_{op}^{1+\kappa}\Verts{\ucoor}^{2\kappa}}$.
\end{proof}

\paragraph{Growth, coercivity and lower boundedness of the energy $L^{\kappa}$.}
\begin{proof}[Proof of \cref{lem:energy_bounded_below} (Finite lower bound on the energy)]
Let us first show that $L^{\kappa}$ is lower-bounded. Let $\ucoor\in\mathbb{W}$. Then $\ucoor\ucoor^{\top}$ is a $k\times k$ PSD matrix. Denote by $\sigma_1,\dots,\sigma_k$ its singular values. Then
\begin{align}
Tr\parens{(\ucoor\ucoor^{\top}+\lambda I)^3}=\sum_{i=1}^k(\sigma_i+\lambda)^3
\;\ge\;\sum_{i=1}^k\sigma_i^3+k\lambda^3
\;\ge\;\frac{1}{k^2}\|\ucoor\|_{\mathbb{W}}^6+k\lambda^3,
\end{align}
using $\sum\sigma_i^3\ge k^{-2}(\sum\sigma_i)^3=k^{-2}\|\ucoor\|_{\mathbb{W}}^6$ by Jensen's inequality. For the second term of $L^{\kappa}$, we use the following upper-bound:
\begin{align}
	\verts{\Tr\parens{\parens{\ucoor S\ucoor^{\top}}^{\kappa+1} }}\leq
		\Verts{S}_{op}^{\kappa+1}\Verts{\ucoor}_{\mathbb{W}}^{2(\kappa+1)}
\end{align}
Hence, with $r=\|\ucoor\|_{\mathbb{W}}$,
\begin{equation}
L^{\kappa}(\ucoor)\ge \frac{1}{3k^2}r^6-\frac{1}{\kappa +1}\Verts{S}_{op}^{\kappa+1}r^{2(\kappa+1)}+\frac{k\lambda^3}{3}.
\end{equation}
The right-hand side is bounded below for $r\ge0$, as long as $\kappa<2$ (positive leading coefficient), so $L^{\kappa}$ is bounded below. From the above inequality, we also get that $L^{\kappa}$ is coercive. 

To show that $\nabla L^{\kappa}$ and $\nabla^2 L^{\kappa}$ are radially bounded, recall that $L^{\kappa}$ is a smooth polynomial, so that $\nabla L^{\kappa}$ and $\nabla^2 L^{\kappa}$ are also smooth polynomials and are thus, a fortiori, radially bounded.
\end{proof}

\paragraph{Hessian decomposition.}
\begin{proof}[Proof of \cref{lem:kurdyka_simon_inequality} (Hessian decomposition)]
First, by direct differentiation, we have that:
\begin{align}
\nabla L^{\kappa}(\ucoor) = 2\ucoor \parens{\parens{\ucoor^{\top}\ucoor + \lambda I}^2 - S\parens{\ucoor^{\top}\ucoor S}^{\kappa}}= 2\parens{\ucoor\ucoor^{\top} + \lambda I}^2\ucoor - 2\ucoor S\parens{\ucoor^{\top}\ucoor S}^{\kappa}. 
\end{align}
Hence, differentiating the above along a direction $v$ yields:
\begin{equation}
\begin{aligned}
	\nabla^2L^{\kappa}(\ucoor)(\ucoor') =& \underbrace{2\parens{\parens{\ucoor'\ucoor^{\top} + \ucoor\ucoor'^{\top}}\parens{\ucoor\ucoor^{\top} +\lambda I} + \parens{\ucoor\ucoor^{\top} +\lambda I} \parens{\ucoor'\ucoor^{\top} + \ucoor\ucoor'^{\top}} }}_{Q(\ucoor')} \ucoor \\
	&+ 2\parens{\ucoor\ucoor^{\top} +\lambda I}^2\ucoor' - \underbrace{\parens{2\ucoor'S\parens{\ucoor^{\top}\ucoor S}^{\kappa} +2\delta_{\kappa=1} \ucoor S\parens{\ucoor^{\top}\ucoor' + \ucoor'^{\top}\ucoor}S}}_{K_1(\ucoor')}\\
	=& 2\parens{\ucoor\ucoor^{\top} +\lambda I}^2\ucoor' + Q(\ucoor')\ucoor - K_1(\ucoor')
\end{aligned}
\end{equation}
By assumption, we know that $S$ is a bounded compact operator, hence, $\ucoor'\mapsto K_1(\ucoor')$ is also bounded and compact as a sum of compositions of bounded and compact operators. Moreover $Q(\ucoor')$ is a $k\times k$ matrix, so that $K_2: \ucoor'\mapsto Q(\ucoor')\ucoor$ lives in the range of the $k$ components of $\ucoor$. Hence, $\ucoor'\mapsto Q(\ucoor')\ucoor$ has a finite range and is thus, a fortiori, compact. Finally, note that $2\parens{\ucoor\ucoor^{\top} +\lambda I}^2$ is an invertible $k\times k$ matrix. Hence, the map $R: \ucoor'\mapsto 2\parens{\ucoor\ucoor^{\top} +\lambda I}^2\ucoor'$ is invertible with a bounded inverse. We have shown, so far, that $\nabla^2 L^{\kappa}(\ucoor)$ is decomposed as the sum of a boundedly invertible operator $R$ and a bounded compact operator $K_1 + K_2$.

\end{proof}

\paragraph{Palais-Smale condition.}
\begin{proof}[Proof of \cref{lem:palais_smale} (Palais-Smale condition)]
	Consider a bounded sequence $(\ucoor_n)_{n\geq 1}$ in $\mathbb{W}$ satisfying $\Verts{\nabla L^{\kappa}(\ucoor_n)}\rightarrow 0$.  We will show that it must admit a cluster point. By direct calculations, we know that $\nabla L^{\kappa}(\ucoor_n)$ admits the following expression:
\begin{align}\label{eq:palais_smale_gradient}
	r_n = \nabla L^{\kappa}(\ucoor_n) = 2\ucoor_n \parens{ (M_n + \lambda I)^2 - S (M_n S)^{\kappa}}, \qquad M_n = \ucoor_n^{\top}\ucoor_n.
\end{align}
We will first prove that $M_n$ admits a convergent subsequence, then deduce that $\ucoor_n$ also does. 

\textbf{Existence of a convergent subsequence of $M_n$.} Left-multiplying the above equation by $\ucoor_n^{\top}$ we get:
\begin{align}
\ucoor_n^{\top}r_n = 2M_n(M_n+\lambda I)^2 - 2(M_nS)^{\kappa+1}
\end{align}
Define $X_n = 2M_n(M_n+\lambda I)^2$ and $B_n= 2(M_nS)^{\kappa}M_n$. Then since $(\ucoor_n)_{n\geq 1}$ is bounded, it follows that $(X_n)_{n\geq 1}$ and  $(B_n)_{n\geq 1}$ form  bounded sequences of bounded operators on $\ell^2(I)$. Moreover, $X_n$ is self-adjoint by construction and $X_n-B_nS = \ucoor_n^{\top}r_n\rightarrow 0$, since, by assumption $r_n\rightarrow 0$ and $\ucoor_n$ is bounded. Thus, recalling that $S$ is a bounded compact operator, we can apply \cref{lem:relative_compactness} to directly deduce that $X_n$ admits a convergent subsequence in operator norm. 

Consider the scalar function $\phi: t\mapsto 2t(t+\lambda)^2$ which is strictly increasing on $[0,+\infty)$ and thus admits a continuous inverse $\phi^{-1}$ defined on $[0,+\infty)$. Consequently, the map $\Phi: M\mapsto 2M(M+\lambda I)^{2}$ defined over bounded self-adjoint semi-definite operators on $\ell^2(I)$ is a homeomorphism, by continuous functional calculus for bounded PSD self-adjoint operators,    and thus admits a continuous inverse $\Phi^{-1}$. 
Let now $(X_{n_l})_{l\geq 1}$ be a convergent subsequence of $(X_n)_{n\geq 1}$, then we have that $X_{n_l} = \Phi(M_{n_l})$ for all $l\geq 1$. Moreover, $M_{n_l} = \Phi^{-1}(X_{n_l})$ converges by continuity of $\Phi^{-1}$. Therefore, $(M_n)_{n\geq 1}$ admits a convergent subsequence in operator norm.  

\textbf{Existence of a convergent subsequence for $(\ucoor_n)_{n\geq 1}$.} 
Multiplying \cref{eq:palais_smale_gradient} on the right by $(M_n + \lambda I)^{-2}$ and re-arranging the equality yields:
\begin{align}\label{eq:compactness_sequence}
	\ucoor_n &= \frac{1}{2}\parens{ r_n(M_n + \lambda I)^{-2} } + \ucoor_n S (M_n S)^{\kappa}(M_n + \lambda I)^{-2}.
\end{align}
Since $\ucoor_n$ is bounded, so is $M_n$, hence, the first terms $r_n(M_n + \lambda I)^{-2}$ converges to $0$ as $r_n\rightarrow 0 $ by assumption. 
For the second term, we know that $M_n$ admits a convergent subsequence $(M_{n_l})_{l\geq 1}$. Moreover, by compactness of $S$, we know that $(\ucoor_{n_{l}} S)_{l\geq 1}$ admits a convergent subsequence. Thus, we have shown that one can extract a subsequence of $\ucoor_n$ so that the right hand side of \cref{eq:compactness_sequence} converges by continuity. Denote this limit by $\ucoor^{\star}$. This precisely means that such subsequence of $(\ucoor_{n})_{n \geq 1}$ converges itself to $\ucoor^{\star}$ thanks to the identity in  \cref{eq:compactness_sequence}. 
\end{proof}

\begin{lemma}\label{lem:relative_compactness}
Let $S$ be a bounded compact operator on $\ell^2(I)$, 
and \((B_n)_{n\geq 1}\) a bounded sequence of bounded operators on $\ell^2(I)$. Let \((X_n)_{n\geq 1}\) be a family of self-adjoint bounded operators satisfying:
\[
    \|X_n-B_nS\|\to0 .
\]
Then \((X_n)\) admits a convergent subsequence in operator norm. 
\end{lemma}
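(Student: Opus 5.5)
The plan is to reduce the claim to the compactness of $S$ by a finite-rank approximation followed by a diagonal extraction. Since $S$ is compact on $\ell^2(I)$, there is a sequence of finite-rank operators $S_m$, for instance the truncations of $S$ to its first $m$ canonical coordinates when $S$ is diagonal (or truncations of its singular value decomposition in general), such that $\|S-S_m\|_{op}\to 0$. Let $C=\sup_n\|B_n\|_{op}<\infty$. Then
\[
\|X_n - B_nS_m\|_{op}\le \|X_n-B_nS\|_{op} + C\|S-S_m\|_{op}.
\]
So, up to a vanishing error, $X_n$ is close to $B_nS_m$. This operator vanishes on $\ker S_m$, which has finite codimension, and therefore has finite-dimensional co-range in the sense that $B_nS_m = B_nS_m P_m$, where $P_m$ is the orthogonal projection onto the finite-dimensional space $(\ker S_m)^{\perp}=\mathrm{ran}(S_m^{*})$.

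The key step is to use self-adjointness of $X_n$ to transfer this finite-dimensional structure to the other side. Taking adjoints gives $\|X_n - S_m^{*}B_n^{*}\|_{op}\le \varepsilon_{n,m}$, where $\varepsilon_{n,m}:=\|X_n-B_nS\|_{op}+C\|S-S_m\|_{op}$. Hence $X_n$ is approximately $P_m X_n P_m$:
\[
\|X_n - P_mX_nP_m\|_{op}\le \|X_n - X_nP_m\|_{op} + \|(I-P_m)X_nP_m\|_{op}\le 2\varepsilon_{n,m}.
\]
The first term is bounded using $X_n\approx B_nS_mP_m$, since $B_nS_m(I-P_m)=0$. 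The second is bounded using $X_n\approx S_m^{*}B_n^{*}$ together with $(I-P_m)S_m^{*}=0$, because $\mathrm{ran}(S_m^{*})=\mathrm{ran}(P_m)$.

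The operators $P_mX_nP_m$ live in the finite-dimensional space of operators on $\mathrm{ran}(P_m)$. They are bounded in $n$: $\|X_n\|\le C\|S\|+\|X_n-B_nS\|$, and this is bounded for $n$ large. By Bolzano--Weierstrass they therefore admit a convergent subsequence for each fixed $m$.

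A diagonal argument finishes the proof. Extract nested subsequences along which $P_mX_nP_m$ converges for $m=1,2,\dots$, and take the diagonal subsequence $(X_{n_l})$. For any $\delta>0$, first choose $m$ with $2C\|S-S_m\|\le\delta/3$, and then $l_0$ such that $\|X_{n_l}-B_{n_l}S\|\le \delta/12$ and $\|P_m(X_{n_l}-X_{n_{l'}})P_m\|\le\delta/3$ for $l,l'\ge l_0$. The triangle inequality then gives $\|X_{n_l}-X_{n_{l'}}\|\le\delta$, so the subsequence is Cauchy in operator norm. Since the space of bounded operators is complete, it converges.

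I expect the main obstacle to be the two-sided truncation step, that is, showing that $X_n$ is close to $P_mX_nP_m$ and not merely to $X_nP_m$. Without self-adjointness, $B_nS$ alone need not have convergent subsequences: $B_n$ may rotate the range arbitrarily, for example when $B_n$ ranges over shifts. Self-adjointness is exactly what controls the left side, so the bookkeeping of the adjoint step and the choice of $P_m$ as the projection onto $\mathrm{ran}(S_m^{*})$ must be done carefully.
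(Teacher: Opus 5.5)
Your proof is correct and follows essentially the paper's route: finite-rank orthogonal projections $P_m$ make $\|X_n(I-P_m)\|$ asymptotically small, self-adjointness transfers this to $(I-P_m)X_n$ so that $\|X_n-P_mX_nP_m\|$ is small, and compactness of the bounded finite-dimensional blocks $P_mX_nP_m$ finishes the argument. The differences are cosmetic. You tie $P_m$ to a finite-rank approximant $S_m$ (so $B_nS_m(I-P_m)=0$ exactly) and conclude by diagonal extraction and the Cauchy criterion, whereas the paper takes any $P_m\to I$, uses $\|S(I-P_m)\|\to0$, and concludes via a finite $\epsilon$-cover; your final constants actually give $4\delta/3$ rather than $\delta$, which is harmless.
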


\begin{proof}
Let $(P_{m})_{m\geq 1}$ be a family of finite-rank orthogonal projections in $\ell^2(I)$ such that $P_m$ converges to the identity as $m$ increases. Since $S$ is compact, necessarily, $\|S(I-P_m)\|\to0$.
Moreover, we have that \(X_n-B_nS\to0\) and \((B_n)\) is bounded. Hence,  \((X_n)\) is bounded and
\begin{align}\label{eq:limsup}
\limsup_{n\to\infty}\|X_n(I-P_m)\|
 \le \sup_n\|B_n\|\,\|S(I-P_m)\|\to0
 \qquad (m\to\infty).
\end{align}
Moreover, by assumption $X_n$ is self-adjoint, i.e., \(X_n=X_n^\ast\), so that 
\[
    \|(I-P_m)X_n\|=\|X_n(I-P_m)\|,
\]
and 
\begin{align*}
\|X_n-P_mX_nP_m\|  &= \|X_n(I-P_m) + (I-P_m)X_nP_m\|\\
&\leq \Verts{X_n(I-P_m)} + \Verts{(I-P_m)X_n}\Verts{P_m}\\
&\leq 2\Verts{X_n(I-P_m)}.
\end{align*}
Therefore, combining the above upper-bound with \cref{eq:limsup} yields:
\begin{align}\label{eq:limsup_2}
    \lim_{m\to\infty}\limsup_{n\to\infty}
    \|X_n-P_mX_nP_m\|=0 .
\end{align}
We now show that the sequence $(X_n)_{n\geq 1}$ is relatively compact in the operator norm, meaning that its closure is a compact set. To achieve this, we only need to show that $(X_n)_{n\geq 1}$ admits a finite $\epsilon$-cover for any $\epsilon>0$. Fix $\epsilon >0$, then \cref{eq:limsup_2} ensures there exists $m>0$ so that $\limsup_{n\to\infty}
    \|X_n-P_mX_nP_m\|\leq \epsilon/4$. 
Consequently, there exists $N>0$ so that for all $n\geq N$:
\begin{align}\label{eq:bound_x_n_compactness}
\|X_n-P_mX_nP_m\|\leq \epsilon/2
\end{align}
Denote by $P_m\ell^2(I)$ the finite dimensional space of projections of $\ell^2(I)$ by the operator $P_m$. Then the restrictions $Q_n$ of $P_mX_nP_m$ to $P_m\ell^2(I)$ are stable, i.e., $Q_n$ are operators from $P_m\ell^2(I)$ to itself. Moreover, $(Q_n)_{n\geq 1}$ is a bounded family of bounded operators on the finite-dimensional space $P_m\ell^2(I)$. Therefore, $(Q_n)_{n}$ is relatively compact and thus admits a finite $\epsilon/2$-cover $\tilde{Q}_1,\dots,\tilde{Q}_K$, for some $K>0$ with $\tilde{Q}$ bounded operators on $P_m\ell^2(I)$ satisfying:
\begin{align}\label{eq:compactness_Q}
\min_{1\leq l\leq K}\Verts{Q_n-\tilde{Q}_l}\leq \frac{\epsilon}{2}.
\end{align}
By abuse of notation, we identify $\tilde{Q}_l$ with its unique extension on $\ell^2(I)$ satisfying $\tilde{Q}_l\ucoor =  \tilde{Q}_l P_m\ucoor$ for any $\ucoor\in \ell^2(I)$. Hence, combining \cref{eq:bound_x_n_compactness,eq:compactness_Q} ,we have shown that for all $n\geq N$:
\begin{align}
\min_{1\leq l\leq K}\Verts{X_n-\tilde{Q}_l}\leq \Verts{X_n- P_mX_nP_m} +  \min_{1\leq l\leq K}\Verts{Q_n-\tilde{Q}_l}\leq \epsilon.
\end{align}
Consequently, $\{\tilde{Q}_1,\dots,\tilde{Q}_K\}\cup \{X_1,\dots,X_N\}$ is a finite $\epsilon$-cover of $(X_{n})_{n\geq 1}$ w.r.t. the operator norm. This establishes that $(X_n)_{n\geq 1}$ is relatively compact and a fortiori that $X_n$ admits a convergent subsequence. 
\end{proof}

\section{Proof of the characterization of equilibria and their stability}
\label[appendixsection]{sec:proof_equilibria_stability}

\subsection{Proof of \cref{eq:characterization_fixed_point_general} on characterization of equilibria}
\label[appendixsection]{sec:proof_characterization_equilibria}

\begin{proof}[Proof of \cref{eq:characterization_fixed_point_general}]
Let $\ucoor$ be a critical point of $F^{\kappa}$, i.e. $F^{\kappa}(\ucoor)=0$. 
Left-multiplying  $F^{\kappa}(\ucoor)$ by $\ucoor^{\top}$ and defining $M = \ucoor^{\top}\ucoor$, the following condition must hold for $M$:
\begin{align}\label{eq:fixed_point_M}
	M\parens{I- \parens{M+\lambda I}^{-1}G(M)\parens{M + \lambda I}^{-1}}=0, \qquad G(M)=S(MS)^{\kappa}.
\end{align}
Recalling that $M$ necessarily commutes with $(M+\lambda I)^{-1}$ and multiplying both sides by $(M+\lambda I)$, we directly deduce that:
\begin{align}
	(M+\lambda I)M(M+\lambda I) = MG(M).
\end{align}
Since the l.h.s. of the above expression is always symmetric, so must be its r.h.s., i.e. $MG(M)  = (MG(M))^{\top}$.
We will use such symmetry to prove that  $MG(M)$ must be diagonal.
\paragraph{$MG(M)$ is diagonal.}
Using that $G(M) = S(MS)^{\kappa}$, the symmetry condition on $MG(M)$ becomes $MS(MS)^{\kappa} = (MS(MS)^{\kappa})^{\top}$. Using the associativity of the product, it is clear that $(MS)^{\kappa}MS= SM (SM)^{\kappa}$. Furthermore, by setting $T = (MS)^{\kappa}M$, and noting that $T = M (SM)^{\kappa}$ by associativity of the product, we have shown that $TS=ST$, i.e. $T$ commutes with $S$. 
Moreover, we know, by assumption, $S$ is compact and diagonal in the canonical orthonormal basis of $\ell^2(I)$, thus it admits the following block representation with respect to the orthogonal decomposition of $\ell^2(I)$ in \cref{eq:orthogonal_space_decomposition}:  
\begin{align}
	S \equiv \begin{pmatrix}
		S^{+} & 0 & 0\\
		0 & 0 & 0\\
		0 & 0 & S^{-}\\
	\end{pmatrix},
\end{align}
where $\equiv$ denotes operator similarity,  $S^{+}$ and $S^{-}$ collect the positive and negative eigenvalues of $S$ in decreasing order w.r.t. their absolute values. For any canonical basis element $e_i$, the following relation holds $STe_i = TSe_i= s_i Te_i$ for all $i\in I$.  The previous equation implies $Te_i$ is an eigenvector of $S$ associated to eigenvalue $s_i$.  By assumption, eigenspaces of $S$ associated to non-zero eigenvalues have dimension $1$. Therefore, $Te_i$ belongs to the span of $e_i$. On the other hand, the null space of $S$ is invariant under $T$. In other words, $T$ admits the following block diagonal structure w.r.t. the orthogonal decomposition of $\ell^2(I)$ in \cref{eq:orthogonal_space_decomposition}:   
\begin{align}
	T \equiv \begin{pmatrix}
		T^{+} & 0 & 0\\
		0 & T^0 & 0\\
		0 & 0 & T^{-}
	\end{pmatrix},
\end{align}
where $T^{+}$ and $T^{-}$ are diagonal, while $T^0$ is a block associated to $0$ eigenvalues of $S$. 
Hence, we have shown that $M$ satisfies an equation of the form:
\begin{align}\label{eq:fixed_point_M_2}
	(M+\lambda I)M(M+\lambda I) = ST \equiv \begin{pmatrix}
		S^{+}T^{+} & 0 & 0\\
		0 & 0 & 0\\
		0 & 0 & S^{-}T^{-}
	\end{pmatrix}.
\end{align}
The above equation establishes that  $(M+\lambda I)M(M+\lambda I)$ is diagonal in the canonical basis. It remains to show that $M$ is itself diagonal.

$M$ is positive semi-definite (PSD), and is compact, as an operator of rank at most $k$.  Therefore, by the spectral theorem, it admits an eigenvalue decomposition of the form: $M = R^{\top}\Lambda R$, where $R$ is orthogonal ($R^{\top}R=I$) and $\Lambda$ is diagonal with non-negative components. Consequently, $(M+\lambda I)M(M+\lambda I) = \Gamma(M) = R^{\top}\Gamma(\Lambda)R$, where $\Gamma(\Lambda)$ is a diagonal matrix whose $i$-th diagonal element is given by $\gamma(\ell_i)$ where $\ell_i$ is the $i$-th diagonal element of $\Lambda$ and $\gamma(\ell) = \ell(\ell+\lambda)^2$ for any $\ell$ in $[0,+\infty)$. Since $\gamma$ is continuous and strictly increasing over $[0,+\infty)$, it must have an inverse $\gamma^{-1}$. 
Applying continuous functional calculus for bounded self-adjoint PSD operators, we can define the inverse  $\Gamma^{-1}$ of $\Gamma$ which applies $\gamma^{-1}$ to the eigenvalues of any PSD matrix $D$ without affecting the eigenvectors. Consequently, we can write $M = \Gamma^{-1}(\Gamma(M)) = \Gamma^{-1}(ST)$. Since we established that $ST$ is diagonal, $\Gamma^{-1}(ST)$ must also be a diagonal matrix. Therefore, $M$ is diagonal and admits the following block representation w.r.t. the orthogonal decomposition of $\ell^{2}(I)$ from \cref{eq:orthogonal_space_decomposition}:
\begin{align}
	M \equiv \begin{pmatrix}
		M^{+} & 0 & 0\\
		0 & M^{0} & 0\\
		0 & 0 & M^{-}
	\end{pmatrix}.
\end{align}

\paragraph{Characterizing fixed points.} 
Since $M$ is diagonal, we only need to compute its diagonal coefficients. From \cref{eq:fixed_point_M_2}, we directly see that $M^{0}=0$. For $M^{+}$ and $M^{-}$, we use \cref{eq:fixed_point_M}, which implies the following equations on its diagonal components $(m_i)_{i\in I}$:
\begin{align}
	m_i\parens{1- \frac{m_i^{\kappa}s_i^{\kappa+1}}{\parens{m_i+\lambda}^2}} = 0,\qquad m_i\geq 0.
\end{align}
In particular, we can solve in closed form for $\kappa \in \{0,1\}$. 

\begin{itemize}
	\item \textbf{Case $\kappa=0$.} In this case, $m_i$ has the following admissible solutions:
\end{itemize}
	\begin{align}\label{eq:diag_elements_kappa_0}
	m_i \in \{0, (g(s_i,\lambda))^2\} ,\qquad 
	g(s,\lambda) = (\sqrt{s}-\lambda )_{+}^{\frac{1}{2}}\mathds{1}_{s\geq 0}.
	\end{align}

\begin{itemize}
	\item \textbf{Case $\kappa=1$.} In this case, $m_i$ has the following admissible solutions:
\end{itemize}
\begin{align}\label{eq:diag_elements_kappa_1}
	m_i \in \{0,  (f_{1}(s_i,\lambda))^2 , (f_{-1}(s_i,\lambda))^2\}, \quad 
	f_{\epsilon}(s,\lambda) = \frac{1}{2}\parens{ \verts{s} + \epsilon \sqrt{s^2 -4\lambda} }\mathds{1}_{\verts{s}^2-4\lambda\geq 0}.
	\end{align}
Recalling that $M = \ucoor^{\top}\ucoor$, it follows that $M$ is of rank at most $k$, as $\ucoor$ is itself of rank at most $k$, and therefore there can be at most $k$ non-zero diagonal elements in $M$. Let $r $ be the number of such non-zero elements. We have that $0\leq r\leq k$. If $r>0$, let $\mathcal{K} = \{i(1), \dots, i(r)\}$ be the set of indices corresponding to such non-zero elements and set $\mathcal{K}= \emptyset$ if $r=0$. When $r=0$, clearly $\ucoor=0$. Otherwise, when $r>0$, the condition $M = \ucoor^{\top}\ucoor$ along with the constraints on $M$ imply that $\ucoor$ is of the form:
\begin{align}
	\ucoor = Q \parens{\sum_{p=1}^r E_{p,i(p)}}M^{\frac{1}{2}}, \text{ with } 1\leq r\leq k,
\end{align}
where $Q$ is an orthogonal matrix in $\mathbb{R}^{k\times k}$ and $i(1),\dots,i(r)$ is an increasing sequence of positive integers. Conversely, we can verify that any element $\ucoor$ of the form \cref{eq:critical_point_matrix_characterization} is a critical point of the vector field $F^{\kappa}$. 
\end{proof}

\subsection{Proof of the diagonalization of the Jacobian of the vector field $F^{\kappa}$ at critical points}\label[appendixsection]{sec:proof_diagonalization_jecobian}

\begin{proof}[Proof of \cref{prop:diagonalization_jacobian}]
Let $\ucoor$ be a critical point of $F^{\kappa}$, which is necessarily of the form:
\begin{align}
	\ucoor = Q \underbrace{\parens{\sum_{p=1}^r E_{p,i(p)}}}_{\ucoor_0}M^{\frac{1}{2}},
\end{align}
Let us first express the Jacobian of $F^{\kappa}$ at $\ucoor$. By direct differentiation, one gets that the vector field $F^{\kappa}$ defined in \cref{eq:general_vector_field} admits the following Jacobian representation when acting on elements $\ucoor'$ in $\mathbb{W}$: 
\begin{equation}
\begin{gathered}
	\nabla F^{\kappa}(\ucoor)[\ucoor'] =  \ucoor'\parens{I - RG^{\kappa}(M)R}+ \ucoor R\parens{DRG^{\kappa}(M) + G^{\kappa}(M)RD- \nabla G^{\kappa}[M](D)}R,\\
	\text{with }\quad M=\ucoor^{\top}\ucoor,\qquad R = (M +\lambda I)^{-1},\qquad D = \ucoor'^{\top}\ucoor + \ucoor^{\top}\ucoor',\\
	\text{and } G^{\kappa}(M) = S(MS)^{\kappa},  \quad \nabla G^{0}[M](M') = 0,\qquad \nabla G^{1}[M](M')= SM'S.
\end{gathered}
\end{equation}
We can further express $\nabla F^{\kappa}(\ucoor)[\ucoor']$ as the sum of three terms:
\begin{equation}
\begin{aligned}
	\nabla F^{\kappa}(\ucoor)[\ucoor'] 
	=& \underbrace{\ucoor'\parens{I - RG^{\kappa}(M)R}}_{H_1} \\
	&+ \underbrace{\ucoor R\parens{\ucoor'^{\top}\ucoor RG^{\kappa}(M) + G^{\kappa}(M)R\ucoor'^{\top}\ucoor- \nabla G^{\kappa}[M](\ucoor'^{\top}\ucoor)  }R}_{H_2}\\
	&+ \underbrace{\ucoor R\parens{\ucoor^{\top}\ucoor'RG^{\kappa}(M) + G^{\kappa}(M)R\ucoor^{\top}\ucoor'- \nabla G^{\kappa}[M](\ucoor^{\top}\ucoor')}R}_{H_3}.
\end{aligned}
\end{equation}
We will establish a general diagonal  decomposition of $\nabla F^{\kappa}(\ucoor)$.  Let $\ucoor$ be a critical point of $F^{\kappa}$ as in the statement. 
By definition of the critical point $\ucoor$ in \cref{eq:critical_point_matrix_characterization_2}, $M = \ucoor^{\top}\ucoor$ is diagonal, so is $R$ and $G^{\kappa}(M)$. Let us denote by $m_{i}$, $r_i$ and $g_i$ the corresponding $i$-th diagonal coefficient for $i\in I$ with $r_i$ and $g_i$ satisfying:
\begin{align}
	r_i = (m_i+\lambda)^{-1}, \qquad g_i = s_i^{\kappa+1}m_i^{\kappa}
\end{align}

We will show that the desired decomposition can be obtained using the orthonormal basis $(QE_{p,j})_{1\leq p\leq k, j\in I}$ of $\mathbb{W}$. To this end, we will need to compute the action of $\nabla F^{\kappa}(\ucoor)$ on an element $\tilde{E}_{p,j} = QE_{p,j}$ for a given $1\leq p\leq k$ and $j\in I$.  As a first step, we need to compute  $\tilde{E}_{p,j}^{\top}\ucoor$ as it appears in both $H_2$ and $H_3$. Using the decomposition of $\ucoor$ from \cref{eq:critical_point_matrix_characterization_2} and properties of the canonical basis, it holds that:
\begin{align}
	E_{p,j}^{\top}\ucoor_0 &= \sum_{q=1}^r E_{p,j}^{\top}E_{q,i(q)} =  E_{p,j}^{\top}E_{p,i(p)}\mathds{1}_{1\leq p\leq r}, &\tilde{E}_{p,j}^{\top}\ucoor &= m_{i(p)}^{\frac{1}{2}} E_{p,j}^{\top}E_{p,i(p)}\mathds{1}_{1\leq p\leq r}.
\end{align} 
Furthermore, recall that right-multiplication of $E_{p,j}$ with a diagonal operator $T$ yields $E_{p,j}T = t_j E_{p,j}$, where $t_j$ is the $j$-th diagonal coefficient of $T$. Hence, we can express $H_1$, $H_2$ and $H_3$ as:
\begin{equation}\label{eq:three_terms_jacobian}
\begin{gathered}
	H_1 = \parens{1-r_j^2g_j}\tilde{E}_{p,j}\\
	H_2 =  m_j^{\frac{1}{2}}m_{i(p)}^{\frac{1}{2}}\beta_{i(p),j} Q\ucoor_0E_{p,j}^{\top}E_{p,i(p)}\mathds{1}_{1\leq p\leq r}\\
	H_3 = m_{i(p)}\beta_{i(p),j}  Q\ucoor_0E_{p,i(p)}^{\top}E_{p,j}\mathds{1}_{1\leq p\leq r},
\end{gathered}
\end{equation}
where we introduced $\beta_{i,j}$ for conciseness:
\begin{align}\label{eq:expression_beta}
	\beta_{i,j}= r_{i} r_j^2g_j + r_jr_{i}^2g_{i}- r_j r_{i}s_j s_{i} \mathds{1}_{\kappa =1}.
\end{align}
Moreover, defining $i^{-1}(i(p)) = p$ for $1\leq p\leq r$, the following identity holds:
\begin{align}
	\ucoor_0E_{p,j}^{\top}E_{p,i(p)} &=  E_{i^{-1}(j),i(p)}\mathds{1}_{j\in I_{sup}},  &\ucoor_0E_{p,i(p)}^{\top}E_{p,j} &= E_{p,j}.
\end{align}
Using the above in \cref{eq:three_terms_jacobian} and writing $q = i^{-1}(j)$ and $l = i(p)$ allows us to express the Jacobian as follows:

\begin{align}\label{eq:expression_nabla_F_eigen}
	\nabla F^{\kappa}(\ucoor)(\tilde{E}_{p,j}) 
		=& \parens{1- r_j^2g_j + m_{l}\beta_{l,j} \mathds{1}_{p\leq r}}\tilde{E}_{p,j} + m_j^{\frac{1}{2}}m_{l}^{\frac{1}{2}}\beta_{l,j}\tilde{E}_{q, l}\mathds{1}_{p\leq r, j\in I_{sup}}.
\end{align}
We distinguish two cases depending on whether $j\in I_{sup}$ and $p\leq r$ or not. 

\paragraph{Case $j\notin I_{sup}$ or $p>r$.}
The second term in \cref{eq:expression_nabla_F_eigen} vanishes and we get that $\tilde{E}_{p,j}$ is an eigenvector of $\nabla F^{\kappa}(\ucoor)$ with eigenvalue $\mu_{p,j}$ given by:
\begin{equation}
\begin{aligned}
	\mu_{p,j} &= \parens{1- r_j^2g_j + m_{i(p)}\beta_{i(p),j} \mathds{1}_{1\leq p\leq r}}\\
	&= \begin{cases}
		0, & j\in I_{sup}, p>r\\
		1-\lambda^{-2}s_{j}^{\kappa+1}m_{j}^{\kappa}, &  j\notin I_{sup}, p>r\\
		1-\lambda^{-1} \frac{s_{j}^{\kappa+1}m_{j}^{\kappa}}{m_{i(p)}+\lambda} + \lambda^{-1}m_{i(p)}\parens{1- \frac{s_{i(p)}s_j}{m_{i(p)}+\lambda} \mathds{1}_{\kappa=1}}, &  j\notin I_{sup} , p\leq r
	\end{cases},
\end{aligned}
\end{equation}
where we used that $m_j=0$ and $m_{i(p)}$ satisfies the equation $(m_{i(p)}+\lambda)^2 = s_{i(p)}^{\kappa+1}m_{i(p)}^{\kappa}$. 
\paragraph{Case $j\in I_{sup}$ and $p\leq r$.}
If $j = i(p)$, then $j'=j$ and $p'=p$ so that $\tilde{E}_{p, j} = \tilde{E}_{p', j'}$. Hence, $\tilde{E}_{p,j}$ is again an eigenvector of   $\nabla F^{\kappa}(\ucoor)$ associated to eigenvalue $\mu_{p,j}$ given by:
\begin{equation}
\begin{aligned}
	\mu_{p,j} 
		&=  1- \underbrace{r_j^2g_j}_{=1} + 2 m_{j}\beta_{j,j}
		 = 2m_{j}\parens{2r_j^3g_j- r_j^2 s_j^2\mathds{1}_{\kappa=1} } \\
		&= 4r_jm_{j}- 2\underbrace{r_j^2 m_{j}s_j^2\mathds{1}_{\kappa=1}}_{=\mathds{1}_{\kappa=1}}  
		=  \frac{4m_j}{m_j+\lambda}-2\mathds{1}_{\kappa=1}.
\end{aligned}
\end{equation}

If $j\neq i(p)$, then $\tilde{E}_{p,j}$ and $\tilde{E}_{i^{-1}(j),i(p)}$ are two different orthogonal elements of $\mathbb{W}$. Moreover, setting $p'= i^{-1}(j)$ and $j' = i(p)$, and noticing that $i^{-1}(j') = p$ and $i(p') = j$, it follows that $\tilde{E}_{i^{-1}(j'),i(p')} =  \tilde{E}_{p,j}$.  Therefore, applying $\nabla F^{\kappa}(\ucoor)$ to $\tilde{E}_{p',j'}$ yields the following expression:

\begin{equation}
\begin{aligned}
			\nabla F^{\kappa}(\ucoor)[\tilde{E}_{p',j'}]
		=& \parens{1- r_{j'}^2g_{j'} + m_{i(p')}\beta_{i(p'),j'}}\tilde{E}_{p',j'}+ m_{j'}^{\frac{1}{2}}m_{i(p')}^{\frac{1}{2}}\beta_{i(p'),j'} \tilde{E}_{i^{-1}(j'), i(p')}\\
		=& \parens{1- r_{j'}^2g_{j'} + m_{i(p')}\beta_{i(p'),j'}}\tilde{E}_{i^{-1}(j),i(p)} + m_{j'}^{\frac{1}{2}}m_{i(p')}^{\frac{1}{2}}\beta_{i(p'),j'}\tilde{E}_{p, j}.
\end{aligned}
\end{equation}
We have shown so far that $span\parens{\tilde{E}_{p,j},\tilde{E}_{p',j'}}$ is a stable subspace of dimension $2$ for $\nabla F^{\kappa}(\ucoor)$. 
Moreover, $i^{-1}(j)\leq r$ and $p\leq r$, so that, by definition, $m_{j}$ and $m_{j'}$ are positive and satisfy the equation $r_{i}^{-2} = (m_{i}+\lambda)^2 = s_{i}^{\kappa+1}m_{i}^{\kappa} = g_{i}$ for $i\in \{j,j'\}$. Consequently, the terms $1-r_{j'}^2g_{j'}$ and  $1-r_{j}^2g_{j}$ appearing in the expressions of $\nabla F^{\kappa}(\ucoor)[\tilde{E}_{p',j'}]$ and $\nabla F^{\kappa}(\ucoor)[\tilde{E}_{p,j}]$ vanish and $\nabla F^{\kappa}(\ucoor)$ restricted to the $2$-dimensional subspace $span\parens{\tilde{E}_{p,j},\tilde{E}_{p',j'}}$ admits the matrix representation:
\begin{align}
	\beta_{j',j} \begin{pmatrix}
		m_{j'},& m_{j'}^{\frac{1}{2}}m_{j}^{\frac{1}{2}}\\
		m_{j'}^{\frac{1}{2}}m_{j}^{\frac{1}{2}} & m_{j}
	\end{pmatrix}.
\end{align}
The above matrix admits eigenvalues $\mu_{p,j}^{0}= 0$ and $\mu_{p,j}^{1} = \beta_{j',j}\parens{m_j + m_{j'}}= \gamma_{p,p'}$ with associated eigenvectors $\bar{E}^{0}_{p,j}$ and $\bar{E}^{1}_{p,j}$:
\begin{align}
	\bar{E}^{0}_{p,j} = 
\frac{1}{(m_{j}+ m_{j'})^{\frac{1}{2}}}\parens{m_{j}^{\frac{1}{2}}\tilde{E}_{p,j} -  m_{j'}^{\frac{1}{2}}\tilde{E}_{p',j'}}, \quad \bar{E}^{1}_{p,j} =  \frac{1}{(m_{j}+ m_{j'})^{\frac{1}{2}}}\parens{m_{j'}^{\frac{1}{2}}\tilde{E}_{p,j} +  m_{j}^{\frac{1}{2}}\tilde{E}_{p',j'}}.
\end{align}
\end{proof}

\subsection{Proof of \cref{prop:stability_kappa_0,prop:stability_kappa_1} on stability of equilibria}
\label[appendixsection]{sec:proofs_stability}

\textbf{Stability of critical point of $F^{0}$.}
\begin{proof}[Proof of \cref{prop:stability_kappa_0}]
	We will first prove that for any critical point $\ucoor$ of the form in \cref{eq:decomposition_local_minimizers_kappa_0}, the Jacobian $\nabla F^0(\ucoor)$ has only non-negative eigenvalues, whereas for any other critical point, there exists a negative eigenvalue. 
We will heavily rely on the eigen-decomposition of $\nabla F^{0}(\ucoor)$ from \cref{prop:diagonalization_jacobian} which establishes that all its eigenvalues are of the form:
\begin{align}\label{eq:eigenvalues_Jacobian_kappa_0}
	\mu_{p,j} = \begin{cases}
		1-\lambda^{-2}s_j, &  j\notin I_{sup} \text{ and } p>r\\
		\delta_{p,j}, &  j\notin I_{sup} \text{ and }  p\leq r\\
		\frac{4m_j}{m_j+\lambda}, & j=i(p) \text{ and } p\leq r \\
		\gamma_{p,p'}, & p,p' \leq r \text{ and } j=i(p')> i(p)\\%
		0, & p' \leq r \text{ and } j=i(p') \text{ and }  \parens{ \parens{p\leq r \text{ and } j< i(p)} \text{ or } p>r},   
	\end{cases}
\end{align}
 with  $\delta_{p,j}$ and $\gamma_{p,p'}$ given by:
 \begin{equation}\label{eq:delta_gamma_kappa_0}
 \begin{aligned}
  	\delta_{p,j} &= \frac{(s_{i(p)}-s_j)}{\lambda (m_{i(p)} +\lambda)}\\
 	\gamma_{p,p'} &= \frac{\parens{m_{i(p)}+m_{i(p')}}\parens{ \sqrt{s_{i(p)}}  + \sqrt{s_{i(p')}}}  }{\parens{m_{i(p)}+\lambda}\parens{m_{i(p')}+\lambda} }.
 \end{aligned}
 \end{equation}
\paragraph{ Critical points of the form in \cref{eq:decomposition_local_minimizers_kappa_0} are stable.} 
To see this, it suffices to check that all eigenvalues $\mu_{p,j}$ of the Jacobian of $F^0$ in \cref{eq:eigenvalues_Jacobian_kappa_0} are  non-negative. We treat each case separately. We only need to check the cases where $j\notin I_{sup}$ as, for all the other ones, $\mu_{p,j}$ is clearly non-negative.

\begin{itemize}
	\item \textbf{$j\notin I_{sup} \text{ and } p>r$.} In this case we have $\mu_{p,j} = 1-\lambda^{-2}s_j$. We need to show that necessarily $s_j \leq \lambda^2$. Recall that $r$ is the maximal integer such that $s_{i(r)}>\lambda^2$ and $0\leq r\leq k$. We distinguish two cases, either $r=k$ or $r<k$. If $r<k$, then having $s_j> \lambda^2$ contradicts the maximality of $r$, since we could include $s_j$ to $I_{sup}$. The case $r=k$ is excluded here, since we would have  $p>k$, which is impossible since $p$ must always be smaller than $k$. Consequently, $\mu_{p,j}\geq 0$.   
	\item \textbf{$j\notin I_{sup} \text{ and } p\leq r$.} In this case $\mu_{p,j}= \delta_{p,j}$. We need to show that $s_{i(p)}\geq s_j$. 
	Indeed, $s_{i(1)},\dots, s_{i(r)}$ consists of the $r$ highest eigenvalues, hence if $s_j>s_{i(p)}$ then $j$ must be in $I_{sup}$ which contradicts the condition $j\notin I_{sup}$.
\end{itemize}

\paragraph{Strict saddles characterization.}
Consider a critical point $\ucoor$ of $F^0$. By \cref{eq:characterization_fixed_point_general},  it admits a general decomposition of the form:
\begin{align}
	\ucoor = Q\parens{\sum_{p=1}^{r} E_{p,i(p)}}M^{\frac{1}{2}},
\end{align}
where $Q$ is an orthogonal matrix in $\mathbb{R}^{k\times k}$, $r$ is an integer between $0$ and $k$, $I_{sup} = \{i(1),\dots, i(r)\}$ is an increasing sequence of integers greater than or equal to $1$, and $M$ is a positive semi-definite diagonal operator on $\ell^2(I)$ whose only non-zero diagonal coefficients $(m_{i(p)})_{1\leq p\leq r}$ are given by:
\begin{align}
	m_{i(p)}^{\frac{1}{2}} = g(s_{i(p)},\lambda),\qquad g(s,\lambda) = (\sqrt{s}-\lambda)^{\frac{1}{2}}_{+}.
\end{align}
We are interested in critical points that are not of the form of \cref{eq:decomposition_local_minimizers_kappa_0}. Two possibilities arise: 
\begin{itemize}
	\item \textbf{Case 1}: There exists $j_0\notin I_{sup}$  and $1\leq p_0\leq r$ with $s_{j_0}>s_{i(p_0)}$. 
	\item \textbf{Case 2}: There exists $j_0\notin I_{sup}$ so that $s_{i(p)}>s_{j_{0}}> \lambda^2$  for all  $p$ such that $1\leq p \leq r$ and the rank $r$ satisfies $r<k$.
\end{itemize}
We will treat each case separately by identifying a negative eigenvalue of the Jacobian $\nabla F^0(\ucoor)$.

\paragraph{Case 1.} Consider the eigenvalue $\mu_{p_0,j_0}$. Since $p_0\leq r$ and $j_0\notin I_{sup}$, $\mu_{p_0,j_0}$ must take the form: 
\begin{align}
	\mu_{p_0,j_0} &= \delta_{p_0,j_0}=  \frac{(s_{i(p_0)}-s_{j_0})}{\lambda (m_{i(p_0)} +\lambda)}<0.
\end{align}
$\mu_{p_0,j_0}$ is negative since by assumption $s_{j_0}>s_{i(p_0)}$. 
Therefore, the considered critical points cannot be stable since the Jacobian admits a negative eigenvalue.

\paragraph{Case 2.} Set $p_0=r+1\leq k$ and  consider the eigenvalue $\mu_{p_0,j_0}$ of the Jacobian $\nabla F^0(\ucoor)$. Since $j_0\notin I_{sup}$ and $p_0> r$, then $\mu_{p_0,j_0}$ must be equal to:
\begin{align}
	\mu_{p_0,j_0} = 1-\lambda^{-2}s_{j_0}. 
\end{align}
By assumption $s_{j_0}>\lambda^2$, therefore $\mu_{p_0,j_0} <0$. Therefore, such critical point cannot be stable. 
\end{proof}

\textbf{Stability of critical point of $F^{1}$.}
\begin{proof}[Proof of \cref{prop:stability_kappa_1}]
We will first prove that for any critical point $\ucoor$ of the form in \cref{eq:decomposition_local_minimizers_E}, the Jacobian $\nabla F^1(\ucoor)$ has only non-negative eigenvalues, whereas for any other critical point, there exists a negative eigenvalue. 
We will heavily rely on the eigen-decomposition of $\nabla F^{1}(\ucoor)$ from \cref{prop:diagonalization_jacobian} which establishes that all its eigenvalues are of the form:
\begin{align}\label{eq:eigenvalues_Jacobian}
	\mu_{p,j} = \begin{cases}
		1, &  j\notin I_{sup} \text{ and } p>r\\
		\delta_{p,j}, &  j\notin I_{sup} \text{ and }  p\leq r\\
		\frac{4m_j}{m_j+\lambda} - 2, & j=i(p) \text{ and } p\leq r \\
		\gamma_{p,p'}, & p,p' \leq r \text{ and } j=i(p')> i(p)\\ 
		0, & p' \leq r \text{ and } j=i(p') \text{ and }  \parens{ \parens{p\leq r \text{ and } j< i(p)} \text{ or } p>r},
	\end{cases}
\end{align}
 with  $\delta_{p,j}$ and $\gamma_{p,p'}$ given by:
 \begin{equation}
  \label{eq:delta_gamma}
 \begin{aligned}
  	\delta_{p,j} &= \frac{1}{\lambda (m_{i(p)} +\lambda)}m_{i(p)}s_{i(p)}(s_{i(p)}-s_j),\\
 	\gamma_{p,p'} &= \frac{\parens{m_{i(p)}+m_{i(p')}}\parens{ \verts{s_{i(p)}} m_{i(p)}^{\frac{1}{2}} + \verts{s_{i(p')}}m_{i(p')}^{\frac{1}{2}} - s_{i(p)}s_{i(p')} }  }{\parens{m_{i(p)}+\lambda}\parens{m_{i(p')}+\lambda} }.
 \end{aligned}
\end{equation}
\paragraph{ Critical points of the form in \cref{eq:decomposition_local_minimizers_E} are stable.} 
Here $I_{sup}=I_{sup}^{+}\cup I_{sup}^{-}$.
To see why these critical points are stable, it suffices to check that all eigenvalues $\mu_{p,j}$ of the Jacobian of $F^1$ in \cref{eq:eigenvalues_Jacobian} are non-negative. We treat each case separately.

\begin{itemize}
	\item \textbf{$j\notin I_{sup} \text{ and } p>r$.} In this case $\mu_{p,j}=1> 0$.
	\item \textbf{$j\in I_{sup} \text{ and } \parens{p>r \text{ or } \parens{p \leq r \text{ and } j< i(p)}}$.} This corresponds to the last case in \cref{eq:eigenvalues_Jacobian} for which $\mu_{p,j}=0\geq 0$.
	\item \textbf{$j=i(p) \text{ and } p\leq r$.} By definition of the critical point, $m_{i(p)}$ is assumed to be positive and of the form $ m_{i(p)}^{\frac{1}{2}}=\frac{1}{2}\parens{\verts{s_{i(p)} } + \sqrt{s_{i(p)}^2-4\lambda}}$. This is only possible if $\verts{s_{i(p)}} \geq 2\sqrt{\lambda}$, which, in turn, can only happen if $\lambda \leq  \frac{1}{4}$ since  $\verts{s_{i}}\leq 1$ for all $i\in I$.  
    In this case,  $m_{i(p)}^{\frac{1}{2}}\geq \frac{1}{2}\verts{s_{i(p)}}\geq \sqrt{\lambda}$, so that $m_{i(p)}\geq \lambda$. Recalling that $\mu_{p,j} = \frac{4m_j}{m_j+\lambda}-2 = 2\frac{m_{i(p)}-\lambda}{m_{i(p)}+\lambda}$ it follows that  $\mu_{p,j}\geq 0$.
	\item \textbf{$j\notin I_{sup} \text{ and }  p\leq r$.} In this case, by assumption $m_{i(p)}>0$ which is only possible if $s_{i(p)}\neq 0$. If $s_{i(p)}s_j\leq 0$, then  necessarily $s_{i(p)}(s_{i(p)}-s_j)>0$ so that $\mu_{p,j} = \delta_{p,j}>0$. If   $s_{i(p)}s_j>0$, then $s_j$ cannot be such that $\verts{s_j}>\verts{s_{i(p)}}$. Otherwise, $j$ would belong to the set $I_{sup}$, which contains the $r^{+}$ highest positive and $r^{-}$ smallest negative eigenvalues. This is not possible by assumption since $j\notin I_{sup}$. Thus, necessarily $\verts{s_j}<\verts{s_{i(p)}}$ (non-zero eigenvalues are assumed to be distinct). Therefore, $s_{i(p)}(s_{i(p)}-s_j)>0$  and $\mu_{p,j} = \delta_{p,j}>0$.
	\item \textbf{$p,p' \leq r \text{ and } j=i(p')> i(p)$.} In this case, $\mu_{p,j}=\gamma_{p,p'}$ as defined in \cref{eq:delta_gamma}. By definition of the critical point, $m_{i(p)}$ and $m_{i(p')}$ are positive and given by an equation of the form in \cref{eq:solution_m_i_p_kappa_1}, so that $m_{i(p)}^{\frac{1}{2}}\geq \frac{1}{2}\verts{s_{i(p)}}$ and $m_{i(p')}^{\frac{1}{2}}\geq \frac{1}{2}\verts{s_{i(p')}}$. Consequently, it follows that: 
	$$\mu_{p,j}=\gamma_{p,p'}\geq \frac{\parens{m_{i(p)}+m_{i(p')}}\parens{ \verts{s_{i(p)}}^2  + \verts{s_{i(p')}}^2  - 2s_{i(p)}s_{i(p')} }  }{2\parens{m_{i(p)}+\lambda}\parens{m_{i(p')}+\lambda} }\geq 0.$$
\end{itemize}

\paragraph{Strict saddles characterization.}
Consider a critical point $\ucoor$ of $F^1$. By \cref{eq:characterization_fixed_point_general},  it admits a general decomposition of the form:
\begin{align}
	\ucoor = Q\parens{\sum_{p=1}^{r} E_{p,i(p)}}M^{\frac{1}{2}},
\end{align}
where $Q$ is an orthogonal matrix in $\mathbb{R}^{k\times k}$, $r$ is an integer between $0$ and $k$, $I_{sup} = \{i(1),\dots, i(r)\}$ is an increasing sequence of integers greater than or equal to $1$, and $M$ is a positive semi-definite diagonal operator on $\ell^2(I)$ whose only non-zero diagonal coefficients $(m_{i(p)})_{1\leq p\leq r}$ are given by:
\begin{align}
	m_{i(p)}^{\frac{1}{2}} = f_{\epsilon_p}(s_{i(p)},\lambda),\qquad f_{\epsilon}(s,\lambda) =
		\frac{1}{2}\parens{ \verts{s} + \epsilon \sqrt{s^2 -4\lambda} }\mathds{1}_{\verts{s}^2-4\lambda\geq 0}.
\end{align}
where $(\epsilon_p)_{p=1}^r$ can take any values in $\{-1,1\}$. We are interested in critical points that are not of the form of \cref{eq:decomposition_local_minimizers_E}. Two possibilities arise: 
\begin{itemize}
	\item \textbf{Case 1}: There exists $1\leq p_0\leq r$ so that $m_{i(p_0)}^{\frac{1}{2}} = f_{-1}(s_{i(p_0)},\lambda)$ and $s_{i(p_0)}^2> 4\lambda$.
	\item \textbf{Case 2}: For all $1\leq p \leq r$, we have $m_{i(p)}^{\frac{1}{2}} = f_{1}(s_{i(p)},\lambda)$ and there exists $j_0$ and $p_0$ so that $\verts{s_{j_0}} > \verts{s_{i(p_0)}} $, $s_{j_0}s_{i(p_0)}>0$ and $j_0\notin I_{sup}$.
\end{itemize}
We will treat each case separately by identifying a negative eigenvalue of the Jacobian $\nabla F^1(\ucoor)$.

\paragraph{Case 1.} Set $j_0= i(p_0)$ and consider the eigenvalue $\mu_{p_0,j_0}$. Since $p_0\leq r$ and $j_0=i(p_0)$, $\mu_{p_0,j_0}$ must take the form:
\begin{equation}
\begin{aligned}
	\mu_{p_0,j_0} &= \frac{4m_{j_0}}{m_{j_0}+\lambda} - 2\\
	&= 2\frac{m_{j_0}^{\frac{1}{2}} + \sqrt{\lambda}}{m_{j_0}+\lambda}\parens{ m_{j_0}^{\frac{1}{2}} - \sqrt{\lambda} }\\
	&= \frac{m_{j_0}^{\frac{1}{2}} + \sqrt{\lambda}}{m_{j_0}+\lambda}\parens{ \verts{s_{j_0}} - 2\sqrt{\lambda} - \sqrt{s_{j_0}^2- 4\lambda} }\\
	&= \frac{m_{j_0}^{\frac{1}{2}} + \sqrt{\lambda}}{m_{j_0}+\lambda}\parens{\verts{s_{j_0}} - 2\sqrt{\lambda}}^{\frac{1}{2}}\parens{ \parens{\verts{s_{j_0}} - 2\sqrt{\lambda}}^{\frac{1}{2}} - \parens{\verts{s_{j_0}}+ 2\sqrt{\lambda} }^{\frac{1}{2}} }<0.
\end{aligned}
\end{equation}
In the above, we used that $m_{j_0}=f_{-1}^2(s_{j_0},\lambda)$ for the third line and that $\verts{s_{j_0}} > 2\sqrt{\lambda}$ and $\lambda>0$ to ensure the last inequality is strict. Since the Jacobian admits a negative eigenvalue, the considered critical points cannot be stable.

\paragraph{Case 2.} Consider the eigenvalue $\mu_{p_0,j_0}$ of the Jacobian $\nabla F^1(\ucoor)$. Since $j_0\notin I_{sup}$ and $p_0\leq r$, $\mu_{p_0,j_0}$ must be equal to:
\begin{align}
	\mu_{p_0,j_0} = \frac{1}{\lambda (m_{i(p_0)} +\lambda)}m_{i(p_0)}s_{i(p_0)}(s_{i(p_0)}-s_{j_0}).
\end{align}
Moreover, since $s_{i(p_0)}$ and $s_{j_0}$ have the same sign, it holds that $s_{i(p_0)}(s_{i(p_0)}-s_{j_0}) = \verts{s_{i(p_0)}}(\verts{s_{i(p_0)}}-\verts{s_{j_0}})$. Moreover, by assumption, $\verts{s_{j_0}}  > \verts{s_{i(p_0)}}$, so that $\mu_{p_0,j_0}<0$. This establishes that such critical point cannot be stable. 
\end{proof}

\newpage
\section{Experimental details and additional results}
\label[appendixsection]{sec:appendix:experimental_details}

\subsection{Experimental details}
\label[appendixsection]{sec:training_details}

\subsubsection*{ImageNet-1K (ResNet-50, 100 epochs)}

This subsection details the pretraining and evaluation pipeline used to
produce the linear-probe top-1 accuracies reported in \cref{tab:in1k} of the
main text; the corresponding hyperparameters are summarized in
\cref{tab:in1k_config}.

\paragraph{Setup.}
We pretrain on the $1.28$M-image ImageNet-1K training set (ILSVRC2012 split,
distributed under the ImageNet Terms of Access, a custom non-commercial
research license), using the two-view asymmetric augmentation stack of
VICReg / BYOL~\citep{Bardes:2022,Grill:2020a}: two $224^2$ crops per image
with asymmetric Gaussian blur ($p{=}1.0$ on view~1, $p{=}0.1$ on view~2)
and solarization ($p{=}0.0$ on view~1, $p{=}0.2$ on view~2). All three
methods share a three-layer MLP projector of hidden width $8192$; the
output-width disparity reported in \cref{tab:in1k_config} is theoretically
motivated: SIGReg's isotropic-Gaussian target is unsatisfiable in
too-high-dimensional spaces, whereas VICReg's decorrelation objective
benefits from wider projectors. 

\paragraph{Optimization.}
All three methods follow the VICReg LARS recipe of \citet{Bardes:2022}
(Layer-wise Adaptive Rate Scaling, LARS~\citep{You:2017}, with linear warm-up followed by cosine decay; full
values in \cref{tab:in1k_config}). Gradient clipping is enabled for
\method{} only. We use a \emph{decoupled} online linear probe trained with
SGD on stop-gradient features; coupling the online probe to the backbone's
cosine-decayed LR under-reads the final top-1 accuracy by $1$--$5$\,pp
in our measurements, so we decouple for all methods and
rely on an offline probe as the canonical comparison metric.

\paragraph{Evaluation.}
Following Appendix~E of \citet{Bardes:2022}, we train an offline linear
classifier on frozen features for $100$ epochs with SGD (batch size $1024$,
base learning rate $0.25$, cosine schedule, weight decay $10^{-6}$).
Augmentations at probe-training time are random-resized crop + horizontal
flip; evaluation uses bicubic resize to $256$ followed by a $224^2$ center
crop. We report top-1 accuracy on the ImageNet-1K validation set.

\paragraph{Compute.}
A single $100$-epoch ResNet-50 pretraining run takes approximately $22$
hours on one node of $8\times$ NVIDIA H100 GPUs. We report $3$ seeds per
configuration, and preliminary hyperparameter exploration consumed roughly
$20$ times the compute budget of the final reported runs.

\begin{table}[h]
\centering
\small
\caption{ImageNet-1K training configurations (RN50, $100$ epochs). All three methods use LARS, batch size $2048$, and a separate online linear probe (SGD, fixed LR $0.3$, no weight decay, no schedule).}
\label{tab:in1k_config}
\begin{tabular}{lccc}
\toprule
Hyperparameter & VICReg & SIGReg & \method{} (ours) \\
\midrule
\multicolumn{4}{l}{\emph{Optimizer / scheduler (LARS)}} \\
Base LR                  & $2.4$        & $2.4$        & $2.4$        \\
Weight decay             & $10^{-6}$    & $10^{-6}$    & $10^{-6}$    \\
Momentum                 & $0.9$        & $0.9$        & $0.9$        \\
LARS trust coeff.        & $10^{-3}$    & $10^{-3}$    & $10^{-3}$    \\
Warmup epochs            & $10$         & $10$         & $10$         \\
Cosine min LR            & $2\times10^{-3}$ & $2\times10^{-3}$ & $2\times10^{-3}$ \\
Gradient clip            & --           & --           & $1.0$        \\
\midrule
\multicolumn{4}{l}{\emph{Projector}} \\
Hidden dim               & $8192$       & $8192$       & $8192$       \\
Output dim               & $8192$       & $128$        & $256$        \\
Final-layer bias         & False        & True         & True         \\
\midrule
\multicolumn{4}{l}{\emph{Loss-specific}} \\
Loss type                & VICReg       & SIGReg       & \method{}    \\
Coefficients             & $\alpha=25$, $\beta=25$, $\gamma=1$ & $\lambda=0.01$ & $\lambda=0.1$, $\eta_{\mathrm{init}}=0.9$, $\eta_{\min}=0.5$ \\
\bottomrule
\end{tabular}
\end{table}

\subsubsection*{CIFAR-10 (ResNet-18, 1000 epochs)}

This subsection details the pretraining and evaluation pipeline used to
produce the CIFAR-10 linear-probe accuracies reported in \cref{tab:in1k};
the corresponding hyperparameters are summarized in \cref{tab:cifar_config}.
The VICReg and SIGReg baselines use the default hyperparameters of the
\texttt{eb\_jepa} codebase~\citep{ebjepa}.

\paragraph{Setup.}
We pretrain a CIFAR-style ResNet-18 backbone (3$\times$3 first convolution,
no max-pool) on the $50$k-image CIFAR-10\footnote{The CIFAR-10 dataset~\citep{Krizhevsky:2009} is
released by the University of Toronto with no explicit license and is freely
available for research use.} training set~\citep{Krizhevsky:2009},
using a symmetric two-view augmentation pipeline shared by all three
methods: a $32^2$ random-resized crop with scale $\in[0.2, 1.0]$, color
jitter ($p{=}0.8$, brightness/contrast $0.4$, saturation $0.2$, hue $0.1$),
random grayscale ($p{=}0.2$), random solarization ($p{=}0.1$), and
horizontal flip ($p{=}0.5$); no Gaussian blur is used at this resolution.
All three methods share a three-layer MLP projector of hidden width $2048$;
the per-method output widths are reported in \cref{tab:cifar_config}.

\paragraph{Optimization.}
All three methods use LARS~\citep{You:2017} with linear warm-up followed by
cosine decay; full values in \cref{tab:cifar_config}. Gradient clipping is
disabled for VICReg and SIGReg\footnote{Our current SIGReg implementation
multiplies the Epps--Pulley statistic by the batch size, in accordance with
\citet{Balestriero:2025}; the $\lambda{=}0.1$ reported in
\cref{tab:cifar_config} is therefore comparable, after this rescaling, to
the $\lambda{=}10$ reported in Table~5 of \citet{ebjepa}.}, and set to
$1.0$ from epoch~$4$ onwards for \method{}: applying clipping from the
start plateaus performance, so a $4$-epoch warm-up period without clipping
lets the gradient estimator stabilize before the constraint is enforced.

\paragraph{Evaluation.}
Following Appendix~E of \citet{Bardes:2022}, we train an offline linear
classifier on frozen features and report top-1 accuracy on the $10$k-image
CIFAR-10 validation set. Numbers in \cref{tab:in1k} are mean $\pm$
one-sigma sample standard deviation over $3$ seeds.

\paragraph{Compute.}
A single $1000$-epoch ResNet-18 CIFAR-10 run takes approximately
$3.5$~hours on one H100 GPU. We report $3$ seeds per configuration,
and preliminary hyperparameter exploration consumed roughly
$10\times$ the compute budget of the final reported runs.

\begin{table}[h]
\centering
\small
\caption{CIFAR-10 training configurations (ResNet-18, $1000$ epochs).
All three methods share the backbone, batch size $256$, the two-view
symmetric augmentation pipeline above, and the offline linear-probe
protocol of Appendix~E of \citet{Bardes:2022}.}
\label{tab:cifar_config}
\begin{tabular}{lccc}
\toprule
Hyperparameter & VICReg & SIGReg & \method{} (ours) \\
\midrule
\multicolumn{4}{l}{\emph{Optimizer / scheduler (LARS)}} \\
Base LR                  & $0.3$        & $0.3$        & $0.04$              \\
Weight decay             & $10^{-4}$    & $10^{-4}$    & $10^{-4}$           \\
Warmup epochs            & $10$         & $10$         & $10$                \\
Warmup start LR          & $3{\times}10^{-5}$ & $3{\times}10^{-5}$ & $3{\times}10^{-5}$ \\
Cosine min LR            & $0$          & $0$          & $0$                 \\
Gradient clip            & --           & --           & $1.0$ from epoch~$4$ \\
\midrule
\multicolumn{4}{l}{\emph{Projector}} \\
Hidden dim               & $2048$       & $2048$       & $2048$       \\
Output dim               & $2048$       & $128$        & $1024$        \\
\midrule
\multicolumn{4}{l}{\emph{Loss-specific}} \\
Loss type                & VICReg       & SIGReg       & \method{}    \\
Coefficients             & $\alpha=1$, $\beta=1$, $\gamma=80$ & $\lambda=0.1$ & $\lambda=0.7$, $\eta_{\mathrm{init}}=0.8$, $\eta_{\min}=0.5$ \\
\bottomrule
\end{tabular}
\end{table}

\subsection{Per-epoch correlation of $\mathcal{E}_{\method}$ with downstream accuracy}
\label{sec:appendix:loss_vs_acc}

This subsection records the closed-form values that the auxiliary loss
$\mathcal{L}_{\mathrm{aux}}$ and the trace-form objective
$\mathcal{E}_{\method}$ take at equilibrium, and explains why
\cref{fig:in1k-peira} (right) plots $\mathcal{E}_{\method}$ rather than
$\mathcal{L}_{\mathrm{aux}}$ against linear-probe top-1.

\begin{lemma}[Closed-form value of $\mathcal{L}_{\mathrm{aux}}$ at the closed-form regressor]
\label{lem:laux_closed_form}
For any $(U,V)\in\mathcal{U}\times\mathcal{V}$, evaluating $\mathcal{L}_{\mathrm{aux}}$ of
\cref{prop:gradient_expression_appendix} at the closed-form
$P^{\star}=\Sigma_{U,V}(N_{U,V}+\lambda I)^{-1}$ and
$Q^{\star}=(N_{U,V}+\lambda I)^{-1}$ of \cref{eq:closed-form_expression_P}
gives
\begin{equation}\label{eq:laux_closed_form}
\mathcal{L}_{\mathrm{aux}}(U,V;P^{\star},Q^{\star})
=
\tfrac{\lambda}{2}\bigl[\Tr(N_{U,V})-\Tr(\Sigma_{U,V}(Q^{\star})^{2})\bigr].
\end{equation}
\end{lemma}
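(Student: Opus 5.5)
Since the identity is purely algebraic, I will prove it by direct substitution, using the trace form of $\mathcal{L}_{\mathrm{aux}}$ already derived in the proof of \cref{prop:gradient_expression_appendix} (\cref{eq:auxiliary-loss_appendix}). Abbreviating $N=N_{U,V}$, $\Sigma=\Sigma_{U,V}$, $Q=Q^\star=(N+\lambda I)^{-1}$ and $P=P^\star=\Sigma Q$, that expression reads
\begin{align*}
\mathcal{L}_{\mathrm{aux}}(U,V;P,Q)=\tfrac12\Tr\bigl((QP+\lambda I)N\bigr)-\tfrac12\Tr(Q\Sigma),
\end{align*}
where I use $WW^{\top}=N$ and $WAW^{\top}=\Sigma$. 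I will first double-check the prefactor on the cross term against \cref{eq:auxiliary-loss}. Expanding $\tfrac12\mathbb{E}[U^{\top}QPU-U^{\top}QV+V^{\top}QPV-V^{\top}QU]$, the cross terms give $-\tfrac12\Tr(Q\,\mathbb{E}[VU^{\top}+UV^{\top}])=-\tfrac12\Tr(Q\Sigma)$. So the factor is $\tfrac12$, not $1$ as the intermediate display in that proof suggests; the final line there has it right.

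Next I substitute $P=\Sigma Q$, which gives $\Tr(QPN)=\Tr(Q\Sigma QN)$. The key step is to write $N=(N+\lambda I)-\lambda I=Q^{-1}-\lambda I$. Then $QN=I-\lambda Q$, and since $Q$ and $N$ commute, $NQ=I-\lambda Q$ as well. Hence
\begin{align*}
\Tr(Q\Sigma QN)=\Tr\bigl(Q\Sigma(I-\lambda Q)\bigr)=\Tr(Q\Sigma)-\lambda\Tr(Q\Sigma Q)=\Tr(\Sigma Q)-\lambda\Tr(\Sigma Q^{2}),
\end{align*}
where the last equality uses cyclicity of the trace.

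Combining, $\mathcal{L}_{\mathrm{aux}}=\tfrac12[\Tr(\Sigma Q)-\lambda\Tr(\Sigma Q^2)]+\tfrac{\lambda}{2}\Tr(N)-\tfrac12\Tr(Q\Sigma)$. The $\Tr(\Sigma Q)$ terms cancel, leaving $\tfrac{\lambda}{2}[\Tr(N)-\Tr(\Sigma Q^2)]$, which is \cref{eq:laux_closed_form}.

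The only real obstacle is bookkeeping. I need the correct constant on the cross term, and I must check that the identity $QN=I-\lambda Q$ is used in the right order inside the trace. Commutativity of $Q$ with $N$ makes the order harmless, and all matrices are $k\times k$ with finite entries since $U,V$ are square-integrable, so there are no infinite-dimensional issues.
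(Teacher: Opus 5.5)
Your proof is correct and is essentially the paper's argument. The paper also starts from the trace form $\tfrac12\Tr\bigl((QP+\lambda I)N-Q\Sigma\bigr)$, substitutes $P^\star=\Sigma Q^\star$, uses $Q^\star N=I-\lambda Q^\star$, and cancels the $\Tr(\Sigma Q^\star)$ terms. Your check of the factor $\tfrac12$ on the cross term is also right, and it is the version the paper's proof of this lemma actually uses.
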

\begin{proof}
From \cref{eq:auxiliary-loss_appendix},
$\mathcal{L}_{\mathrm{aux}}(U,V;P,Q)=\tfrac{1}{2}\Tr\bigl((QP+\lambda I)N_{U,V} - Q\Sigma_{U,V}\bigr)$.
Setting $P=P^{\star}=\Sigma_{U,V}Q^{\star}$ and using the algebraic identity
$N_{U,V}Q^{\star}=I-\lambda Q^{\star}$ (which follows from $N_{U,V}=(N_{U,V}+\lambda I)-\lambda I$),
\begin{align*}
\Tr(Q^{\star}P^{\star}N_{U,V})
&=\Tr\bigl(Q^{\star}\Sigma_{U,V}Q^{\star}N_{U,V}\bigr)
=\Tr\bigl(Q^{\star}\Sigma_{U,V}(I-\lambda Q^{\star})\bigr)\\
&=\Tr(Q^{\star}\Sigma_{U,V})-\lambda\Tr(\Sigma_{U,V}(Q^{\star})^{2}),
\end{align*}
where the last equality uses trace cyclicity ($Q^{\star}$ symmetric). Substituting,
\begin{equation}
\mathcal{L}_{\mathrm{aux}}(U,V;P^{\star},Q^{\star})
=\tfrac{1}{2}\bigl[\Tr(Q^{\star}\Sigma_{U,V})-\lambda\Tr(\Sigma_{U,V}(Q^{\star})^{2})+\lambda\Tr(N_{U,V})-\Tr(Q^{\star}\Sigma_{U,V})\bigr],
\end{equation}
which simplifies to \eqref{eq:laux_closed_form}.
\end{proof}

\paragraph{Equilibrium values of $\mathcal{L}_{\mathrm{aux}}$ and $\mathcal{E}_{\method}$.}
The identity~\eqref{eq:laux_closed_form} of \cref{lem:laux_closed_form}
holds for any $(U,V)$. At the stable equilibrium of \cref{prop:stability-peira},
the spectral filter of \cref{prop:peira-thresholded-subspace} forces
$N_{U,V}$ and $\Sigma_{U,V}$ to be simultaneously diagonalizable, with
eigenvalues $(\sqrt{c_i}-\lambda)$ and $c_i(\sqrt{c_i}-\lambda)$
respectively on every active mode~$i$ and zero elsewhere; in particular,
$\Sigma_{U,V}(Q^{\star})^2$ shares its eigenvalues with $N_{U,V}$ on the
active subspace and both matrices vanish on its orthogonal complement, so
$\Tr(N_{U,V})=\Tr(\Sigma_{U,V}(Q^{\star})^{2})$ and
$\mathcal{L}_{\mathrm{aux}}(U,V;P^{\star},Q^{\star})=0$. The trace-form
objective of \cref{eq:csr-objective} reaches its minimum
\eqref{eq:optim_val_peira},
$\mathcal{E}_{\method}=-\tfrac{1}{2}\sum_{i=1}^{\rank_{\mathrm{max}}}(\sqrt{c_i}-\lambda)^2<0$.
\Cref{fig:in1k-peira} (right) plots $\mathcal{E}_{\method}$ (evaluated
post-hoc at the closed-form regressor $P_{U,V}$ from the running covariance
buffers of \cref{alg:peira}) against online linear-probe top-1 across our
ResNet-50 / IN1K runs.

\paragraph{SIGReg counterpart.}
For comparison, \cref{fig:in1k-sigreg} (right) shows the analogous per-epoch
correlation between the negative SIGReg total training loss
$-\mathcal{L}_{\mathrm{train}}$ and online linear-probe
top-1, across the SIGReg ResNet-50 / IN1K runs of
\cref{sec:appendix:sigreg_sensitivity}; the negative loss is also strongly
correlated with downstream accuracy in the $\lambda$ range covered by
our sweep, mirroring the convention used for $-\mathcal{E}_{\method}$ in
the main paper.

\subsection{Effective rank vs.\ $\lambda$ on ImageNet-1K}
\label{sec:appendix:eff_rank}

This subsection tracks the entropy-based effective rank of \method{}'s
backbone and projector embeddings during ImageNet-1K pretraining as a
function of $\lambda$, both at end of training and along the trajectory.

\paragraph{Metric.}
We track the entropy-based effective rank as a basis- and label-free summary of representational diversity, sensitive to partial collapse without requiring a labelled probe; this is the standard SSL diagnostic introduced by \citet{RankMe}. Concretely, for an embedding batch $Z\in\mathbb{R}^{N\times d}$ with singular values
$\sigma_1\geq\cdots\geq\sigma_d$ and $p_i=\sigma_i/\sum_j\sigma_j$,
\begin{equation}
  \erank(Z) \;=\; \exp\!\left(-\!\sum_{i:p_i>0} p_i\log p_i\right) \;\in\; [1,d].
\end{equation}
Since $\sigma_i(Z)=\sqrt{N\,\lambda_i(\hat\Sigma_Z)}$ with
$\hat\Sigma_Z=Z^{\!\top}\!Z/N$, $\erank$ is a one-number summary of the empirical
covariance eigenspectrum.

\paragraph{What is reported.}
\Cref{fig:in1k-lambda-eff-rank} reports end-of-training (epoch $\geq 95$)
$\erank$ of the backbone ($d{=}2048$) and the projector ($d{=}256$) across
$\lambda\in\{0.025,0.05,0.1,0.2,0.5\}$ on the \method{} configuration of
\cref{tab:in1k_config} ($1$--$4$ seeds per cell, $13$ runs total). The
per-epoch backbone-rank trajectory is in \cref{fig:in1k-peira} (middle) of
the main text.

\paragraph{Observations.}
\begin{itemize}[leftmargin=1.2em,topsep=2pt,itemsep=2pt]
\item Projector $\erank$ sits at $\sim\!248$ across every $\lambda$,
  within $\sim\!3\%$ of the $256$-d cap.
\item Backbone $\erank$ is $\sim\!1314$ at $\lambda\in\{0.025,0.05\}$ and
  $\sim\!1283$--$1290$ at $\lambda\in\{0.1,0.2,0.5\}$, a $\sim\!30$-unit
  ($\sim\!2\%$) drop, then a plateau.
\item $\erank$ stays clear of any collapse regime across the full
  $20\times$ $\lambda$ span.
\end{itemize}

\paragraph{Interpretation.}
Recall from \cref{sec:dynamics} that $\numcca$ denotes the number of
canonical correlations of the data distribution (\cref{def:cca}) and
$\rank_{\mathrm{max}}\leq\min(k,\numcca)$ is the largest rank supporting a
non-collapsed equilibrium of \cref{prop:peira-thresholded-subspace}. We draw three conclusions from the $\erank$ measurements above. First, the projector ($d{=}256$) sits within $\sim\!3\%$ of its dimensionality cap, i.e.\ the projector output dimension $k=256$ that bounds $\rank_{\mathrm{max}}$: the projector spectrum is set by this cap rather than by $\lambda$ whenever $k$ binds, which is the regime our sweep operates in. Second, for $\lambda$ small relative to the leading $\sqrt{c_i}$, \cref{prop:peira-thresholded-subspace} predicts a gradual contraction of the active eigenvalues $\tfrac{1}{4}(\sqrt{c_i}-\lambda)^2$ without a strict-rank drop; the mild $\sim\!2\%$ backbone contraction we observe across the $20\times$ $\lambda$ span is consistent with this regime, although we do not claim to observe a change in $\rank_{\mathrm{max}}$ itself. Third, \cref{prop:peira-thresholded-subspace} together with \cref{prop:stability-peira} rules out collapse as a stable equilibrium of the \method{} dynamics, and the absence of any $\erank$ collapse on real data corroborates this. Two caveats: the $\lambda$ window stays well below the regime in which \cref{prop:peira-thresholded-subspace} would predict visible mode killing ($\lambda\to\sqrt{c_1}$), so these numbers do not test the strict-rank prediction $\rank_{\mathrm{max}}=\#\{i:\sqrt{c_i}>\lambda\}$; and $\erank$ is a one-number entropy summary, so we do not resolve the full eigenspectrum here.

\begin{figure}[h]
\centering
\includegraphics[width=0.95\linewidth]{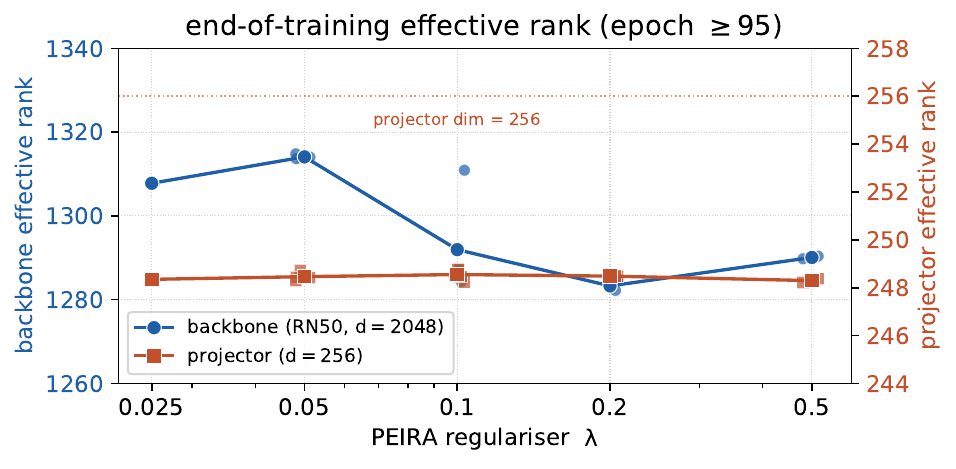}
\caption{\textbf{Effective rank vs.\ $\lambda$ on ImageNet-1K.} End-of-training $\erank$~\citep{RankMe} of the encoder during
ImageNet-1K pretraining (RN50, $100$ epochs, projector recipe of
\cref{tab:in1k_config}; epoch $\geq 95$, mean across seeds, scatter shows
individual seeds) for the backbone (left axis, blue, $d{=}2048$) and
projector (right axis, orange, $d{=}256$) across
$\lambda\in\{0.025,0.05,0.1,0.2,0.5\}$. The per-epoch backbone-rank
trajectory is in \cref{fig:in1k-peira} (middle) of the main text.}
\label{fig:in1k-lambda-eff-rank}
\end{figure}

\subsection{Comparison with SIGReg on ImageNet-1K}
\label{sec:appendix:sigreg_sensitivity}

We report the SIGReg counterparts to the three \method{} diagnostics of
\cref{fig:in1k-peira} of the main text in a single 3-panel figure
(\cref{fig:in1k-sigreg}). The runs use the SIGReg ablation configuration
($p_h{=}2048$, $p_o{=}128$), which we use as SIGReg's frozen baseline on
ResNet-50 / IN1K.

\paragraph{Sensitivity to $\lambda$.}
\Cref{fig:in1k-sigreg} (left) sweeps SIGReg's coefficient $\lambda$ over the
range we tested on ResNet-50 / IN1K. Within the tested span
$\lambda\in[0.005, 0.02]$ the mean linear-probe top-1 swings by $0.80$\,pp;
values $\lambda\geq 0.1$ collapsed in our preliminary tests and were excluded
from the reported sweep. Outside the tested range,
\citet{Balestriero:2025} report empirical stability over
$\lambda\in[10^{-3}, 5{\cdot}10^{-1}]$ on ImageNet-100 with $V{=}8$
multi-crop views.

\paragraph{Loss as a model-selection signal.}
\Cref{fig:in1k-sigreg} (right) plots the per-epoch negative SIGReg total
training loss $-\mathcal{L}_{\mathrm{train}}$ against online linear-probe
top-1; the negative loss is strongly correlated with downstream accuracy
in the $\lambda$ range covered by our sweep, mirroring the convention
used for $-\mathcal{E}_{\method}$ in \cref{fig:in1k-peira}~/
\cref{sec:appendix:loss_vs_acc}.

\paragraph{Backbone effective-rank trajectory.}
\Cref{fig:in1k-sigreg} (middle) tracks the entropy-based effective rank of
the RN50 backbone embeddings throughout pretraining for
$\lambda\in\{0.005, 0.01, 0.02\}$. The narrow span tested here (a $4\times$
ratio) is too small to expose a clear separation between curves; this stands
in contrast to the \method{} trajectory of \cref{fig:in1k-peira} (middle),
where the $20\times$ span $\lambda\in[0.025, 0.5]$ produces a visible
ordering of early-epoch ranks. We report this for completeness rather than as
evidence of a qualitative gap between the two methods.

\begin{figure}[h]
\centering
\includegraphics[width=0.32\linewidth]{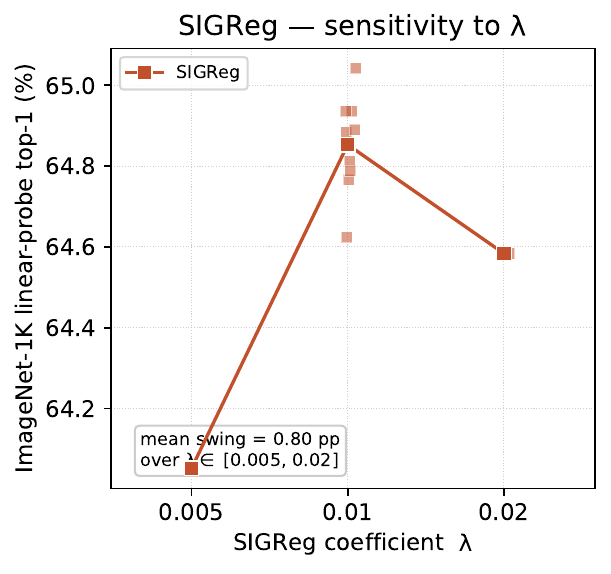}\hfill
\includegraphics[width=0.32\linewidth]{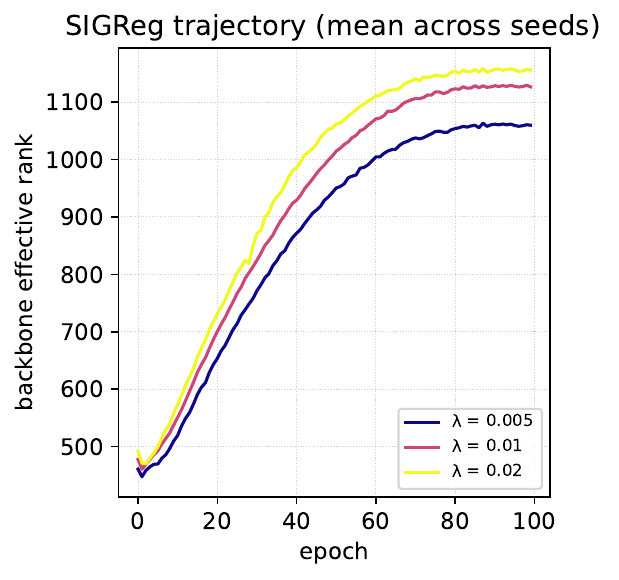}\hfill
\includegraphics[width=0.32\linewidth]{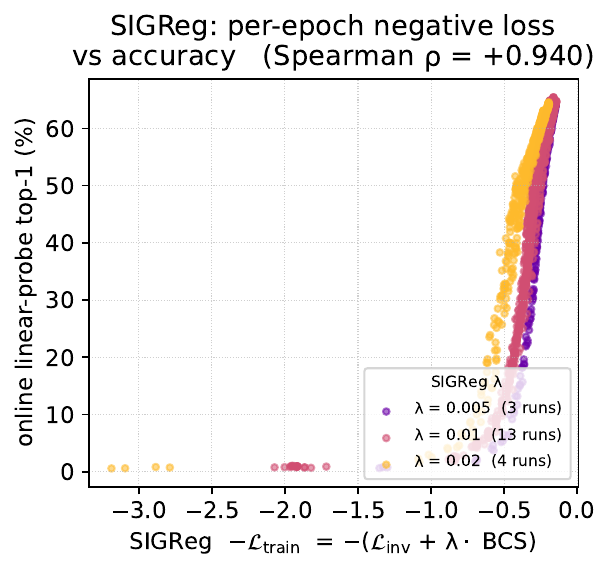}
\caption{\textbf{ImageNet-1K diagnostics for SIGReg.} ResNet-50, $100$~epochs, frozen
ablation projector $p_h{=}2048$, $p_o{=}128$; $1$--$3$ random seeds per
cell. Mirrors \cref{fig:in1k-peira} of the main text. \emph{Left:}
sensitivity of the online linear-probe top-1 to $\lambda$ at end of training;
each dot is one sweep cell, $\lambda\geq 0.1$ collapsed in preliminary tests
and is not shown. \emph{Middle:} backbone entropy-based
effective rank~\citep{RankMe} versus epoch, colored by~$\lambda$.
\emph{Right:} per-epoch negative SIGReg total training loss
$-\mathcal{L}_{\mathrm{train}}$ versus online linear-probe top-1
(one point per epoch per run); per-run Spearman correlation reported
in the legend.}
\label{fig:in1k-sigreg}
\end{figure}

\subsection{Comparison with VICReg on ImageNet-1K}
\label{sec:appendix:vicreg_sensitivity}

We compare \method{}'s diagnostics with VICReg's at the published RN50 / $100$-epoch recipe, restricted to the analogues of the middle and right panels of \cref{fig:in1k-peira} (\cref{fig:in1k-vicreg}, $18$ runs: $3$ seeds across $6$ design-axis ablations of the frozen recipe, all using the \texttt{vicreg\_ref} loss with $\lambda=\mu=25, \nu=1$).

\paragraph{Coefficient-sensitivity ablation.} We do not include a sensitivity panel: our reproduction did not sweep any of $(\lambda, \mu, \nu)$ at the published recipe. \citet{Bardes:2022}'s Table~7 (App.~D.4) reports a joint coefficient ablation at the same backbone and budget: turning on the covariance term lifts top-1 by $+11.1$\,pp ($57.5\% \to 68.6\%$), while the diagonal sweep $\lambda=\mu \in \{5, 10, 25, 50\}$ at fixed $\nu=1$ produces only a $\sim\!0.5$\,pp spread, consistent with VICReg being largely insensitive to its coefficients in the stable regime $\nu<\mu$, $\lambda\!\approx\!\mu$.

\paragraph{Loss as a model-selection signal.} Backbone effective rank (\cref{fig:in1k-vicreg}, left) climbs to $\sim\!1300$--$1400$ across all recipes without collapsing; per-epoch negative training loss $-\mathcal{L}_{\mathrm{train}}$ is strongly correlated with online linear-probe top-1 within every recipe (per-recipe Spearman $\rho \geq +0.998$, \cref{fig:in1k-vicreg}, right), mirroring \method{} (\cref{sec:appendix:loss_vs_acc}) and SIGReg (\cref{sec:appendix:sigreg_sensitivity}) and supporting use of the total loss as a label-free model-selection signal within a recipe.

\begin{figure}[t]
\centering
\includegraphics[width=0.48\linewidth]{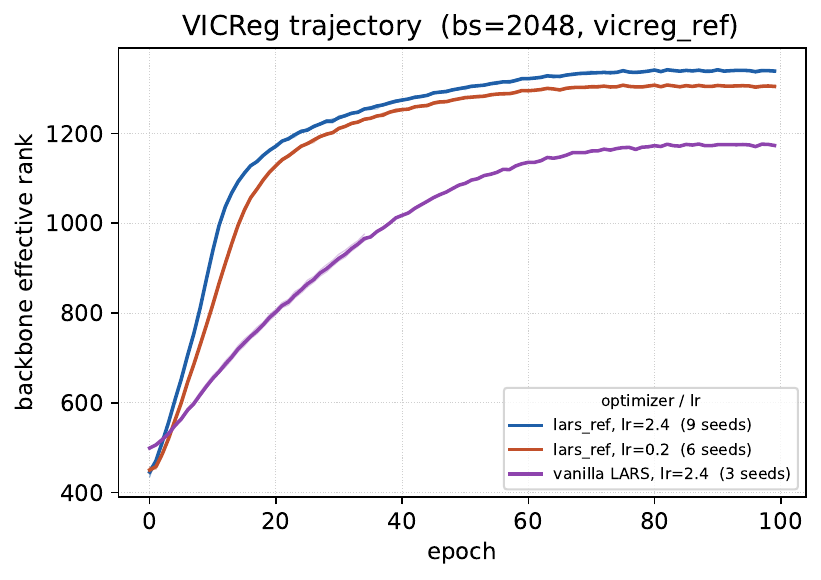}\hfill
\includegraphics[width=0.48\linewidth]{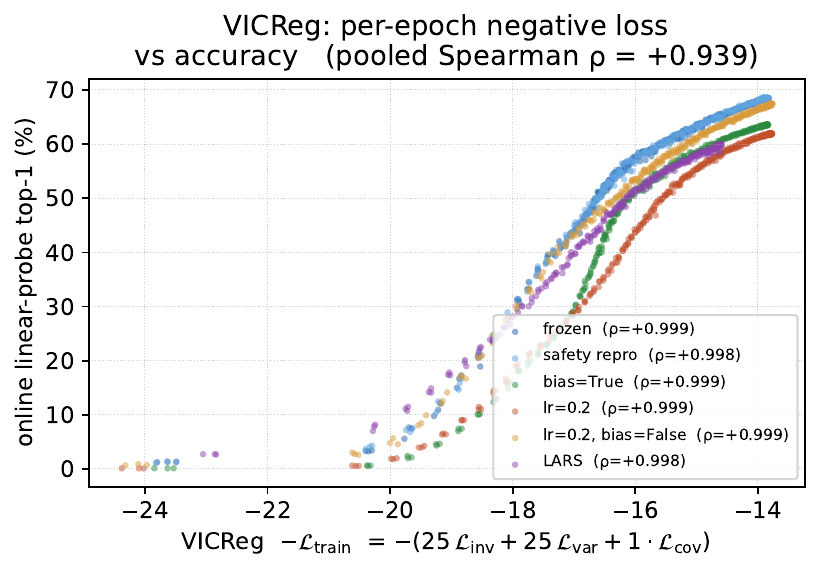}
\caption{\textbf{ImageNet-1K diagnostics for VICReg.} ResNet-50, $100$~epochs, batch size $2048$, projector $p_h{=}p_o{=}8192$, VICReg loss with $\lambda=\mu=25, \nu=1$; $3$ seeds across $6$ design-axis ablations of the frozen recipe. Mirrors the middle and right panels of \cref{fig:in1k-peira} and \cref{fig:in1k-sigreg}. \emph{Left:} backbone effective rank~\citep{RankMe} versus epoch (faint lines: per-seed; bold: per-group mean). The six recipes are pooled into three groups along the true axis of variation (optimizer~/~lr); the bias and probe-protocol toggles within each group leave backbone effective rank unchanged within seed-spread. \emph{Right:} per-epoch negative training loss $-\mathcal{L}_{\mathrm{train}}$ versus online linear-probe top-1, colored by recipe, with per-recipe Spearman in the legend.}
\label{fig:in1k-vicreg}
\end{figure}